\definecolor{darkred}{rgb}{0.6,0,0}
\definecolor{darkgreen}{rgb}{0,0.5,0}
\definecolor{darkblue}{rgb}{0,0,0.5}
\definecolor{notered}{HTML}{d62728}
\definecolor{noteblue}{HTML}{1f77b4}
\definecolor{notegreen}{HTML}{2ca02c}
\definecolor{noteorange}{HTML}{ff7f0e}
\newcommand{\tikzhighlight}[4]{
	\tikz[overlay, remember picture]{
		\fill[#2, opacity=0.5] 
		([xshift={#3[0]}, yshift={#3[1]}]pic cs:#1-start) rectangle ([xshift={#4[0]}, yshift={#4[1]}]pic cs:#1-end);
	}
}
\newcommand{\dottedbox}[1]{%
	\tikz[baseline=(X.base)] \node[draw=black, dash pattern=on 2pt off 2pt, line width=0.6pt, rectangle, inner xsep=2pt, inner ysep=3pt] (X) {#1};
}
\def\floor#1{\lfloor #1 \rfloor}
\def\1{\bm{1}} 
\def\rvc{{\mathbf{c}}}
\def\rvd{{\mathbf{d}}}
\def\rvm{{\mathbf{m}}}
\def\rvp{{\mathbf{p}}}
\def\rvs{{\mathbf{s}}}
\def\rvw{{\mathbf{w}}}
\def\rvx{{\mathbf{x}}}
\def\rvy{{\mathbf{y}}}
\def\rvz{{\mathbf{z}}}
\def\vzero{{\bm{0}}}
\DeclareMathAlphabet{\mathsfit}{\encodingdefault}{\sfdefault}{m}{sl}
\SetMathAlphabet{\mathsfit}{bold}{\encodingdefault}{\sfdefault}{bx}{n}
\def\gF{{\mathcal{F}}}
\def\gN{{\mathcal{N}}}
\def\gO{{\mathcal{O}}}
\def\gS{{\mathcal{S}}}
\newcommand{\E}{\mathbb{E}}
\newcommand{\R}{\mathbb{R}}
\newcommand{\norm}[1]{\left\lVert#1\right\rVert} 
\newcommand{\normsq}[1]{\left\Vert #1 \right\Vert^2} 
\newcommand{\innerprod}[1]{\left\langle #1 \right\rangle} 
\newcommand{\abs}[1]{\left\lvert#1\right\rvert} 
\newcommand{\Term}[2]{\text{T$_{#1}$ in #2}}
\declaretheoremstyle[
headfont=\normalfont\bfseries,
notefont=\mdseries, notebraces={(}{)},
bodyfont=\normalfont,
preheadhook=\vspace{1ex},
postheadhook=\sloppy,
postheadspace=0.5em,
spaceabove=1pt,
spaceabove=1pt,
mdframed={
	roundcorner=2pt,
	hidealllines=true,
	backgroundcolor={gray!20},
	innerleftmargin=2pt,
	innerrightmargin=2pt}
]{shaded}
\theoremstyle{definition}
\declaretheorem[style=shaded]{definition}
\declaretheorem[style=shaded]{theorem}
\declaretheorem[style=shaded]{assumption}
\declaretheorem[style=shaded]{lemma}
\declaretheorem[style=shaded]{example}
\declaretheorem[style=shaded, name=Definition, numbered=no]{definition*}
\declaretheorem[style=shaded, name=Theorem, numbered=no]{theorem*}
\declaretheorem[style=shaded, name=Proposition, numbered=no]{proposition*}
\declaretheorem[style=shaded, name=Lemma, numbered=no]{lemma*}
\title{\LARGE \textbf{A Unified Analysis of Stochastic Gradient Descent with Arbitrary Data Permutations and Beyond}}
\author[1]{Yipeng Li}
\author[2]{Xinchen Lyu\thanks{Corresponding author.}}
\author[3]{Zhenyu Liu}
\affil[1,2]{\small{Beijing University of Posts and Telecommunications, Beijing, China}. \texttt{\{liyipeng, lvxinchen\}@bupt.edu.cn}}
\affil[3]{\small{Shenzhen International Graduate School, Tsinghua University, Shenzhen, China. \texttt{zhenyuliu@sz.tsinghua.edu.cn}}}
\date{}
\begin{document}

\maketitle

\begin{abstract}
	We aim to provide a unified convergence analysis for permutation-based Stochastic Gradient Descent (SGD), where data examples are permuted before each epoch. By examining the relations among permutations, we categorize existing permutation-based SGD algorithms into four categories: Arbitrary Permutations, Independent Permutations (including Random Reshuffling), One Permutation (including Incremental Gradient, Shuffle One and Nice Permutation) and Dependent Permutations (including GraBs \citealp{lu2022grab,cooper2023coordinating}). Existing unified analyses failed to encompass the Dependent Permutations category due to the inter-epoch dependencies in its permutations. In this work, we propose a general assumption that captures the inter-epoch permutation dependencies. Using the general assumption, we develop a unified framework for permutation-based SGD with arbitrary permutations of examples, incorporating all the aforementioned representative algorithms. Furthermore, we adapt our framework on example ordering in SGD for client ordering in Federated Learning (FL). Specifically, we develop a unified framework for regularized-participation FL with arbitrary permutations of clients.
\end{abstract}

\section{Introduction}

We study the finite-sum minimization problem
\begin{align*}\textstyle
	\min_{\rvx \in \R^d} \left[ f(\rvx) \coloneqq \frac{1}{N}\sum_{n=0}^{N-1} f_n (\rvx)\right],
\end{align*}
where each $f_n: \R^d \to \R$ is assumed to be differentiable. One popular way to solve this problem is Stochastic Gradient Descent (SGD). It updates the parameter vector iteratively according to the rule
\begin{align*}
	\rvx^{n+1} = \rvx^n - \gamma \nabla f_{\pi(n)} (\rvx^n)\,,
\end{align*}
where $\gamma$ denotes the step size and $\pi(n)$ denotes the index of the local objective function at iteration $n$. For classic SGD (cSGD), $\pi(n)$ is chosen uniformly with replacement from $\{0,1,\ldots, N-1\}$; for permutation-based SGD, $\pi(n)$ is the $(n+1)$-th element of a permutation $\pi$ of $\{0,1,\ldots, N-1\}$. Permutation-based SGD is more common in practice \citep{bottou2012stochastic}, and thus attracts much attention recently \citep{ahn2020sgd,mishchenko2020random,nguyen2021unified}. It is also the focus of this work. In what follows, unless otherwise stated, ``SGD'' refers to ``permutation-based SGD''.

The convergence rate of permutation-based SGD is determined by example orders. Thus, to study it, we need a measure of the quality of example orders. Note that we say that an example order is good if it leads to a high convergence rate of permutation-based SGD, and vice versa. For a small finite step size $\gamma$, the cumulative updates in any epoch $q$ are
\begin{align}
	\rvx_{q+1} - \rvx_q &\approx - \gamma N \nabla f(\rvx_q) + \gamma^2 \sum_{n=0}^{N-1}\sum_{i<n} \nabla^2 f_{\pi(n)}(\rvx_q) \nabla f_{\pi(i)}(\rvx_q) \nonumber\\ 
	&\approx \underbrace{- \gamma N \nabla f(\rvx_q) + \gamma^2 \sum_{n=0}^{N-1}\sum_{i<n} \nabla^2 f_{\pi(n)}(\rvx_q) \nabla f(\rvx_q)}_{\text{optimization vector}}+ \underbrace{\gamma^2 \sum_{n=0}^{N-1}\sum_{i<n} \nabla^2 f_{\pi(n)}(\rvx_q) \left( \nabla f_{\pi(i)}(\rvx_q) -\nabla f(\rvx_q) \right)}_{\text{error vector}}\nonumber,
\end{align}
where the first equation is from \citet[Eq.~13]{smith2021origin} (we replace $=$ with $\approx$ as we omit $\gO( \gamma^3 N^3)$), and it can be proved by Taylor expansion (see Appendix~\ref{apx-subsec:order error}). Here, we additionally assume that each $f_n$ is twice differentiable. The optimization vector is beneficial; the error vector is detrimental and depends on the order of examples. Thus, the goal is to suppress the error vector (for instance, we use Lebesgue $2$-norm for both vectors and matrices):
\begin{align*}
	\norm{\text{Error vector}}_{}
	&= \gamma^2 \norm{\sum_{n=0}^{N-1}\sum_{i<n} \nabla^2 f_{\pi(n)}(\rvx_q) \left( \nabla f_{\pi(i)}(\rvx_q) -\nabla f(\rvx_q) \right)}_{} \\
	&\leq \gamma^2 \sum_{n=0}^{N-1} \norm{\nabla^2 f_{\pi(n)}(\rvx_q)}_{} \norm{ \sum_{i<n}\left( \nabla f_{\pi(i)}(\rvx_q) -\nabla f(\rvx_q) \right) }_{} \\
	&\leq \gamma^2 L N \bar \phi_q\,,
\end{align*}
where the last inequality is due to $L$-smoothness (see Definition~\ref{def:smoothness}) and
\begin{align}\textstyle
	\bar \phi_q \coloneq \underset{n\in[N]}{\max} \norm{ \sum_{i=0}^{n-1} \left(\nabla f_{\pi(i)}(\rvx_q) - \nabla f(\rvx_q)\right) }.\label{eq:intro:order error}
\end{align}
This implies \textit{the order error $\bar \phi_q$ can be used as a measure of the quality of example orders}: a smaller $\bar \phi_q$ means a faster convergence rate, and a better example order, and vice versa. Even though the order error is proposed as early as \citet{lu2021general}, where the authors justify its validity on synthetic experiments empirically, the rationale behind it (that is, the above analysis) has not been well understood until this work.

As shown in Table~\ref{tab:SGD}, based on \textit{the relations among permutations}, we classify existing permutation-based SGD algorithms into the following categories:
\begin{itemize}
	\item Arbitrary Permutations (AP). Permutations are generated without any specific structure, allowing for completely arbitrary permutation in each epoch.
	\item Independent Permutations (IP). All the permutations are generated independently. This category includes Random Reshuffling (RR), where permutations are generated independently and randomly for each epoch. It also includes Greedy Ordering \citep{lu2021general,mohtashami2022characterizing}, where permutations are generated by a greedy algorithm.
	\item One Permutation (OP). The initial (first-epoch) permutation is used repeatedly for all the subsequent epochs. In particular, when the initial permutation is arbitrary, it is called Incremental Gradient (IG); when the initial permutation is random, it is called Shuffle Once (SO); when the initial permutation is designed meticulously, it is called Nice Permutation (NP).
	\item Dependent Permutations (DP). Permutations are dependent across epochs, with the order in one epoch affected by the order in previous epochs (explicitly). This category includes FlipFlop \citep{rajput2022permutationbased} and GraBs (including GraB \citealp{lu2022grab} and PairGraB \citealp{lu2022grab, cooper2023coordinating}). In particular, GraB has been proven to outperform RR \citep{lu2022grab}, and even be a theoretically optimal permutation-based SGD algorithm \citep{cha2023tighter}. See \Cref{apx-sec:algs}.
\end{itemize}
We exclude Greedy Ordering from our discussion due to its lack of practicality and theoretical justification \citep{chelidze2010greedy,lu2022grab}. We also exclude FlipFlop as its superiority (over RR) is proved on quadratic functions \citep{rajput2022permutationbased}. See Table~\ref{tab:SGD}.

For AP/RR/OP, the relation among permutations is arbitrary/independent/identical, and thus we can bound the order error for any epoch and then apply this bound for all the epochs. To deal with these cases, \citet{lu2021general} proposed one assumption (\citealt{lu2021general} consider an interval of arbitrary length, not necessarily an epoch): There exist nonnegative constants $B$ and $D$ such that for all $\rvx_q$ (the outputs of Algorithm~\ref{alg:SGD}),
\begin{align}
	\left(\bar \phi_q \right)^2 \leq B \norm{\nabla f(\rvx_q)}^2 + D\,.\label{eq:order error-raw bound}
\end{align}
By proving that this assumption (Ineq.~\ref{eq:order error-raw bound}) holds for AP, RR and SO with specific values of $B$ and $D$ (under some standard assumptions in SGD), previous works \citep{lu2021general,mohtashami2022characterizing, koloskova2024convergence} successfully incorporate them into one framework. However, none of the unified frameworks of permutation-based SGD has successfully incorporated GraBs. The main reason for the failure can be that, existing works implicitly deal with the order error $\bar \phi_q$ separately across epochs (as in Ineq.~\ref{eq:order error-raw bound}), while in GraBs, the example orders across consecutive epochs are dependent. This limitation sparked our initial motivation for this work---developing a unified framework of permutation-based SGD that includes GraBs.

To achieve this, we propose a more general assumption than Ineq.~\eqref{eq:order error-raw bound} (see Assumption~\ref{asm:order error}): There exist nonnegative constants $\{A_i\}$, $\{B_i\}$ and $D$ such that for all $\rvx_q$,
\begin{align}
	\left(\bar \phi_q \right)^2 &\leq  \sum_{i=1}^{q} A_{i}\left(\bar \phi_{q-i} \right)^2  + \sum_{i=0}^{q} B_{i}\normsq{\nabla f(\rvx_{q-i})}  + D\label{eq:order error-new bound}
\end{align}
This assumption explicitly demonstrates the \textit{dependence} between permutations across different epochs. In particular, when $A_i= 0$ and $B_i= 0$ for all $i\in \{1,2,\ldots q\}$, it reduces to Ineq.~\eqref{eq:order error-raw bound}. Our goal now is to prove that Ineq.~\eqref{eq:order error-new bound} holds for existing algorithms by identifying the relation between order errors. For instance, for OP, the main task is to establish the relation between $\bar \phi_q$ and $\bar \phi_0$ for $q\geq 1$; for GraBs, the main task is to establish the relation between $\bar \phi_q$ and $\bar \phi_{q-1}$ for $q\geq 1$. This is the key idea of our framework.

\begin{table}[ht]
	\caption{Upper bounds of permutation-based SGD (the numerical constants and polylogarithmic factors are hided). The ``Relation'' column shows the relation among permutations. The ``Upper Bound'' column shows the upper bound of $\min_{q\in \{0,1,\ldots,Q-1\}} \normsq{\nabla f(\rvx_q)}$ (see Theorem~\ref{thm:SGD}). The upper bounds of AP and RR match the prior best known upper bounds \citep{lu2021general,mishchenko2020random}. The upper bound of OP is new; see Section~\ref{subsec:cases:others} for details. The upper bound of GraBs matches that in \citet{lu2022grab}.}
	\label{tab:SGD}
	\setlength{\tabcolsep}{0.15em}
	\tikzhighlight{highlightGraB}{noteblue!30}{{+1em,0.2em}}{{8em, -2.2em}}
	\centering{\small
		\resizebox{0.8\linewidth}{!}{
			\begin{threeparttable}
				\begin{tabular}{llll}
					\toprule
					\textbf{Method}  &\textbf{Relation}  &\textbf{Upper Bound}\\
					\midrule
					
					Arbitrary Permutations  &Arbitrary &$\frac{L F_0}{Q} + \left(\frac{LF_0 N \varsigma }{ NQ}\right)^{\frac{2}{3}}$\\\midrule
					
					Independent Permutations &Independent &--\vspace{-0.5ex}\\
					\quad Random Reshuffling &Independent &$\frac{LF_0 }{Q} +  \left(\frac{LF_0\sqrt{N}\varsigma}{NQ}\right)^{\frac{2}{3}}$ \\\midrule

					\rowcolor{darkgreen!20}
					One Permutation &Identical &$\frac{L F_0}{Q} + \left(\frac{LF_0 \bar\phi_0 }{ NQ}\right)^{\frac{2}{3}}$\tnote{\textcolor{blue}{(1)}}\\\midrule

					Dependent Permutations &Dependent &--\vspace{-0.5ex}\\
					\tikzmark{highlightGraB-start}%
					\quad GraBs &Dependent &$\frac{\tilde L  F_0 + (L_{2,\infty}F_0 \varsigma)^{\frac{2}{3}}}{Q}  + \left(\frac{L_{2,\infty}F_0 \varsigma }{ NQ}\right)^{\frac{2}{3}}$\tnote{\textcolor{blue}{(2)}}\tikzmark{highlightGraB-end}\\
					\bottomrule
				\end{tabular}
				\begin{tablenotes}
					\footnotesize
					\item[1] It requires $\theta \lesssim \frac{\bar\phi_0}{LN}$ (see Assumption~\ref{asm:parameter deviation} for $\theta$). Notably, $\bar \phi_0$ depends on the initial permutation. Specifically, $\bar \phi_0 = \gO(N \varsigma)$ for IG; $\bar \phi_0 = \tilde \gO(\sqrt{N}\varsigma)$ for SO; $\bar \phi_0 = \tilde \gO(\varsigma)$ for NP.
					\item[2] For GraBs, $\tilde L = L +L_{2,\infty}+ L_{\infty}$. See Definition~\ref{def:smoothness} for $L$, $L_{2,\infty}$ and $L_{\infty}$.
				\end{tablenotes}
			\end{threeparttable}
		}
	}
\end{table}

Beyond SGD, we adapt our theory on \textit{example ordering in SGD} for \textit{client ordering in Federated Learning (FL)} \citep{mcmahan2017communication}, one of the most popular distributed machine learning paradigms. FL aims to learn from data distributed across multiple clients. In cross-device FL, only a small fraction of clients can participate in the training process simultaneously. In this work, we study client ordering in FL with regularized client participation (regularized-participation FL, Algorithm~\ref{alg:FL}), where each client participates once before any client is reused; it can be caused by diurnal variation \citep{eichner2019semi}. Compared to SGD, the main challenge stems from its partially parallel training manner, where clients update their models locally (in parallel) in a round of federated training. To address it, we propose a variant of the order error $\bar\phi$ of SGD for FL (see Definition~\ref{def:FL:order error}). With it, we develop a unified framework for regularized-participation FL with arbitrary permutation of clients, including regularized-participation FL with AP, RR, OP and GraBs (see Table~\ref{tab:FL}), which correspond to AP, RR, OP and GraBs in SGD, respectively. Among them, regularized-participation FL with GraB is introduced in this paper for the first time.

The main contributions are as follows:
\begin{itemize}
	\item Example ordering in SGD. We propose a new assumption (Assumption~\ref{asm:order error}) to bound the order error, which explicitly demonstrates the \textit{dependence} between permutations across different epochs. With this assumption, we develop a unified framework for permutation-based SGD with arbitrary permutations of examples (Section~\ref{subsec:main theorem}). At last, we prove that it includes AP, RR, OP (including IG, SO and NP) and GraBs (Section~\ref{sec:cases}). This is the first unified framework that includes GraBs.
	\item Client ordering in FL with regularized participation (Section~\ref{sec:FL}). We propose a variant of the order error of SGD for FL (\Cref{def:FL:order error}). With it, we develop a unified framework for regularized-participation FL with arbitrary permutations of clients.
\end{itemize}

\section{Convergence Analysis}

\textit{Notations.} We use $\norm{\cdot}_p$ to denote the Lebesgue $p$-norm; For simplicity, we use $\norm{\cdot}_{}$ to denote the Lebesgue $2$-norm. See Appendix~\ref{apx-sec:notations}.

\subsection{Setup}

We consider the finite-sum minimization problem
\begin{align}\textstyle
	\min_{\rvx \in \R^d} \left[ f(\rvx) \coloneqq \frac{1}{N}\sum_{n=0}^{N-1} f_n (\rvx)\right],\label{eq:problem}
\end{align}
where $d$ denotes the dimension of the parameter vector and $N$ denotes the number of the local objective functions $\{f_n\}$. Notably, for SGD, the local objective functions represent the data examples.

We study permutation-based SGD (see Algorithm~\ref{alg:SGD}). During any epoch $q$, it updates parameters as
\begin{align*}
	\rvx_{q}^{n+1} \gets \rvx_{q}^{n} - \gamma \nabla f_{\pi_q(n)}(\rvx_q^n),
\end{align*}
where $\gamma$ denotes the step size and $\pi$ denotes a permutation of $\{0,1,\ldots,N-1\}$ (at the same time, it serves as the training order of examples). At the end of each epoch, it produces the next-epoch permutation by some permuting algorithm (\Cref{alg-line:SGD:permute}).

\IncMargin{1em}
\begin{algorithm}
	\SetNoFillComment
	\SetAlgoNoEnd
	\DontPrintSemicolon
	\caption{Permutation-based SGD}
	\label{alg:SGD}
	
	\SetKwFunction{Permute}{Permute}
	\KwIn{$\pi_0$, $\rvx_0$; \textbf{Output}: $\{\rvx_q\}$}
	\For{$q = 0, 1,\ldots, Q-1$}{
		\For{$n=0,1,\ldots, N-1$}{
			$\rvx_{q}^{n+1} \gets \rvx_{q}^{n} - \gamma \nabla f_{\pi_q(n)}(\rvx_q^n)$
		}
		$\pi_{q+1} \gets$ \Permute{$\cdots$}\label{alg-line:SGD:permute}\\
	}
\end{algorithm}
\DecMargin{1em}

\subsection{Order Error}
\label{subsec:order error}

We use the order error $\bar \phi_q$ as a measure of the quality of example orders. Recall that we say that an example order is good if it leads to a fast convergence rate, and vice versa, and thus its quality is dynamic, and depends on the factors like gradients.

\begin{definition}[Order Error, \citealt{lu2021general,lu2022grab}]
	\label{def:order error}
	The order error $\bar \phi_q$ in any epoch $q$ is defined as
	\begin{align*}
		\bar \phi_q \coloneqq \max_{n\in[N]} \left\{\phi_{q}^n \coloneqq \norm{ \sum_{i=0}^{n-1} \left(\nabla f_{\pi_q(i)}(\rvx_q) - \nabla f(\rvx_q)\right) }_p\right\}\,.
	\end{align*}
\end{definition}

We propose Assumption~\ref{asm:order error}, which explicitly demonstrates the dependence between permutations across different epochs. With it, we can incorporate the existing permutation-based SGD algorithms like AP, RR, OP and GraBs into one framework. In Section~\ref{sec:cases}, we will prove that Assumption~\ref{asm:order error} holds for AP, RR, OP and GraBs under some given assumptions.
\begin{assumption}
	\label{asm:order error}
	There exist nonnegative constants $\{A_i\}_{i=1}^q$, $\{B_i\}_{i=0}^q$ and $D$ such that for all $\rvx_q$ (the outputs of Algorithm~\ref{alg:SGD}),
	\begin{align*}
		\left(\bar \phi_q \right)^2 &\leq  \sum_{i=1}^{q} A_{i}\left(\bar \phi_{q-i} \right)^2  + \sum_{i=0}^{q} B_{i}\normsq{\nabla f(\rvx_{q-i})}  + D\,.
	\end{align*}
\end{assumption}

\subsection{Main Theorem}
\label{subsec:main theorem}

Definition~\ref{def:smoothness} will help us deal with the multiple smoothness constants in GraBs. 
\begin{definition}[$L_{p,p'}$-smoothness]
	\label{def:smoothness}
	We say $f$ is $L_{p,p'}$-smooth, if it is differentiable and for any $\rvx, \rvy\in \R^d$,
	\begin{align*}
		\norm{ \nabla f \left(\rvx \right) - \nabla f \left(\rvy \right) }_{p} \leq L_{p,p'} \norm{\rvx - \rvy}_{p'}\,.
	\end{align*}
	If $p=p'$, we write $L_{p,p'}$ as $L_p$; if $p=p'=2$, we write $L_{p,p'}$ as $L$ for convenience.
\end{definition}

We also assume that the global objective function $f$ is lower bounded by $f_\ast$. We let $F_0 = f(\rvx_0) - f_\ast$. The main theorem is presented in Theorem~\ref{thm:SGD}.

\begin{theorem}
	\label{thm:SGD}
	Let the global objective function $f$ be $L$-smooth and each local objective functions $f_n$ be $L_{2,p}$-smooth and $L_p$-smooth ($p\geq 2$). Suppose that Assumption~\ref{asm:order error} holds with $A_i = B_i = 0$ for $i> \nu$ ($\nu$ is a very small constant compared to $Q$). If $\gamma \leq \min \left\{ \frac{1}{ L N}, \frac{1}{32L_{2,p} N}, \frac{1}{32L_p N} \right\}$ and $\frac{\sum_{i=0}^{\nu} B_i }{N^2\left(1-\sum_{i=1}^{\nu} A_i\right)}<255$, then
	\begin{align*}
		\min_{q\in \{0,1,\ldots,Q-1\}} \normsq{ \nabla f(\rvx_q) } \leq c_1 \cdot \frac{ F_0 }{\gamma NQ} + c_2 \cdot \gamma^2L_{2,p}^2  \frac{1}{Q}\sum_{i=0}^{\nu-1}\left( \bar \phi_{i}\right)^2 + c_2\cdot\gamma^2L_{2,p}^2 D,
	\end{align*}
	where $c_1$ and $c_2$ are numerical constants such that $c_1 \geq \nicefrac{1}{\left( \frac{255}{512} - \frac{\sum_{i=0}^{\nu} B_i }{512N^2\left(1-\sum_{i=1}^{\nu} A_i\right)}\right)}$ and $c_2 \geq \left(\frac{2}{1-\sum_{i=1}^{\nu} A_i}\right)\cdot c_1$.
\end{theorem}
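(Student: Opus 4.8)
The plan is to run the standard descent-lemma machinery for permutation-based SGD one epoch at a time, but to carry the accumulated order-error terms through the telescoping sum rather than bounding them epoch-by-epoch. First I would write one epoch of Algorithm~\ref{alg:SGD} as a single ``perturbed gradient'' step $\rvx_{q+1} = \rvx_q - \gamma N \nabla f(\rvx_q) - \gamma \rve_q$, where the perturbation $\rve_q = \sum_{n=0}^{N-1}\big(\nabla f_{\pi_q(n)}(\rvx_q^n) - \nabla f_{\pi_q(n)}(\rvx_q)\big) + \sum_{n=0}^{N-1}\big(\nabla f_{\pi_q(n)}(\rvx_q) - \nabla f(\rvx_q)\big)$. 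The second sum is controlled directly: its partial sums have norm at most $\bar\phi_q$ by Definition~\ref{def:order error}, so grouping it against the Hessian-like increments and using $L_{2,p}$-smoothness, together with the step-size restriction $\gamma \le \tfrac{1}{32 L_{2,p} N}$ and $\gamma \le \tfrac{1}{32 L_p N}$, lets me bound the iterate drift $\norm{\rvx_q^n - \rvx_q}$ inside an epoch by something like $\gamma$ times a geometric series in $n$, and hence $\norm{\rve_q} \lesssim L_{2,p}\gamma N \bar\phi_q + (\text{small multiples of }) \norm{\nabla f(\rvx_q)}$ and $\gamma N \bar\phi_q$ itself. Plugging into $L$-smoothness of $f$ gives a per-epoch descent inequality of the shape
\begin{align*}
	f(\rvx_{q+1}) \le f(\rvx_q) - \tfrac{\gamma N}{2}\normsq{\nabla f(\rvx_q)}\big(1 - O(\gamma N L_{\cdot})\big) + c\,\gamma^3 N L_{2,p}^2 (\bar\phi_q)^2\,,
\end{align*}
where the middle bracket is bounded below by a positive constant thanks to the step-size choice.

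Next I would sum this over $q = 0,\dots,Q-1$ and telescope, obtaining $\tfrac{\gamma N}{c'}\sum_{q=0}^{Q-1}\normsq{\nabla f(\rvx_q)} \le F_0 + c\,\gamma^3 N L_{2,p}^2 \sum_{q=0}^{Q-1}(\bar\phi_q)^2$. The crucial remaining step is to convert $\sum_{q}(\bar\phi_q)^2$ into something involving only $\sum_q \normsq{\nabla f(\rvx_q)}$ and the first $\nu$ order errors plus $D$. This is exactly what Assumption~\ref{asm:order error} is for: summing the assumed bound over $q$ and exchanging the order of summation, each term $(\bar\phi_{q-i})^2$ (for $i \ge 1$) reappears on the left with total weight $\sum_{i=1}^{\nu} A_i < 1$, so I can absorb it, leaving
\begin{align*}
	\Big(1 - \sum_{i=1}^{\nu} A_i\Big)\sum_{q=0}^{Q-1}(\bar\phi_q)^2 \le \Big(\sum_{i=0}^{\nu} B_i\Big)\sum_{q=0}^{Q-1}\normsq{\nabla f(\rvx_q)} + Q D + \Big(\sum_{i=1}^{\nu} A_i\Big)\sum_{i=0}^{\nu-1}(\bar\phi_i)^2\,,
\end{align*}
up to the harmless boundary correction from the first $\nu$ epochs (which is where the $\sum_{i=0}^{\nu-1}(\bar\phi_i)^2$ term in the theorem comes from). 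Since $\nu$ is a tiny constant relative to $Q$, the boundary terms do not affect the rate.

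Finally I would substitute this back and collect the $\sum_q \normsq{\nabla f(\rvx_q)}$ terms on the left. The coefficient that appears is proportional to $1 - \tfrac{\sum_{i=0}^{\nu} B_i}{N^2(1-\sum_{i=1}^{\nu}A_i)}\cdot(\text{const})$, which is positive precisely under the hypothesis $\tfrac{\sum_{i=0}^{\nu} B_i}{N^2(1-\sum_{i=1}^{\nu}A_i)} < 255$ (the constant $255$ and the $512$ in the statement of $c_1$ coming from the explicit numerical bookkeeping of the descent constants). Dividing through by $\gamma N Q$ and bounding the minimum by the average yields the claimed three-term bound, with $c_1$ the reciprocal of that positive coefficient and $c_2 = \tfrac{2}{1-\sum_{i=1}^{\nu}A_i}\,c_1$ tracking how the $D$- and initial-$\bar\phi$-terms were scaled. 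I expect the main obstacle to be the first paragraph: carefully bounding the within-epoch drift and the perturbation $\rve_q$ so that the $L_{2,p}$-smoothness (rather than plain $L$-smoothness) is used in the one place it is needed and the step-size constants line up to give the clean factor-$512$ arithmetic — essentially making the informal Taylor expansion from the introduction rigorous with all constants tracked. Everything after that is a telescoping-and-rearranging exercise enabled directly by Assumption~\ref{asm:order error}.
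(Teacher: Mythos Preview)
Your proposal is correct and follows essentially the same route as the paper: derive a per-epoch descent inequality of the form $f(\rvx_{q+1}) - f(\rvx_q) \le -\tfrac{255}{512}\gamma N\normsq{\nabla f(\rvx_q)} + 2\gamma^3 L_{2,p}^2 N(\bar\phi_q)^2$ via $L$-smoothness of $f$, $L_{2,p}$-smoothness of the $f_n$, and the within-epoch drift bound (the paper's Lemma~\ref{lem:parameter drift}), then telescope, apply Assumption~\ref{asm:order error} summed over $q$ to convert $\sum_q(\bar\phi_q)^2$ into gradient norms plus $D$ plus the first $\nu$ order errors, and rearrange. One small clarification: in your decomposition of $\rve_q$, the second sum $\sum_{n=0}^{N-1}\big(\nabla f_{\pi_q(n)}(\rvx_q) - \nabla f(\rvx_q)\big)$ is identically zero since $\pi_q$ is a permutation, so the order error $\bar\phi_q$ enters \emph{only} through the drift bound on $\norm{\rvx_q^n - \rvx_q}$ (which you do say, but the phrasing suggests you might also use it directly); the paper handles the cross term via the polarization identity $2\langle x,y\rangle = \norm{x}^2 + \norm{y}^2 - \norm{x-y}^2$, which is what makes the constants come out as $\tfrac{255}{512}$ cleanly.
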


\section{Case studies}
\label{sec:cases}

In this section, we prove that Assumption~\ref{asm:order error} holds for AP, OP, RR and GraBs under some given assumptions, and then provide the corresponding upper bounds (see Table~\ref{tab:SGD}). Details are in Appendix~\ref{apx-sec:SGD:special cases}.

To prove Assumption~\ref{asm:order error}, in addition to the smoothness assumptions, Assumption~\ref{asm:local deviation} is required to bound the deviation of the local gradient from the global gradient.

\begin{assumption}\label{asm:local deviation}
	There exist a nonnegative constant $\varsigma$ such that for any $n\in \{0,1,\ldots,N-1\}$ and $\rvx\in \R^d$, 
	\begin{align*}
		\norm{\nabla f_n(\rvx) - \nabla f(\rvx) }^2 \leq \varsigma^2\,.
	\end{align*}
\end{assumption}

For OP, to derive the tighter bounds, we also need Assumption~\ref{asm:parameter deviation} to restrict the change of model parameters. Though this assumption seems stringent, it can be reasonable in some scenarios where the parameter change is not so large (for instance, fine-tuning). In addition, we can restrict the change by performing a proximal step at the end of each epoch \citep{mishchenko2022proximal,liu2024last}.

\begin{assumption}
	\label{asm:parameter deviation}
	There exists a nonnegative constant $\theta$ such that for all $\rvx_q$ (the outputs of Algorithm~\ref{alg:SGD}),
	\begin{align*}
		\norm{\rvx_q - \rvx_0 }^2 \leq \theta^2\,.
	\end{align*}
\end{assumption}

\subsection{Analyses of AP, RR and OPs}
\label{subsec:cases:others}

\textit{Analysis of Arbitrary Permutation.} The bound in Example~\ref{ex:arbitrary permutation} applies to all the methods discussed in the following. It matches that of \citet{lu2021general}.

\begin{example}[Arbitrary Permutations, AP]
	\label{ex:arbitrary permutation}
	For AP, all the permutations $\{\pi_q\}$ in Algorithm~\ref{alg:SGD} are generated arbitrarily. Under Assumption~\ref{asm:local deviation}, Assumption~\ref{asm:order error} holds as
	\begin{align*}
		\left(\bar\phi_{q}\right)^2 \leq N^2\varsigma^2\,.
	\end{align*}
	Applying Theorem~\ref{thm:SGD}, we get
	\begin{align*}
		\min_{q\in \{0,1\ldots, Q-1\}} \normsq{ \nabla f(\rvx_q) } = \gO \left(  \frac{F_0}{\gamma NQ} + \gamma^2 L^2 N^2 \varsigma^2 \right)\,.
	\end{align*}
	After we tune the step size, the upper bound becomes $\gO\left( \frac{L F_0}{Q} + \left(\frac{LF_0 N \varsigma }{ NQ}\right)^{\frac{2}{3}} \right)$.
\end{example}

\textit{Analysis of Random Reshuffling.} We consider the high-probability bound for RR \citep{lu2021general,yu2023high} rather than the in-expectation bounds \citep{mishchenko2020random,koloskova2024convergence}. This is mainly to maintain consistency with the high-probability bounds of GraBs. Theorem~\ref{thm:SGD} can be modified for in-expectation bounds readily by using the expectation version of Assumption~\ref{asm:order error} (taking expectations on both sides of the inequality in Assumption~\ref{asm:order error}). As shown in Example~\ref{ex:random reshuffling}, our high-probability bound of RR matches the prior best known bounds \citep{lu2021general,yu2023high}.

\begin{example}[Random Reshuffling, RR]
	\label{ex:random reshuffling}
	For RR, all the permutations $\{\pi_q\}$ in Algorithm~\ref{alg:SGD} are generated independently and randomly. Under Assumption~\ref{asm:local deviation}, Assumption~\ref{asm:order error} holds with probability at least $1-\delta$:
	\begin{align*}
		\left(\bar\phi_{q}\right)^2 \leq 4 N \varsigma^2 \log^2 \left( \nicefrac{8}{\delta}\right) \,.
	\end{align*}
	Applying Theorem~\ref{thm:SGD}, we get that, with probability at least $1-Q\delta$,
	\begin{align*}
		\min_{q\in \{0,1\ldots, Q-1\}} \normsq{ \nabla f(\rvx_q) } = \tilde\gO \left(  \frac{F_0}{\gamma NQ} + \gamma^2 L^2 N \varsigma^2 \right)\,.
	\end{align*}
	After we tune the step size, the upper bound becomes $\tilde\gO\left( \frac{LF_0 }{Q} +  \left(\frac{LF_0\sqrt{N}\varsigma}{NQ}\right)^{\frac{2}{3}} \right)$.
\end{example}

\textit{Analysis of One Permutation.} In OP, the key characteristic is that \textit{the initial permutation is reused} for the subsequent epochs. This avoids the repeated loading of the data examples, and thus leads to a faster implementation. To highlight this characteristic of OP, we try to establish the relation between $\bar\phi_q$ and $\bar\phi_0$. Specifically, for all $q\geq 1$ and $n\in[N]$,
\begin{align*}
	\textstyle
	\phi_{q}^n 
	&\leq 2LN\norm{\rvx_{q} - \rvx_{0}} + \norm{\sum_{i=0}^{n-1} \left( \nabla f_{\pi_{q}(i)}(\rvx_{0}) - \nabla f(\rvx_{0})  \right) }\\
	&\leq 2LN \norm{\rvx_{q} - \rvx_{0}} + \bar\phi_0\,,
\end{align*}
where the last inequality is due to $\pi_q = \pi_0$ for all $q\geq 1$. Then, since it holds for all $n\in [N]$, we get
\begin{align}
	\bar \phi_{q} \leq 2LN \norm{\rvx_{q} - \rvx_{0}} + \bar \phi_0 \leq 2LN \theta + \bar \phi_0\,,\nonumber
\end{align}
where the last inequality is due to Assumption~\ref{asm:parameter deviation}. With this relation, we derive the upper bound of OP in Example~\ref{ex:one permutation}, along with some concrete instances.

\begin{example}[One Permutation, OP]
	\label{ex:one permutation}
	For OP, in Algorithm~\ref{alg:SGD}, the first-epoch permutation $\pi_0$ is generated arbitrarily/randomly/meticulously; the subsequent permutations are the same as the first-epoch permutation: $\pi_q = \pi_0$ for any $q\geq 1$.
	Let all $f_n$ be $L$-smooth and Assumptions~\ref{asm:local deviation},~\ref{asm:parameter deviation} hold. Then, Assumption~\ref{asm:order error} holds as
	\begin{align*}
		\left(\bar\phi_{q}\right)^2 \leq 2 \left( \bar\phi_0 \right)^2 + 8 L^2 N^2 \theta^2 \,.
	\end{align*}
	Applying Theorem~\ref{thm:SGD}, we get
	\begin{align*}
		\min_{q\in \{0,1,\ldots,Q-1\}}\normsq{ \nabla f(\rvx_q) }
		&= \gO \left(  \frac{F_0}{\gamma NQ} + \gamma^2 L^2 \left( \bar\phi_0 \right)^2 + \gamma^2 L^4 N^2\theta^2 \right)
	\end{align*}
	After we tune the step size, the upper bound becomes $\gO\left( \frac{LF_0 }{Q} +  \left(\frac{LF_0\bar\phi_0 + L^2 F_0 N\theta}{NQ}\right)^{\frac{2}{3}} \right)$. Furthermore, if $\theta \lesssim \frac{\bar \phi_0}{LN}$, it becomes $\gO\left( \frac{LF_0 }{Q} +  \left(\frac{LF_0\bar\phi_0}{NQ}\right)^{\frac{2}{3}} \right)$.
	\begin{itemize}
		\item Incremental Gradient (IG). If the initial permutation is generated arbitrarily (it implies that $\bar\phi_0 = \gO\left( N \varsigma \right)$), then the bound will be $\gO\left( \frac{LF_0 }{Q} +  \left(\frac{LF_0N\varsigma}{NQ}\right)^{\frac{2}{3}} \right)$.
		\item Shuffle Once (SO). If the initial permutation is generated randomly (it implies that $\bar\phi_0 = \tilde\gO( \sqrt{N} \varsigma )$), then the bound will be $\tilde\gO\left( \frac{LF_0 }{Q} +  \left(\frac{LF_0\sqrt{N}\varsigma}{NQ}\right)^{\frac{2}{3}} \right)$. It holds with probability at least $1-\delta$.
		\item Nice Permutation (NP). If the initial permutation is generated meticulously (it implies that $\bar\phi_0 = \tilde\gO\left( \varsigma \right)$), then the bound will be $\tilde\gO\left( \frac{LF_0 }{Q} +  \left(\frac{LF_0\varsigma}{NQ}\right)^{\frac{2}{3}} \right)$.
	\end{itemize}
\end{example}

Example~\ref{ex:one permutation} states that OP methods show great potentials in the scenarios where the parameter change is not so large. Specifically, if the initial permutation is produced meticulously, \textit{OP (NP) can even converge faster than RR}. This finding is aligned with that in \citet{yun2021open}, whose result depends on the refined matrix AM-GM inequality conjecture. Intuitively, the advantage of NP in Example~\ref{ex:one permutation} comes from that the ``nice'' order in the first epoch \textit{is still ``nice''} in the subsequent epochs, which in fact relies on that $\nabla f_{\pi_0 (i)}(\rvx_0)$ is a nice estimator of $\nabla f_{\pi_q (i)}(\rvx_q)$ for $q\geq 1$. However, when the parameter changes drastically, the estimate becomes inaccurate, and the initial ``nice'' order becomes ``worse'' subsequently. This is also the reason why we use Assumption~\ref{asm:parameter deviation} to restrict the drastic change of parameters.

\subsection{Analyses of GraBs}
\label{subsec:cases:GraBs}

Recall that our goal is to find a permutation to minimize the order error (Notably, in GraBs, $\bar \phi_q$ is defined by $\norm{\cdot}_{\infty}$)
\begin{align*}\textstyle
	\bar \phi_q \coloneqq \max_{n\in[N]} \norm{ \sum_{i=0}^{n-1} \left(\nabla f_{\pi_q(i)}(\rvx_q) - \nabla f(\rvx_q)\right) }_{\infty}\,,
\end{align*}
which is aligned with the goal of herding \citep{welling2009herding}. With this insight, \citet{lu2022grab} proposed GraB (to produce good permutations online) based on the theory of herding and balancing \citep{harvey2014near,alweiss2021discrepancy}: Consider $N$ vectors $\{\rvz_n \}_{n=0}^{N-1}$ such that $\sum_{n=0}^{N-1} \rvz_n = 0$ and $\norm{\rvz_n} \leq 1$. First, for any permutation $\pi$, assign the signs $\{\epsilon_{n} \}_{n=0}^{N-1}$ ($\epsilon_n \in \{-1,+1\}$) to the permuted vectors $\{\rvz_{\pi(n)}\}_{n=0}^{N-1}$ using the \textit{balancing} algorithms (such as Algorithm~\ref{alg:balancing} in Appendix~\ref{apx-sec:algs}). Second, with the assigned signs and the old permutation $\pi$, produce a new permutation $\pi'$ by the \textit{reordering} algorithm (that is, Algorithm~\ref{alg:reordering} in Appendix~\ref{apx-sec:algs}). Then,
\begin{flalign}
	\max_{n \in [N]} \norm{\sum_{i=0}^{n-1}\rvz_{\pi'(i)}}_{\infty} \leq \frac{1}{2} \max_{n \in [N]} \norm{\sum_{i=0}^{n-1}\rvz_{\pi(i)}}_{\infty} +  \frac{1}{2} \max_{n \in [N]} \norm{\sum_{i=0}^{n-1}\epsilon_i\cdot  \rvz_{\pi(i)}}_{\infty} \label{eq:ex-grab:key relation}
\end{flalign}
where we call the three terms, the herding error under $\pi'$, the herding error under $\pi$, and the signed herding error under $\pi$, respectively (see Lemma~\ref{lem:basic-balancing-reordering}). Ineq.~\eqref{eq:ex-grab:key relation} ensures that the herding error will be reduced (from $\pi$ to $\pi'$) as long as the signed herding error is small. That is, the herding error can be progressively reduced by balancing and reordering the vectors. By iteratively applying this process (balancing and then reordering), the herding error will approach the signed herding error, which is proved to be $\tilde\gO\left( 1\right)$, if the signs are assigned by Algorithm~\ref{alg:balancing} \citep{alweiss2021discrepancy}.

\textit{Analysis of GraB-proto.} To present the key idea of GraBs, as well as our theory, we start from GraB-proto, the simplified version of the original GraB \citep{lu2022grab}. The key characteristic of GraB-proto (and other variants) is that the example order depends on the example order of previous epochs. Thus, the goal is to find the relation between $\bar \phi_{q}$ and $\bar \phi_{q-1}$. Specifically, for all $q\geq1$ and $n \in [N]$,
\begin{align*}
	\phi_{q}^n 
	&\leq 2L_{\infty} N \norm{\rvx_{q} - \rvx_{q-1}}_{\infty} + \max_{n\in [N] }\norm{\sum_{i=0}^{n-1} \left( \nabla f_{\pi_{q}(i)}(\rvx_{q-1}) - \nabla f(\rvx_{q-1})  \right) }_{\infty}.
\end{align*}
First, note that the first term is the well-studied ``parameter deviation'' \citep{mishchenko2020random}, whose upper bound is provided in Lemma~\ref{lem:parameter drift}. Second, since GraB-proto uses $\pi_{q-1}$, $\{\nabla f_{\pi_{q-1}(n)}(\rvx_{q-1})- \nabla f(\rvx_{q-1})\}_{n=0}^{N-1}$ to generate $\pi_q$ in epoch $(q-1)$, we can apply Ineq.~\eqref{eq:ex-grab:key relation} to the second term:
\begin{align*}
	&\max_{n\in [N] }\norm{\sum_{i=0}^{n-1} \left( \nabla f_{\pi_{q}(i)}(\rvx_{q-1}) - \nabla f(\rvx_{q-1})  \right) }_{\infty} \\
	&\leq \frac{1}{2}\max_{n\in [N] }\norm{\sum_{i=0}^{n-1} \left( \nabla f_{\pi_{q-1}(i)}(\rvx_{q-1}) - \nabla f(\rvx_{q-1})  \right) }_{\infty} + \frac{1}{2}C \varsigma\\
	&= \frac{1}{2}\bar\phi_{q-1} + \frac{1}{2}C \varsigma\,,
\end{align*}
where $C = \gO \left(\log\left(\frac{d N}{\delta}\right) \right) = \tilde\gO\left( 1\right)$ is from \citet[Theorem~1.1]{alweiss2021discrepancy}. We also use Assumption~\ref{asm:local deviation} to scale the vector length to be no greater than $1$. Now, combining them gives the relation in Example~\ref{ex:GraB-proto}.

\begin{example}[GraB-proto]
	\label{ex:GraB-proto}
	Let each $f_n$ be $L_{\infty}$-smooth and Assumption~\ref{asm:local deviation} hold. Then, if $\gamma \leq \frac{1}{32L_{\infty}N}$, Assumption~\ref{asm:order error} holds with probability at least $1-\delta$:
	\begin{align*}
		\left(\bar \phi_{q}\right)^2 \leq \frac{3}{4}\left( \bar \phi_{q-1} \right)^2 + \frac{1}{50} N^2 \normsq{\nabla f(\rvx_{q-1})}_{} + C^2 \varsigma^2,
	\end{align*}
	where $C = \gO \left(\log\left(\frac{d N}{\delta}\right) \right) = \tilde\gO\left( 1\right)$. Applying Theorem~\ref{thm:SGD}, we get that, with probability at least $1-Q\delta$,
	\begin{align*}
		&\min_{q\in \{0,1\ldots, Q-1\}} \normsq{ \nabla f(\rvx_q) } = \gO\left(  \frac{F_0}{\gamma NQ} + \gamma^2 \frac{1}{Q} L_{2,\infty}^2 N^2 \varsigma^2 + \gamma^2 L_{2,\infty}^2C^2\varsigma^2 \right).
	\end{align*}
	After we tune the step size, the upper bound becomes $\gO\left( \frac{\tilde LF_0 + \left(L_{2,\infty}F_0\varsigma\right)^{\frac{2}{3}} }{Q} + \left(\frac{L_{2,\infty}F_0 C \varsigma}{NQ}\right)^{\frac{2}{3}} \right)$, where $\tilde L = L+L_{2,\infty}+L_{\infty}$.
\end{example}

\textit{Analysis of GraB and PairGraB.} We also give the upper bounds of GraB and PairGraB in Examples~\ref{ex:GraB} and~\ref{ex:PairGraB} (The proofs are deferred to Appendix~\ref{apx-sec:SGD:special cases} due to their complexity). See Appendix~\ref{apx-sec:algs} for details of these two practical algorithms. Though PairGraB has appeared in the public code of \citet{lu2022grab}, its upper bound is still missing before this paper. See Examples~\ref{ex:GraB} and \ref{ex:PairGraB}. First, the upper bounds of GraB and PairGraB are almost identical to that of GraB-proto, with a more stringent constraint of the step size and some differences of numerical constants. Second, the $\bar \phi_q$ of GraB is affected by the factors from the previous two epochs (such as $\bar \phi_{q-1}$ and $\bar \phi_{q-2}$). This is because GraB uses the average of the stale gradients for centering, while PairGraB is free of centering (see Appendix~\ref{apx-sec:algs}).

\begin{example}[GraB]
	\label{ex:GraB}
	Let each $f_n$ be $L_{2,\infty}$-smooth and $L_{\infty}$-smooth, and Assumption~\ref{asm:local deviation} hold. Then, if $\gamma \leq \min\{\frac{1}{128 L_{2,\infty} C},\frac{1}{128 L_{\infty} N} \}$, Assumption~\ref{asm:order error} holds with probability at least $1-\delta$:
	\begin{align*}
		&\left(\bar \phi_{q}\right)^2
		\leq \frac{3}{5}\left( \bar \phi_{q-1} \right)^2 + \frac{1}{50}\left( \bar \phi_{q-2} \right)^2 + \frac{1}{50}N^2 \normsq{\nabla f(\rvx_{q-1})}_{} +\frac{1}{50}N^2 \normsq{\nabla f(\rvx_{q-2})}_{} + 2C^2 \varsigma^2,
	\end{align*}
	where $C = \gO \left(\log\left(\frac{d N}{\delta}\right) \right) = \tilde\gO\left( 1\right)$. Applying Theorem~\ref{thm:SGD} (with a tighter constraint $\gamma \leq \min \{ \frac{1}{L N}, \frac{1}{128L_{2,\infty} (N+C)}, \frac{1}{128L_\infty N} \}$), we get that, with probability at least $1-Q\delta$,
	\begin{align*}
		&\min_{q\in \{0,1\ldots, Q-1\}} \normsq{ \nabla f(\rvx_q) } = \gO\left(  \frac{F_0}{\gamma NQ} + \gamma^2 \frac{1}{Q} L_{2,\infty}^2 N^2 \varsigma^2 + \gamma^2 L_{2,\infty}^2C^2\varsigma^2 \right).
	\end{align*}
	After we tune the step size, the upper bound becomes $\gO\left( \frac{\tilde LF_0 + \left(L_{2,\infty}F_0\varsigma\right)^{\frac{2}{3}} }{Q} + \left(\frac{L_{2,\infty}F_0 C \varsigma}{NQ}\right)^{\frac{2}{3}} \right)$, where $\tilde L = L+L_{2,\infty}\left(1+\frac{C}{N}\right)+L_{\infty}$.
\end{example}

\begin{example}[PairGraB]
	\label{ex:PairGraB}
	Let each $f_n$ be $L_{2,\infty}$-smooth and $L_{\infty}$-smooth, and Assumption~\ref{asm:local deviation} hold. Assume that $N\mod 2=0$. Then, if $\gamma \leq \min\{\frac{1}{64 L_{2,\infty} C},\frac{1}{64 L_{\infty} N} \}$, Assumption~\ref{asm:order error} holds with probability at least $1-\delta$:
	\begin{align*}
		\left(\bar \phi_{q}\right)^2 \leq \frac{4}{5}\left( \bar \phi_{q-1} \right)^2 + \frac{3}{50} N^2 \normsq{\nabla f(\rvx_{q-1})}_{} + 4C^2 \varsigma^2,
	\end{align*}
	where $C = \gO \left(\log\left(\frac{d N}{\delta}\right) \right) = \tilde\gO\left( 1\right)$. Applying Theorem~\ref{thm:SGD} (with a tighter constraint $\gamma \leq \min \{ \frac{1}{L N}, \frac{1}{64L_{2,\infty} (N+C)}, \frac{1}{64L_\infty N} \}$), we get that, with probability at least $1-Q\delta$,
	\begin{align*}
		&\min_{q\in \{0,1\ldots, Q-1\}} \normsq{ \nabla f(\rvx_q) }= \gO\left(  \frac{F_0}{\gamma NQ} + \gamma^2 \frac{1}{Q} L_{2,\infty}^2 N^2 \varsigma^2 + \gamma^2 L_{2,\infty}^2C^2\varsigma^2 \right).
	\end{align*}
	After we tune the step size, the upper bound becomes $\gO\left( \frac{\tilde LF_0 + \left(L_{2,\infty}F_0\varsigma\right)^{\frac{2}{3}} }{Q} + \left(\frac{L_{2,\infty}F_0 C \varsigma}{NQ}\right)^{\frac{2}{3}} \right)$, where $\tilde L = L+L_{2,\infty}\left(1+\frac{C}{N}\right)+L_{\infty}$.
\end{example}

\section{Federated Learning}
\label{sec:FL}

\textit{Setup.} In this section, we adapt our theory on \textit{example ordering in SGD for client ordering in FL.} For FL, we consider the same problem as that in SGD (that is, Eq.~\ref{eq:problem}). Notably, in the context of FL, the local objective functions represent the clients in FL. We focus on FL with regularized client participation (regularized-participation FL), where each client participate once before any client is reused \citep{wang2022unified}. More concretely, see Algorithm~\ref{alg:FL}. During each epoch, it selects $S$ clients at a time from the permuted clients (under the permutation $\pi$) to complete a round of federated training, until all the clients have participated. Pay attention that one ``epoch'' may include multiple ``rounds''. At the end of each epoch, it produces the next-epoch permutation by some permuting algorithm. Here we also consider the global update \citep{karimireddy2020scaffold, wang2022unified} (see Line~\ref{alg-line:FL:amplify}). Considering that we mainly study the client ordering of FL in this paper, we use Gradient Descent (GD) as the local solver of FL (see Lines~\ref{alg-line:FL:local-update}--\ref{alg-line:FL:local-update^2}) for simplicity. We assume $N \mod S = 0$.

\IncMargin{1em}
\begin{algorithm}
	\SetNoFillComment
	\DontPrintSemicolon
	\SetAlgoNoEnd
	\caption{Regularized-participation FL}
	\label{alg:FL}
	\SetKwFunction{Permute}{Permute}
	
	\KwIn{$\pi_0$, $\rvx_0$; \textbf{Output}: $\{\rvx_q\}$}
	\For{$q = 0, 1,\ldots, Q-1$}{
		$\rvw \gets \rvx_q$\\
		\For{$n=0,1,\ldots, N-1$}{
			Initilize $\rvx_{q,0}^{n} \gets \rvw $\\
			\For{$k=0,1,\ldots, K-1$}{\label{alg-line:FL:local-update}
				$\rvx_{q,k+1}^{n} \gets \rvx_{q,k}^{n} - \gamma \nabla f_{\pi_q(n)}(\rvx_{q,k}^n)$\label{alg-line:FL:local-update^2}\\
			}
			$\rvp_q^n \gets \rvx_{q,0}^{n} - \rvx_{q,K}^{n}$\\
			\If{$(n+1) \mod S = 0$}{
				$\rvw \gets \rvw - \frac{1}{S}\sum_{s=0}^{S-1}\rvp_q^{n-s} $
			}	
		}
		$\rvx_{q+1} \gets \rvx_{q} - \eta(\rvx_{q}-\rvw)$\label{alg-line:FL:amplify}\\
		$\pi_{q+1} \gets$ \Permute{$\cdots$}
	}
\end{algorithm}
\DecMargin{1em}

\textit{Main theorem.} Compared to SGD, the main challenges or differences lie in the following two aspects:
\begin{enumerate}
	\item Partially parallel updates. In a round of federated training, the selected $S$ clients are in parallel.
	\item Local updates. It performs multiple local updates on each local objective function.
\end{enumerate}
First, we obtain Definition~\ref{def:FL:order error} by a similar analysis of Definition~\ref{def:order error} in Section~\ref{subsec:order error}, which exactly addresses the first challenge. Then, with the help of Definition~\ref{def:FL:order error}, we propose Assumption~\ref{asm:FL:order error} and prove Theorem~\ref{thm:FL}. Notably, the third term (containing $\varsigma$) on the right hand side in Ineq.~\eqref{eq:FL:main theorem} is not subsumed into Assumption~\ref{asm:FL:order error}. This is because this term is from the local updates, which is affected by the example order rather than the client order in FL. In our setting (GD is used as the local solver), the second challenge is relatively manageable. However, it may significantly complicate the analysis if permutation-based SGD is used as the local solver in FL, which we leave for future work.

\begin{definition}
	\label{def:FL:order error}
	The order error $\bar \varphi_q$ in any epoch $q$ in FL is defined as ($v(n) \coloneqq \floor{\frac{n}{S}}\cdot S$)
	\begin{align*}
		\bar\varphi_{q} \coloneq \max_{n \in[N]}\left\{\varphi_{q}^{v(n)} \coloneqq \norm{ \sum_{i=0}^{\textcolor{blue}{v(n)}-1} \left(\nabla f_{\pi_q(i)}(\rvx_{q}) - \nabla f(\rvx_{q})\right) }_p\right\}.
	\end{align*}
\end{definition}

\begin{assumption}
	\label{asm:FL:order error}
	There exist nonnegative constants $\{A_i\}_{i=1}^q$, $\{B_i\}_{i=0}^q$ and $D$ such that for all $\rvx_q$ (the outputs of Algorithm~\ref{alg:FL}),
	\begin{align*}
		\left(\bar \varphi_q \right)^2 &\leq  \sum_{i=1}^{q} A_{i}\left(\bar \varphi_{q-i} \right)^2  + \sum_{i=0}^{q} B_{i}\normsq{\nabla f(\rvx_{q-i})}  + D\,.
	\end{align*}
\end{assumption}

\begin{theorem}
	\label{thm:FL}
	Let the global objective function $f$ be $L$-smooth, each local objective function $f_n$ be $L_{2,p}$-smooth and $L_p$-smooth ($p\geq 2$), and Assumption~\ref{asm:local deviation} hold. Suppose $N \mod S = 0$. Suppose that Assumption~\ref{asm:FL:order error} holds with $A_i = B_i = 0$ for $i> \nu$ ($\nu$ is a very small constant compared to $Q$). If $\gamma \leq \left\{ \frac{1}{32L_{2,p} KN\frac{1}{S}}, \frac{1}{\eta L KN\frac{1}{S}}, \frac{1}{32L_p KN\frac{1}{S}} \right\}$ and $\frac{\sum_{i=0}^{\nu} B_i }{N^2\left(1-\sum_{i=1}^{\nu} A_i\right)} <255$, then
	\begin{align}
		&\min_{q\in \{0,1,\ldots,Q-1\}} \normsq{\nabla f(\rvx_q)}\nonumber\\
		&\leq c_1 \cdot \frac{ F_0 }{\gamma \eta KN\frac{1}{S}Q} + c_2 \cdot \gamma^2 L_{2,p}^2 K^2\frac{1}{S^2}  \frac{1}{Q}\sum_{i=0}^{\nu-1}\left( \bar \varphi_{i}\right)^2 + 2c_1\cdot \gamma^2 L_{2,p}^2 K^2 \varsigma^2 +  c_2\cdot\gamma^2 L_{2,p}^2 K^2\frac{1}{S^2} D\,.\label{eq:FL:main theorem}
	\end{align}
	where $c_1$ and $c_2$ are numerical constants such that $c_1 \geq \nicefrac{1}{\left(\frac{255}{512} - \frac{\sum_{i=0}^{\nu} B_i }{512N^2 \left(1-\sum_{i=1}^{\nu} A_i\right)}\right)}$ and $c_2 \geq \left(\frac{2}{1-\sum_{i=1}^{\nu} A_i}\right)\cdot c_1$.
\end{theorem}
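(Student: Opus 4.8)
The plan is to mirror the proof of Theorem~\ref{thm:SGD}, treating one epoch of regularized-participation FL as a single ``super-iteration'' on the global model $\rvx_q$, and then tracking the per-epoch descent using the FL order error $\bar\varphi_q$ in place of $\bar\phi_q$. First I would write out the cumulative global update over an epoch: combining Lines~\ref{alg-line:FL:local-update}--\ref{alg-line:FL:amplify}, one gets $\rvx_{q+1} - \rvx_q = -\eta \sum_{n=0}^{N-1} \rvp_q^n / S = -\eta \gamma \sum_{n=0}^{N-1}\sum_{k=0}^{K-1} \nabla f_{\pi_q(n)}(\rvx_{q,k}^n)/S$. The natural decomposition is into (i) the idealized full-gradient term $-\eta\gamma (KN/S)\nabla f(\rvx_q)$, (ii) a \emph{local-update drift} term measuring $\nabla f_{\pi_q(n)}(\rvx_{q,k}^n) - \nabla f_{\pi_q(n)}(\rvx_{q,0}^n)$ accumulated over the inner loop $k$, and (iii) an \emph{order-error} term of the form $\gamma\sum_n (\nabla f_{\pi_q(n)}(\rvw_{v(n)}) - \nabla f(\rvx_q))$ where $\rvw_{v(n)}$ is the running averaged model at the point client $n$ enters. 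The key point — and the reason Definition~\ref{def:FL:order error} uses $v(n) = \lfloor n/S\rfloor \cdot S$ rather than $n$ — is that within a round all $S$ clients see the same $\rvw$, so partial sums only ``update'' at multiples of $S$; this is exactly what makes the order-error telescoping argument go through with $\bar\varphi_q$ as defined.

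Next I would bound each term. The local-update drift (ii) is controlled by $L_{2,p}$- and $L_p$-smoothness: a standard one-epoch contraction argument (as in Lemma~\ref{lem:parameter drift}-type estimates) under $\gamma \lesssim 1/(L_p K N /S)$ shows $\norm{\rvx_{q,k}^n - \rvx_{q,0}^n}$ stays $O(\gamma k \norm{\nabla f(\rvx_q)} + \gamma k \varsigma + \gamma\, \bar\varphi_q\text{-type terms})$, using Assumption~\ref{asm:local deviation} to handle the heterogeneity $\varsigma$; this is what produces the isolated $\gamma^2 L_{2,p}^2 K^2 \varsigma^2$ term in Ineq.~\eqref{eq:FL:main theorem}, which the authors correctly note is \emph{not} folded into Assumption~\ref{asm:FL:order error} because it stems from local steps, not client ordering. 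The order-error term (iii) is bounded in norm by $\gamma L_{2,p} (N/S)\,\bar\varphi_q$ exactly as in the SGD smoothness estimate preceding Eq.~\eqref{eq:intro:order error}, after pulling a Hessian-type factor out via $L_{2,p}$-smoothness and the partial-sum structure at round boundaries. Plugging these into the descent lemma for the $L$-smooth $f$ (with effective step $\eta\gamma K N/S$, requiring $\eta L \gamma K N/S \le 1$), I get a per-epoch inequality of the shape $f(\rvx_{q+1}) \le f(\rvx_q) - c\,\eta\gamma (KN/S)\norm{\nabla f(\rvx_q)}^2 + (\text{positive multiples of } \gamma^3 \text{ times } \bar\varphi_q^2, \ \varsigma^2)$.

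Then I would telescope over $q = 0,\dots,Q-1$ and substitute Assumption~\ref{asm:FL:order error} to eliminate the $(\bar\varphi_q)^2$ terms. Because $A_i = B_i = 0$ for $i > \nu$, summing the recursion $\sum_q (\bar\varphi_q)^2 \le \sum_q(\sum_{i\le\nu} A_i (\bar\varphi_{q-i})^2 + \sum_{i\le\nu} B_i\norm{\nabla f(\rvx_{q-i})}^2 + D)$ gives $(1-\sum_i A_i)\sum_q(\bar\varphi_q)^2 \le (\sum_i B_i)\sum_q \norm{\nabla f(\rvx_q)}^2 + QD + \sum_{i=0}^{\nu-1}(\bar\varphi_i)^2$ (the last coming from the $q < \nu$ boundary), where the condition $\sum B_i / (N^2(1-\sum A_i)) < 255$ ensures the $\norm{\nabla f}^2$ feedback can be absorbed into the left-hand descent term with a positive residual coefficient — this is the origin of the $255/512$ and the constants $c_1, c_2$. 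Rearranging and dividing by $\eta\gamma(KN/S)Q$ yields the stated bound on $\min_q \norm{\nabla f(\rvx_q)}^2$. I expect the main obstacle to be the bookkeeping in step (ii)/(iii): correctly isolating which cross-terms between the local-update drift and the order-error term contribute to the $\bar\varphi_q$ recursion versus the standalone $\varsigma^2$ term, and verifying that all the step-size restrictions ($\gamma \lesssim 1/(L_{2,p}KN/S)$, $1/(\eta L K N/S)$, $1/(L_p K N/S)$) are simultaneously what is needed to make every contraction and absorption step valid — essentially the same delicacy as in Theorem~\ref{thm:SGD} but with the extra inner-loop index $k$ and the amplification factor $\eta$ threaded through.
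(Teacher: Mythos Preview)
Your proposal is essentially correct and follows the same overall architecture as the paper: apply the $L$-smooth descent lemma to $f$ over one epoch, bound the gradient-error term via smoothness and a parameter-drift estimate, telescope, substitute Assumption~\ref{asm:FL:order error} into $\sum_q(\bar\varphi_q)^2$, and absorb the $\norm{\nabla f}^2$ feedback using the condition on $\sum B_i/(N^2(1-\sum A_i))$. The one organizational difference worth noting is that the paper does \emph{not} separate your pieces (ii) and (iii) at the gradient level; instead it defines a single maximal drift $\Delta_q = \max_{n,k}\norm{\rvx_{q,k}^n - \rvx_q}_p$ (taken over \emph{all} local iterates, including across rounds) and proves an FL analogue of Lemma~\ref{lem:parameter drift} by self-bounding under $\gamma L_p KN/S\le 1/32$, giving $(\Delta_q)^2 \le 4\gamma^2 K^2S^{-2}(\bar\varphi_q)^2 + 4\gamma^2 K^2 N^2 S^{-2}\norm{\nabla f(\rvx_q)}_p^2 + 4\gamma^2 K^2 \varsigma^2$. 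This neatly sidesteps the cross-term bookkeeping you flag as the main obstacle: your (ii) and (iii) are in fact coupled (the local drift $\rvx_{q,k}^n - \rvx_{q,0}^n$ is measured from $\rvx_{q,0}^n = \rvw_{v(n)}$, which itself has drifted from $\rvx_q$), and the unified $\Delta_q$ handles both at once. Also, the ``Hessian-type factor'' language for (iii) is only the heuristic from the introduction --- the rigorous argument never touches second derivatives; it uses $L_{2,p}$-smoothness once to convert $\normsq{\text{avg gradient} - \nabla f(\rvx_q)} \le L_{2,p}^2 (\Delta_q)^2$ and then plugs in the drift bound.
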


\begin{table*}[t]
	\caption{Upper bounds of FL with regularized client participation (the numerical constants and polylogarithmic factors are hided). The global step size is set to $\eta=1$ for comparison.}
	\label{tab:FL}
	\setlength{\tabcolsep}{1.3em}
	\centering{
			\resizebox{\linewidth}{!}{
				\begin{threeparttable}
					\begin{tabular}{lll}
						\toprule
						{\bf Method}  & \textbf{Corr.} & \textbf{Upper Bound}  \\\midrule

						\makecell[l]{FL-AP\\ \citep{wang2022unified}}  &AP &$\frac{L F_0}{Q} + \left(\frac{LF_0 S\varsigma }{ NQ}\right)^{\frac{2}{3}} + \left(\frac{LF_0 N \varsigma }{ NQ}\right)^{\frac{2}{3}}$\tnote{\textcolor{blue}{(1)}} \vspace{0.5ex} \\
						
						\midrule
						FL-AP (Ex.~\ref{ex:FL-AP}) &AP & $\frac{L F_0}{Q} + \left(\frac{LF_0 S\varsigma }{ NQ}\right)^{\frac{2}{3}} + \left(\frac{LF_0 N \varsigma }{ NQ}\right)^{\frac{2}{3}}$  \vspace{0.5ex} \\
						
						FL-RR (Ex.~\ref{ex:FL-RR}) &RR & $\frac{L F_0}{Q} + \left(\frac{LF_0 S\varsigma }{ NQ}\right)^{\frac{2}{3}} + \left(\frac{LF_0 \sqrt{N} \varsigma }{ NQ}\right)^{\frac{2}{3}}$  \vspace{0.5ex} \\
						
						FL-OP (Ex.~\ref{ex:FL-OP}) &OP & $\frac{L F_0}{Q} + \left(\frac{LF_0 S\varsigma }{ NQ}\right)^{\frac{2}{3}} + \left(\frac{LF_0 \bar\varphi_0 }{ NQ}\right)^{\frac{2}{3}}$\tnote{\textcolor{blue}{(2)}} \vspace{0.5ex} \\
						
						FL-GraB (Ex.~\ref{ex:FL-GraB}) &PairGraB & $\frac{\tilde L  F_0+\left (L_{2,\infty}F_0 \varsigma \right)^{\frac{2}{3}}}{Q} + \left(\frac{L_{2,\infty}F_0 S\varsigma }{ NQ}\right)^{\frac{2}{3}} + \left(\frac{L_{2,\infty}F_0 \varsigma }{ NQ}\right)^{\frac{2}{3}}$\tnote{\textcolor{blue}{(3)}}\\[0.5ex]
						
						\bottomrule
					\end{tabular}
					\begin{tablenotes}
						\scriptsize
						\item[1] In \citet[Theorem~3.1]{wang2022unified}, let $d$ be $\varsigma$; let $\tilde \nu$ and $\tilde \beta$ be $\varsigma$ (see their Proposition 4.1); let $\sigma$ be $0$; let $\gF$ be $F_0$; let $I$ be $K$; let $P = \frac{N}{S}$. Then letting $\eta =1$, tuning the step size with Lemma~\ref{lem:step-size}, we can recover the bound in the table.
						\item[2] It requires $\theta \lesssim \frac{\bar\varphi_0}{LN}$ (see Assumption~\ref{asm:parameter deviation} for $\theta$). The $\bar \varphi_0$ can be $\gO(N\varsigma)$, $\tilde\gO(\sqrt{N}\varsigma)$ and $\tilde \gO(\varsigma)$, depending on the initial permutation.
						\item[3] Here $\tilde L = L +L_{2,\infty} + L_{\infty}$. See Definition~\ref{def:smoothness} for $L$, $L_{2,\infty}$ and $L_{\infty}$.
					\end{tablenotes}
				\end{threeparttable}
			}
		}
\end{table*}

\textit{Case studies.} Our unified framework covers regularized-participation FL with Arbitrary Permutations (FL-AP), with Random Reshuffling (FL-RR), with One Permutation (FL-OP) and with GraBs (FL-GraBs). They correspond to AP, RR, OP, and GraBs in SGD, respectively. In particular, we propose regularized-participation FL with GraB (FL-GraB), whose corresponding algorithm in SGD is PairGraB (the most advanced GraB algorithm). See Appendix~\ref{apx-sec:algs}.

The upper bounds are summarized in Table~\ref{tab:FL}, and the details are in Examples~\ref{ex:FL-AP}--\ref{ex:FL-GraB} (Appendix~\ref{apx-sec:FL:special cases}). As shown in Table~\ref{tab:FL}, the main difference lies in the last term: the upper bound of FL-GraB $\tilde \gO ((\frac{1}{NQ})^{\frac{2}{3}})$ dominants those of the other algorithms in terms of the number of epochs $Q$ and the number of clients $N$; when the parameter change is small and the initial permutation is nice, FL-OP can achieve the best rate of $\tilde \gO ((\frac{1}{NQ})^{\frac{2}{3}})$. These conclusions are aligned with those in SGD.

\section{Experiments}
\label{sec:exp}

\begin{figure}[h]
	\centering
	\includegraphics[width=0.49\linewidth]{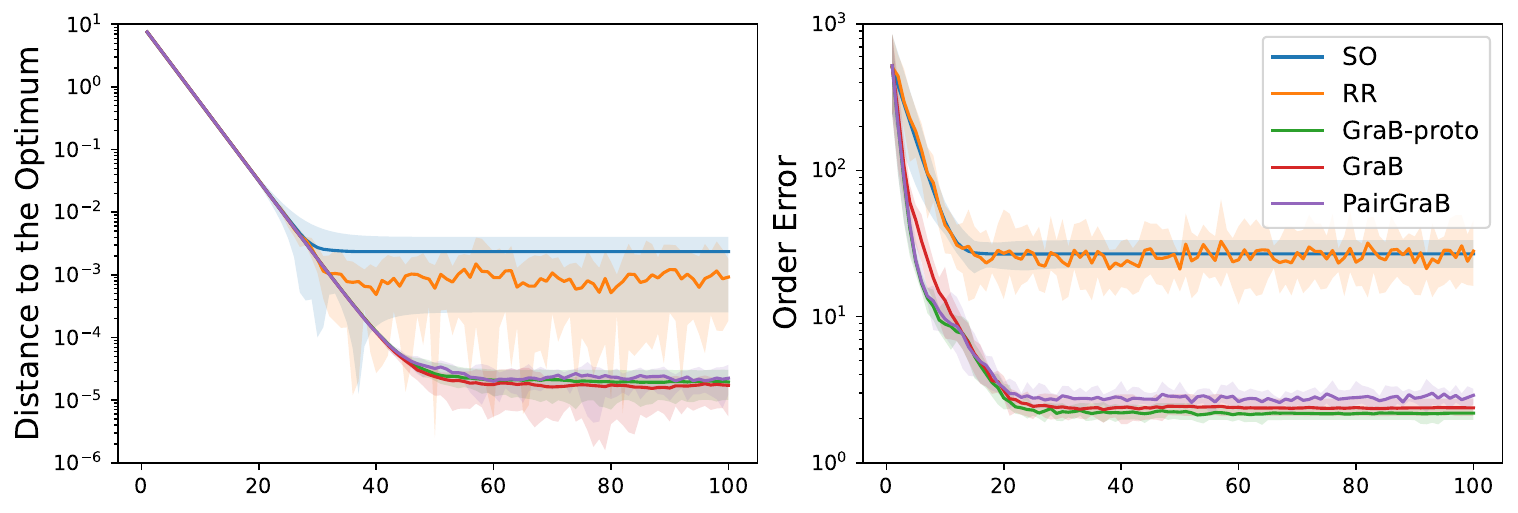}
	\includegraphics[width=0.49\linewidth]{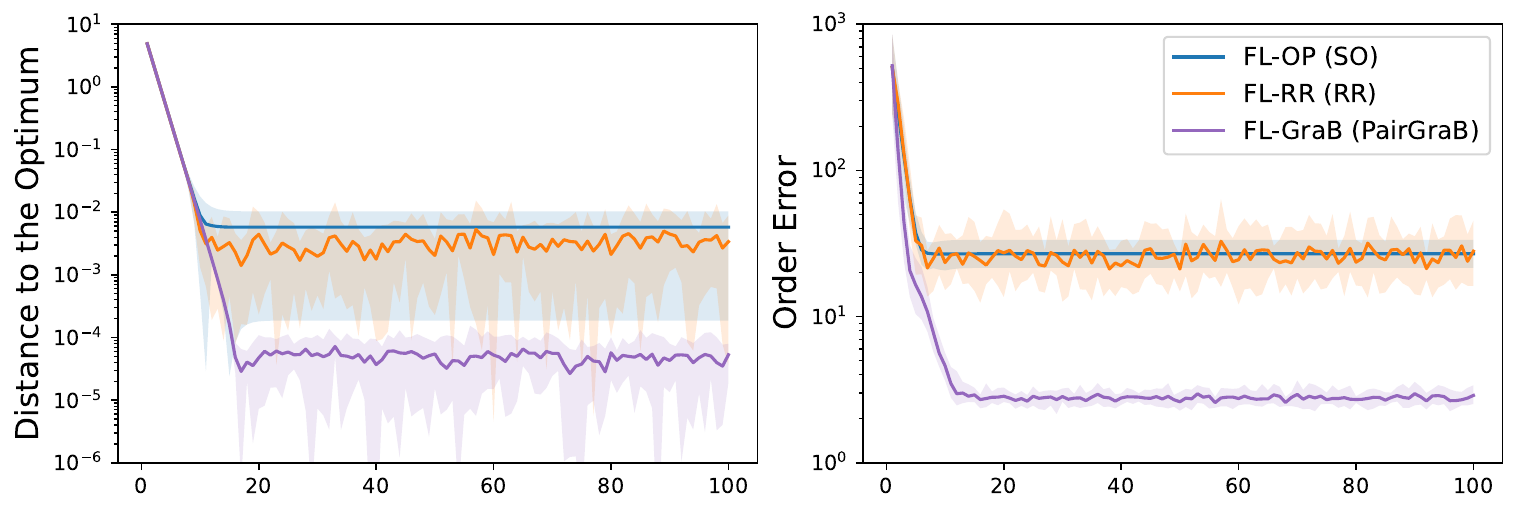}
	\caption{Simulations on quadratic functions. Shaded areas show the min-max values across 10 different random seeds. The left two figures are for SGD; the right two figures are for FL (The corresponding algorithms in SGD are in the parentheses.). For both SGD and FL, $\gamma$ is set to be the same for the algorithms; $N=1000$. For FL, $K=5$ and $S=2$.}
	\label{fig:quadratic}
\end{figure}

In this section, we run experiments on quadratic functions to validate the theory. Refer to \citet{lu2022grab,cooper2023coordinating} for the experimental results of SGD on real data sets; refer to Appendix~\ref{apx-sec:exps} for the experimental results of FL on real data sets.

We use the One-dimensional quadratic functions with the form of $f_n(\rvx) = a_n \rvx^2 + b_n \rvx $ for all $n\in \{ 0,1,\ldots,N-1 \}$ as the local objective functions. We model $a_n \sim \gN (0.5, 1)$ and $b_n \sim \gN (0,1)$ ($\gN$ is the normal distribution). Here $a_n$ and $b_n$ control the heterogeneity of the local objective functions. The experimental results are shown in Figure~\ref{fig:quadratic}. First, we see that the distance between the parameter $\rvx$ and the optimum $\rvx^\ast$ (that is, $\norm{\rvx-\rvx^\ast}$) and the order error $\bar\phi$ have the same trend, which validates that $\bar\phi$ can measure the convergence rate. Second, we see that the GraB algorithms are better than RR and SO in both SGD and FL.

\section{Conclusion}

We study example ordering in permutation-based SGD and client ordering in regularized-participation FL. For SGD, we propose a more general assumption (Assumption~\ref{asm:order error}) to bound the order error. Using it, we develop a unified framework for permutation-based SGD with arbitrary permutations of examples, including AP, RR, OP and GraBs. Furthermore, we develop a unified framework for regularized-participation FL with arbitrary permutations of clients, including FL-AP, FL-RR, FL-OP and FL-GraBs.

Possible future directions are as follows. First, explore new algorithm for SGD (no new algorithms are proposed for SGD in this work). Second, extend the framework to more practical scenarios for FL (our theory is for FL with regularized participation). Third, study example ordering in local updates for FL (we use GD as the local solver).

\bibliographystyle{plainnat}
\bibliography{refs}

\clearpage
\begin{center}
	\LARGE {Appendix}
\end{center}
\appendix
\vskip 3ex\hrule\vskip 2ex
{
		\hypersetup{linktoc=page}
		\parskip=0.5ex
		\startcontents[sections]
		\printcontents[sections]{l}{1}{\setcounter{tocdepth}{3}}
	}
\vskip 3ex\hrule\vskip 5ex
\clearpage

\section{Related Work}
\label{apx-sec:related-works}

Convergence analyses of permutation-based SGD. Up to now, there have been a wealth of works to analyze the convergence of permutation-based SGD \citep{ahn2020sgd, mishchenko2020random,mishchenko2022proximal,nguyen2021unified,liu2024last, safran2020good,safran2021random,rajput2020closing,rajput2022permutationbased,yun2021open,yun2022minibatch,cha2023tighter}. Among them, the most relevant works are the unified analyses of permutation-based SGD \citep{lu2021general,mohtashami2022characterizing,koloskova2024convergence}. They all rely on Assumption~\ref{asm:order error} (they may consider an interval of arbitrary length, not necessarily an epoch); this assumption has been widely adopted in the subsequent works \citep{even2023stochastic,islamov2024asgrad,li2024provably} for other settings beyond this paper. Let us use the upper bounds in \citet{mishchenko2020random} as the baselines. The framework of \citet{lu2021general} includes AP, RR, SO (and so on). Their upper bounds of AP and RR match the baselines; the error term of their upper bound of SO is $\gO \left(\frac{LF_0\sqrt{Nd}\varsigma}{NQ} \right)$, which is better than the baselines when the dimension $d$ is smaller the number of the examples $N$. The framework of \citet{koloskova2024convergence} includes RR, IG, SO (and so on). The optimization term of the upper bounds of IG and SO is improved from $\gO\left(\frac{LF_0}{Q}\right)$ to $\gO\left(\frac{LF_0}{NQ}\right)$; one drawback is that they cannot recover the prior best known bound of RR. Most importantly, the existing works can not include GraBs.

Convergence analyses of FL with regularized client participation. The convergence analyses of regularized-participation FL have been studied in \citet{wang2022unified,cho2023convergence,malinovsky2023federated}, where \citet{wang2022unified} considered regularized-participation FL with AP (FL-AP), \citet{cho2023convergence} considered regularized-participation FL with OP (FL-OP, or FL with cyclic client participation) and \citet{malinovsky2023federated} considered regularized-participation FL with RR (FL-RR). Thus, this work aims to develop a unified framework that includes these cases. Importantly, this work focuses on client ordering in FL with regularized participation, which is different from the studies of FL with arbitrary participation \citep{gu2021fast,wang2022unified,wang2024lightweight} and client sampling \citep{cho2022towards,horvath2022fedshuffle}.

\section{Notations}
\label{apx-sec:notations}

\begin{table}[ht]
	\caption{Summary of key notations.}
	\label{tab:notations}
	\centering{
				\begin{threeparttable}
					\begin{tabular}{lp{42em}}
						\toprule
						\textbf{Notation}  & \textbf{Description}  \vspace{0ex} \\\midrule
						$Q$ &Number of epochs.\\
						$N$ &Number of local objective functions.\\
						$K$ &Number of local steps in FL.\\
						$S$ &Number of participating clients in each round in FL. \\
						$L_{p,p'}$ &Smoothness constants (see Definition~\ref{def:smoothness}).\\
						$A,B,D$ &Constants in Assumptions~\ref{asm:order error} and~\ref{asm:FL:order error}.\\
						$d$ &Dimension of the model parameter vector\\
						$\varsigma$ &Constant in Assumption~\ref{asm:local deviation}\\
						$\theta$ &Constant in Assumption~\ref{asm:parameter deviation}\\
						$\gamma$ &Step size\\
						$\eta$ &Global step size (in FL)\\
						$\bar\phi$ &Order Error in SGD\\
						$\bar\varphi$ &Order Error in FL\\
						$\pi$ &A permutation of $\{0,1, \ldots, N-1\}$. It serves as the order of examples or clients.\\
						$\pi(n)$ &The $(n+1)$-th element of permutation $\pi$.\\
						$f$ &Global objective function.\\
						$f_n$ &Local objective function. It represents examples in SGD; it represents clients in FL.\\
						$F_0$ &$F_0 = f(\rvx_0) - f_\ast$\\
						$\rvx$ &Model parameter vector.\\
						$\rvx_q^n$ &Parameter vector after $n$ steps in epoch $q$ (in SGD).\\
						$\rvx_{q,k}^n$ &Parameter vector after $k$ local updates in client $n$ in epoch $q$ (in FL).\\
						$\rvp_q^n$ &Pseudo-gradient of client $n$ in epoch $q$ in FL.\\

						\bottomrule
					\end{tabular}
				\end{threeparttable}
		}
	\end{table}

In this paper, ``SGD'' refers to ``permutation-based SGD'' and ``FL'' refers to ``regularized-participation FL (FL with regularized client participation)''.

Key notations are in Table~\ref{tab:notations}.

\textit{Norm.} We use $\norm{\cdot}_p$ to denote the Lebesgue $p$-norm; unless otherwise stated, we use $\norm{\cdot}_{}$ to denote the Lebesgue $2$-norm. 

\textit{Set.} We let $[n] \coloneqq \{1,2,\ldots, n\}$ for $n\in \mathbb{N}^+$ and $\{x_i\}_{i\in \gS} \coloneqq \{x_i \mid i \in \gS\}$ for any set $\gS$. We let $\abs{\gS}$ be the size of any set $\gS$. 

\textit{Big O notations.} We use $\lesssim$ to denote ``less than'' up to some numerical constants and polylogarithmic factors, and $\gtrsim$ and $\asymp$ are defined likewise. We also use the big O notations, $\tilde O$, $\gO$, $\Omega$, where $\gO$, $\Omega$ hide numerical constants, $\tilde\gO$ hides numerical constants and polylogarithmic factors.

\textit{Notations in proofs.} For convenience, we will use ``$\textrm{T}_n$'' to denote the $n$-th term on the right hand side in some equation in the following proofs. We will use $\pm$ to mean ``add ($+$)'' and then ``subtract ($-$)'' the term: $a\pm b$ means $a - b+ b$.

Importantly, $\pi$ is a permutation of $\{0,1,\ldots, N-1\}$, and it serves as the training orders of data examples in SGD or training orders of clients in FL. Next, we need to define an operation on $\pi$ as done in \citet[Appendix~B]{lu2022grab} and \citet[Appendix C.4]{cooper2023coordinating}:
\begin{align*}
	\pi^{-1}(i) \coloneq j \text{ such that } \pi(j) = i,\quad i, j \in \{0,1,\ldots, N-1\}
\end{align*}
It represents that the index of $i$ in the permutation $\pi$ is $j$, where $i,j \in \{0,1,\ldots, N-1\}$. This operation will be very useful in Appendices~\ref{apx-subsec:GraB},~\ref{apx-subsec:PairGraB} and \ref{apx-subsec:balancing}. In addition, according to the definition, we have
\begin{align*}
	\pi^{-1} \left( \pi(j) \right) = j\, .
\end{align*}
\begin{proof}
	Assume that $\pi^{-1} \left( \pi(j) \right) = k \neq j$. Then, according to the definition, we get $\pi(j) = \pi(k)$, which implies that $j = k$. This contradicts our assumption. Thus, we have $\pi^{-1} \left( \pi(j) \right) = k = j$.
\end{proof}

\section{Algorithms}
\label{apx-sec:algs}

In this section, we provide more details about GraBs.

\subsection{Preliminaries of GraBs}
\label{apx-subsec:pre-GraBs}

\begin{minipage}[t]{0.5\linewidth}
\IncMargin{1em}
\begin{algorithm}[H]
	\DontPrintSemicolon
	\SetAlgoNoEnd
	\SetAlgoVlined
	\caption{Balancing \citep{alweiss2021discrepancy}}\label{alg:balancing}
	
	\let\oldnl\nl
	\newcommand{\nonl}{\renewcommand{\nl}{\let\nl\oldnl}}
	\SetKwFunction{Balance}{Balance} 
	
	\SetKwProg{Function}{Function}{}{} 
	
	\Function{\Balance{$\{\rvz_n\}_{n=0}^{N-1}$}}{
		Initialize running sum $\rvs$, hyperparameter $c$\\
		Initialize $\{\epsilon_n\}$ for assigned signs\\
		\For{$n = 0,\dots, N-1$}{
			Compute $\tilde p \gets \frac{1}{2}- \frac{\innerprod{\rvs, \rvz_n}}{2c}$\\
			Assign signs:\\
			\nonl\qquad$\epsilon_n \gets +1$ with probability $\tilde p$;\\
			\nonl\qquad$\epsilon_n \gets -1$ with probability $1-\tilde p$\\
			Update $\rvs\gets \rvs+ \epsilon_n \cdot \rvz_n$
		}
		\Return{\textnormal{the assigned signs} $\{\epsilon_n\}$}
	}
\end{algorithm}
\DecMargin{1em}
\end{minipage}
\hfill
\begin{minipage}[t]{0.5\linewidth}
\IncMargin{1em}
\begin{algorithm}[H]
	\DontPrintSemicolon
	\SetAlgoNoEnd
	\SetAlgoVlined
	\caption{Reordering \citep{harvey2014near}}\label{alg:reordering}
	
	\SetKwFunction{Reorder}{Reorder}
	
	\SetKwProg{Function}{Function}{}{} 
	
	\Function{\Reorder{$\pi$, $\{\epsilon_n\}_{n=0}^{N-1}$}}{
		Initialize two lists $L_{\text{positive}} \leftarrow [\hspace{0.1em}]$, $L_{\text{negative}} \leftarrow [\hspace{0.1em}]$\;
		\For{$n = 0,\dots, N-1$}{
			\If{$\epsilon_{n} = +1$}{
				Append $\pi(n)$ to $L_{\text{positive}}$
			}
			\Else{
				Append $\pi(n)$ to $L_{\text{negative}}$
			}
		}
		$\pi' = \textnormal{concatenate}(L_{\textnormal{positive}}, \textnormal{reverse}(L_{\textnormal{negative}}))$\\
		\Return{\textnormal{the new order $\pi'$}}
	}
\end{algorithm}
\DecMargin{1em}
\end{minipage}

Recall that our goal is to find a permutation to minimize the order error (Notably, in GraBs, $\bar \phi_q$ is defined by $\norm{\cdot}_{\infty}$)
\begin{align*}
	\bar \phi_q \coloneqq \max_{n\in[N]} \norm{ \sum_{i=0}^{n-1} \left(\nabla f_{\pi_q(i)}(\rvx_q) - \nabla f(\rvx_q)\right) }_{\infty}\,,
\end{align*}
which is aligned with the goal of herding \citep{welling2009herding}. With this insight, \citet{lu2022grab} proposed GraB (to produce good permutations online) based on the theory of herding and balancing \citep{harvey2014near,alweiss2021discrepancy}: Consider $N$ vectors $\{\rvz_n \}_{n=0}^{N-1}$ such that $\sum_{n=0}^{N-1} \rvz_n = 0$ and $\norm{\rvz_n} \leq 1$. First, for any permutation $\pi$, assign the signs $\{\epsilon_{n} \}_{n=0}^{N-1}$ ($\epsilon_n \in \{-1,+1\}$) to the permuted vectors $\{\rvz_{\pi(n)}\}_{n=0}^{N-1}$ using the \textit{balancing} algorithms (such as Algorithm~\ref{alg:balancing}). Second, with the assigned signs and the old permutation $\pi$, produce a new permutation $\pi'$ by the \textit{reordering} algorithm (that is, Algorithm~\ref{alg:reordering}). Then,
\begin{align}
	\max_{n \in [N]} \norm{\sum_{i=0}^{n-1}\rvz_{\pi'(i)}}_{\infty}
	&\leq \frac{1}{2} \max_{n \in [N]} \norm{\sum_{i=0}^{n-1}\rvz_{\pi(i)}}_{\infty} +  \frac{1}{2} \max_{n \in [N]} \norm{\sum_{i=0}^{n-1}\epsilon_i\cdot  \rvz_{\pi(i)}}_{\infty} \label{eq:ex-grab:key relation^2},
\end{align}
where we call the three terms, the herding error under $\pi'$, the herding error under $\pi$, and the signed herding error under $\pi$, respectively (see Lemma~\ref{lem:basic-balancing-reordering}). Ineq.~\eqref{eq:ex-grab:key relation^2} ensures that the herding error will be reduced (from $\pi$ to $\pi'$) as long as the signed herding error is small. That is, the herding error can be progressively reduced by balancing and reordering the vectors. By iteratively applying this process (balancing and then reordering), the herding error will approach the signed herding error, which is proved to be $\tilde\gO\left( 1\right)$, if the signs are assigned by Algorithm~\ref{alg:balancing} \citep[Theorem~1.1]{alweiss2021discrepancy}. 

Now, we introduce the concrete GraB algorithms. To present the key idea of GraBs, as well as our theory, we propose GraB-proto and PairGraB-proto, where the former is a simplified version of the original GraB algorithm \citep{lu2022grab}, and the latter is a simplified version of PairGraB algorithm \citep{lu2022grab,cooper2023coordinating}.
\begin{itemize}
	\item GraB-proto. Use \texttt{BasicBR} (Algorithm~\ref{alg:BasicBR}) as the \texttt{Permute} function in Algorithm~\ref{alg:SGD}, with the inputs of $\pi_q$, $\{\nabla f_{\pi_q(n)}(\rvx_q)\}_{n=0}^{N-1}$ and $\nabla f(\rvx_q)$, for each epoch $q$.
	\item PairGraB-proto. Use \texttt{PairBR} (Algorithm~\ref{alg:PairBR}) as the \texttt{Permute} function in Algorithm~\ref{alg:SGD}, with the inputs of $\pi_q$, $\{\nabla f_{\pi_q(n)}(\rvx_q)\}_{n=0}^{N-1}$ and $\nabla f(\rvx_q)$, for each epoch $q$.
\end{itemize}
The main distinction is that GraB-proto uses the basic balancing and reordering algorithm (\texttt{BasicBR}) while PairGraB-proto uses the pair balancing and reordering algorithm (\texttt{PairBR}). The advantage of \texttt{PairBR} is that it is free of \textit{centering} the input vectors in the practical implementation. As shown in Algorithm~\ref{alg:PairBR} (Lines \ref{alg-line:PairBR:diff}--\ref{alg-line:PairBR:diff^2}), it balances the difference of two centered vectors, which is equivalent to balancing the difference of the two original vectors as the mean vectors are canceled out:
\begin{align*}
	\rvd_l = \left( \rvz_{2l} - \rvm \right) - \left( \rvz_{2l+1} - \rvm \right) = \rvz_{2l} - \rvz_{2l+1}.
\end{align*}
This advantage makes it seamlessly compatible with online algorithms such as SGD. Notably, compared with the original GraB and PairGraB algorithms, whose implementation details are deferred to Appendix~\ref{apx-subsec:imp-GraBs}, GraB-proto and PairGraB-proto are impractical in computation and storage, however, they are simple, and sufficient to support our theory.

Next, we briefly introduce the original GraB and PairGraB algorithms.
\begin{itemize}
	\item GraB. Use \texttt{BasicBR} (Algorithm~\ref{alg:BasicBR}) as the \texttt{Permute} function in Algorithm~\ref{alg:SGD}, with the inputs of $\pi_q$, $\{ \nabla f_{\pi_q(n)} (\rvx_q^n) \}_{n=0}^{N-1}$ and $\frac{1}{N}\sum_{n=0}^{N-1}\nabla f_{\pi_{q-1}(n)} (\rvx_{q-1}^n)$, for each epoch $q$.
	\item PairGraB. Use \texttt{PairBR} (Algorithm~\ref{alg:PairBR}) as the \texttt{Permute} function in Algorithm~\ref{alg:SGD}, with the inputs of $\pi_q$, $\{ \nabla f_{\pi_q(n)} (\rvx_q^n) \}_{n=0}^{N-1}$ and $\frac{1}{N}\sum_{n=0}^{N-1}\nabla f_{\pi_q(n)} (\rvx_q^n)$, for each epoch $q$.
\end{itemize}
They replace $\nabla f_{\pi_q(n)} (\rvx_q)$ in their prototype versions with the easily accessible $\nabla f_{\pi_q(n)} (\rvx_q^n)$, reducing the unnecessary computational cost. Besides, for GraB, to overcome the challenge of centering the gradients in the \texttt{BasicBR} algorithm, GraB uses the average of the stale gradients as the estimate of the actual average of the fresh gradients, to ``center'' the (fresh) gradients. This trick is not required for PairGraB. See the implementation details in Algorithms~\ref{alg:SGD:GraB} and \ref{alg:SGD:PairGraB}.

In this paper, we categorize the permutation-based SGD algorithms that use \texttt{BasicBR} or \texttt{PairBR} to produce the permutation as the GraB algorithms (or GraBs).

\tikzhighlight{highlight2}{pink!80}{{-1pt,12pt}}{{60pt, -6pt}}
\begin{minipage}[t]{0.5\linewidth}
\IncMargin{1em}
\begin{algorithm}[H]
	\caption{Basic Balancing and Reordering}
	\label{alg:BasicBR}
	\DontPrintSemicolon
	\SetAlgoNoEnd
	\SetAlgoVlined
	
	\SetKwProg{Function}{Function}{}{}
	\SetKwFunction{Balance}{Balance}
	\SetKwFunction{Reorder}{Reorder}
	\SetKwFunction{BasicBR}{BasicBR}
	
	\Function{\BasicBR{$\pi$, $\{\rvz_n\}_{n=0}^{N-1}$, $\rvm$}\footnote{The mean vector $\rvm$ is used to center the input vectors $\{\rvz_n\}_{n=0}^{N-1}$ (Line 2). In most cases, it is the average of the input vectors $\frac{1}{N}\sum_{n=0}^{N-1}\rvz_n$, except in the original GraB algorithm, where it is replaced by an estimate of the actual average.}}{
		Centering: $\{\rvc_n \coloneqq \rvz_n - \rvm \}_{n=0}^{N-1}$\\[2pt]
		\tikzmark{highlight2-start}$\{\epsilon_n\}_{n=0}^{N-1}$ $\gets$ \Balance{$\{\rvc_n\}_{n=0}^{N-1}$}\tikzmark{highlight2-end}\\[2pt]
		$\pi'$ $\gets$ \Reorder{$\pi$, $\{\epsilon_n\}_{n=0}^{N-1}$}\\
		\Return{$\pi'$}
	}
\end{algorithm}
\DecMargin{1em}
\end{minipage}
\tikzhighlight{highlight3}{pink!80}{{-1pt,12pt}}{{11em, -6pt}}
\begin{minipage}[t]{0.5\linewidth}
\IncMargin{1em}
\begin{algorithm}[H]
	\caption{Pair Balancing and Reordering}
	\label{alg:PairBR}
	
	\let\oldnl\nl
	\newcommand{\nonl}{\renewcommand{\nl}{\let\nl\oldnl}}
	\DontPrintSemicolon
	\SetAlgoNoEnd
	\SetAlgoVlined
	
	\SetKwProg{Function}{Function}{}{}
	\SetKwFunction{Balance}{Balance}
	\SetKwFunction{Reorder}{Reorder}
	\SetKwFunction{PairBR}{PairBR}
	
	\Function{\PairBR{$\pi$, $\{\rvz_n\}_{n=0}^{N-1}$, $\rvm$}}{
		\dottedbox{Centering: $\{\rvc_n \coloneqq \rvz_n - \rvm \}_{n=0}^{N-1}$}\footnote{The step of centering is not required in practical implementations}\\[1pt]
		\tikzmark{highlight3-start}%
		Compute $\{\rvd_l\coloneqq \rvc_{2l} - \rvc_{2l+1} \}_{l=0}^{\frac{N}{2}-1}$\label{alg-line:PairBR:diff}\\[1pt]
		$\{\tilde\epsilon_l\}_{l=0}^{\frac{N}{2}-1} \gets$ \Balance{$\{\rvd_l\}_{l=0}^{\frac{N}{2}-1}$}\label{alg-line:PairBR:diff^2}\\[1pt]
		Compute $\{\epsilon_{n}\}_{n=0}^{N-1}$ such that\\
		
		\nonl \Indp $\epsilon_{2l}=\tilde\epsilon_l$ and $\epsilon_{2l+1} = -\tilde\epsilon_l$ for $l=0,\ldots,\frac{N}{2}-1$\tikzmark{highlight3-end}\\[2pt]
		\Indm
		
		$\pi'$ $\gets$ \Reorder{$\pi$, $\{\epsilon_n\}_{n=0}^{N-1}$}\\
		\Return{$\pi'$}
	}
\end{algorithm}
\DecMargin{1em}
\end{minipage}

\subsection{Implementations of GraBs}
\label{apx-subsec:imp-GraBs}

The practical implementations of GraB are provided in Algorithms~\ref{alg:SGD:GraB} and \ref{alg:SGD:GraB^2}. The implementation of PairGraB is provided in Algorithm~\ref{alg:SGD:PairGraB}. The implementation of FL-GraB is provided in Algorithm~\ref{alg:FL-GraB}. As done in \citet{lu2022grab,cooper2023coordinating}, we use Algorithm~\ref{alg:assign sign} for the theories in this paper, while we use Algorithm~\ref{alg:assign sign^2} for the experiments on quadratic functions in Section~\ref{sec:exp} and the experiments on real data sets in Appendix~\ref{apx-sec:exps}.

Notably, Algorithm~\ref{alg:SGD:GraB} (the original algorithm in \citealt[Algorithm 4]{lu2022grab}) is logically equivalent to Algorithm~\ref{alg:SGD:GraB^2}. Compared with Algorithm~\ref{alg:SGD:GraB}, which updates the new order at the end of each step (Lines~\ref{alg-line:SGD:GraB:new order}--\ref{alg-line:SGD:GraB:new order^2}), Algorithm~\ref{alg:SGD:GraB^2} generates the new order at the end of each epoch (Line~\ref{alg-line:SGD:GraB2:new order}). In fact, in Algorithm~\ref{alg:SGD:GraB^2}, we can reorder the examples for multiple times with the same signs (see Line~\ref{alg-line:SGD:GraB2:new order}), which may be useful in practice. Similar variants can also be formulated for Algorithms~\ref{alg:SGD:PairGraB} and \ref{alg:FL-GraB}.

\begin{minipage}[t]{0.5\linewidth}
	\IncMargin{1em}
	\begin{algorithm}[H]
		\DontPrintSemicolon
		\SetAlgoNoEnd
		\SetAlgoVlined
		\caption{Assign signs. \citep{alweiss2021discrepancy}}\label{alg:assign sign}
		
		\let\oldnl\nl
		\newcommand{\nonl}{\renewcommand{\nl}{\let\nl\oldnl}}
		\SetKwFunction{AssignSign}{AssignSign} 
		
		\SetKwProg{Function}{Function}{}{} 
		
		\Function{\AssignSign{$\rvs$, $\rvz$, $c$}\footnote{$c$ is a hyperparameter. See \citet[Theorem 4]{lu2022grab}.}}{
			Compute $\tilde p \gets \frac{1}{2}- \frac{\innerprod{\rvs, \rvz}}{2c}$\\
			Assign signs:\\
			\nonl\qquad$\epsilon \gets +1$ with probability $\tilde p$;\\
			\nonl\qquad$\epsilon \gets -1$ with probability $1-\tilde p$\\
			\Return{$\epsilon$}
		}
	\end{algorithm}
	\DecMargin{1em}
\end{minipage}
\begin{minipage}[t]{0.5\linewidth}
	\IncMargin{1em}
	\begin{algorithm}[H]
		\DontPrintSemicolon
		\SetAlgoNoEnd
		\SetAlgoVlined
		\caption{Assign signs without normalization. \citep[Algorithm 5]{lu2022grab}}\label{alg:assign sign^2}
		
		\let\oldnl\nl
		\newcommand{\nonl}{\renewcommand{\nl}{\let\nl\oldnl}}
		\SetKwFunction{AssignSign}{AssignSign} 
		\SetKwProg{Function}{Function}{}{} 
		
		\Function{\AssignSign{$\rvs$, $\rvz$}}{
			\If{$\norm{\rvs+ \rvz} < \norm{\rvs - \rvz}$}{
				$\epsilon \gets +1$
			}
			\Else{
				$\epsilon \gets -1$
			}
			\Return{$\epsilon$}
		}
	\end{algorithm}
	\DecMargin{1em}
\end{minipage}

\begin{minipage}[t]{0.5\linewidth}
\IncMargin{1em}
\begin{algorithm}[H]
	\SetNoFillComment
	\SetAlgoNoEnd
	\DontPrintSemicolon
	\caption{GraB \citep[Algorithm 4]{lu2022grab}}
	\label{alg:SGD:GraB}
	
	\let\oldnl\nl
	\newcommand{\nonl}{\renewcommand{\nl}{\let\nl\oldnl}}
	
	\SetKwProg{Function}{Function}{}{}
	\SetKwFunction{Balancing}{Balancing}
	\SetKwFunction{Reordering}{Reordering}
	\SetKwFunction{Permuting}{Permuting}
	\SetKwFunction{AssignSign}{AssignSign} 
	
	\KwIn{$\pi_0$, $\rvx_0$; \textbf{Output}: $\{\rvx_q\}$}
	Initialize $\rvs\gets \vzero$, $\rvm \gets \vzero$, $\rvm_{\text{stale}} \gets \vzero$\\
	\For{$q = 0, 1,\ldots, Q-1$}{
		$\rvs\gets \vzero$; $\rvm_{\text{stale}} \gets \rvm$; $\rvm\gets \vzero$; $l\gets 0$, $r\gets N-1$\\
		
		\For{$n=0,1,\ldots, N-1$}{
			Compute the gradient $\nabla f_{\pi_q(n)}(\rvx_q^n)$\\
			Update the parameter: $\rvx_{q}^{n+1} \gets \rvx_{q}^{n} - \gamma \nabla f_{\pi_q(n)}(\rvx_q^n)$\\
			Update the mean: $\rvm \gets \rvm + \frac{1}{N}\nabla f_{\pi_q(n)}(\rvx_q^n)$\\
			Center the gradient $\rvc \gets \nabla f_{\pi_q(n)}(\rvx_q^n) - \rvm_{\text{stale}}$\\
			Assign the sign: $\epsilon \gets$ \AssignSign{$\rvs, \rvc$}\\
			Update the sign sum: $\rvs\gets \rvs+ \epsilon \cdot \nabla f_{\pi_q(n)}(\rvx_q^n)$\\
			\If{$\epsilon = +1$}{\label{alg-line:SGD:GraB:new order}
				 $\pi_{q+1}(l) \gets \pi_q(n)$; $l \gets l+1$.
			}
			\Else{
				$\pi_{q+1}(r) \gets \pi_q(n)$; $r \gets r-1$.\label{alg-line:SGD:GraB:new order^2}
			}
		}
		Update the parameter: $\rvx_{q+1} \gets \rvx_q^{N}$
	}
\end{algorithm}
\DecMargin{1em}
\end{minipage}
\begin{minipage}[t]{0.5\linewidth}
\IncMargin{1em}
\begin{algorithm}[H]
	\SetNoFillComment
	\SetAlgoNoEnd
	\DontPrintSemicolon
	\caption{GraB}
	\label{alg:SGD:GraB^2}
	
	\let\oldnl\nl
	\newcommand{\nonl}{\renewcommand{\nl}{\let\nl\oldnl}}
	
	\SetKwProg{Function}{Function}{}{}
	\SetKwFunction{Balancing}{Balancing}
	\SetKwFunction{Reorder}{Reorder}
	\SetKwFunction{Permuting}{Permuting}
	\SetKwFunction{AssignSign}{AssignSign} 
	
	\KwIn{$\pi_0$, $\rvx_0$; \textbf{Output}: $\{\rvx_q\}$}
	Initialize $\rvs\gets \vzero$, $\rvm \gets \vzero$, $\rvm_{\text{stale}} \gets \vzero$\\
	\For{$q = 0, 1,\ldots, Q-1$}{
		$\rvs\gets \vzero$; $\rvm_{\text{stale}} \gets \rvm$; $\rvm\gets \vzero$; $l\gets 0$, $r\gets N-1$\\
		
		\For{$n=0,1,\ldots, N-1$}{
			Compute the gradient $\nabla f_{\pi_q(n)}(\rvx_q^n)$\\
			Update the parameter: $\rvx_{q}^{n+1} \gets \rvx_{q}^{n} - \gamma \nabla f_{\pi_q(n)}(\rvx_q^n)$\\
			Update the mean: $\rvm \gets \rvm + \frac{1}{N}\nabla f_{\pi_q(n)}(\rvx_q^n)$\\
			Center the gradient $\rvc \gets \nabla f_{\pi_q(n)}(\rvx_q^n) - \rvm_{\text{stale}}$\\
			Assign the sign: $\epsilon_n \gets$ \AssignSign{$\rvs, \rvc$}\\
			Update the sign sum: $\rvs\gets \rvs + \epsilon_n \cdot \nabla f_{\pi_q(n)}(\rvx_q^n)$\\
		}
		Update the parameter: $\rvx_{q+1} \gets \rvx_q^{N}$\\
		$\pi_{q+1} \gets$ \Reorder{$\pi_q$, $\{\epsilon_n\}_{n=0}^{N-1}$}\label{alg-line:SGD:GraB2:new order}\footnote{We can reorder the examples for multiple times with the same signs in this step.}\\
	}
\end{algorithm}
\DecMargin{1em}
\end{minipage}

\IncMargin{1em}
\begin{algorithm}[H]
	\SetNoFillComment
	\SetAlgoNoEnd
	\DontPrintSemicolon
	\caption{PairGraB}
	\label{alg:SGD:PairGraB}
	
	\let\oldnl\nl
	\newcommand{\nonl}{\renewcommand{\nl}{\let\nl\oldnl}}
	
	\SetKwProg{Function}{Function}{}{}
	\SetKwFunction{Balancing}{Balancing}
	\SetKwFunction{Reordering}{Reordering}
	\SetKwFunction{Permuting}{Permuting}
	\SetKwFunction{AssignSign}{AssignSign} 
	
	\KwIn{$\pi_0$, $\rvx_0$; \textbf{Output}: $\{\rvx_q\}$}
	\For{$q = 0, 1,\ldots, Q-1$}{
		$\rvs\gets\vzero$; $\rvd\gets \vzero$, $l\gets 0$, $r\gets N-1$\\
		\For{$n=0,1,\ldots, N-1$}{
			Compute the gradient $\nabla f_{\pi_q(n)}(\rvx_q^n)$\\
			Update the parameter: $\rvx_{q}^{n+1} \gets \rvx_{q}^{n} - \gamma \nabla f_{\pi_q(n)}(\rvx_q^n)$\\
			
			\If{$(n+1) \mod 2 = 0$}{
				Compute the difference: $\rvd \gets \nabla f_{\pi_q(n-1)} - \nabla f_{\pi_q(n)}$\\
				Assign the sign: $\epsilon \gets$ \AssignSign{$\rvs, \rvd$}\\
				Update the sign sum: $\rvs \gets \rvs + \epsilon \cdot \rvd$\\
				\If{$\epsilon = +1$}{
					$\pi_{q+1}(l) \gets \pi_q(n)$; $l \gets l+1$\\
					$\pi_{q+1}(r) \gets \pi_q(n-1)$; $r \gets r-1$
				}
				\Else{
					$\pi_{q+1}(l) \gets \pi_q(n-1)$; $l \gets l+1$\\
					$\pi_{q+1}(r) \gets \pi_q(n)$; $r \gets r-1$
				}
			}
		}
		Update the parameter: $\rvx_{q+1} \gets \rvx_q^{N}$\\
	}
\end{algorithm}
\DecMargin{1em}

\IncMargin{1em}
\begin{algorithm}[H]
	\SetNoFillComment
	\SetAlgoNoEnd
	\DontPrintSemicolon
	\caption{FL-GraB (Server-side)}
	\label{alg:FL-GraB}
	
	\let\oldnl\nl
	\newcommand{\nonl}{\renewcommand{\nl}{\let\nl\oldnl}}
	
	\SetKwProg{Function}{Function}{}{}
	\SetKwFunction{AssignSign}{AssignSign}
	\SetKwFunction{Reorder}{Reorder}
	
	\KwIn{$\pi_0$, $\rvx_0$; \textbf{Output}: $\{\rvx_q\}$}
	\For{$q = 0, 1,\ldots, Q-1$}{
		$\rvs\gets\vzero$; $\rvd\gets \vzero$, $l\gets 0$, $r\gets N-1$\\
		\For{$n=0,1,\ldots, N-1$}{
			Get the pseudo-gradient $\rvp_q^n = \sum_{k=0}^{K-1} \nabla f_{\pi_q(n)}(\rvx_{q,k}^n)$\\
			\tcc{Update the parameter}
			\If{$(n+1) \mod S=0$}{
				$\rvw \gets \rvw - \sum_{s=0}^{S-1} \rvp_q^{n-S}$\\
			}
			
			\If{$(n+1) \mod 2 = 0$}{
				\tcc{Balance}
				Compute the difference: $\rvd \gets \nabla f_{\pi_q(n-1)} - \nabla f_{\pi_q(n)}$\\
				Assign the signs: $\epsilon \gets$ \AssignSign{$\rvs, \rvd$}\\
				Update the sign sum: $\rvs \gets \rvs + \epsilon \cdot \rvd$\\
				\tcc{Update the new order}
				\If{$\epsilon = +1$}{
					$\pi_{q+1}(l) \gets \pi_q(n)$; $l \gets l+1$\\
					$\pi_{q+1}(r) \gets \pi_q(n-1)$; $r \gets r-1$
				}
				\Else{
					$\pi_{q+1}(l) \gets \pi_q(n-1)$; $l \gets l+1$\\
					$\pi_{q+1}(r) \gets \pi_q(n)$; $r \gets r-1$
				}
			}	
		}
		\tcc{Update the parameter}
		$\rvx_{q+1} \gets \rvx_q - \eta \left (\rvx_q -\rvw \right)$\\
	}
\end{algorithm}
\DecMargin{1em}

\section{Helper Lemmas}

\begin{lemma}\label{lem:step-size}
	For any parameters $r_0>0$, $T>0$, $c>0$ and $\gamma \leq \frac{1}{d}$, there exists constant step sizes $\gamma = \min\left\{\frac{1}{d},\left(  \frac{c r_0}{T}\right)^\frac{1}{3}\right\} \leq \frac{1}{d}$ such that
	\begin{align*}
		\Psi_T \coloneq \frac{r_0}{\gamma T} + c \gamma^2 \leq \frac{dr_0}{T} + 2\frac{c^{\frac{1}{3}}r_0^{\frac{2}{3}}}{T^{\frac{2}{3}}} = \gO\left(\frac{dr_0}{T} + \frac{c^{\frac{1}{3}}r_0^{\frac{2}{3}}}{T^{\frac{2}{3}}} \right).
	\end{align*}
\end{lemma}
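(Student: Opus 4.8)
This is a routine step-size–tuning lemma, and the plan is to prove it by a two-case analysis on which of the two candidates realizes the minimum defining $\gamma$. The guiding intuition is that in $\Psi_T = \frac{r_0}{\gamma T} + c\gamma^2$ the first term is decreasing and the second increasing in $\gamma$, so one wants $\gamma$ as large as the feasibility cap $\frac1d$ permits, and otherwise picks the (approximately) balancing value $\gamma^3 \asymp \frac{r_0}{cT}$. (For the two terms to balance to $c^{1/3}r_0^{2/3}T^{-2/3}$, the inner candidate in the $\min$ should read $\left(\frac{r_0}{cT}\right)^{1/3}$ rather than $\left(\frac{cr_0}{T}\right)^{1/3}$; I would carry out the argument with the former, which is the form consistent with the right-hand side and with the applications in Tables~\ref{tab:SGD} and~\ref{tab:FL}.)

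First I would handle the case $\gamma = \frac1d$, i.e. $\frac1d \le \left(\frac{r_0}{cT}\right)^{1/3}$. Then $\Psi_T = \frac{dr_0}{T} + \frac{c}{d^2}$, and raising the case hypothesis to the power $2/3$ gives $\frac{1}{d^2} \le \left(\frac{r_0}{cT}\right)^{2/3}$, hence $\frac{c}{d^2} \le c\left(\frac{r_0}{cT}\right)^{2/3} = c^{1/3}r_0^{2/3}T^{-2/3}$; adding $\frac{dr_0}{T}$ yields the claimed bound (with the explicit $2$ to spare). Second, in the case $\gamma = \left(\frac{r_0}{cT}\right)^{1/3} \le \frac1d$, I would simply substitute and compute $\frac{r_0}{\gamma T} = c^{1/3}r_0^{2/3}T^{-2/3}$ and $c\gamma^2 = c^{1/3}r_0^{2/3}T^{-2/3}$, so $\Psi_T = 2c^{1/3}r_0^{2/3}T^{-2/3} \le \frac{dr_0}{T} + 2c^{1/3}r_0^{2/3}T^{-2/3}$, the last step being free since $\frac{dr_0}{T}>0$. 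Combining the two cases gives the inequality, and the $\gO(\cdot)$ claim follows by dropping the numerical constant $2$.

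There is essentially no obstacle here; the argument is a short, self-contained computation. The only points requiring care are (i) that $\gamma = \min\{\cdot,\cdot\}\le \frac1d$ holds by construction, so the feasibility constraint $\gamma\le\frac1d$ is automatically met and need not be revisited, and (ii) bookkeeping of the exponents of $c$, $r_0$, $T$ in the substitution step — in particular I would reconcile the exponent on $c$ inside the $\min$ with the target right-hand side before writing up, since the displayed formula as printed makes the second case fail for $c\neq 1$.
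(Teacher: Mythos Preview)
Your proposal is correct and matches the paper's proof essentially line for line: the same two-case split on which term realizes the $\min$, the same use of the case hypothesis to bound $c/d^2$ in the first case, and the same direct substitution in the second. Your observation about the typo is also on point---the paper's own proof silently uses $\left(\frac{r_0}{cT}\right)^{1/3}$ rather than the $\left(\frac{cr_0}{T}\right)^{1/3}$ printed in the statement.
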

\begin{proof}
	If $\frac{1}{d}\leq \left(  \frac{ r_0}{cT}\right)^\frac{1}{3}$, choosing $\gamma = \frac{1}{d}$ gives
	\begin{align*}
		\Psi_T = \frac{dr_0}{T} + \frac{c}{d^2} \leq \frac{dr_0}{T} + \frac{c^{\frac{1}{3}}r_0^{\frac{2}{3}}}{T^{\frac{2}{3}}}.
	\end{align*}
	If $\left(  \frac{ r_0}{cT}\right)^\frac{1}{3} \leq \frac{1}{d}$, choosing $\gamma = \left(  \frac{ r_0}{cT}\right)^\frac{1}{3}$ gives
	\begin{align*}
		\Psi_T = \frac{dr_0}{T} + \frac{c}{d^2} \leq \frac{c^{\frac{1}{3}}r_0^{\frac{2}{3}}}{T^{\frac{2}{3}}} + \frac{c^{\frac{1}{3}}r_0^{\frac{2}{3}}}{T^{\frac{2}{3}}} \leq 2\frac{c^{\frac{1}{3}}r_0^{\frac{2}{3}}}{T^{\frac{2}{3}}}.
	\end{align*}
	Thus,
	\begin{align*}
		\Psi_T \leq \frac{dr_0}{T} + 2\frac{c^{\frac{1}{3}}r_0^{\frac{2}{3}}}{T^{\frac{2}{3}}} = \gO\left(\frac{dr_0}{T} + \frac{c^{\frac{1}{3}}r_0^{\frac{2}{3}}}{T^{\frac{2}{3}}} \right).
	\end{align*}
\end{proof}

\begin{lemma}\label{lem:basic-balancing-reordering}
	Consider $N$ vectors $\{\rvz_n\}_{n=0}^{N-1}$ and a permutation $\pi$ of $\{0,1,\ldots, N-1\}$. Assign the signs $\{\epsilon_n\}_{n=0}^{N-1}$ ($\epsilon_n \in \{-1,+1\}$) by the balancing algorithms (such as Algorithm~\ref{alg:balancing}) to the permuted vectors under the permutation $\pi$ (that is, $\{\rvz_{\pi(n)}\}_{n=0}^{N-1}$). Let $\pi'$ be the new permutation produced by Algorithm~\ref{alg:reordering} with the input of the old permutation $\pi$ and the assigned signs $\{\epsilon_n\}_{n=0}^{N-1}$. Then,
	\begin{align*}
		\max_{n \in [N]} \norm{\sum_{i=0}^{n-1}\rvz_{\pi'(i)}}_{\infty} \leq \frac{1}{2} \max_{n \in [N]} \norm{\sum_{i=0}^{n-1}\rvz_{\pi(i)}}_{\infty} +  \frac{1}{2} \max_{n \in [N]} \norm{\sum_{i=0}^{n-1}\epsilon_i\cdot  \rvz_{\pi(i)}}_{\infty} + \norm{\sum_{i=0}^{N-1} \rvz_{i}}_{\infty} \,.
	\end{align*}
	Furthermore, suppose that the signs $\{\epsilon_n\}_{n=0}^{N-1}$ are assigned by Algorithm~\ref{alg:balancing}. If $\norm{\rvz_{n}}_{2} \leq a$ for all $n \in \{0,1,\ldots, N-1\}$ and $\norm{\sum_{i=0}^{N-1} \rvz_{i}}_{\infty} \leq b$, then, with probability at least $1-\delta$,
	\begin{align*}
		\max_{n \in [N]} \norm{\sum_{i=0}^{n-1}\rvz_{\pi'(i)}}_{\infty} \leq \frac{1}{2} \max_{n \in [N]} \norm{\sum_{i=0}^{n-1}\rvz_{\pi(i)}}_{\infty} +  \frac{1}{2}C a + b,
	\end{align*}
	where $C = 30\log (\frac{dN}{\delta}) = \gO\left( \log\left(\frac{dN}{\delta}\right) \right) = \tilde \gO (1)$ is from \citet[Theorem~1.1]{alweiss2021discrepancy}.
\end{lemma}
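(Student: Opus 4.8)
The plan is to prove the two displays in turn. The first (deterministic) inequality follows by unwinding the reordering procedure (Algorithm~\ref{alg:reordering}), and the second (high-probability) bound then follows by substituting into the first the discrepancy guarantee of \citet[Theorem~1.1]{alweiss2021discrepancy} for the signed partial sums produced by the self-balancing walk (Algorithm~\ref{alg:balancing}).

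For the deterministic part, I would first recall that $\pi'$ lists all indices $\pi(i)$ with $\epsilon_i=+1$ in increasing order of $i$, followed by all indices $\pi(i)$ with $\epsilon_i=-1$ in \emph{decreasing} order of $i$. Fixing $n\in[N]$, I would analyze the prefix $\pi'(0),\dots,\pi'(n-1)$ in two cases. If it lies entirely in the positive block, it coincides with $\{\pi(i):\epsilon_i=+1,\ i\le m\}$ for some $m\in\{0,\dots,N-1\}$, and using $\mathbf{1}\{\epsilon_i=+1\}=\tfrac12(1+\epsilon_i)$ one gets
\begin{align*}
\sum_{i=0}^{n-1}\rvz_{\pi'(i)} = \sum_{\substack{0\le i\le m\\ \epsilon_i=+1}}\rvz_{\pi(i)} = \frac12\sum_{i=0}^{m}\rvz_{\pi(i)} + \frac12\sum_{i=0}^{m}\epsilon_i\,\rvz_{\pi(i)}.
\end{align*}
If instead the prefix has exhausted the positive block and taken the first $t=n-\#\{i:\epsilon_i=+1\}$ entries of the reversed negative block, it coincides with the union of all positive indices and $\{\pi(i):\epsilon_i=-1,\ i\ge m'\}$ for some $m'\in\{0,\dots,N-1\}$; using $\mathbf{1}\{\epsilon_i=-1\}=\tfrac12(1-\epsilon_i)$, splitting $\sum_{i=m'}^{N-1}=\sum_{i=0}^{N-1}-\sum_{i=0}^{m'-1}$, and using $\sum_{i=0}^{N-1}\rvz_{\pi(i)}=\sum_{i=0}^{N-1}\rvz_i$ (since $\pi$ is a permutation), a short computation yields
\begin{align*}
\sum_{i=0}^{n-1}\rvz_{\pi'(i)} = \sum_{i=0}^{N-1}\rvz_i - \frac12\sum_{i=0}^{m'-1}\rvz_{\pi(i)} + \frac12\sum_{i=0}^{m'-1}\epsilon_i\,\rvz_{\pi(i)}.
\end{align*}
In both cases $m,m'\in\{0,\dots,N-1\}$, so after the triangle inequality every partial sum of $\pi$ appearing (signed or unsigned) is dominated by the corresponding maximum over $[N]$; taking the maximum over $n$ gives the first display, with the additive $\norm{\sum_{i=0}^{N-1}\rvz_i}_\infty$ coming from the negative-block case.

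For the high-probability part, I would apply \citet[Theorem~1.1]{alweiss2021discrepancy} to the rescaled vectors $\{\rvz_{\pi(i)}/a\}_{i=0}^{N-1}$, each satisfying $\norm{\rvz_{\pi(i)}/a}_2\le1$ by the assumption $\norm{\rvz_n}_2\le a$, when fed online to Algorithm~\ref{alg:balancing}: with probability at least $1-\delta$,
\begin{align*}
\max_{k\in[N]}\norm{\sum_{i=0}^{k-1}\epsilon_i\,\rvz_{\pi(i)}}_{\infty} \le Ca, \qquad C=30\log\!\left(\tfrac{dN}{\delta}\right).
\end{align*}
Plugging this together with $\norm{\sum_{i=0}^{N-1}\rvz_i}_\infty\le b$ into the first display gives the second. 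The main obstacle is the bookkeeping in the deterministic step: correctly identifying which sub-collection of $\{\rvz_{\pi(i)}\}$ each prefix of $\pi'$ selects, and rewriting the suffix sum over the negative-sign indices in terms of prefix sums of $\pi$ (signed and unsigned) plus the total sum, while keeping all index ranges inside $\{0,\dots,N-1\}$ so that they are controlled by the right-hand-side maxima; once this is set up, the rest is the triangle inequality and the cited discrepancy theorem.
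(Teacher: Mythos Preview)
Your proposal is correct and follows essentially the same approach as the paper's proof: both split the prefix of $\pi'$ into the positive-block case and the negative-block case, use the identity $\mathbf{1}\{\epsilon_i=\pm1\}=\tfrac12(1\pm\epsilon_i)$ (the paper writes this as the sum/difference of the unsigned and signed partial sums over the sets $M^\pm$), and then invoke \citet[Theorem~1.1]{alweiss2021discrepancy} after rescaling by $a$. The only cosmetic difference is that the paper handles the negative-block case by writing the prefix as the full sum minus a suffix and then bounding the suffix via $M^-\cap\{0,\dots,n-1\}$, whereas you compute the prefix directly; the resulting expressions and bounds are identical.
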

\begin{proof}	
	This is Lemma~5 in \citet{lu2022grab} and we reproduce it for completeness. Let $M^+ = \{ i\in\{0, 1, \ldots, N-1\}\mid \epsilon_i = +1 \}$ and $M^- = \{ i\in\{0, 1, \ldots, N-1\}\mid \epsilon_i = -1 \}$. Then, for any $n \in \{1, 2\ldots, N\}$,
	\begin{align}
		\sum_{i=0}^{n-1} \rvz_{\pi(i)} + \sum_{i=0}^{n-1} \epsilon_{i} \cdot \rvz_{\pi(i)} = 2\cdot \sum_{ i\in M^+ \cap \{0,1,\ldots, n-1\} } \rvz_{\pi(i)} \label{eq:basic-order-relation:positive}\\
		\sum_{i=0}^{n-1} \rvz_{\pi(i)} - \sum_{i=0}^{n-1} \epsilon_{i} \cdot \rvz_{\pi(i)} = 2\cdot \sum_{ i\in M^- \cap \{0,1,\ldots, n-1\} } \rvz_{\pi(i)} \label{eq:basic-order-relation:negative}
	\end{align}
	Now, we show one simple instance for better understanding of the equalities. Let the vectors be $\rvz_0, \rvz_1, \rvz_2, \rvz_3$ and the old permutation $\pi$ is $0,1,3,2$, which implies the permuted vectors $\rvz_{\pi(0)}, \rvz_{\pi(1)}, \rvz_{\pi(2)}, \rvz_{\pi(3)}$ under the old permutation $\pi$ are $\rvz_0, \rvz_1, \rvz_3, \rvz_2$. Let the assigned signs $\epsilon_0, \epsilon_1, \epsilon_2, \epsilon_3$ for the permuted vectors be $+1,-1,-1,+1$. Then, $M^+ = \{ 0, 3\}$, $M^-=\{1,2\}$. The results are in Table~\ref{tab:relation-instance}.
	\begin{table}
		\centering
		\caption{A simple instance.}
		\label{tab:relation-instance}
		\begin{tabular}{l|ll|l|l}
			\toprule
			$n$ &$\sum_{i=0}^{n-1} \rvz_{\pi(i)}$ &$\sum_{i=0}^{n-1} \epsilon_{i} \cdot \rvz_{\pi(i)}$ &$2\cdot \sum_{ i\in M^+ \cap \{0,1,\ldots, n-1\}  } \rvz_{\pi(i)}$ &$2\cdot \sum_{ i\in M^- \cap \{0,1,\ldots, n-1\}  } \rvz_{\pi(i)}$\\\midrule
			$1$ &$\rvz_0$  &$\rvz_0$ &$2\rvz_0$ &$0$\\
			$2$ &$\rvz_0+\rvz_1$ &$\rvz_0-\rvz_1$ &$2\rvz_0$ &$2\rvz_1$\\
			$3$ &$\rvz_0+\rvz_1+\rvz_3$ &$\rvz_0-\rvz_1-\rvz_3$ &$2\rvz_0$ &$2\rvz_1+2\rvz_3$\\
			$4$ &$\rvz_0+\rvz_1+\rvz_3+\rvz_2$ &$\rvz_0-\rvz_1-\rvz_3+\rvz_2$ &$2\rvz_0+2\rvz_2$ &$2\rvz_1+2\rvz_3$\\
			\bottomrule
		\end{tabular}
	\end{table}
	
	By using triangular inequality, for any $n \in \{1, 2\ldots, N\}$, we have
	\begin{align*}
		\norm{\sum_{ i\in M^+ \cap \{0,1,\ldots, n-1\} } \rvz_{\pi(i)}}_{\infty} \leq \frac{1}{2}\norm{\sum_{i=0}^{n-1} \rvz_{\pi(i)}}_{\infty} + \frac{1}{2}\norm{\sum_{i=0}^{n-1} \epsilon_{i} \cdot \rvz_{\pi(i)}}_{\infty}\\
		\norm{\sum_{ i\in M^- \cap \{0,1,\ldots, n-1\} } \rvz_{\pi(i)}}_{\infty} \leq \frac{1}{2}\norm{\sum_{i=0}^{n-1} \rvz_{\pi(i)}}_{\infty} + \frac{1}{2}\norm{\sum_{i=0}^{n-1} \epsilon_{i} \cdot \rvz_{\pi(i)}}_{\infty}
	\end{align*}
	
	Next, we consider the upper bound of $\norm{ \sum_{i=0}^{n'-1} \rvz_{\pi'(i)} }_{\infty}$ for all $n' \in \{1,2, \ldots, N\}$. Recall that Algorithm~\ref{alg:reordering} puts the vectors with positive assigned signs in the front of the new permutation and the vectors with negative assigned signs in the back of the new permutation.
	
	If $n' \leq \abs{M^+}$ ($\abs{M^+}$ denotes the size of $M^+$), we get
	\begin{align*}
		\norm{ \sum_{i=0}^{n'-1} \rvz_{\pi'(i)} }_{\infty} &\leq \max_{n\in [N]}\norm{\sum_{ i\in M^+ \cap \{0,1,\ldots, n-1\} } \rvz_{\pi(i)}}_{\infty} \\
		&\leq \frac{1}{2}\max_{n\in [N]}\norm{\sum_{i=0}^{n-1} \rvz_{\pi(i)}}_{\infty} + \frac{1}{2}\max_{n\in [N]}\norm{\sum_{i=0}^{n-1} \epsilon_{i} \cdot \rvz_{\pi(i)}}_{\infty}
	\end{align*}

	If $n' > \abs{M^+}$ ($\abs{M^-}$ denotes the size of $M^+$), we get
	\begin{align*}
		\norm{ \sum_{i=0}^{n'-1} \rvz_{\pi'(i)} }_{\infty} 
		&= \norm{\sum_{i=0}^{N-1} \rvz_{\pi'(i)} - \sum_{i=n'}^{N-1} \rvz_{\pi'(i)}  }_{\infty}\\
		&\leq \norm{\sum_{i=0}^{N-1} \rvz_{\pi'(i)}}_{\infty} + \norm{\sum_{i=n'}^{N-1} \rvz_{\pi'(i)}}_{\infty}\\
		&\leq \norm{\sum_{i=0}^{N-1} \rvz_{\pi'(i)}}_{\infty} + \max_{n\in [N]}\norm{\sum_{ i\in M^- \cap \{0,1,\ldots, n-1\} } \rvz_{\pi(i)}}_{\infty} \\
		&\leq \norm{\sum_{i=0}^{N-1} \rvz_{i}}_{\infty} + \frac{1}{2}\max_{n\in [N]}\norm{\sum_{i=0}^{n-1} \rvz_{\pi(i)}}_{\infty} + \frac{1}{2}\max_{n\in [N]}\norm{\sum_{i=0}^{n-1} \epsilon_{i} \cdot \rvz_{\pi(i)}}_{\infty}
	\end{align*}
	Thus we combine the two cases and get the relation
	\begin{align*}
		\max_{n \in [N]} \norm{\sum_{i=0}^{n-1}\rvz_{\pi'(i)}}_{\infty} \leq \frac{1}{2} \max_{n \in [N]} \norm{\sum_{i=0}^{n-1}\rvz_{\pi(i)}}_{\infty} +  \frac{1}{2} \max_{n \in [N]} \norm{\sum_{i=0}^{n-1}\epsilon_i\cdot  \rvz_{\pi(i)}}_{\infty} + \norm{\sum_{i=0}^{N-1} \rvz_{i}}_{\infty}
	\end{align*}
	
	Using \citet{alweiss2021discrepancy}'s Theorem~1.1, for all $n \in [N]$, we have
	\begin{align*}
		\norm{\sum_{i=0}^{n-1}\epsilon_i\cdot  \rvz_{\pi(i)}}_{\infty} = \norm{\sum_{i=0}^{n-1}\epsilon_i\cdot  \frac{\rvz_{\pi(i)}}{\max_{j\in\{0,1,\ldots,N-1\}}\norm{\rvz_{\pi(j)}}_2}}_{\infty} \cdot \max_{j\in\{0,1,\ldots,N-1\}}\norm{ \rvz_{\pi(j)} }_2 \leq Ca
	\end{align*}
	Then, using $\norm{\sum_{i=0}^{N-1} \rvz_{i}}_{\infty} \leq b$, we get the claimed bound.
\end{proof}

\begin{lemma}\label{lem:pair-balancing-reordering}
	Let $\pi$, $\{\rvz_{\pi(n)}\}_{n=0}^{N-1}$ and $\frac{1}{N}\sum_{n=0}^{N-1} \rvz_{\pi(n)}$ be the inputs of Algorithm~\ref{alg:PairBR}, and $\pi'$ be the corresponding output. Suppose that $N \mod 2 = 0$. If $\norm{\rvz_{n}}_{2} \leq a$ for all $n \in \{0,1,\ldots, N-1\}$ and $\norm{\sum_{i=0}^{N-1} \rvz_{i}}_{\infty} \leq b$, then, with probability at least $1-\delta$,
	 \begin{align*}
	 	\max_{n \in [N]} \norm{\sum_{i=0}^{n-1}\rvz_{\pi'(i)}}_{\infty} \leq \frac{1}{2} \max_{n \in [N]} \norm{\sum_{i=0}^{n-1}\rvz_{\pi(i)}}_{\infty} +  C a + b,
	 \end{align*}
	where $C  = 30\log (\frac{dN}{2\delta}) = \gO\left( \log\left(\frac{dN}{\delta}\right) \right) = \tilde \gO (1)$ is from \citet[Theorem~1.1]{alweiss2021discrepancy}.
\end{lemma}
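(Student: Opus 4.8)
The plan is to follow the proof of Lemma~\ref{lem:basic-balancing-reordering} almost line for line, isolating the two points at which the pair structure of \texttt{PairBR} enters. First I would pin down the signs that \texttt{PairBR} hands to the reordering step. With the inputs of the lemma, the centering step produces $\rvc_n = \rvz_{\pi(n)} - \rvm$ with $\rvm = \frac{1}{N}\sum_{n=0}^{N-1}\rvz_{\pi(n)}$, so the pair differences are $\rvd_l = \rvc_{2l} - \rvc_{2l+1} = \rvz_{\pi(2l)} - \rvz_{\pi(2l+1)}$, i.e.\ $\rvm$ cancels (this is precisely the ``centering-free'' feature of pair balancing). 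Then \texttt{Balance} returns $\{\tilde\epsilon_l\}_{l=0}^{N/2-1}$ and \texttt{PairBR} sets $\epsilon_{2l} = \tilde\epsilon_l$, $\epsilon_{2l+1} = -\tilde\epsilon_l$, and calls \texttt{Reorder}$(\pi, \{\epsilon_n\}_{n=0}^{N-1})$; in particular $|M^+| = |M^-| = N/2$, where $M^{\pm} = \{n : \epsilon_n = \pm 1\}$. The first (deterministic) inequality of Lemma~\ref{lem:basic-balancing-reordering} uses only the structure of \texttt{Reorder} and the two identities relating prefix sums to signed prefix sums in its proof, never how the signs were generated, so it applies verbatim to this sign pattern:
\begin{align*}
	\max_{n\in[N]}\norm{\sum_{i=0}^{n-1}\rvz_{\pi'(i)}}_{\infty} \leq \frac{1}{2}\max_{n\in[N]}\norm{\sum_{i=0}^{n-1}\rvz_{\pi(i)}}_{\infty} + \frac{1}{2}\max_{n\in[N]}\norm{\sum_{i=0}^{n-1}\epsilon_i\rvz_{\pi(i)}}_{\infty} + \norm{\sum_{i=0}^{N-1}\rvz_i}_{\infty}.
\end{align*}

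Next I would bound the signed herding error (the middle term) using the pairing. For an even prefix length $n = 2m$ the signs telescope into a sum over the differences, $\sum_{i=0}^{2m-1}\epsilon_i\rvz_{\pi(i)} = \sum_{l=0}^{m-1}\tilde\epsilon_l(\rvz_{\pi(2l)} - \rvz_{\pi(2l+1)}) = \sum_{l=0}^{m-1}\tilde\epsilon_l\rvd_l$, and for an odd length $n = 2m+1$ there is exactly one leftover summand $\tilde\epsilon_m\rvz_{\pi(2m)}$ of $\ell_{\infty}$-norm at most $a$. Since $\norm{\rvd_l}_{2} \leq \norm{\rvz_{\pi(2l)}}_{2} + \norm{\rvz_{\pi(2l+1)}}_{2} \leq 2a$, I would normalize the $N/2$ vectors $\rvd_l/(2a)$ to the unit $\ell_2$-ball and apply \citet[Theorem~1.1]{alweiss2021discrepancy}: with probability at least $1-\delta$, $\max_{m}\norm{\sum_{l=0}^{m-1}\tilde\epsilon_l\rvd_l}_{\infty} \leq 2aC$ with $C = 30\log(dN/(2\delta))$ (the argument $dN/(2\delta)$ reflecting that only $N/2$ vectors are balanced). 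Hence the signed herding error is at most $2aC + a$, so the middle term contributes $\frac{1}{2}(2aC + a) = aC + a/2$; combining with the display above and $\norm{\sum_i\rvz_i}_{\infty}\le b$, and folding the residual $a/2$ into the constant, gives the claimed bound. Note that the factor $2$ coming from $\norm{\rvd_l}_2\le 2a$ exactly cancels the $\frac{1}{2}$ from the reordering contraction, which is why \texttt{PairBR} attains the coefficient $C$ rather than the $\frac{1}{2}C$ of \texttt{BasicBR} in Lemma~\ref{lem:basic-balancing-reordering} --- this is the trade-off for not having to center.

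I do not expect a serious obstacle here: the conceptual content is inherited entirely from Lemma~\ref{lem:basic-balancing-reordering}, and the only genuinely new steps are (i) verifying that its deterministic reordering inequality survives the $\pm$-paired sign pattern (it does, since that argument never inspects the origin of the signs) and (ii) correctly accounting for the doubled norm $\norm{\rvd_l}_2 \le 2a$ together with the single odd-index leftover term. The one thing to watch is the bookkeeping of numerical constants --- the factor $2$ in $2aC$, the odd-prefix term, and the $dN/(2\delta)$ inside $C$ --- so that the clean form $\frac{1}{2}\max_n\norm{\cdots}_{\infty} + Ca + b$ comes out; this is routine rather than subtle.
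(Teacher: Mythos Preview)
Your approach is correct and follows the same essential strategy as the paper --- the reordering contraction plus Alweiss et al.\ on the pair differences $\rvd_l$ --- but the paper takes a slightly more direct route that avoids your $a/2$ leftover. Rather than invoking the general deterministic inequality of Lemma~\ref{lem:basic-balancing-reordering} as a black box and then bounding the signed herding error over \emph{all} prefix lengths (which forces you to handle odd $n$ and incur the extra $\tilde\epsilon_m\rvz_{\pi(2m)}$), the paper exploits the fact that each pair contributes exactly one element to $M^+$ and one to $M^-$. This means the first $l'$ elements of $\pi'$ are \emph{exactly} the positive-signed elements among the first $2l'$ elements of $\pi$ --- an equality, not just an inclusion. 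Hence for $l'\le N/2$ they write $\sum_{i=0}^{l'-1}\rvz_{\pi'(i)} = \tfrac{1}{2}x_{l'} + \tfrac{1}{2}y_{l'}$ with $x_{l'}=\sum_{i=0}^{2l'-1}\rvz_{\pi(i)}$ and $y_{l'}=\sum_{j=0}^{l'-1}\tilde\epsilon_j\rvd_j$, so only signed sums over \emph{complete} pairs ever appear and the odd-prefix term never arises; this yields the stated $Ca$ exactly rather than $(C+\tfrac12)a$. Your route is perfectly valid and more modular (it literally reuses Lemma~\ref{lem:basic-balancing-reordering} unchanged), while the paper's buys the sharper constant at the cost of redoing the $l'\le N/2$ versus $l'>N/2$ case split by hand.
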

\begin{proof}
	We use $\tilde\epsilon_j$ to denote the assigned sign of $\rvd_j=\rvz_{\pi(2j)} - \rvz_{\pi(2j+1)}$ for all $j \in \{0,1, \ldots \frac{N}{2}-1\}$; we use $\epsilon_i$ to denote the assigned sign of $\rvz_{\pi(i)}$ for all $i \in \{0,1,\ldots, N-1\}$. Since $\{\rvd_j\}_{j=0}^{\frac{N}{2}-1}$ is the input of Algorithm~\ref{alg:balancing}, according to \citet{alweiss2021discrepancy}'s Theorem~1.1, for all $l \in \{1, 2,\ldots,\frac{N}{2}\}$,
	\begin{align*}
		\norm{ \sum_{j=0}^{l-1} \tilde \epsilon_j \rvd_j}_{\infty} = \norm{ \sum_{j=0}^{l-1} \tilde\epsilon_j\frac{\rvd_j}{\max_{j \in \{0,1,\ldots,l-1\}}\norm{\rvd_j}_2} }_{\infty} \cdot \max_{j \in \{0,1,\ldots,l-1\}}\norm{\rvd_j}_2\leq C \max_{j \in \{0,1,\ldots,l-1\}}\norm{\rvd_j}_2 \leq 2C a,
	\end{align*}
	where the last inequality is because for any $j \in \{0,1, \ldots \frac{N}{2}-1\}$,
	\begin{align*}
		\norm{\rvd_j}_2 = \norm{\rvz_{\pi(2j)} - \rvz_{\pi(2j+1)}}_{2} \leq \norm{\rvz_{\pi(2j)}}_{2} + \norm{\rvz_{\pi(2j+1)}}_{2} \leq 2a\,.
	\end{align*}
	
	We define $x_l$ and $y_l$ for $l \in \{1, 2,\ldots, \frac{N}{2}\}$,
	\begin{align*}
		x_l &= \sum_{j=0}^{l-1} \left( \rvz_{\pi(2j)} +  \rvz_{\pi(2j+1)} \right)=\sum_{i=0}^{2l-1} \rvz_{\pi(i)}\\
		y_l &= \sum_{j=0}^{l-1} \left( \epsilon_{2j}\rvz_{\pi(2j)} + \epsilon_{2j+1} \rvz_{\pi(2j+1)} \right) = \sum_{j=0}^{l-1} \left( \tilde\epsilon_{j}\rvz_{\pi(2j)} - \tilde\epsilon_{j} \rvz_{\pi(2j+1)} \right) = \sum_{j=0}^{l-1} \tilde\epsilon_j \rvd_j
	\end{align*}
	Let $M^+ = \{ i\in\{0, 1, \ldots, N-1\}\mid \epsilon_i = +1 \}$ and $M^- = \{ i\in\{0, 1, \ldots, N-1\}\mid \epsilon_i = -1 \}$.
	Then, for all $l \in \{1, 2,\ldots, \frac{N}{2}\}$, it follows that
	\begin{align*}
		\sum_{i \in M^+ \cap \{0, 1,\ldots, 2l-1\}} \rvz_{\pi(i)} &= \frac{1}{2}\sum_{j=0}^{l-1} \left( (1+\tilde\epsilon_j)\rvz_{\pi(2j)} +  (1-\tilde\epsilon_j)\rvz_{\pi(2j+1)} \right)= \frac{1}{2} x_l + \frac{1}{2} y_l\\
		\sum_{i \in M^- \cap \{0, 1,\ldots, 2l-1\}} \rvz_{\pi(i)} &= \frac{1}{2}\sum_{j=0}^{l-1} \left( (1-\tilde\epsilon_j)\rvz_{\pi(2j)} +  (1+\tilde\epsilon_j)\rvz_{\pi(2j+1)} \right)= \frac{1}{2} x_l - \frac{1}{2} y_l
	\end{align*}
	In fact, this can be seen as one special (restricted) case of that discussed in Lemma~\ref{lem:basic-balancing-reordering}. Now, we show one simple instance for better understanding of the equalities. Let the vectors be $\rvz_0, \rvz_1, \rvz_2, \rvz_3$ and the old permutation $\pi$ is $0,1,3,2$, which implies the permuted vectors $\rvz_{\pi(0)}, \rvz_{\pi(1)}, \rvz_{\pi(2)}, \rvz_{\pi(3)}$ under the old permutation $\pi$ are $\rvz_0, \rvz_1, \rvz_3, \rvz_2$. Let the assigned signs $\epsilon_0, \epsilon_1, \epsilon_2, \epsilon_3$ for the permuted vectors be $+1,-1,-1,+1$ (equivalently, $\tilde\epsilon_0=+1, \tilde\epsilon_1=-1$). Then, $M^+ = \{ 0, 3\}$, $M^-=\{1,2\}$. The results are in Table~\ref{tab:relation-instance-pair}.
	\begin{table}
		\centering
		\caption{A simple instance.}
		\label{tab:relation-instance-pair}
		\begin{tabular}{l|ll|l|l}
			\toprule
			$l$ &$u_l$ &$v_l$ &$2\cdot \sum_{ i\in M^+ \cap \{0,1,\ldots, 2l-1\}  } \rvz_{\pi(i)}$ &$2\cdot \sum_{ i\in M^- \cap \{0,1,\ldots, 2l-1\}  } \rvz_{\pi(i)}$\\\midrule
			$1$ &$\rvz_0+\rvz_1$  &$\rvz_0-\rvz_1$ &$2\rvz_0$ &$2\rvz_1$\\
			$2$ &$\rvz_0+\rvz_1+\rvz_3+\rvz_2$ &$\rvz_0-\rvz_1-\rvz_3+\rvz_2$ &$2\rvz_0+2\rvz_2$ &$2\rvz_1+2\rvz_3$\\
			\bottomrule
		\end{tabular}
	\end{table}
	
	By using the triangle inequality, for all $l \in \{1, 2,\ldots, \frac{N}{2}\}$, we can get
	\begin{align*}
		\norm{\sum_{i \in M^+ \cap \{0, 1,\ldots, 2l-1\}} \rvz_{\pi(i)}}_{\infty} &\leq \frac{1}{2} \norm{x_l}_{\infty} + \frac{1}{2} \norm{y_l}_{\infty}\\
		&= \frac{1}{2}\norm{\sum_{i=0}^{2l-1} \rvz_{\pi(i)}}_{\infty} + \frac{1}{2}\norm{\sum_{j=0}^{l-1} \tilde\epsilon_j \cdot \rvd_{j}}_{\infty}\\
		&\leq \frac{1}{2}\norm{\sum_{i=0}^{2l-1} \rvz_{\pi(i)}}_{\infty} + Ca\,,\\
		\norm{\sum_{i \in M^- \cap \{0, 1,\ldots, 2l-1\}} \rvz_{\pi(i)}}_{\infty}
		&\leq \frac{1}{2}\norm{\sum_{i=0}^{2l-1} \rvz_{\pi(i)}}_{\infty} + C a\,.
	\end{align*}
	
	Next, we consider the upper bound of $\norm{ \sum_{i=0}^{l'-1} \rvz_{\pi'(i)} }_{\infty}$ for all $l' \in \{1,2, \ldots, N\}$. Recall that Algorithm~\ref{alg:reordering} puts the vectors with positive assigned signs in the front of the new permutation and the vectors with negative assigned signs in the back of the new permutation.
	
	If $l' \in \{ 1, 2, \ldots, \frac{N}{2} \}$, we get
	\begin{align*}
		\norm{\sum_{i=0}^{l'-1} \rvz_{\pi'(i)}}_{\infty} &= \norm{\sum_{i \in M^+ \cap \{0, 1,\ldots, 2l'-1\}} \rvz_{\pi(i)}}_{\infty} \leq \frac{1}{2}\norm{\sum_{i=0}^{2l'-1} \rvz_{\pi(i)}}_{\infty} + C a\,.
	\end{align*}
	Note that if $l' \in  \{ 1, 2, \ldots, \frac{N}{2} \}$, then $2l' \in \{ 2, 4, \ldots, N \}$. Thus, we can get
	\begin{align*}
		\norm{\sum_{i=0}^{l'-1} \rvz_{\pi'(i)}}_{\infty}
		&\leq \frac{1}{2}\max_{n\in [N]}\norm{\sum_{i=0}^{n-1} \rvz_{\pi(i)}}_{\infty} + Ca\,.
	\end{align*}
	If $l' \in \{ \frac{N}{2}+1,\frac{N}{2}+2,\ldots, N \}$, we get
	\begin{align*}
		\norm{\sum_{i=0}^{l'-1} \rvz_{\pi'(i)}}_{\infty} &= \norm{\sum_{i=0}^{N-1} \rvz_{\pi'(i)} - \sum_{i=l'}^{N-1} \rvz_{\pi'(i)}}_{\infty} \\
		&\leq \norm{\sum_{i=0}^{N-1} \rvz_{\pi'(i)}}_{\infty}+ \norm{\sum_{i=l'}^{N-1} \rvz_{\pi'(i)}}_{\infty}\\
		&= \norm{\sum_{i=0}^{N-1} \rvz_{\pi'(i)}}_{\infty}+ \norm{\sum_{i \in M^- \cap \{0, 1,\ldots, 2(N-l')-1\}} \rvz_{\pi(i)}}_{\infty}\\
		&\leq \norm{\sum_{i=0}^{N-1} \rvz_{i}}_{\infty}+ \frac{1}{2}\norm{\sum_{i=0}^{2(N-l')-1} \rvz_{\pi(i)}}_{\infty}+ \frac{1}{2}\norm{\sum_{j=0}^{(N-l')-1} \tilde\epsilon_j \cdot \rvd_{j} }_{\infty}.
	\end{align*}
	Note that if $l' \in \{ \frac{N}{2}+1,\frac{N}{2}+2,\ldots, N \}$, then $(N-l') \in \{ 0, 1, \ldots, \frac{N}{2}-1 \}$ and $2(N-l') \in \{ 0, 2, \ldots, N-2 \}$. Thus,
	\begin{align*}
		\norm{\sum_{i=0}^{l'-1} \rvz_{\pi'(i)}}_{\infty} &\leq \norm{\sum_{i=0}^{N-1} \rvz_{i}}_{\infty} + \frac{1}{2} \max_{n\in [N]}\norm{\sum_{i=0}^{n-1} \rvz_{\pi(i)}}_{\infty} + C a\,.
	\end{align*}
	Thus, combining the two cases and using $\norm{\sum_{i=0}^{N-1} \rvz_{i}}_{\infty} \leq b$, we get
	\begin{align*}
		\max_{n \in [N]} \norm{\sum_{i=0}^{n-1}\rvz_{\pi'(i)}}_{\infty} \leq \frac{1}{2} \max_{n \in [N]} \norm{\sum_{i=0}^{n-1}\rvz_{\pi(i)}}_{\infty} +  Ca + b\,,
	\end{align*}
	which is the claimed bound.
\end{proof}

\begin{lemma}\label{lem:FL:pair-balancing-reordering}
	Let $\pi$, $\{\rvz_{\pi(n)}\}_{n=0}^{N-1}$ and $\frac{1}{N}\sum_{n=0}^{N-1} \rvz_{\pi(n)}$ be the inputs of Algorithm~\ref{alg:PairBR}, and $\pi'$ be the corresponding output. Suppose that $N \mod S = 0$ and $S \mod 2 = 0$. If $\norm{\rvz_{n}}_{2} \leq a$ for all $n \in \{0,1,\ldots, N-1\}$ and $\norm{\sum_{i=0}^{N-1} \rvz_{i}}_{\infty} \leq b$, then, with probability at least $1-\delta$,
	\begin{align*}
		\max_{m \in \{S, 2S, \ldots, N\}} \norm{\sum_{i=0}^{m-1}\rvz_{\pi'(i)}}_{\infty} \leq \frac{1}{2} \max_{m \in \{S, 2S, \ldots, N\}} \norm{\sum_{i=0}^{m-1}\rvz_{\pi(i)}}_{\infty} +  Ca + b\,,
	\end{align*}
	where $C  = 30\log (\frac{dN}{2\delta}) = \gO\left( \log\left(\frac{dN}{\delta}\right) \right) = \tilde \gO (1)$ is from \citet[Theorem~1.1]{alweiss2021discrepancy}.
\end{lemma}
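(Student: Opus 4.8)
The plan is to follow the proof of Lemma~\ref{lem:pair-balancing-reordering} essentially verbatim; the only new ingredient is that, since $S$ is even (and $N \bmod S = 0$), restricting the prefix lengths to $\{S, 2S, \ldots, N\}$ on both sides is compatible with the pairing structure of Algorithm~\ref{alg:PairBR}. Reusing the notation of that proof, let $\tilde\epsilon_j$ be the sign that Algorithm~\ref{alg:balancing} assigns to $\rvd_j = \rvz_{\pi(2j)} - \rvz_{\pi(2j+1)}$, set $\epsilon_{2j} = \tilde\epsilon_j$ and $\epsilon_{2j+1} = -\tilde\epsilon_j$, and put $M^{\pm} = \{i : \epsilon_i = \pm 1\}$. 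By construction each pair $\{2j, 2j+1\}$ contributes exactly one index to $M^+$ and one to $M^-$, so $|M^+| = |M^-| = N/2$ and, for every $l$, the $l$ smallest elements of $M^+$ (resp. $M^-$) are exactly $M^+ \cap \{0,1,\ldots,2l-1\}$ (resp. $M^- \cap \{0,1,\ldots,2l-1\}$). Since $\norm{\rvd_j}_2 \le 2a$, applying \citet[Theorem~1.1]{alweiss2021discrepancy} to the $N/2$ vectors $\{\rvd_j\}$ gives, with probability at least $1-\delta$, $\norm{\sum_{j=0}^{l-1}\tilde\epsilon_j \rvd_j}_{\infty} \le 2Ca$ for all $l$, with $C = 30\log(dN/(2\delta))$. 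Combining this with the algebraic identities from Lemma~\ref{lem:pair-balancing-reordering} yields, for every $l \in \{1,\ldots,N/2\}$,
\begin{align*}
	\max\left\{ \norm{\sum_{i \in M^+ \cap \{0,\ldots,2l-1\}} \rvz_{\pi(i)}}_{\infty},\ \norm{\sum_{i \in M^- \cap \{0,\ldots,2l-1\}} \rvz_{\pi(i)}}_{\infty} \right\} \le \frac{1}{2}\norm{\sum_{i=0}^{2l-1} \rvz_{\pi(i)}}_{\infty} + Ca.
\end{align*}

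Next I would fix $m \in \{S, 2S, \ldots, N\}$ and bound $\norm{\sum_{i=0}^{m-1}\rvz_{\pi'(i)}}_{\infty}$ using the structure of Algorithm~\ref{alg:reordering} (the $M^+$ indices first, in increasing order, then the $M^-$ indices reversed). If $m \le N/2$, the first $m$ entries of $\pi'$ are exactly $M^+ \cap \{0,1,\ldots,2m-1\}$, so the displayed bound with $l = m$ gives $\norm{\sum_{i=0}^{m-1}\rvz_{\pi'(i)}}_{\infty} \le \frac{1}{2}\norm{\sum_{i=0}^{2m-1}\rvz_{\pi(i)}}_{\infty} + Ca$; because $2m$ is a multiple of $S$ in $[0,N]$, the first term is at most $\frac{1}{2}\max_{m' \in \{S, 2S, \ldots, N\}}\norm{\sum_{i=0}^{m'-1}\rvz_{\pi(i)}}_{\infty}$. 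If $m > N/2$, I would write $\sum_{i=0}^{m-1}\rvz_{\pi'(i)} = \sum_{i=0}^{N-1}\rvz_i - \sum_{i=m}^{N-1}\rvz_{\pi'(i)}$, note that the tail of length $N-m$ consists of the $N-m$ smallest-index elements of $M^-$, i.e. $M^- \cap \{0,1,\ldots,2(N-m)-1\}$, and that $2(N-m)$ is again a multiple of $S$ in $[0,N]$ (using $N \bmod S = 0$); the triangle inequality together with $\norm{\sum_{i=0}^{N-1}\rvz_i}_{\infty} \le b$ and the displayed bound with $l = N-m$ then gives $\norm{\sum_{i=0}^{m-1}\rvz_{\pi'(i)}}_{\infty} \le b + \frac{1}{2}\max_{m' \in \{S, \ldots, N\}}\norm{\sum_{i=0}^{m'-1}\rvz_{\pi(i)}}_{\infty} + Ca$. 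Taking the maximum over $m \in \{S, 2S, \ldots, N\}$ on the left in both cases delivers the claimed inequality.

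The only step requiring any real care beyond re-running the proof of Lemma~\ref{lem:pair-balancing-reordering} is the index bookkeeping in the second paragraph: one must check that $S$ being even forces $m$ and $N-m$ (hence $2m$ and $2(N-m)$) to be multiples of $S$ lying in $[0,N]$, so that every prefix sum of $\pi$ that surfaces on the right-hand side is of the form $\sum_{i=0}^{m'-1}\rvz_{\pi(i)}$ with $m' \in \{S, 2S, \ldots, N\}$ and is therefore dominated by the restricted maximum. The degenerate cases $m = N$ (empty tail, so the bound reduces to $\norm{\sum_{i=0}^{N-1}\rvz_i}_{\infty} \le b$) and $N = S$ (only $m = N$ occurs) should be recorded separately but are immediate. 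Everything else — the $M^{\pm}$ identities, the bound $\norm{\rvd_j}_2 \le 2a$, and the invocation of \citet{alweiss2021discrepancy} — is identical to Lemma~\ref{lem:pair-balancing-reordering}, so I expect the write-up to be short and routine.
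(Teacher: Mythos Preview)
Your proposal is correct and follows essentially the same approach as the paper's proof. The only cosmetic difference is that the paper first proves the bound for all prefix lengths $l' \in \{\tfrac{S}{2}, S, \tfrac{3S}{2}, \ldots, N\}$ (the multiples of $S/2$) and then restricts to $\{S, 2S, \ldots, N\}$ at the end, whereas you work directly with $m \in \{S, 2S, \ldots, N\}$ throughout; your index bookkeeping (that $2m$ and $2(N-m)$ remain multiples of $S$ in $[0,N]$) is exactly what makes this shortcut valid.
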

\begin{proof}
	We use $\tilde\epsilon_j$ to denote the assigned sign of $\rvd_j=\rvz_{\pi(2j)} - \rvz_{\pi(2j+1)}$ for all $j \in \{0,1, \ldots \frac{N}{2}-1\}$; we use $\epsilon_i$ to denote the assigned sign of $\rvz_{\pi(i)}$ for all $i \in \{0,1,\ldots, N-1\}$. Since $\{\rvd_j\}_{j=0}^{\frac{N}{2}-1}$ is the input of Algorithm~\ref{alg:balancing}, according to \citet{alweiss2021discrepancy}'s Theorem~1.1, for all $l \in \{1, 2,\ldots,\frac{N}{2}\}$,
	\begin{align*}
		\norm{ \sum_{j=0}^{l-1} \tilde \epsilon_j \rvd_j}_{\infty} = \norm{ \sum_{j=0}^{l-1} \tilde\epsilon_j\frac{\rvd_j}{\max_{j \in \{0,1,\ldots,l-1\}}\norm{\rvd_j}_2} }_{\infty} \cdot \max_{j \in \{0,1,\ldots,l-1\}}\norm{\rvd_j}_2\leq C \max_{j \in \{0,1,\ldots,l-1\}}\norm{\rvd_j}_2 \leq 2Ca\,.
	\end{align*}
	where the last inequality is because for any $j \in \{0,1, \ldots \frac{N}{2}-1\}$,
	\begin{align*}
		\norm{\rvd_j}_2 = \norm{\rvz_{\pi(2j)} - \rvz_{\pi(2j+1)}}_{2} \leq \norm{\rvz_{\pi(2j)}}_{2} + \norm{\rvz_{\pi(2j+1)}}_{2} \leq 2a\,.
	\end{align*}
	
	We define $x_l$ and $y_l$ for $l \in \{1, 2,\ldots, \frac{N}{2}\}$,
	\begin{align*}
		x_l &= \sum_{j=0}^{l-1} \left( \rvz_{\pi(2j)} +  \rvz_{\pi(2j+1)} \right)=\sum_{i=0}^{2l-1} \rvz_{\pi(i)}\,,\\
		y_l &= \sum_{j=0}^{l-1} \left( \epsilon_{2j}\rvz_{\pi(2j)} + \epsilon_{2j+1} \rvz_{\pi(2j+1)} \right) = \sum_{j=0}^{l-1} \left( \tilde\epsilon_{j}\rvz_{\pi(2j)} - \tilde\epsilon_{j} \rvz_{\pi(2j+1)} \right) = \sum_{j=0}^{l-1} \tilde\epsilon_j \rvd_j\,.
	\end{align*}
	Let $M^+ = \{ i\in\{0, 1, \ldots, N-1\}\mid \epsilon_i = +1 \}$ and $M^- = \{ i\in\{0, 1, \ldots, N-1\}\mid \epsilon_i = -1 \}$.
	Then, for all $l \in \{1, 2,\ldots, \frac{N}{2}\}$, it follows that
	\begin{align*}
		\sum_{i \in M^+ \cap \{0, 1,\ldots, 2l-1\}} \rvz_{\pi(i)} &= \frac{1}{2}\sum_{j=0}^{l-1} \left( (1+\tilde\epsilon_j)\rvz_{\pi(2j)} +  (1-\tilde\epsilon_j)\rvz_{\pi(2j+1)} \right)= \frac{1}{2} x_l + \frac{1}{2} y_l\,,\\
		\sum_{i \in M^- \cap \{0, 1,\ldots, 2l-1\}} \rvz_{\pi(i)} &= \frac{1}{2}\sum_{j=0}^{l-1} \left( (1-\tilde\epsilon_j)\rvz_{\pi(2j)} +  (1+\tilde\epsilon_j)\rvz_{\pi(2j+1)} \right)= \frac{1}{2} x_l - \frac{1}{2} y_l\,.
	\end{align*}
	
	By using the triangle inequality, for all $l \in \{1, 2,\ldots, \frac{N}{2}\}$, we can get
	\begin{align*}
		\norm{\sum_{i \in M^+ \cap \{0, 1,\ldots, 2l-1\}} \rvz_{\pi(i)}}_{\infty} &\leq \frac{1}{2} \norm{x_l}_{\infty} + \frac{1}{2} \norm{y_l}_{\infty}\\
		&= \frac{1}{2}\norm{\sum_{i=0}^{2l-1} \rvz_{\pi(i)}}_{\infty} + \frac{1}{2}\norm{\sum_{j=0}^{l-1} \tilde\epsilon_j \cdot \rvd_{j}}_{\infty}\\
		&\leq \frac{1}{2}\norm{\sum_{i=0}^{2l-1} \rvz_{\pi(i)}}_{\infty} + Ca\,,\\
		\norm{\sum_{i \in M^- \cap \{0, 1,\ldots, 2l-1\}} \rvz_{\pi(i)}}_{\infty}
		&\leq \frac{1}{2}\norm{\sum_{i=0}^{2l-1} \rvz_{\pi(i)}}_{\infty} + Ca\,.
	\end{align*}
	
	Next, we consider the upper bound of $\norm{\sum_{i=0}^{l'-1} \rvz_{\pi'(i)}}_{\infty}$ for all $l' \in \{ \frac{1}{2}S, S, \frac{3}{2}S,\ldots, \frac{N}{S}\cdot S \}$.
	
	If $l' \leq \frac{N}{S}\cdot \frac{1}{2}S$, or equivalently, $l' \in \{ \frac{1}{2}S, S, \frac{3}{2}S,\ldots, \frac{N}{S}\cdot \frac{1}{2}S \} \subseteq \{1, 2,\ldots, \frac{N}{2}\}$, then we can get
	\begin{align*}
		\norm{\sum_{i=0}^{l'-1} \rvz_{\pi'(i)}}_{\infty} &= \norm{\sum_{i \in M^+ \cap \{0, 1,\ldots, 2l'-1\}} \rvz_{\pi(i)}}_{\infty}\leq \frac{1}{2}\norm{\sum_{i=0}^{2l'-1} \rvz_{\pi(i)}}_{\infty} + Ca\,.
	\end{align*}
	Then, note that if $l' \in \{ \frac{1}{2}S, S, \frac{3}{2}S,\ldots, \frac{N}{S}\cdot \frac{1}{2}S \}$, which implies that $2l' \in \{ S, 2S, 3S,\ldots, N \}$, then
	\begin{align*}
		\norm{\sum_{i=0}^{l'-1} \rvz_{\pi'(i)}}_{\infty} \leq \frac{1}{2} \max_{m\in \{ S, 2S, 3S,\ldots, N \}}\norm{\sum_{i=0}^{m-1} \rvz_{\pi(i)}}_{\infty} + C a\,.
	\end{align*}
	If $l' > \frac{N}{S}\cdot \frac{1}{2}S$, or equivalently, $l' \in \{ \left(\frac{N}{S}+1\right)\frac{S}{2}, \left(\frac{N}{S}+2\right)\frac{S}{2},\ldots, N \}$, then we can get
	\begin{align*}
		\norm{\sum_{i=0}^{l'-1} \rvz_{\pi'(i)}}_{\infty} &= \norm{\sum_{i=0}^{N-1} \rvz_{\pi'(i)} - \sum_{i=l'}^{N-1} \rvz_{\pi'(i)}}_{\infty} \\
		&\leq \norm{\sum_{i=0}^{N-1} \rvz_{\pi'(i)}}_{\infty}+ \norm{\sum_{i=l'}^{N-1} \rvz_{\pi'(i)}}_{\infty}\\
		&= \norm{\sum_{i=0}^{N-1} \rvz_{\pi'(i)}}_{\infty}+ \norm{\sum_{i \in M^- \cap \{0, 1,\ldots, 2(N-l')-1\}} \rvz_{\pi(i)}}_{\infty}\\
		&\leq \norm{\sum_{i=0}^{N-1} \rvz_{\pi'(i)}}_{\infty}+ \frac{1}{2}\norm{\sum_{i=0}^{2(N-l')-1} \rvz_{\pi(i)}}_{\infty}+ \frac{1}{2}\norm{\sum_{j=0}^{(N-l')-1} \tilde\epsilon_j \rvd_j}_{\infty}.
	\end{align*}
	Note that if $l' \in \{ \left(\frac{N}{S}+1\right)\frac{S}{2}, \left(\frac{N}{S}+2\right)\frac{S}{2},\ldots, N \}$, then $(N-l') \in \{ 0, \frac{S}{2}, S, \ldots, \left(\frac{N}{S}-1\right)\frac{S}{2} \}$ and $2(N-l') \in \{ 0, S,2S, \ldots, N-S \}$. Thus, we can get
	\begin{align*}
		\norm{\sum_{i=0}^{l'-1} \rvz_{\pi'(i)}}_{\infty} &\leq \norm{\sum_{i=0}^{N-1} \rvz_{\pi(i)}}_{\infty} + \frac{1}{2} \max_{m\in \{ S, 2S, 3S,\ldots, N-S \}}\norm{\sum_{i=0}^{m-1} \rvz_{\pi(i)}}_{\infty} + C a\\
		&\leq \norm{\sum_{i=0}^{N-1} \rvz_{\pi(i)}}_{\infty} + \frac{1}{2} \max_{m\in \{ S, 2S, 3S,\ldots, N\}}\norm{\sum_{i=0}^{m-1} \rvz_{\pi(i)}}_{\infty} + C a\,.
	\end{align*}
	
	The bounds for these two cases hold for all $l' \in \{ \frac{1}{2}S, S, \frac{3}{2}S,\ldots, \frac{N}{S}\cdot S \}$, which means that
	\begin{align*}
		\max_{l' \in \{ \frac{1}{2}S, S, \frac{3}{2}S,\ldots, \frac{N}{S}\cdot S \}} \norm{\sum_{i=0}^{l'-1} \rvz_{\pi'(i)}}_{\infty} \leq \frac{1}{2} \max_{m\in \{ S, 2S, 3S,\ldots, N \}}\norm{\sum_{i=0}^{m-1} \rvz_{\pi(i)}}_{\infty} + \norm{\sum_{i=0}^{N-1} \rvz_{\pi(i)}}_{\infty}+ Ca\,.
	\end{align*}
	Since $\{ S, 2S, 3S,\ldots, \frac{N}{S}\cdot S \} \subseteq \{ \frac{1}{2}S, S, \frac{3}{2}S,\ldots, \frac{N}{S}\cdot S \}$, then
	\begin{align*}
		\max_{m \in \{ S, 2S, 3S,\ldots,N\}} \norm{\sum_{i=0}^{m-1} \rvz_{\pi'(i)}}_{\infty} \leq \frac{1}{2} \max_{m\in \{ S, 2S, 3S,\ldots, N \}}\norm{\sum_{i=0}^{m-1} \rvz_{\pi(i)}}_{\infty} + \norm{\sum_{i=0}^{N-1} \rvz_{\pi(i)}}_{\infty}+ Ca\,.
	\end{align*}
	Using $\norm{\sum_{i=0}^{N-1} \rvz_{i}}_{\infty} \leq b$, we get the claimed bound.
\end{proof}

If using \texttt{BasicBR} (Algorithm~\ref{alg:BasicBR}), we are unable to get the similar relation (to Lemma~\ref{lem:FL:pair-balancing-reordering}) between
\begin{align*}
	\max_{m \in \{S, 2S, \ldots, N\}} \norm{\sum_{i=0}^{m-1}\rvz_{\pi'(i)}}_{\infty} \quad \text{and} \quad \max_{m \in \{S, 2S, \ldots, N\}} \norm{\sum_{i=0}^{m-1}\rvz_{\pi(i)}}_{\infty}
\end{align*}
with the existing theoretical techniques \citep{harvey2014near,lu2022grab}. This causes that we cannot get the upper bound for regularized-participation FL with (original) GraB, which depends on Algorithm~\ref{alg:BasicBR} (\texttt{BasicBR}). Now, we provide the intuitive analysis. As shown in Figure~\ref{fig:FL:balancing-reordering-instance}, we consider a simple instance with 24 vectors $\{\rvz_{n}\}_{n=0}^{23}$. Assume that the old older $\pi = 0,1,2,\ldots,23$. The permuted vectors are assigned with $\{+1,-1\}$ signs by some balancing algorithm, where the blue rectangles denote the vectors with the positive assigned signs and the yellow rectangles denote the vectors with the negative assigned signs. Let us focus on the blue rectangles. In the basic case (\texttt{BasicBR}), according to the analysis in Lemma~\ref{lem:basic-balancing-reordering} (specifically, Eqs.~\ref{eq:basic-order-relation:positive} and~\ref{eq:basic-order-relation:negative}), we can get the partial sum of the vectors with the positive assigned signs over consecutive chunks, each with a size of $S=8$. That is, $\sum_{n\in \{0,\ldots,6\}} \rvz_n$, $\sum_{n\in \{0,\ldots,6\}\cup\{8,\ldots,12\} } \rvz_n$ and $\sum_{n\in \{0,\ldots,6\}\cup\{8,\ldots,12\}\cup \{16,\ldots,20\} } \rvz_n$. Yet, in the new order $\pi'$, it is required to compute $\sum_{n\in \{0,\ldots,6\} \cup \{8\}} \rvz_n$. This is unachievable by the known information. However, in the pair case (\texttt{PairBR}), the number of vectors with the positive assigned signs are equal to the number of vectors with the negative assigned signs in each chunk. This characteristic makes it feasible for us to get the relation (as shown in Lemma~\ref{lem:FL:pair-balancing-reordering}), with the existing theoretical techniques. Yet, It is still open whether similar results can be derived for the basic case with other more advanced techniques.
\begin{figure}[h]
	\centering
	\includegraphics[width=0.6\linewidth]{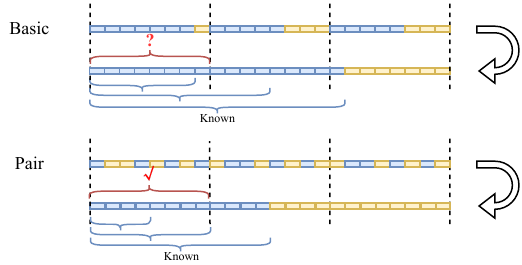}
	\caption{One instance of the basic and pair balancing and reordering algorithm (Algorithms~\ref{alg:BasicBR} and~\ref{alg:PairBR}) for FL. The top subfigure shows the instance of the basic case and the bottom subfigure shows the instance of the pair case. The blue rectangles denote the vectors with the positive assigned signs and the yellow rectangles denote the vectors with the negative assigned signs.}
	\label{fig:FL:balancing-reordering-instance}
\end{figure}

\section{Theorem~\ref{thm:SGD}}

\subsection{Order Error in SGD}
\label{apx-subsec:order error}

\citet{smith2021origin} says that, for small finite step sizes (It means $\gamma$ is large enough that terms of $\gO\left( \gamma^2 N^2 \right)$ may be significant, but small enough that terms of $\gO\left( \gamma^3 N^3 \right)$ are negligible), the cumulative updates of permutation-based SGD in one epoch are
\begin{align}
	\rvx^N - \rvx^0 &= - \gamma N \nabla f(\rvx^0) + \gamma^2 \sum_{n=0}^{N-1}\sum_{i<n} \nabla^2 f_{\pi(n)}(\rvx^0) \nabla f_{\pi(i)}(\rvx^0) + \gO( \gamma^3 N^3).\label{eq:order error analysis}
\end{align}

\begin{proof}[Proof of Eq.~\eqref{eq:order error analysis}]
	The Taylor expansion of $h$ at $x=x_0$ is $\sum_{n=0}^{\infty} \frac{1}{n!} h^{(n)} (x_0) (x-x_0)^n$. Here we only need
	\begin{align*}
		h(x) = h(x_0) + h'(x_0) (x-x_0) + \gO \left((x-x_0)^2\right).
	\end{align*}
	For permutation-based SGD, for any epoch $q\geq 0$,
	\begin{align*}
		\rvx_{q}^{N} - \rvx_{q}^{0} = -\gamma \sum_{n=0}^{N-1} \nabla f_{\pi(n)} \left(\rvx_q^n\right).
	\end{align*}
	Next, we drop the subscripts $q$ for convenience. For any $n\in\{0,1,\ldots,N-1\}$,
	\begin{align*}
		\rvx^{n} - \rvx^{0} = -\gamma \sum_{i=0}^{n-1} \nabla f_{\pi(i)} \left(\rvx^i\right).
	\end{align*}
	Then, using Taylor expansion of $\nabla f_{\pi(n)} \left(\rvx^n\right)$, we get
	\begin{align}
		\nabla f_{\pi(n)} \left(\rvx^n\right) &= \nabla f_{\pi(n)}\left( \rvx^0 \right) + \nabla \nabla f_{\pi(n)}(\rvx^0) \left( -\gamma \sum_{i=0}^{n-1} \nabla f_{\pi(i)} \left(\rvx^i\right) \right) + \gO\left( \gamma^2 N^2 \right)\nonumber\\
		&= \nabla f_{\pi(n)}\left( \rvx^0 \right) - \gamma \nabla \nabla f_{\pi(n)}(\rvx^0)  \sum_{i=0}^{n-1} \nabla f_{\pi(i)} \left(\rvx^i\right) + \gO\left( \gamma^2N^2\right).\label{eq:pf-def:order eror}
	\end{align}
	Then, note that the expansion of Eq.~\eqref{eq:pf-def:order eror} is also applied to $\nabla f_{\pi(i)} \left(\rvx^i\right)$ in Eq.~\eqref{eq:pf-def:order eror}. Using Eq.~\eqref{eq:pf-def:order eror} recursively, we get
	\begin{align*}
		\Term{2}{Eq.~\eqref{eq:pf-def:order eror}} &= - \gamma \nabla \nabla f_{\pi(n)}(\rvx^0)  \sum_{i=0}^{n-1} \nabla f_{\pi(i)} \left(\rvx^i\right)\\
		&= - \gamma \nabla \nabla f_{\pi(n)}(\rvx^0)  \sum_{i=0}^{n-1} \left( \nabla f_{\pi(i)}\left( \rvx^0 \right) - \gamma \nabla \nabla f_{\pi(i)}(\rvx^0)  \sum_{a=0}^{i-1} \nabla f_{\pi(a)} \left(\rvx^a\right) + \gO\left( \gamma^2N^2\right) \right)\\
		&= - \gamma \nabla \nabla f_{\pi(n)}(\rvx^0)  \sum_{i=0}^{n-1}\nabla f_{\pi(i)}\left( \rvx^0 \right) + \gO\left( \gamma^2N^2 \right) + \gO\left( \gamma^3N^3 \right).
	\end{align*}
	Substituting it gives
	\begin{align*}
		\nabla f_{\pi(n)} \left(\rvx^n\right) &= \nabla f_{\pi(n)}\left( \rvx^0 \right) - \gamma \nabla \nabla f_{\pi(n)}(\rvx^0)  \sum_{i=0}^{n-1}\nabla f_{\pi(i)}\left( \rvx^0 \right) + \gO\left( \gamma^2N^2 \right) + \gO\left( \gamma^3N^3 \right) + \gO\left( \gamma^2N^2\right)\\
		&= \nabla f_{\pi(n)}\left( \rvx^0 \right) - \gamma \nabla \nabla f_{\pi(n)}(\rvx^0)  \sum_{i=0}^{n-1}\nabla f_{\pi(i)}\left( \rvx^0 \right) + \gO\left( \gamma^2N^2 \right).
	\end{align*}
	At last, we get
	\begin{align*}
		\rvx^N - \rvx^0 &= -\gamma \sum_{n=0}^{N-1 } \nabla f_{\pi(n)} (\rvx^n) \\
		&= -\gamma \sum_{n=0}^{N-1}\nabla f_{\pi(n)}\left( \rvx^0 \right) + \gamma^2 \sum_{n=0}^{N-1} \nabla \nabla f_{\pi(n)}(\rvx^0)  \sum_{i=0}^{n-1}\nabla f_{\pi(i)}\left( \rvx^0 \right) + \gO\left( \gamma^3N^3 \right).
	\end{align*}
	After recovering the subscripts $q$ and noting $\rvx_{q}^{0} = \rvx_q$, we get Eq.~\eqref{eq:order error analysis}.
\end{proof}

\subsection{Parameter Deviation in SGD}

We define the maximum parameter deviation (drift) in any epoch $q$, $\Delta_q$ as
\begin{align*}
	\Delta_q = \max_{n \in \left[N\right]}\norm{\rvx_q^n- \rvx_q^0}_{p}\,.
\end{align*}

\begin{lemma}
	\label{lem:parameter drift}
	If $\gamma L_p N\leq \frac{1}{32}$, the maximum parameter drift is bounded:
	\begin{align*}
		\Delta_q &\leq \frac{32}{31}\gamma \bar \phi_q + \frac{32}{31} \gamma N \norm{\nabla f(\rvx_q)}_p,\\
		\left( \Delta_q\right)^2&\leq 3\gamma^2 \left(\bar \phi_q\right)^2 + 3\gamma^2 N^2 \norm{\nabla f(\rvx_q)}_p^2.
	\end{align*}
\end{lemma}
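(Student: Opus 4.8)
The plan is to follow the classical parameter-drift argument of \citet{mishchenko2020random}, carried out in the Lebesgue $p$-norm and packaged through the order error $\bar\phi_q$. First I would unroll the inner loop of Algorithm~\ref{alg:SGD}: for any $n\in[N]$, $\rvx_q^n - \rvx_q^0 = -\gamma\sum_{i=0}^{n-1}\nabla f_{\pi_q(i)}(\rvx_q^i)$. Then I would insert $\pm\nabla f_{\pi_q(i)}(\rvx_q^0)$ and $\pm\nabla f(\rvx_q^0)$ inside the sum, splitting the displacement into three pieces: a ``drift'' piece $-\gamma\sum_{i<n}\bigl(\nabla f_{\pi_q(i)}(\rvx_q^i)-\nabla f_{\pi_q(i)}(\rvx_q^0)\bigr)$, an ``order'' piece $-\gamma\sum_{i<n}\bigl(\nabla f_{\pi_q(i)}(\rvx_q^0)-\nabla f(\rvx_q^0)\bigr)$, and a ``gradient'' piece $-\gamma n\,\nabla f(\rvx_q^0)$.

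Applying the triangle inequality and bounding each piece: the drift piece is controlled by $L_p$-smoothness of each $f_n$ together with the crude bound $\norm{\rvx_q^i-\rvx_q^0}_p\le\Delta_q$, valid for every $i<n\le N$, so it is at most $\gamma n L_p\Delta_q\le\gamma N L_p\Delta_q$; the order piece equals $\gamma\,\phi_q^n\le\gamma\bar\phi_q$ by Definition~\ref{def:order error}, recalling $\rvx_q^0=\rvx_q$; the gradient piece is at most $\gamma N\norm{\nabla f(\rvx_q)}_p$ since $n\le N$. Taking the maximum over $n\in[N]$ on the left yields the self-referential inequality $\Delta_q\le\gamma N L_p\Delta_q+\gamma\bar\phi_q+\gamma N\norm{\nabla f(\rvx_q)}_p$.

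The only real subtlety --- and the step that forces the hypothesis $\gamma L_p N\le\tfrac1{32}$ --- is absorbing the $\Delta_q$ term on the right-hand side: under that hypothesis $1-\gamma N L_p\ge\tfrac{31}{32}$, so dividing through gives $\Delta_q\le\tfrac{32}{31}\gamma\bar\phi_q+\tfrac{32}{31}\gamma N\norm{\nabla f(\rvx_q)}_p$, which is the first claim. For the squared bound I would square this, apply $(a+b)^2\le 2a^2+2b^2$, and note that $2\bigl(\tfrac{32}{31}\bigr)^2=\tfrac{2048}{961}<3$, which yields $(\Delta_q)^2\le 3\gamma^2(\bar\phi_q)^2+3\gamma^2 N^2\norm{\nabla f(\rvx_q)}_p^2$. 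I do not anticipate any genuine obstacle here; the care points are keeping every norm as $\norm{\cdot}_p$ throughout, invoking only $L_p$-smoothness of the $f_n$ (not $L_{2,p}$), and using the uniform bound $\norm{\rvx_q^i-\rvx_q^0}_p\le\Delta_q$ \emph{before} taking the maximum so that the recursion closes.
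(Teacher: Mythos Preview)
Your proposal is correct and matches the paper's proof essentially step for step: the same add-and-subtract decomposition into drift/order/gradient pieces, the same self-referential bound $\Delta_q\le\gamma L_p N\Delta_q+\gamma\bar\phi_q+\gamma N\norm{\nabla f(\rvx_q)}_p$, and the same absorption via $\gamma L_p N\le\tfrac{1}{32}$ followed by $(a+b)^2\le 2a^2+2b^2$ with $2(\tfrac{32}{31})^2<3$.
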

\begin{proof}
	In this lemma, we mainly focus on one epoch $q$, thus we drop the subscripts $q$ for convenience. For any $n\in \left[ N \right]$, it follows that
	\begin{align*}
		\norm{\rvx^n- \rvx^0}_{p}
		&= \gamma \norm{\sum_{i=0}^{n-1} \nabla f_{\pi(i)}(\rvx^i)}_{p} \nonumber\\
		&= \gamma\norm{\sum_{i=0}^{n-1} \left( \nabla f_{\pi(i)}(\rvx^i) - \nabla f_{\pi(i)}(\rvx^0) + \nabla f_{\pi(i)}(\rvx^0) - \nabla f(\rvx^0) + \nabla f(\rvx^0) \right) }_{p}\nonumber\\
		&\leq \gamma\norm{\sum_{i=0}^{n-1} \left(\nabla f_{\pi(i)}(\rvx^i)- \nabla f_{\pi(i)}(\rvx^0)\right) }_{p} + \gamma\norm{ \sum_{i=0}^{n-1} \left( \nabla f_{\pi(i)}(\rvx^0) - \nabla f(\rvx^0)\right) }_{p} + \gamma \norm{\sum_{i=0}^{n-1} \nabla f(\rvx^0)}_{p}\nonumber\\
		&\leq \gamma L_{p} \sum_{i=0}^{n-1}\norm{\rvx^i - \rvx^0}_{p} + \gamma \phi^n + \gamma n \norm{\nabla f(\rvx^0)}_{p}\nonumber\\
		&\leq \gamma L_{p} N \Delta + \gamma \bar \phi + \gamma N \norm{\nabla f(\rvx^0)}_{p}\nonumber.
	\end{align*}
	Note that this bound holds for any $n\in \left[ N \right]$. This means
	\begin{align*}
		\Delta \leq \gamma L_{p} N \Delta + \gamma \bar \phi + \gamma N \norm{\nabla f(\rvx^0)}_{p}.
	\end{align*}
	Then, using $\gamma L_{p} N \leq \frac{1}{32}$, we have
	\begin{align*}
		\Delta &\leq \frac{32}{31}\gamma \bar \phi + \frac{32}{31} \gamma N \norm{\nabla f(\rvx^0)}_p,\\
		\left( \Delta_q\right)^2&\leq 3\gamma^2 \left(\bar \phi_q\right)^2 + 3\gamma^2 N^2 \norm{\nabla f(\rvx_q)}_p^2 .
	\end{align*}
	Recovering the subscripts $q$ yields the final result.
\end{proof}

\subsection{Proof of Theorem~\ref{thm:SGD}}

\begin{proof}[Proof of Theorem~\ref{thm:SGD}]
	For permutation-based SGD, the cumulative updates over any epoch $q$ are
	\begin{align*}
		\rvx^{N}_q - \rvx^{0}_q =- \gamma \sum_{n=0}^{N-1}\nabla f_{\pi_q(n)}(\rvx^{n}_q)\,.
	\end{align*}
	Next, we focus on one single epoch, and therefore drop the subscripts $q$ for clarity. Since the global objective function $f$ is $L$-smooth, it follows that
	\begin{align*}
		f(\rvx^{N}) &\leq f(\rvx^0) + \langle \nabla f(\rvx^0), \rvx^N-\rvx^0\rangle + \frac{1}{2}L\norm{\rvx^N-\rvx^0}^2\,.
	\end{align*}
	After substituting $\rvx^{N} - \rvx^{0}$, we have
	\begin{align*}
		&\langle \nabla f(\rvx^0), \rvx_N-\rvx^0\rangle\\
		&= -\gamma N\left\langle \nabla f(\rvx^0), \frac{1}{N}\sum_{n=0}^{N-1}\nabla f_{\pi(n)}(\rvx^n) \right\rangle \\
		&= -\frac{1}{2}\gamma N\normsq{\nabla f(\rvx^0)} -\frac{1}{2}\gamma N \normsq{\frac{1}{N}\sum_{n=0}^{N-1}\nabla f_{\pi(n)}(\rvx^n)} + \frac{1}{2}\gamma N \normsq{\frac{1}{N}\sum_{n=0}^{N-1}\nabla f_{\pi(n)}(\rvx^n) - \nabla f(\rvx^0)}\,,
	\end{align*}
	where the second equality is due to $2 \langle \rvx,\rvy\rangle =  \norm{\rvx} + \norm{\rvy} - \norm{\rvx-\rvy}$.
	\begin{align*}
		\frac{1}{2}L\norm{\rvx^N - \rvx^0}^2 =\frac{1}{2}\gamma^2LN^2\normsq{\frac{1}{N}\sum_{n=0}^{N-1}\nabla f_{\pi(n)}(\rvx^{n})}.
	\end{align*}
	Next, plugging back, we get
	\begin{align*}
		f(\rvx^{N}) &\leq f(\rvx^0) + \langle \nabla f(\rvx^0), \rvx^N - \rvx^0\rangle + \frac{1}{2}L\norm{\rvx^N - \rvx^0}^2\\
		&\leq f(\rvx^0) -\frac{1}{2}\gamma N \normsq{\nabla f(\rvx^0)} -\frac{1}{2}\gamma N(1-\gamma LN) \E\normsq{\frac{1}{N}\sum_{n=0}^{N-1}\nabla f_{\pi(n)}(\rvx_n)} \\
		&\quad + \frac{1}{2}\gamma N \normsq{\frac{1}{N}\sum_{n=0}^{N-1}\nabla f_{\pi(n)}(\rvx^n) - \nabla f(\rvx^0)}.
	\end{align*}
	Since $\textcolor{blue}{\gamma L N \leq 1}$, we get
	\begin{align}
		f(\rvx^{N})
		\leq f(\rvx^0) -\frac{1}{2}\gamma N \normsq{\nabla f(\rvx^0)} + \frac{1}{2}\gamma N \normsq{\frac{1}{N}\sum_{n=0}^{N-1}\nabla f_{\pi(n)}(\rvx^n) - \nabla f(\rvx^0)}. \label{eq:pf-lem:gradient errors}
	\end{align}
	
	Since each local objective function $f_n$ is $L_{2,p}$-smooth, we have
	\begin{align*}
		\Term{3}{\eqref{eq:pf-lem:gradient errors}} &= \frac{1}{2}\gamma N \normsq{\frac{1}{N}\sum_{n=0}^{N-1}\nabla f_{\pi(n)}(\rvx^n) - \nabla f(\rvx^0)} \\
		&= \frac{1}{2}\gamma N \normsq{\frac{1}{N}\sum_{n=0}^{N-1}\left(\nabla f_{\pi(n)}(\rvx_n) -\nabla f_{\pi(n)}(\rvx^0)\right)} \\
		&\leq \frac{1}{2}\gamma L_{2,p}^2\sum_{n=0}^{N-1}\normsq{\rvx^n - \rvx^0}_{p}.
	\end{align*}
	Plugging back, we get
	\begin{align*}
		f(\rvx^{N})
		&\leq f(\rvx^0)-\frac{1}{2}\gamma N \normsq{\nabla f(\rvx^0)} + \frac{1}{2}\gamma L_{2,p}^2\sum_{n=0}^{N-1}\normsq{\rvx^n - \rvx^0}_{p}\\
		&\leq f(\rvx^0) -\frac{1}{2}\gamma N \normsq{\nabla f(\rvx^0)} + \frac{1}{2}\gamma L_{2,p}^2 N \left( \Delta \right)^2.
	\end{align*}
	Recovering the subscripts $q$ yields
	\begin{align*}
		f(\rvx_{q+1}) - f(\rvx_q) \leq -\frac{1}{2}\gamma N \normsq{\nabla f(\rvx_q)} + \frac{1}{2}\gamma L_{2,p}^2 N \left( \Delta_q \right)^2.
	\end{align*}
	According to Lemma~\ref{lem:parameter drift}, we can get
	\begin{align*}
		\left( \Delta_q\right)^2&\leq 3\gamma^2 \left(\bar \phi_q\right)^2 + 3 \gamma^2 N^2 \norm{\nabla f(\rvx_q)}_p^2 .
	\end{align*}
	Plugging the upper bound of $\left( \Delta_q\right)^2$, we can get
	\begin{align*}
		f(\rvx_{q+1}) - f(\rvx_q)
		&\leq -\frac{1}{2}\gamma N \normsq{\nabla f(\rvx_q)} + \frac{1}{2}\gamma L_{2,p}^2 N \left( \Delta_q\right)^2\\
		&\leq -\frac{1}{2}\gamma N \normsq{\nabla f(\rvx_q)} + \frac{1}{2}\gamma L_{2,p}^2 N \left( 3\gamma^2 \left(\bar \phi_q\right)^2 + 3\gamma^2N^2\norm{\nabla f(\rvx_q)}_{p}^2 \right)\\
		&\leq f(\rvx_q) -\frac{1}{2}\gamma N  \left( 1-3\gamma^2L_{2,p}^2N^2\right)\normsq{\nabla f(\rvx_q)} + \frac{3}{2} \gamma^3L_{2,p}^2 N \left(\bar \phi_q\right)^2 \\
		&\leq f(\rvx_q) -\frac{255}{512}\gamma N  \normsq{\nabla f(\rvx_q)} + 2 \gamma^3L_{2,p}^2 N \left(\bar \phi_q\right)^2,
	\end{align*}
	where the last inequality is due to $\norm{\rvx}_p \leq \norm{\rvx}$ for $p\geq 2$ and $\textcolor{blue}{\gamma L_{2,p} N \leq \frac{1}{32}}$. Then,
	\begin{align*}
		&f(\rvx_{q+1}) - f(\rvx_q) \leq - \frac{255}{512}\gamma N\normsq{\nabla f(\rvx_q)} + 2\gamma^3 L_{2,p}^2 N \left(\bar \phi_q\right)^2\\
		\implies &\frac{1}{Q}\sum_{q=0}^{Q-1} \left( f(\rvx_{q+1}) - f(\rvx_q) \right) \leq -\frac{255}{512}\gamma N \frac{1}{Q}\sum_{q=0}^{Q-1}\normsq{\nabla f(\rvx_q)} + 2\gamma^3 L_{2,p}^2 N \frac{1}{Q}\sum_{q=0}^{Q-1} \left(\bar \phi_{q}\right)^2\\
		\implies &\frac{1}{\gamma NQ} \left( f(\rvx_{Q}) - f(\rvx_0) \right) \leq -\frac{255}{512} \frac{1}{Q}\sum_{q=0}^{Q-1}\normsq{\nabla f(\rvx_q)} + 2\gamma^2 L_{2,p}^2 \frac{1}{Q}\sum_{q=0}^{Q-1} \left(\bar \phi_{q}\right)^2.
	\end{align*}
	
	Then, we use Assumption~\ref{asm:order error}. Recall that $A_i= 0$ and $B_i=0$ for $i> \nu$ in this theorem. We can write it as
	\begin{align*}
		\left( \bar \phi_{q}\right)^2
		&\leq A_{1} \left(\bar \phi_{q-1}\right)^2 + A_{2} \left(\bar \phi_{q-2}\right)^2 + \cdots + A_{\nu} \left(\bar \phi_{q-\nu}\right)^2\\
		&\quad+ B_{0} \normsq{\nabla f(\rvx_q)} + B_{1} \normsq{\nabla f(\rvx_{q-1})}_{p} + \cdots + B_{\nu} \normsq{\nabla f(\rvx_{q-\nu})}_{p} + D.
	\end{align*}
	Then,
	\begin{align*}
		&\left( \bar \phi_{q}\right)^2
		\leq A_{1} \left(\bar \phi_{q-1}\right)^2 + A_{2} \left(\bar \phi_{q-2}\right)^2 + \cdots + A_{\nu} \left(\bar \phi_{q-\nu}\right)^2\\
		&\qquad\qquad+ B_{0} \normsq{\nabla f(\rvx_q)} + B_{1} \normsq{\nabla f(\rvx_{q-1})} + \cdots + B_{\nu} \normsq{\nabla f(\rvx_{q-\nu})} + D\\
		\implies
		&\sum_{q=\nu}^{Q-1}\left( \bar \phi_{q}\right)^2
		\leq A_{1} \sum_{q=\nu}^{Q-1}\left(\bar \phi_{q-1}\right)^2 + A_{2} \sum_{q=\nu}^{Q-1}\left(\bar \phi_{q-2}\right)^2 + \cdots + A_{\nu} \sum_{q=\nu}^{Q-1}\left(\bar \phi_{q-\nu}\right)^2\\
		&\qquad\qquad+ B_{0} \sum_{q=\nu}^{Q-1}\normsq{\nabla f(\rvx_q)} + B_{1} \sum_{q=\nu}^{Q-1}\normsq{\nabla f(\rvx_{q-1})} + \cdots + B_{\nu} \sum_{q=\nu}^{Q-1}\normsq{\nabla f(\rvx_{q-\nu})} + \sum_{q=\nu}^{Q-1} D\\
		\implies
		&\sum_{q=0}^{Q-1}\left( \bar \phi_{q}\right)^2
		\leq \sum_{i=0}^{\nu-1}\left( \bar \phi_{i}\right)^2 + A_{1} \sum_{q=0}^{Q-1}\left(\bar \phi_q\right)^2 + A_{2} \sum_{q=0}^{Q-1}\left(\bar \phi_{q}\right)^2 + \cdots + A_{\nu} \sum_{q=0}^{Q-1}\left(\bar \phi_{q}\right)^2\\
		&\qquad\qquad+ B_{0} \sum_{q=0}^{Q-1}\normsq{\nabla f(\rvx_q)} + B_{1} \sum_{q=0}^{Q-1}\normsq{\nabla f(\rvx_{q})} + \cdots + B_{\nu} \sum_{q=0}^{Q-1}\normsq{\nabla f(\rvx_{q})} + \sum_{q=0}^{Q-1} D\\
		\implies
		&\left( 1- \sum_{i=1}^{\nu} A_i \right)\frac{1}{Q}\sum_{q=0}^{Q-1}\left( \bar \phi_{q}\right)^2 \leq \frac{1}{Q}\sum_{i=0}^{\nu-1}\left( \bar \phi_{i}\right)^2 + \left( \sum_{i=0}^{\nu} B_i \right)\frac{1}{Q}\sum_{q=0}^{Q-1}\normsq{\nabla f(\rvx_{q})} + D.
	\end{align*}
	Then, we get
	\begin{align*}
		\frac{ f(\rvx_{Q}) - f(\rvx_0) }{\gamma NQ}  &\leq -\frac{255}{512} \frac{1}{Q}\sum_{q=0}^{Q-1}\normsq{\nabla f(\rvx_q)} + 2\gamma^2 L_{2,p}^2 \frac{1}{Q}\sum_{q=0}^{Q-1} \left(\bar \phi_{q}\right)^2\\
		&\leq -\frac{255}{512} \frac{1}{Q}\sum_{q=0}^{Q-1}\normsq{\nabla f(\rvx_q)} + \frac{2\gamma^2 L_{2,p}^2}{\left( 1- \sum_{i=1}^{\nu} A_i \right)} \left( \frac{1}{Q}\sum_{i=0}^{\nu-1}\left( \bar \phi_{i}\right)^2 + \left( \sum_{i=0}^{\nu} B_i \right)\frac{1}{Q}\sum_{q=0}^{Q-1}\normsq{\nabla f(\rvx_{q})} + D \right).
	\end{align*}
	To ensue that $\frac{255}{512} - \frac{2\gamma^2 L_{2,p}^2\sum_{i=0}^{\nu} B_i }{1-\sum_{i=1}^{\nu} A_i} >0$, considering that $\textcolor{blue}{\gamma L_{2,p}N\leq \frac{1}{32}}$, we can use a stricter condition $ \frac{255}{512} - \frac{\sum_{i=0}^{\nu} B_i }{512N^2\left(1-\sum_{i=1}^{\nu} A_i\right)} >0$. Thus, if $\frac{255}{512} - \frac{\sum_{i=0}^{\nu} B_i }{512N^2\left(1-\sum_{i=1}^{\nu} A_i\right)} >0$,
	\begin{align*}
		\frac{1}{Q}\sum_{q=0}^{Q-1}\normsq{\nabla f(\rvx_q)} \leq c_1 \cdot \frac{ f(\rvx_{0}) - f(\rvx_Q) }{\gamma NQ} + c_2 \cdot \gamma^2L_{2,p}^2  \frac{1}{Q}\sum_{i=0}^{\nu-1}\left( \bar \phi_{i}\right)^2 + c_2\cdot\gamma^2L_{2,p}^2 D,
	\end{align*}
	where $c_1$ and $c_2$ are numerical constants such that $c_1 \geq \nicefrac{1}{\left( \frac{255}{512} - \frac{\sum_{i=0}^{\nu} B_i }{512N^2\left(1-\sum_{i=1}^{\nu} A_i\right)}\right)}$ and $c_2 \geq \left(\frac{2}{1-\sum_{i=1}^{\nu} A_i}\right)\cdot c_1$. Let $F_0 = f(\rvx_0) -f_\ast$.
	\begin{align*}
		\min_{q\in \{0,1,\ldots,Q-1\}} \normsq{\nabla f(\rvx_q)} &\leq \frac{1}{Q}\sum_{q=0}^{Q-1}\normsq{\nabla f(\rvx_q)} \\
		&\leq c_1 \cdot \frac{ F_0 }{\gamma NQ} + c_2 \cdot \gamma^2L_{2,p}^2  \frac{1}{Q}\sum_{i=0}^{\nu-1}\left( \bar \phi_{i}\right)^2 + c_2\cdot\gamma^2L_{2,p}^2 D,
	\end{align*}
	where the last inequality is due to $f(\rvx_{0}) - f(\rvx_Q) \leq f(\rvx_0) -f_\ast = F_0 $.
	
	At last, we summarize the constraints on the step sizes $\gamma$ and $\eta$ (they are marked in blue),
	\begin{align*}
		\gamma L N \leq 1,\\
		\gamma L_{2,p} N \leq \frac{1}{32},\\
		\gamma L_p N\leq \frac{1}{32},
	\end{align*}
	where the last one is from Lemma~\ref{lem:parameter drift}. For simplicity, we use a tighter constraint $\gamma \leq \min \left\{ \frac{1}{ L N}, \frac{1}{32L_{2,p} N}, \frac{1}{32L_p N} \right\}$.
\end{proof}

\section{Special Cases in SGD}
\label{apx-sec:SGD:special cases}

In this section, we provide proofs of the examples of SGD in Section~\ref{sec:cases}.

\subsection{Arbitrary Permutation (AP)}

\begin{proof}[Proof of Example~\ref{ex:arbitrary permutation}]
	For any $q$, it follows that
	\begin{align*}
		\left(\bar\phi_{q}\right)^2 &= \max_{n \in [N]} \norm{\sum_{i=0}^{n-1} \left( \nabla f_{\pi_{q}(i)}(\rvx_{q}) - \nabla f(\rvx_{q})  \right) }^2\\
		&\leq \max_{n \in [N]} \left\{ n\sum_{i=0}^{n-1} \normsq{\nabla f_{\pi_{q}(i)}(\rvx_{q}) - \nabla f(\rvx_{q})} \right\} \\
		&\leq \max_{n \in [N]} \left\{ n^2 \varsigma^2 \right\} = N^2\varsigma^2.
	\end{align*}
	
	In this example, for Assumption~\ref{asm:order error}, $p=2$, $A_1 =A_2=\cdots =A_q =0$, $B_0 = B_1=\cdots =B_q = 0$ and $D = N^2\varsigma^2$. Then, we verify that
	\begin{align*}
		\frac{255}{512} - \frac{1}{512N^2}\cdot \frac{\sum_{i=0}^{\nu} B_i }{1-\sum_{i=1}^{\nu} A_i} = \frac{255}{512} >0\,.
	\end{align*}
	Thus, we can set $c_1 = 3$ and $c_2 = 6$ for Theorem~\ref{thm:SGD}. In addition, for Theorem~\ref{thm:SGD}, $\nu=0$. These lead to the upper bound,
	\begin{align*}
		\min_{q\in \{0,1\ldots, Q-1\}}\normsq{ \nabla f(\rvx_q) } 
		&= \gO\left(  \frac{ F_0 }{\gamma NQ} + \gamma^2 L^2N^2\varsigma^2 \right),
	\end{align*}
	where $F_0 = f(\rvx_0) - f_\ast$ and $L = L_{2,p} = L_{p}$ when $p=2$.
	
	Next, we summarize the constraints on the step size:
	\begin{align*}
	\gamma \leq \min \left\{ \frac{1}{L N}, \frac{1}{32L_{2,p} N}, \frac{1}{32L_p N} \right\} = \frac{1}{32 L N}\,.
	\end{align*}
	It is from Theorem~\ref{thm:SGD}. After we use the effective step size $\tilde \gamma \coloneqq \gamma N$, the constraint becomes
	\begin{align*}
		\tilde \gamma \leq \min \left\{ \frac{1}{L }, \frac{1}{32L_{2,p} }, \frac{1}{32L_p } \right\} = \frac{1}{32 L }\,,
	\end{align*}
	and the upper bound becomes
	\begin{align*}
		\min_{q\in \{0,1\ldots, Q-1\}}\normsq{ \nabla f(\rvx_q) } 
		&= \gO\left(  \frac{ F_0 }{\tilde\gamma Q} +\tilde\gamma^2 L^2\varsigma^2\right).
	\end{align*}
	Applying Lemma~\ref{lem:step-size}, we get
	\begin{align*}
		\min_{q\in \{0,1\ldots, Q-1\}}\normsq{ \nabla f(\rvx_q) } =
		\gO\left( \frac{L F_0}{Q} + \left(\frac{LF_0 N \varsigma }{ NQ}\right)^{\frac{2}{3}}  \right).
	\end{align*}
\end{proof}

\subsection{Random Reshuffling (RR)}

\begin{proof}[Proof of Example~\ref{ex:random reshuffling}]
	Since the permutations $\{\pi_q\}$ are independent across different epochs, for any $q$, when conditional on $\rvx_q$, we get that, with probability at least $1-\delta$,
	\begin{align*}
		\left(\bar\phi_{q}\right)^2 &= \max_{n \in [N]} \norm{\sum_{i=0}^{n-1} \left( \nabla f_{\pi_{q}(i)}(\rvx_{q}) - \nabla f(\rvx_{q})  \right) }^2 \leq 4N \varsigma^2 \log^2 \left( \frac{8}{\delta}\right),
	\end{align*}
	where the last inequality is due to \citet{yu2023high}'s Proposition~2.3.
	
	In this case, for Assumption~\ref{asm:order error}, $p=2$, $A_1=A_2 = \cdots = A_q =0$, $B_0 = B_1 = \cdots = B_q =0$ and $D = 4N \varsigma^2 \log^2 \left( \frac{8}{\delta}\right)$. Then, we verify that
	\begin{align*}
		\frac{255}{512} - \frac{1}{512N^2}\cdot \frac{\sum_{i=0}^{\nu} B_i }{1-\sum_{i=1}^{\nu} A_i} = \frac{255}{512} >0\,.
	\end{align*}
	Thus, we can set $c_1 = 3$ and $c_2 = 6$ for Theorem~\ref{thm:SGD}. In addition, for Theorem~\ref{thm:SGD}, $\nu=0$. These lead to the upper bound,
	\begin{align*}
		\min_{q\in \{0,1\ldots, Q-1\}}\normsq{ \nabla f(\rvx_q) } 
		&= \tilde \gO\left(  \frac{ F_0 }{\gamma NQ} + \gamma^2 L^2N\varsigma^2 \right),
	\end{align*}
	where $F_0 = f(\rvx_0) - f_\ast$ and $L = L_{2,p} = L_{p}$ when $p=2$.
	
	Next, we summarize the constraints on the step size:
	\begin{align*}
		\gamma \leq \min \left\{ \frac{1}{L N}, \frac{1}{32L_{2,p} N}, \frac{1}{32L_p N} \right\} = \frac{1}{32 L N}.
	\end{align*}
	It is from Theorem~\ref{thm:SGD}. After we use the effective step size $\tilde \gamma \coloneqq \gamma N$, the constrain becomes
	\begin{align*}
		\tilde \gamma \leq \min \left\{ \frac{1}{L }, \frac{1}{32L_{2,p} }, \frac{1}{32L_p } \right\} = \frac{1}{32 L },
	\end{align*}
	and the upper bound becomes
	\begin{align*}
		\min_{q\in \{0,1\ldots, Q-1\}}\normsq{ \nabla f(\rvx_q) } 
		&= \tilde\gO\left(  \frac{ F_0 }{\tilde\gamma Q} +\tilde\gamma^2 L^2\frac{1}{N}\varsigma^2\right).
	\end{align*}
	Applying Lemma~\ref{lem:step-size}, we get
	\begin{align*}
		\min_{q\in \{0,1\ldots, Q-1\}}\normsq{ \nabla f(\rvx_q) } =
		\tilde\gO\left( \frac{L F_0}{Q} + \left(\frac{LF_0 \sqrt{N} \varsigma }{ NQ}\right)^{\frac{2}{3}}  \right).
	\end{align*}	
\end{proof}

\subsection{One Permutation (OP)}

\begin{proof}[Proof of Example~\ref{ex:one permutation}]
	For any $q\geq 1$ and $n \in [N]$, we have
	\begin{align*}
		\phi_{q}^n &= \norm{ \sum_{i=0}^{n-1} \left( \nabla f_{\pi_{q}(i)}(\rvx_{q}) - \nabla f(\rvx_{q})  \right) }  \\
		&= \norm{ \sum_{i=0}^{n-1} \left( \nabla f_{\pi_{q}(i)}(\rvx_{q}) - \nabla f(\rvx_{q})  \right) - \left( \nabla f_{\pi_{q}(i)}(\rvx_{0}) - \nabla f(\rvx_{0})  \right) + \left( \nabla f_{\pi_{q}(i)}(\rvx_{0}) - \nabla f(\rvx_{0})  \right) }\\
		&\leq \norm{ \sum_{i=0}^{n-1} \left( \nabla f_{\pi_{q}(i)}(\rvx_{q}) - \nabla f_{\pi_{q}(i)}(\rvx_{0})  \right)  } + \norm{ \sum_{i=0}^{n-1} \left( \nabla f(\rvx_{q}) - \nabla f(\rvx_{0}) \right) } + \norm{\sum_{i=0}^{n-1} \left( \nabla f_{\pi_{q}(i)}(\rvx_{0}) - \nabla f(\rvx_{0})  \right) }\\
		&\leq \sum_{i=0}^{n-1}\norm{ \nabla f_{\pi_{q}(i)}(\rvx_{q}) - \nabla f_{\pi_{q}(i)}(\rvx_{0}) } + \sum_{i=0}^{n-1}\norm{ \nabla f(\rvx_{q}) - \nabla f(\rvx_{0}) } + \norm{\sum_{i=0}^{n-1} \left( \nabla f_{\pi_{q}(i)}(\rvx_{0}) - \nabla f(\rvx_{0})  \right) }\\
		&\leq 2Ln\norm{\rvx_{q} - \rvx_{0}} + \norm{\sum_{i=0}^{n-1} \left( \nabla f_{\pi_{q}(i)}(\rvx_{0}) - \nabla f(\rvx_{0})  \right) }\\
		&\leq 2LN\theta + \bar \phi_0\,,
	\end{align*}
	where we use the fact that the permutations are exactly the same, $\pi_q = \pi_0$ for $q\geq 1$ in OP. Since the preceding inequality holds for all $n\in [N]$, we have
	\begin{align*}
		\bar\phi_{q} \leq 2LN \theta + \bar\phi_0
		\implies & \left( \bar\phi_{q}\right)^2 \leq 2 \cdot \left(2LN \theta\right)^2 + 2\cdot \left( \bar\phi_0 \right)^2 = 8 L^2 N^2 \theta^2 + 2 \left( \bar\phi_0 \right)^2
	\end{align*}
	
	In this case, for Assumption~\ref{asm:order error}, $p=2$, $A_1=A_2 = \cdots = A_q =0$, $B_0 = B_1 = \cdots = B_q =0$ and $D = 8 L^2 N^2 \theta^2 + 2 \left( \bar\phi_0 \right)^2$. Then, we verify that
	\begin{align*}
		\frac{255}{512} - \frac{1}{512N^2}\cdot \frac{\sum_{i=0}^{\nu} B_i }{1-\sum_{i=1}^{\nu} A_i} = \frac{255}{512} >0\,.
	\end{align*}
	Thus, we can set $c_1 = 3$ and $c_2 = 6$ for Theorem~\ref{thm:SGD}. In addition, for Theorem~\ref{thm:SGD}, $\nu=0$. These lead to the upper bound,
	\begin{align*}
		\min_{q\in \{0,1\ldots, Q-1\}}\normsq{ \nabla f(\rvx_q) } = \gO \left(  \frac{F_0}{\gamma NQ} + \gamma^2 L^2 \left( \bar\phi_0 \right)^2 + \gamma^2 L^4 N^2\theta^2 \right),
	\end{align*}
	where $F_0 = f(\rvx_0) - f_\ast$ and $L = L_{2,p} = L_{p}$ when $p=2$.
	
	Next, we summarize the constraints on the step size:
	\begin{align*}
		\gamma \leq \min \left\{ \frac{1}{L N}, \frac{1}{32L_{2,p} N}, \frac{1}{32L_p N} \right\} = \frac{1}{32 L N}\,.
	\end{align*}
	It is from Theorem~\ref{thm:SGD}. After we use the effective step size $\tilde \gamma \coloneqq \gamma N$, the constraint becomes
	\begin{align*}
		\tilde \gamma \leq \min \left\{ \frac{1}{L }, \frac{1}{32L_{2,p} }, \frac{1}{32L_p } \right\} = \frac{1}{32 L },
	\end{align*}
	and the upper bound becomes
	\begin{align*}
		\min_{q\in \{0,1\ldots, Q-1\}}\normsq{ \nabla f(\rvx_q) } 
		&= \tilde\gO\left(  \frac{ F_0 }{\tilde\gamma Q} +\tilde\gamma^2 L^2\frac{1}{N^2}\left( \bar\phi_0\right)^2 + \tilde\gamma^2 L^4 \theta^2\right).
	\end{align*}
	Applying Lemma~\ref{lem:step-size}, we get
	\begin{align*}
		\min_{q\in \{0,1\ldots, Q-1\}}\normsq{ \nabla f(\rvx_q) } =
		\gO\left( \frac{L F_0}{Q} + \left(\frac{LF_0 \bar\phi_0+L^2F_0 N\theta }{ NQ}\right)^{\frac{2}{3}}  \right).
	\end{align*}
	Furthermore, if $\theta \lesssim \frac{\bar\phi_0}{LN}$, then
	\begin{align*}
		\min_{q\in \{0,1\ldots, Q-1\}}\normsq{ \nabla f(\rvx_q) } =
		\gO\left( \frac{L F_0}{Q} + \left(\frac{LF_0 \bar\phi_0 }{ NQ}\right)^{\frac{2}{3}}  \right).
	\end{align*}
	
	Next, let us deal with $\bar\phi_0$, depending on the initial permutation.
	\begin{itemize}
		\item If the initial permutation $\pi_0$ is generated arbitrarily, we get
		\begin{align*}
			\left(\bar\phi_{0}\right)^2 = \max_{n \in [N]} \norm{\sum_{i=0}^{n-1} \left( \nabla f_{\pi_{0}(i)}(\rvx_{0}) - \nabla f(\rvx_{0})  \right) }^2 \leq \max_{n \in [N]} \left( n^2 \varsigma^2 \right) = N^2\varsigma^2.
		\end{align*}
		Then,
		\begin{align*}
			\min_{q\in \{0,1\ldots, Q-1\}}\normsq{ \nabla f(\rvx_q) } =
			\gO\left( \frac{L F_0}{Q} + \left(\frac{LF_0 N\varsigma }{ NQ}\right)^{\frac{2}{3}}  \right).
		\end{align*}
		\item Shuffle Once (SO). If the initial permutation $\pi_0$ is generated randomly, we get that, with probability at least $1-\delta$,
		\begin{align*}
			\left(\bar\phi_{0}\right)^2 = \max_{n \in [N]} \norm{\sum_{i=0}^{n-1} \left( \nabla f_{\pi_{0}(i)}(\rvx_{0}) - \nabla f(\rvx_{0})  \right) }^2 \leq 4N \varsigma^2 \log^2 \left( \frac{8}{\delta}\right).
		\end{align*}
		Then,
		\begin{align*}
			\min_{q\in \{0,1\ldots, Q-1\}}\normsq{ \nabla f(\rvx_q) } =
			\tilde\gO\left( \frac{L F_0}{Q} + \left(\frac{LF_0 \sqrt{N}\varsigma }{ NQ}\right)^{\frac{2}{3}}  \right).
		\end{align*}
		It holds with probability at least $1-\delta$, because \citet{yu2023high}'s Proposition~2.3 is only used for the initial epoch.
		\item Nice Permutation (NP). If the initial permutation $\pi_0$ is a nice permutation such that $\bar\phi_0 = \tilde \gO\left( \varsigma \right)$,
		\begin{align*}
			\min_{q\in \{0,1\ldots, Q-1\}}\normsq{ \nabla f(\rvx_q) } =
			\tilde\gO\left( \frac{L F_0}{Q} + \left(\frac{LF_0 \varsigma }{ NQ}\right)^{\frac{2}{3}}  \right).
		\end{align*}
		In fact, we can generate such a nice permutation by GraBs \citep[Section 6. Ablation Study: are good permutations fixed?]{lu2022grab}.
	\end{itemize}
\end{proof}

\subsection{GraB-proto}

GraB-proto: Use \texttt{BasicBR} (Algorithm~\ref{alg:BasicBR}) as the \texttt{Permute} function in Algorithm~\ref{alg:SGD}, with the inputs of $\pi_q$, $\{\nabla f_{\pi_q(n)}(\rvx_q)\}_{n=0}^{N-1}$ and $\nabla f(\rvx_q)$, for each epoch $q$.

Thus, the key idea of our proof is as follows:
\begin{align*}
	\bar \phi_{q+1} \rightarrow \max_{n \in [N]} \norm{\sum_{i=0}^{n-1} \left( \nabla f_{\pi_{q+1}(i)}(\rvx_{q}) - \nabla f(\rvx_{q})  \right) }_{\infty}  \overset{\text{Lemma~\ref{lem:basic-balancing-reordering}}}{\rightarrow}  \bar \phi_{q}\,.
\end{align*}

\begin{proof}[Proof of Example~\ref{ex:GraB-proto}]
	We need to find the relation between $\bar \phi_{q}$ and $\bar \phi_{q-1}$ for $q\geq 1$. For any $n \in [N]$,
	\begin{align*}
		\phi_{q+1}^n &= \norm{ \sum_{i=0}^{n-1} \left( \nabla f_{\pi_{q+1}(i)}(\rvx_{q+1}) - \nabla f(\rvx_{q+1})  \right) }_{\infty}  \\
		&= \norm{ \sum_{i=0}^{n-1} \left( \nabla f_{\pi_{q+1}(i)}(\rvx_{q+1}) - \nabla f(\rvx_{q+1})  \right) - \left( \nabla f_{\pi_{q+1}(i)}(\rvx_{q}) - \nabla f(\rvx_{q})  \right) + \left( \nabla f_{\pi_{q+1}(i)}(\rvx_{q}) - \nabla f(\rvx_{q})  \right) }_{\infty}\\
		&\leq \norm{ \sum_{i=0}^{n-1} \left( \nabla f_{\pi_{q+1}(i)}(\rvx_{q+1}) - \nabla f_{\pi_{q+1}(i)}(\rvx_{q})  \right)  }_{\infty} + \norm{ \sum_{i=0}^{n-1} \left( \nabla f(\rvx_{q+1}) - \nabla f(\rvx_{q}) \right) }_{\infty} + \norm{\sum_{i=0}^{n-1} \left( \nabla f_{\pi_{q+1}(i)}(\rvx_{q}) - \nabla f(\rvx_{q})  \right) }_{\infty}\\
		&\leq \sum_{i=0}^{n-1}\norm{ \nabla f_{\pi_{q+1}(i)}(\rvx_{q+1}) - \nabla f_{\pi_{q+1}(i)}(\rvx_{q}) }_{\infty} + \sum_{i=0}^{n-1}\norm{ \nabla f(\rvx_{q+1}) - \nabla f(\rvx_{q}) }_{\infty} + \norm{\sum_{i=0}^{n-1} \left( \nabla f_{\pi_{q+1}(i)}(\rvx_{q}) - \nabla f(\rvx_{q})  \right) }_{\infty}\\
		&\leq 2L_{\infty}n\norm{\rvx_{q+1} - \rvx_{q}}_{\infty} + \norm{\sum_{i=0}^{n-1} \left( \nabla f_{\pi_{q+1}(i)}(\rvx_{q}) - \nabla f(\rvx_{q})  \right) }_{\infty}.
	\end{align*}
	Since the above inequality holds for all $n \in [N]$, we have
	\begin{align*}
		\bar \phi_{q+1} \leq 2L_{\infty}N\norm{\rvx_{q+1} - \rvx_{q}}_{\infty} + \max_{n\in[N]} \norm{\sum_{i=0}^{n-1} \left( \nabla f_{\pi_{q+1}(i)}(\rvx_{q}) - \nabla f(\rvx_{q})  \right) }_{\infty}
	\end{align*}
	
	Note that $\nabla f_{\pi_{q}(i)}(\rvx_q) - \nabla f(\rvx_q)$ and $\nabla f_{\pi_{q+1}(i)}(\rvx_q) - \nabla f(\rvx_q)$ correspond to $\rvz_{\pi(i)}$ and $\rvz_{\pi' (i)}$ in Lemma~\ref{lem:basic-balancing-reordering}, respectively. In GraB-proto, since
	\begin{align*}
		&\norm{\nabla f_i (\rvx_q) - f(\rvx_q)} \leq \varsigma, \quad\forall i\in \{0,1,\ldots, N-1\},\\
		&\norm{ \sum_{i=0}^{N-1} \left( \nabla f_i (\rvx_q) - f(\rvx_q)\right) }_{\infty} = 0,
	\end{align*}
	we apply Lemma~\ref{lem:basic-balancing-reordering} with $a = \varsigma$ and $b = 0$, and get
	\begin{align*}
		\max_{n \in [N]} \norm{\sum_{i=0}^{N-1} \left( \nabla f_{\pi_{q+1}(i)}(\rvx_{q}) - \nabla f(\rvx_{q})  \right) }_{\infty} &\leq \frac{1}{2} \max_{n \in [N]} \norm{\sum_{i=0}^{n-1} \left( \nabla f_{\pi_{q}(i)}(\rvx_{q}) - \nabla f(\rvx_{q})  \right) }_{\infty} + \frac{1}{2} C \varsigma\\
		&= \frac{1}{2} \bar \phi_q + \frac{1}{2}C\varsigma\,.
	\end{align*}
	
	Using Lemma~\ref{lem:parameter drift} that $\Delta_q\leq \frac{32}{31}\gamma \bar \phi_q + \frac{32}{31}\gamma N \norm{\nabla f(\rvx_q)}_{\infty}$, we get
	\begin{align*}
		\bar \phi_{q+1} &\leq 2L_{\infty}N\norm{\rvx_{q+1} - \rvx_{q}}_{\infty} + \max_{n\in[N]} \norm{\sum_{i=0}^{n-1} \left( \nabla f_{\pi_{q+1}(i)}(\rvx_{q}) - \nabla f(\rvx_{q})  \right) }_{\infty}\\
		&\leq 2L_{\infty}N \left(\frac{32}{31}\gamma \bar \phi_q + \frac{32}{31}\gamma N \norm{\nabla f(\rvx_q)}_{\infty} \right) + \left( \frac{1}{2} \bar \phi_q + \frac{1}{2}C\varsigma\right)\\
		&\leq \frac{35}{62} \bar\phi_q + \frac{2}{31} N \norm{\nabla f(\rvx_q)}_{\infty} + \frac{1}{2}C\varsigma\,.
	\end{align*}
	where the last inequality is due to $\textcolor{blue}{\gamma L_{\infty} N \leq \frac{1}{32}}$. Next, using $\norm{\rvx}_{p} \leq \norm{\rvx}_{2}$ for $p\geq 2$, we get
	\begin{align*}
		\left(\bar \phi_{q+1}\right)^2
		&\leq \left( \frac{35}{62}\bar \phi_q + \frac{2}{31}N \norm{\nabla f(\rvx_q)}_{} + \frac{1}{2}C\varsigma \right)^2 \\
		&\leq 2\cdot \left( \frac{35}{62} \bar \phi_q \right)^2 + 4 \cdot \left( \frac{2}{31}N \norm{\nabla f(\rvx_q)}_{} \right)^2+ 4 \cdot \left( \frac{1}{2}C\varsigma \right)^2\\
		&\leq \frac{3}{4}\left( \bar \phi_q \right)^2 + \frac{1}{50} N^2 \normsq{\nabla f(\rvx_q)}_{} + C^2 \varsigma^2\,.
	\end{align*}
	So the relation between $\bar \phi_q$ and $\bar \phi_{q-1}$ is 
	\begin{align*}
		\left(\bar \phi_{q}\right)^2 \leq \frac{3}{4}\left( \bar \phi_{q-1} \right)^2 + \frac{1}{50} N^2 \normsq{\nabla f(\rvx_{q-1})}_{} + C^2 \varsigma^2\,.
	\end{align*} for $q\geq 1$. Besides, we need to get the bound of $\left(\bar\phi_{0}\right)^2$:
	\begin{align*}
		\left(\bar\phi_{0}\right)^2 = \max_{n \in [N]} \norm{\sum_{i=0}^{n-1} \left( \nabla f_{\pi_{0}(i)}(\rvx_{0}) - \nabla f(\rvx_{0})  \right) }^2 \leq N^2 \varsigma^2\,.
	\end{align*}
	
	In this case, for Assumption~\ref{asm:order error}, $p=\infty$, $A_1=\frac{3}{4}$, $A_2 =\cdots= A_q = 0$, $B_0 = 0$,$B_1 =\frac{1}{50}N^2$, $B_2=\cdots=B_q=0$ and $D = C^2 \varsigma^2$. Then, we verify that
	\begin{align*}
		\frac{255}{512} - \frac{1}{512N^2}\cdot \frac{\sum_{i=0}^{\nu} B_i }{1-\sum_{i=1}^{\nu} A_i} \geq \frac{254}{512} >0\,.
	\end{align*}
	Thus, we can set $c_1 = 3$ and $c_2 = 24$ for Theorem~\ref{thm:SGD}. In addition, for Theorem~\ref{thm:SGD}, $\nu=1$ and $\left(\bar\phi_{0}\right)^2 \leq N^2 \varsigma^2$. These lead to the upper bound,
	\begin{align*}
		\min_{q\in \{0,1\ldots, Q-1\}}\normsq{ \nabla f(\rvx_q) } 
		&= \gO\left(  \frac{ F_0 }{\gamma NQ} + \gamma^2 L_{2,\infty}^2 N^2\frac{1}{Q} \varsigma^2 + \gamma^2 L_{2,\infty}^2 C^2\varsigma^2 \right).
	\end{align*}
	where $F_0 = f(\rvx_0) - f_\ast$. Since Lemma~\ref{lem:basic-balancing-reordering} is used for each epoch (that is, for $Q$ times), so by the union bound, the preceding bound holds with probability at least $1-Q\delta$.
	
	Next, we summarize the constraints on the step size:
	\begin{align*}
		\gamma &\leq \min \left\{ \frac{1}{LN}, \frac{1}{32L_{2,p} N}, \frac{1}{32L_{p} N} \right\}, \\
		\gamma  &\leq \frac{1}{32 L_{\infty} N}.
	\end{align*}
	The first one is from Theorem~\ref{thm:FL} and the other is from the derivation of the relation. For simplicity, we can use a tighter constraint
	\begin{align*}
		\gamma &\leq \min \left\{ \frac{1}{L N}, \frac{1}{32L_{2,\infty} N}, \frac{1}{32L_\infty N} \right\}.
	\end{align*}
	After we use the effective step size $\tilde \gamma \coloneqq \gamma N $, the constraint will be
	\begin{align*}
		\tilde \gamma \leq \min \left\{ \frac{1}{L}, \frac{1}{32L_{2,\infty}}, \frac{1}{32L_\infty} \right\},
	\end{align*}
	and the upper bound will be
	\begin{align*}
		\min_{q\in \{0,1\ldots, Q-1\}}\normsq{ \nabla f(\rvx_q) } 
		&= \gO\left(  \frac{ F_0 }{\tilde\gamma Q} + \tilde\gamma^2 L_{2,\infty}^2 \frac{1}{Q} \varsigma^2 + \tilde\gamma^2 L_{2,\infty}^2 \frac{1}{N^2}C^2\varsigma^2 \right).
	\end{align*}
	Applying Lemma~\ref{lem:step-size}, we get
	\begin{align*}
		\min_{q\in \{0,1\ldots, Q-1\}}\normsq{ \nabla f(\rvx_q) } &=
		\gO\left( \frac{\left( L +L_{2,\infty}+ L_{\infty}\right)  F_0}{Q}  + \frac{ (L_{2,\infty}F_0 \varsigma)^{\frac{2}{3}} }{Q}  + \left(\frac{L_{2,\infty}F_0 C \varsigma }{ NQ}\right)^{\frac{2}{3}}  \right).
	\end{align*}
\end{proof}

\subsection{PairGraB-proto}
\label{apx-subsec:PairGraB-proto}

PairGraB-proto. Use \texttt{PairBR} (Algorithm~\ref{alg:PairBR}) as the \texttt{Permute} function in Algorithm~\ref{alg:SGD}, with the inputs of $\pi_q$, $\{\nabla f_{\pi_q(n)}(\rvx_q)\}_{n=0}^{N-1}$ and $\nabla f(\rvx_q)$, for each epoch $q$.

Thus, the key idea of our proof is as follows:
\begin{align*}
	\bar \phi_{q+1} \rightarrow \max_{n \in [N]} \norm{\sum_{i=0}^{n-1} \left( \nabla f_{\pi_{q+1}(i)}(\rvx_{q}) - \nabla f(\rvx_{q})  \right) }_{\infty}  \overset{\text{Lemma~\ref{lem:pair-balancing-reordering}}}{\rightarrow}  \bar \phi_{q}\,.
\end{align*}

\begin{example}[PairGraB-proto]
	\label{ex:PairGraB-proto}
	Let $\{f_n\}$ be $L_{\infty}$-smooth and Assumption~\ref{asm:local deviation} hold. Assume that $N\mod 2=0$. Then, if $\gamma \leq \frac{1}{32L_{\infty}N}$, Assumption~\ref{asm:order error} holds with probability at least $1-\delta$:
	\begin{align*}
		\left(\bar \phi_{q}\right)^2 \leq \frac{3}{4}\left( \bar \phi_{q-1} \right)^2 + \frac{1}{50} N^2 \normsq{\nabla f(\rvx_{q-1})}_{} + \textcolor{blue}{4C^2 \varsigma^2}\,,
	\end{align*}
	where $C = \gO \left(\log\left(\frac{d N}{\delta}\right) \right) = \tilde\gO\left( 1\right)$. Applying Theorem~\ref{thm:SGD}, we get that, with probability at least $1-Q\delta$,
	\begin{align*}
		\min_{q\in \{0,1\ldots, Q-1\}} \normsq{ \nabla f(\rvx_q) } 
		&= \gO\left(  \frac{F_0}{\gamma NQ} + \gamma^2 \frac{1}{Q} L_{2,\infty}^2 N^2 \varsigma^2 + \gamma^2 L_{2,\infty}^2C^2\varsigma^2 \right).
	\end{align*}
	After the step size is tuned, the upper bound becomes $\gO\left( \frac{\tilde LF_0 + \left(L_{2,\infty}F_0\varsigma\right)^{\frac{2}{3}} }{Q} + \left(\frac{L_{2,\infty}F_0 C \varsigma}{NQ}\right)^{\frac{2}{3}} \right)$, where $\tilde L = L+L_{2,\infty}+L_{\infty}$.
\end{example}

\begin{proof}[Proof of Example~\ref{ex:PairGraB-proto}]
	The proof of Example~\ref{ex:PairGraB-proto} is almost identical to that of Example~\ref{ex:GraB-proto}, except that Lemma~\ref{lem:basic-balancing-reordering} is replaced by Lemma~\ref{lem:pair-balancing-reordering}. This difference only causes that some numerical constants are changed accordingly.
\end{proof}

\subsection{GraB}
\label{apx-subsec:GraB}

GraB. Use \texttt{BasicBR} (Algorithm~\ref{alg:BasicBR}) as the \texttt{Permute} function in Algorithm~\ref{alg:SGD}, with the inputs of $\pi_q$, $\{ \nabla f_{\pi_q(n)} (\rvx_q^n) \}_{n=0}^{N-1}$ and $\frac{1}{N}\sum_{n=0}^{N-1}\nabla f_{\pi_{q-1}(n)} (\rvx_{q-1}^n)$, for each epoch $q$.

Thus, the key idea of our proof is as follows:
\begin{align*}
	\bar \phi_{q+1} \rightarrow \max_{n \in [N]} \norm{\sum_{i=0}^{n-1} \left( \nabla f_{\pi_{q+1}(i)}\left(\rvx_{q}^{ \pi_q^{-1} (\pi_{q+1}(i)) }\right) - \frac{1}{N}\sum_{l=0}^{N-1}\nabla f_{\pi_{q-1}(l)}(\rvx_{q-1}^l)  \right) }_{\infty}   \\
	\overset{\text{Lemma~\ref{lem:basic-balancing-reordering}}}{\rightarrow} \max_{n\in[N]} \norm{\sum_{i=0}^{n-1} \left( \nabla f_{\pi_{q}(i)}\left(\rvx_{q}^{i}\right) - \frac{1}{N}\sum_{l=0}^{N-1}\nabla f_{\pi_{q-1}(l)}(\rvx_{q-1}^l)  \right) }_{\infty} \rightarrow
	 \bar \phi_{q}\,.
\end{align*}

\begin{proof}[Proof of Example~\ref{ex:GraB}]
	We need to find the relation between $\bar \phi_{q+1}$ and $\bar \phi_q$.
	\begin{align}
		\phi_{q+1}^n &= \norm{ \sum_{i=0}^{n-1} \left( \nabla f_{\pi_{q+1}(i)}(\rvx_{q+1}) - \nabla f(\rvx_{q+1})  \right) }_{\infty} \nonumber \\
		&= \norm{ \sum_{i=0}^{n-1} \left( \left( \nabla f_{\pi_{q+1}(i)}(\rvx_{q+1}) - \nabla f(\rvx_{q+1})  \right) \pm \left( \nabla f_{\pi_{q+1}(i)}\left(\rvx_{q}^{ \pi_q^{-1} (\pi_{q+1}(i)) }\right) - \frac{1}{N}\sum_{l=0}^{N-1}\nabla f_{\pi_{q-1}(l)}(\rvx_{q-1}^l)  \right)\right) }_{\infty}\nonumber\\
		&\leq \norm{ \sum_{i=0}^{n-1} \left( \nabla f_{\pi_{q+1}(i)}(\rvx_{q+1}) - \nabla f_{\pi_{q+1}(i)}\left(\rvx_{q}^{ \pi_q^{-1} (\pi_{q+1}(i)) }\right)  \right)  }_{\infty} \nonumber\\
		&\quad+ \norm{ \sum_{i=0}^{n-1} \left( \frac{1}{N}\sum_{l=0}^{N-1}\nabla f_{\pi_{q+1}(l)}(\rvx_{q+1}) - \frac{1}{N}\sum_{l=0}^{N-1}\nabla f_{\pi_{q-1}(l)}(\rvx_{q-1}^l) \right) }_{\infty} \nonumber\\
		&\quad+ \norm{\sum_{i=0}^{n-1} \left( \nabla f_{\pi_{q+1}(i)}\left(\rvx_{q}^{ \pi_q^{-1} (\pi_{q+1}(i)) }\right) - \frac{1}{N}\sum_{l=0}^{N-1}\nabla f_{\pi_{q-1}(l)}(\rvx_{q-1}^l)  \right)   }_{\infty}.\label{eq:pf-lem-GraB:relation}
	\end{align}
	Then,
	\begin{align*}
		\Term{1}{\eqref{eq:pf-lem-GraB:relation}}&= \norm{ \sum_{i=0}^{n-1} \left( \nabla f_{\pi_{q+1}(i)}(\rvx_{q+1}) - \nabla f_{\pi_{q+1}(i)}\left(\rvx_{q}^{ \pi_q^{-1} (\pi_{q+1}(i)) }\right)  \right)  }_{\infty}\\
		&\leq \sum_{i=0}^{n-1}\norm{  \nabla f_{\pi_{q+1}(i)}(\rvx_{q+1}) - \nabla f_{\pi_{q+1}(i)}\left(\rvx_{q}^{ \pi_q^{-1} (\pi_{q+1}(i)) }\right) }_{\infty}\\
		&\leq L_{\infty}\sum_{i=0}^{n-1}\norm{  \rvx_{q+1} - \rvx_{q}^{ \pi_q^{-1} (\pi_{q+1}(i)) } }_{\infty}\\
		&\leq L_{\infty}\sum_{i=0}^{n-1}\left( \norm{\rvx_{q+1}- \rvx_q}_{\infty} + \norm{\rvx_q - \rvx_{q}^{ \pi_q^{-1} (\pi_{q+1}(i)) }}_{\infty}\right)\\
		&\leq 2L_{\infty} N \Delta_q\,,
	\end{align*}
	\begin{align*}
		\Term{2}{\eqref{eq:pf-lem-GraB:relation}}&= \norm{ \sum_{i=0}^{n-1} \left( \frac{1}{N}\sum_{l=0}^{N-1}\nabla f_{\pi_{q+1}(l)}(\rvx_{q+1}) - \frac{1}{N}\sum_{l=0}^{N-1}\nabla f_{\pi_{q-1}(l)}\left(\rvx_{q-1}^l\right) \right) }_{\infty}\\
		&= \norm{ \sum_{i=0}^{n-1} \left( \frac{1}{N}\sum_{l=0}^{N-1}\nabla f_{l}\left(\rvx_{q+1}\right) - \frac{1}{N}\sum_{l=0}^{N-1}\nabla f_{l}\left(\rvx_{q-1}^{\pi_{q-1}^{-1}(l)}\right) \right) }_{\infty}\\
		&\leq \sum_{i=0}^{n-1}  \frac{1}{N}\sum_{l=0}^{N-1}\norm{ \nabla f_{l}(\rvx_{q+1}) -\nabla f_{l}\left(\rvx_{q-1}^{\pi_{q-1}^{-1}(l)}\right) }_{\infty}\\
		&\leq L_{\infty}\sum_{i=0}^{n-1}  \frac{1}{N}\sum_{l=0}^{N-1}\norm{\rvx_{q+1} -\rvx_{q-1}^{\pi_{q-1}^{-1}(l)} }_{\infty}\\
		&\leq L_{\infty}\sum_{i=0}^{n-1}  \frac{1}{N}\sum_{l=0}^{N-1}\left(\norm{\rvx_{q+1} -\rvx_{q} }_{\infty} + \norm{\rvx_{q} -\rvx_{q-1} }_{\infty} + \norm{\rvx_{q-1} -\rvx_{q-1}^{\pi_{q-1}^{-1}(l)} }_{\infty} \right)\\
		&\leq L_{\infty} N\Delta_q + 2L_{\infty} N\Delta_{q-1}\,.
	\end{align*}
	Since the preceding inequalities hold for all $n\in [N]$, we have
	\begin{align}
		\bar\phi_{q+1} \leq 3L_{\infty} N \Delta_q + 2L_{\infty} N \Delta_{q-1} + \max_{n \in [N]}\norm{\sum_{i=0}^{n-1} \left( \nabla f_{\pi_{q+1}(i)}\left(\rvx_{q}^{ \pi_q^{-1} (\pi_{q+1}(i)) }\right) - \frac{1}{N}\sum_{l=0}^{N-1}\nabla f_{\pi_{q-1}(l)}(\rvx_{q-1}^l)  \right)   }_{\infty}\label{eq:pf-lem-GraB:relation^2}
	\end{align}
	
	Note that $\nabla f_{\pi_{q}(i)}\left(\rvx_{q}^{i}\right) - \frac{1}{N}\sum_{l=0}^{N-1}\nabla f_{\pi_{q-1}(l)}(\rvx_{q-1}^l)$ and $\nabla f_{\pi_{q+1}(i)}\left(\rvx_{q}^{ \pi_q^{-1} (\pi_{q+1}(i)) }\right) - \frac{1}{N}\sum_{l=0}^{N-1}\nabla f_{\pi_{q-1}(l)}(\rvx_{q-1}^l) $ correspond to $\rvz_{\pi(i)}$ and $\rvz_{\pi' (i)}$ in Lemma~\ref{lem:basic-balancing-reordering}, respectively. We next get the upper bounds of 
	\begin{align*}
		\norm{\rvz_{\pi(i)}}_2, \norm{\sum_{i=0}^{N-1} \rvz_{\pi(i)}}_{\infty} \text{ and } \max_{n\in [N]} \norm{\sum_{i=0}^{n-1} \rvz_{\pi(i)}}_{\infty},
	\end{align*}
	and then apply Lemma~\ref{lem:basic-balancing-reordering} to the last term on the right hand side in Ineq.~\eqref{eq:pf-lem-GraB:relation^2}.
	\begin{align*}
		\norm{\rvz_{\pi(i)}}_2 &= \norm{\nabla f_{\pi_q(i)}(\rvx_q^i) - \frac{1}{N}\sum_{l=0}^{N-1} \nabla f_{\pi_{q-1}(l)}\left(\rvx_{q-1}^l\right)}_2\\
		&= \norm{\left( \nabla f_{\pi_q(i)}(\rvx_q^i) - \frac{1}{N}\sum_{l=0}^{N-1} \nabla f_{\pi_{q-1}(l)}\left(\rvx_{q-1}^l\right)\right) \pm \left( \nabla f_{\pi_q(i)}(\rvx_q) - \frac{1}{N}\sum_{l=0}^{N-1} \nabla f_{\pi_q(l)}(\rvx_q)\right)}_2\\
		&\leq \norm{ \nabla f_{\pi_q(i)}(\rvx_q^i) - \nabla f_{\pi_q(i)}(\rvx_q) }_2 + \norm{ \frac{1}{N}\sum_{l=0}^{N-1} \nabla f_{\pi_{q-1}(l)}\left(\rvx_{q-1}^l\right) - \frac{1}{N}\sum_{l=0}^{N-1} \nabla f_{\pi_q(l)}(\rvx_q) }_2 + \varsigma\\
		&\leq \norm{ \nabla f_{\pi_q(i)}(\rvx_q^i) - \nabla f_{\pi_q(i)}(\rvx_q) }_2 + \norm{ \frac{1}{N}\sum_{l=0}^{N-1} \nabla f_{l}\left(\rvx_{q-1}^{\pi_{q-1}^{-1}(l)}\right) - \frac{1}{N}\sum_{l=0}^{N-1} \nabla f_{l}(\rvx_q) }_2 + \varsigma\\
		&\leq L_{2,\infty}\norm{\rvx_q^i - \rvx_q}_{\infty} + \frac{1}{N}\sum_{l=0}^{N-1}L_{2,\infty}\norm{\rvx_{q-1}^{\pi_{q-1}^{-1}(l)} - \rvx_q}_{\infty} +\varsigma\\
		&\leq L_{2,\infty}\norm{\rvx_q^i - \rvx_q}_{\infty} + \frac{1}{N}\sum_{l=0}^{N-1}L_{2,\infty}\left( \norm{\rvx_{q-1}^{\pi_{q-1}^{-1}(l)} - \rvx_{q-1}}_{\infty} + \norm{\rvx_{q-1} - \rvx_q}_{\infty}\right) + \varsigma\\
		&\leq L_{2,\infty} \Delta_q + 2L_{2,\infty} \Delta_{q-1}+ \varsigma\,.
	\end{align*}
	The preceding inequality holds for any $i\in \{0,1,\ldots, N-1\}$.
	\begin{align*}
		\norm{\sum_{i=0}^{N-1} \rvz_{\pi(i)}}_{\infty} &= \norm{ \sum_{i=0}^{N-1} \left( \nabla f_{\pi_q(i)}(\rvx_q^i) - \frac{1}{N}\sum_{l=0}^{N-1} \nabla f_{\pi_{q-1}(l)}(\rvx_{q-1}^l)  \right) }_\infty \\
		&= \norm{ \sum_{i=0}^{N-1}  \nabla f_{\pi_q(i)}(\rvx_q^i) - \sum_{i=0}^{N-1} \nabla f_{\pi_{q-1}(i)}(\rvx_{q-1}^i)   }_\infty  \\
		&= \norm{ \sum_{i=0}^{N-1}  \nabla f_{i}\left(\rvx_q^{\pi_q^{-1}(i)}\right) - \sum_{i=0}^{N-1} \nabla f_{i}\left(\rvx_{q-1}^{\pi_{q-1}^{-1}(i)}\right)   }_\infty\\
		&\leq \sum_{i=0}^{N-1}\norm{ \nabla f_{i}\left(\rvx_q^{\pi_q^{-1}(i)}\right) - \nabla f_{i}\left(\rvx_{q-1}^{\pi_{q-1}^{-1}(i)}\right)   }_\infty\\
		&\leq L_{\infty}\sum_{i=0}^{N-1}\norm{ \rvx_q^{\pi_q^{-1}(i)} - \rvx_{q-1}^{\pi_{q-1}^{-1}(i)}   }_\infty\\
		&\leq L_{\infty}\sum_{i=0}^{N-1} \left( \norm{ \rvx_q^{\pi_q^{-1}(i)} - \rvx_{q}   }_\infty + \norm{ \rvx_q- \rvx_{q-1} }_\infty + \norm{ \rvx_{q-1}- \rvx_{q-1}^{\pi_{q-1}^{-1}(i)}   }_\infty  \right)\\
		&\leq L_{\infty} N \Delta_q + 2L_{\infty}N \Delta_{q-1}\,.
	\end{align*}
	For any $n \in [N]$, we have
	\begin{align*}
		&\norm{\sum_{i=0}^{n-1} \rvz_{\pi(i)}}_{\infty} \\
		&= \norm{\sum_{i=0}^{n-1} \left( \nabla f_{\pi_{q}(i)}\left(\rvx_{q}^{i}\right) - \frac{1}{N}\sum_{l=0}^{N-1}\nabla f_{\pi_{q-1}(l)}(\rvx_{q-1}^l)  \right) }_{\infty}\\
		&= \norm{\sum_{i=0}^{n-1} \left(\left( \nabla f_{\pi_{q}(i)}\left(\rvx_{q}^{i}\right) - \frac{1}{N}\sum_{l=0}^{N-1}\nabla f_{\pi_{q-1}(l)}(\rvx_{q-1}^l)  \right) \pm \left( \nabla f_{\pi_{q}(i)}\left(\rvx_{q}\right) - \frac{1}{N}\sum_{l=0}^{N-1}\nabla f_{\pi_q(l)}(\rvx_{q})  \right) \right) }_{\infty}\\
		&\leq \norm{\sum_{i=0}^{n-1} \left(\nabla f_{\pi_q(i)}(\rvx_q^i) - \nabla f_{\pi_q(i)} (\rvx_q)\right) }_{\infty} + \norm{ \sum_{i=0}^{n-1}\left(\frac{1}{N}\sum_{l=0}^{N-1}\nabla f_{\pi_{q-1}(l)}(\rvx_{q-1}^l)- \frac{1}{N}\sum_{l=0}^{N-1}\nabla f_{\pi_q(l)}(\rvx_{q}) \right) }_{\infty} + \bar\phi_q\\
		&\leq \norm{\sum_{i=0}^{n-1} \left(\nabla f_{\pi_q(i)}(\rvx_q^i) - \nabla f_{\pi_q(i)} (\rvx_q)\right) }_{\infty} + \norm{ \sum_{i=0}^{n-1}\left(\frac{1}{N}\sum_{l=0}^{N-1}\nabla f_{l}\left(\rvx_{q-1}^{\pi_{q-1}^{-1}(l)}\right)- \frac{1}{N}\sum_{l=0}^{N-1}\nabla f_{l}(\rvx_{q}) \right) }_{\infty} + \bar\phi_q\\
		&\leq \sum_{i=0}^{n-1}\norm{\nabla f_{\pi_q(i)}(\rvx_q^i) - \nabla f_{\pi_q(i)} (\rvx_q)}_{\infty} + \sum_{i=0}^{n-1}\frac{1}{N}\sum_{l=0}^{N-1}\norm{ \nabla f_{l}\left(\rvx_{q-1}^{\pi_{q-1}^{-1}(l)}\right)- \nabla f_{l}(\rvx_{q})  }_{\infty}+ \bar\phi_q\\
		&\leq L_{\infty}\sum_{i=0}^{n-1}\norm{\rvx_q^i - \rvx_q}_{\infty} + L_{\infty}\sum_{i=0}^{n-1}\frac{1}{N}\sum_{l=0}^{N-1}\norm{\rvx_{q-1}^{\pi_{q-1}^{-1}(l)} - \rvx_q}_{\infty} + \bar\phi_q\\
		&\leq L_{\infty}\sum_{i=0}^{n-1}\norm{\rvx_q^i - \rvx_q}_{\infty} + L_{\infty}\sum_{i=0}^{n-1}\frac{1}{N}\sum_{l=0}^{N-1}\left( \norm{\rvx_{q-1}^{\pi_{q-1}^{-1}(l)} - \rvx_{q-1}}_{\infty} + \norm{\rvx_{q-1} - \rvx_q}_{\infty}\right) + \bar\phi_q\\
		&\leq L_{\infty} N \Delta_q + 2L_{\infty} N \Delta_{q-1} + \bar\phi_q\,.
	\end{align*}
	Since it holds for all $n \in [N]$, we have
	\begin{align*}
		\max_{n\in [N]} \norm{\sum_{i=0}^{n-1} \rvz_{\pi(i)}}_{\infty} \leq L_{\infty} N \Delta_q + 2L_{\infty} N \Delta_{q-1} + \bar\phi_q\,.
	\end{align*}
	Now, applying Lemma~\ref{lem:basic-balancing-reordering} to the last term on the right hand side in Ineq.~\eqref{eq:pf-lem-GraB:relation^2}, we can get
	\begin{align*}
		\bar\phi_{q+1} &\leq \left( 3L_{\infty} N \Delta_q + 2L_{\infty} N \Delta_{q-1}\right) + \frac{1}{2} \left(L_{\infty} N \Delta_q + 2L_{\infty} N \Delta_{q-1} + \bar\phi_q \right) \\
		&\quad+ \left(L_{\infty} N \Delta_q + 2L_{\infty}N \Delta_{q-1}\right)+ \frac{1}{2}C \left( L_{2,\infty} \Delta_q + 2L_{2,\infty} \Delta_{q-1}+ \varsigma\right)\\
		&\leq \left( \frac{9}{2}L_{\infty} N + \frac{1}{2}CL_{2,\infty}\right)\Delta_q + \left( 5L_{\infty} N + CL_{2,\infty}\right)\Delta_{q-1} + \frac{1}{2} \bar\phi_q + \frac{1}{2}C\varsigma\\
		&\leq \left( \frac{9}{2}L_{\infty} N + \frac{1}{2}CL_{2,\infty}\right)\left( \frac{32}{31}\gamma \bar \phi_q + \frac{32}{31}\gamma N \norm{\nabla f(\rvx_q)}_{\infty} \right) \\
		&\quad+ \left( 5L_{\infty} N + CL_{2,\infty}\right)\left(  \frac{32}{31}\gamma \bar \phi_{q-1} + \frac{32}{31}\gamma N \norm{\nabla f(\rvx_{q-1})}_{\infty} \right) + \frac{1}{2} \bar\phi_q + \frac{1}{2}C\varsigma\,,
	\end{align*}
	where the last inequality is due to Lemma~\ref{lem:parameter drift} that $\Delta_q \leq \frac{32}{31}\gamma \bar \phi_q + \frac{32}{31}\gamma N \norm{\nabla f(\rvx_q)}_{\infty}$.

	If $\textcolor{blue}{\gamma L_{\infty} N \leq \frac{1}{128}}$ and $\textcolor{blue}{\gamma L_{2,\infty} C \leq \frac{1}{128}}$, then $\left(\frac{9}{2}\gamma L_{\infty} N + \frac{1}{2}\gamma CL_{2,\infty}\right)\cdot \frac{32}{31} \leq \frac{5}{124}$ and $\left(5\gamma L_{\infty} N + C\gamma L_{2,\infty}\right)\cdot \frac{32}{31} \leq \frac{6}{124}$; we get
	\begin{align*}
		\bar\phi_{q+1} &\leq \frac{67}{124} \bar\phi_q + \frac{6}{124} \bar\phi_{q-1} + \frac{5}{124} N \norm{\nabla f(\rvx_q)}_{\infty} + \frac{6}{124} N \norm{\nabla f(\rvx_{q-1})}_{\infty} + \frac{1}{2}C\varsigma\,.
	\end{align*}
	Then, using $\norm{\rvx}_p \leq \norm{\rvx}$ for $p\geq 2$, we get
	\begin{align*}
		\left(\bar \phi_{q+1}\right)^2
		&\leq \left( \frac{67}{124} \bar\phi_q + \frac{6}{124} \bar\phi_{q-1} + \frac{5}{124} N \norm{\nabla f(\rvx_q)}_{} + \frac{6}{124} N \norm{\nabla f(\rvx_{q-1})}_{} + \frac{1}{2}C\varsigma \right)^2 \\
		&\leq 2\cdot \left( \frac{67}{124} \bar \phi_q \right)^2 + 8\cdot \left( \frac{6}{124} \bar \phi_{q-1} \right)^2 + 8 \cdot \left( \frac{5}{124}N \norm{\nabla f(\rvx_q)}_{} \right)^2 + 8 \cdot \left( \frac{6}{124}N \norm{\nabla f(\rvx_{q-1})}_{} \right)^2+ 8 \cdot \left( \frac{1}{2}C\varsigma \right)^2\\
		&\leq \frac{3}{5}\left( \bar \phi_q \right)^2 + \frac{1}{50}\left( \bar \phi_{q-1} \right)^2 + \frac{1}{50}N^2 \normsq{\nabla f(\rvx_q)}_{} +\frac{1}{50}N^2 \normsq{\nabla f(\rvx_{q-1})}_{} + 2C^2 \varsigma^2\,.
	\end{align*}
	So the relation between $\bar \phi_q$ and $\bar \phi_{q-1}$ is 
	\begin{align*}
	\left(\bar \phi_{q}\right)^2 \leq \frac{3}{5}\left( \bar \phi_{q-1} \right)^2 + \frac{1}{50}\left( \bar \phi_{q-2} \right)^2 + \frac{1}{50}N^2 \normsq{\nabla f(\rvx_{q-1})}_{\infty} +\frac{1}{50}N^2 \normsq{\nabla f(\rvx_{q-2})}_{\infty} + 2C^2 \varsigma^2\,,
	\end{align*}
	for $q\geq 2$. Besides, we have $\left(\bar\phi_{0}\right)^2 \leq N^2 \varsigma^2$ and $\left(\bar\phi_{1}\right)^2 \leq N^2 \varsigma^2$.
	
	In this case, for Assumption~\ref{asm:order error}, $p=\infty$, $A_1= \frac{3}{5}$, $A_2 = \frac{1}{50}$, $A_3=\cdots=A_q =0$, $B_0 = 0, B_1=\frac{1}{50}N^2$, $B_2=\frac{1}{50}N^2$, $B_3=\cdots=B_q =0$ and $D = 2C^2 \varsigma^2$. Then, we verify that
	\begin{align*}
		\frac{255}{512} - \frac{1}{512N^2}\cdot \frac{\sum_{i=0}^{\nu} B_i }{1-\sum_{i=1}^{\nu} A_i} \geq \frac{254}{512} >0\,.
	\end{align*}
	Thus, we can set $c_1 = 3$ and $c_2 = 16$ for Theorem~\ref{thm:SGD}. In addition, for Theorem~\ref{thm:SGD}, $\nu=2$, $\left(\bar\phi_{0}\right)^2 \leq N^2 \varsigma^2$ and $\left(\bar\phi_{1}\right)^2 \leq N^2 \varsigma^2$. These lead to the upper bound,
	\begin{align*}
		\min_{q\in \{0,1\ldots, Q-1\}}\normsq{ \nabla f(\rvx_q) } 
		&= \gO\left(  \frac{ F_0 }{\gamma NQ} + \gamma^2 L_{2,\infty}^2 N^2\frac{1}{Q} \varsigma^2 + \gamma^2 L_{2,\infty}^2 C^2\varsigma^2 \right).
	\end{align*}
	where $F_0 = f(\rvx_0) - f_\ast$. Since Lemma~\ref{lem:basic-balancing-reordering} is used for each epoch (that is, for $Q$ times), so by the union bound, the preceding bound holds with probability at least $1-Q\delta$.
	
	Next, we summarize the constraints on the step size:
	\begin{align*}
		\gamma &\leq \min \left\{ \frac{1}{LN}, \frac{1}{32L_{2,p} N}, \frac{1}{32L_{p} N} \right\} ,\\
		\gamma  &\leq \frac{1}{128 L_{\infty} N},\\
		\gamma  &\leq \frac{1}{128 L_{2,\infty} C}.
	\end{align*}
	The first one is from Theorem~\ref{thm:SGD} and the other is from the derivation of the relation. For simplicity, we can use a tighter constraint
	\begin{align*}
		\gamma &\leq \min \left\{ \frac{1}{L N}, \frac{1}{128L_{2,\infty} (N+C)}, \frac{1}{128L_\infty N} \right\}.
	\end{align*}
	After we use the effective step size $\tilde \gamma \coloneqq \gamma N $, the constraint will be
	\begin{align*}
		\tilde \gamma \leq \min \left\{ \frac{1}{L}, \frac{1}{128 L_{2,\infty}\left (1+\frac{C}{N} \right)}, \frac{1}{128L_\infty} \right\},
	\end{align*}
	and the upper bound will be
	\begin{align*}
		\min_{q\in \{0,1\ldots, Q-1\}}\normsq{ \nabla f(\rvx_q) } 
		&= \gO\left(  \frac{ F_0 }{\tilde\gamma Q} + \tilde\gamma^2 L_{2,\infty}^2 \frac{1}{Q} \varsigma^2 + \tilde\gamma^2 L_{2,\infty}^2 \frac{1}{N^2}C^2\varsigma^2 \right).
	\end{align*}
	Applying Lemma~\ref{lem:step-size}, we get
	\begin{align*}
		\min_{q\in \{0,1\ldots, Q-1\}}\normsq{ \nabla f(\rvx_q) } &=
		\gO\left( \frac{\left( L +L_{2,\infty}\left (1+\frac{C}{N} \right)+ L_{\infty}\right)  F_0}{Q}  + \frac{ (L_{2,\infty}F_0 \varsigma)^{\frac{2}{3}} }{Q}  + \left(\frac{L_{2,\infty}F_0 C \varsigma }{ NQ}\right)^{\frac{2}{3}}  \right).
	\end{align*}	
\end{proof}

\subsection{PairGraB}
\label{apx-subsec:PairGraB}

PairGraB. Use \texttt{PairBR} (Algorithm~\ref{alg:PairBR}) as the \texttt{Permute} function in Algorithm~\ref{alg:SGD}, with the inputs of $\pi_q$, $\{ \nabla f_{\pi_q(n)} (\rvx_q^n) \}_{n=0}^{N-1}$ and $\frac{1}{N}\sum_{n=0}^{N-1}\nabla f_{\pi_q(n)} (\rvx_q^n)$, for each epoch $q$.

Thus, the key idea of our proof is as follows:
\begin{align*}
	\bar \phi_{q+1} \rightarrow \max_{n \in [N]} \norm{\sum_{i=0}^{n-1} \left( \nabla f_{\pi_{q+1}(i)}\left(\rvx_{q}^{ \pi_q^{-1} (\pi_{q+1}(i)) }\right) - \frac{1}{N}\sum_{l=0}^{N-1}\nabla f_{\pi_q(l)}(\rvx_{q}^l)  \right) }_{\infty}   \\
	\overset{\text{Lemma~\ref{lem:pair-balancing-reordering}}}{\rightarrow} \norm{\sum_{i=0}^{n-1} \left( \nabla f_{\pi_{q}(i)}\left(\rvx_{q}^{i}\right) - \frac{1}{N}\sum_{l=0}^{N-1}\nabla f_{\pi_q(l)}(\rvx_{q}^l)  \right) }_{\infty}  \rightarrow
	\bar \phi_{q}\,.
\end{align*}

\begin{proof}[Proof of Example~\ref{ex:PairGraB}]
	We need to find the relation between $\bar \phi_{q+1}$ and $\bar \phi_q$.
	\begin{align}
		\phi_{q+1}^n &= \norm{ \sum_{i=0}^{n-1} \left( \nabla f_{\pi_{q+1}(i)}(\rvx_{q+1}) - \nabla f(\rvx_{q+1})  \right) }_{\infty} \nonumber \\
		&= \norm{ \sum_{i=0}^{n-1} \left( \left( \nabla f_{\pi_{q+1}(i)}(\rvx_{q+1}) - \nabla f(\rvx_{q+1})  \right) \pm \left( \nabla f_{\pi_{q+1}(i)}\left(\rvx_{q}^{ \pi_q^{-1} (\pi_{q+1}(i)) }\right) - \frac{1}{N}\sum_{l=0}^{N-1}\nabla f_{\pi_q(l)}(\rvx_{q}^l)  \right)\right) }_{\infty}\nonumber\\
		&\leq \norm{ \sum_{i=0}^{n-1} \left( \nabla f_{\pi_{q+1}(i)}(\rvx_{q+1}) - \nabla f_{\pi_{q+1}(i)}\left(\rvx_{q}^{ \pi_q^{-1} (\pi_{q+1}(i)) }\right)  \right)  }_{\infty} \nonumber\\
		&\quad+ \norm{ \sum_{i=0}^{n-1} \left( \frac{1}{N}\sum_{l=0}^{N-1}\nabla f_{\pi_q(l)}(\rvx_{q+1}) - \frac{1}{N}\sum_{l=0}^{N-1}\nabla f_{\pi_q(l)}(\rvx_{q}^l) \right) }_{\infty} \nonumber\\
		&\quad+ \norm{\sum_{i=0}^{n-1} \left( \nabla f_{\pi_{q+1}(i)}\left(\rvx_{q}^{ \pi_q^{-1} (\pi_{q+1}(i)) }\right) - \frac{1}{N}\sum_{l=0}^{N-1}\nabla f_{\pi_q(l)}(\rvx_{q}^l)  \right)   }_{\infty}.\label{eq:pf-lem-PairGraB:relation}
	\end{align}
	Then,
	\begin{align*}
		\Term{1}{\eqref{eq:pf-lem-PairGraB:relation}}&= \norm{ \sum_{i=0}^{n-1} \left( \nabla f_{\pi_{q+1}(i)}(\rvx_{q+1}) - \nabla f_{\pi_{q+1}(i)}\left(\rvx_{q}^{ \pi_q^{-1} (\pi_{q+1}(i)) }\right)  \right)  }_{\infty}\\
		&\leq \sum_{i=0}^{n-1}\norm{  \nabla f_{\pi_{q+1}(i)}(\rvx_{q+1}) - \nabla f_{\pi_{q+1}(i)}\left(\rvx_{q}^{ \pi_q^{-1} (\pi_{q+1}(i)) }\right) }_{\infty}\\
		&\leq L_{\infty}\sum_{i=0}^{n-1}\norm{  \rvx_{q+1} - \rvx_{q}^{ \pi_q^{-1} (\pi_{q+1}(i)) } }_{\infty}\\
		&\leq L_{\infty}\sum_{i=0}^{n-1}\left( \norm{\rvx_{q+1}- \rvx_q}_{\infty} + \norm{\rvx_q - \rvx_{q}^{ \pi_q^{-1} (\pi_{q+1}(i)) }}_{\infty}\right)\\
		&\leq 2L_{\infty} N \Delta_q\,,
	\end{align*}
	\begin{align*}
		\Term{2}{\eqref{eq:pf-lem-PairGraB:relation}}&= \norm{ \sum_{i=0}^{n-1} \left( \frac{1}{N}\sum_{l=0}^{N-1}\nabla f_{\pi_q(l)}(\rvx_{q+1}) - \frac{1}{N}\sum_{l=0}^{N-1}\nabla f_{\pi_q(l)}(\rvx_{q}^l) \right) }_{\infty}\\
		&\leq \sum_{i=0}^{n-1}  \frac{1}{N}\sum_{l=0}^{N-1}\norm{ \nabla f_{\pi_q(l)}(\rvx_{q+1}) -\nabla f_{\pi_q(l)}(\rvx_{q}^l) }_{\infty}\\
		&\leq L_{\infty}\sum_{i=0}^{n-1}  \frac{1}{N}\sum_{l=0}^{N-1}\norm{\rvx_{q+1} -\rvx_{q}^l }_{\infty}\\
		&\leq L_{\infty}\sum_{i=0}^{n-1}  \frac{1}{N}\sum_{l=0}^{N-1}\left(\norm{\rvx_{q+1} -\rvx_{q} }_{\infty} + \norm{\rvx_{q} -\rvx_{q}^l }_{\infty} \right)\\
		&\leq 2L_{\infty} N \Delta_q\,.
	\end{align*}
	Since the preceding inequalities hold for all $n\in [N]$, we have
	\begin{align}
		\bar\phi_{q+1} \leq 4L_{\infty} N \Delta_q + \max_{n \in [N]}\norm{\sum_{i=0}^{n-1} \left( \nabla f_{\pi_{q+1}(i)}\left(\rvx_{q}^{ \pi_q^{-1} (\pi_{q+1}(i)) }\right) - \frac{1}{N}\sum_{l=0}^{N-1}\nabla f_{\pi_q(l)}(\rvx_{q}^l)  \right) }_{\infty}.\label{eq:pf-lem-PairGraB:relation^2}
	\end{align}
	
	Note that $\nabla f_{\pi_{q}(i)}\left(\rvx_{q}^{i}\right) - \frac{1}{N}\sum_{l=0}^{N-1}\nabla f_{\pi_{q}(l)}(\rvx_{q}^l)$ and $\nabla f_{\pi_{q+1}(i)}\left(\rvx_{q}^{ \pi_q^{-1} (\pi_{q+1}(i)) }\right) - \frac{1}{N}\sum_{l=0}^{N-1}\nabla f_{\pi_{q}(l)}(\rvx_{q}^l) $ correspond to $\rvz_{\pi(i)}$ and $\rvz_{\pi' (i)}$ in Lemma~\ref{lem:pair-balancing-reordering}, respectively. We next get the upper bounds of 
	\begin{align*}
		\norm{\rvz_{\pi(i)}}_2, \norm{\sum_{i=0}^{N-1} \rvz_{\pi(i)}}_{\infty} \text{ and } \max_{n\in [N]} \norm{\sum_{i=0}^{n-1} \rvz_{\pi(i)}}_{\infty},
	\end{align*}
	and then apply Lemma~\ref{lem:pair-balancing-reordering} to the last term on the right hand side in Ineq.~\eqref{eq:pf-lem-PairGraB:relation^2}.
	\begin{align*}
		\norm{\rvz_{\pi_q(i)}}_2 &= \norm{\nabla f_{\pi_q(i)}(\rvx_q^i) - \frac{1}{N}\sum_{l=0}^{N-1} \nabla f_{\pi_q(l)}(\rvx_q^l)}_2\\
		&= \norm{\left( \nabla f_{\pi_q(i)}(\rvx_q^i) - \frac{1}{N}\sum_{l=0}^{N-1} \nabla f_{\pi_q(l)}(\rvx_q^l)\right) \pm \left( \nabla f_{\pi_q(i)}(\rvx_q) - \frac{1}{N}\sum_{l=0}^{N-1} \nabla f_{\pi_q(l)}(\rvx_q)\right)}_2\\
		&\leq \norm{ \nabla f_{\pi_q(i)}(\rvx_q^i) - \nabla f_{\pi_q(i)}(\rvx_q) }_2 + \norm{ \frac{1}{N}\sum_{l=0}^{N-1} \nabla f_{\pi_q(l)}(\rvx_q^l) - \frac{1}{N}\sum_{l=0}^{N-1} \nabla f_{\pi_q(l)}(\rvx_q) }_2 \\
		&\quad + \norm{ \nabla f_{\pi_q(i)}(\rvx_q) - \frac{1}{N}\sum_{l=0}^{N-1} \nabla f_{\pi_q(l)}(\rvx_q)}_2\\
		&\leq L_{2,\infty}\norm{\rvx_q^i - \rvx_q}_{\infty} + \frac{1}{N}\sum_{l=0}^{N-1}L_{2,\infty}\norm{\rvx_q^l - \rvx_q}_{\infty}+ \varsigma\\
		&\leq 2L_{2,\infty} \Delta_q + \varsigma\,,
	\end{align*}
	\begin{align*}
		\norm{\sum_{i=0}^{N-1} \rvz_{\pi_q(i)}}_{\infty} = \norm{ \sum_{i=0}^{N-1} \left( \nabla f_{\pi_q(i)}(\rvx_q^i) - \frac{1}{N}\sum_{l=0}^{N-1} \nabla f_{\pi_q(l)}(\rvx_q^l)  \right) }_\infty = 0\,.
	\end{align*}
	For any $n \in [N]$, we have
	\begin{align*}
		&\norm{\sum_{i=0}^{n-1} \rvz_{\pi(i)}}_{\infty} \\
		&= \norm{\sum_{i=0}^{n-1} \left( \nabla f_{\pi_{q}(i)}\left(\rvx_{q}^{i}\right) - \frac{1}{N}\sum_{l=0}^{N-1}\nabla f_{\pi_q(l)}(\rvx_{q}^l)  \right) }_{\infty}\\
		&= \norm{\sum_{i=0}^{n-1} \left(\left( \nabla f_{\pi_{q}(i)}\left(\rvx_{q}^{i}\right) - \frac{1}{N}\sum_{l=0}^{N-1}\nabla f_{\pi_q(l)}(\rvx_{q}^l)  \right) \pm \left( \nabla f_{\pi_{q}(i)}\left(\rvx_{q}\right) - \frac{1}{N}\sum_{l=0}^{N-1}\nabla f_{\pi_q(l)}(\rvx_{q})  \right) \right) }_{\infty}\\
		&\leq \norm{\sum_{i=0}^{n-1} \left(\nabla f_{\pi_q(i)}(\rvx_q^i) - \nabla f_{\pi_q(i)} (\rvx_q)\right) }_{\infty} + \norm{ \sum_{i=0}^{n-1}\left(\frac{1}{N}\sum_{l=0}^{N-1}\nabla f_{\pi_q(l)}(\rvx_{q}^l)- \frac{1}{N}\sum_{l=0}^{N-1}\nabla f_{\pi_q(l)}(\rvx_{q}) \right) }_{\infty} + \bar\phi_q\\
		&\leq \sum_{i=0}^{n-1}\norm{\nabla f_{\pi_q(i)}(\rvx_q^i) - \nabla f_{\pi_q(i)} (\rvx_q)}_{\infty} + \sum_{i=0}^{n-1}\frac{1}{N}\sum_{l=0}^{N-1}\norm{ \nabla f_{\pi_q(l)}(\rvx_{q}^l)- \nabla f_{\pi_q(l)}(\rvx_{q})  }_{\infty}+ \bar\phi_q\\
		&\leq L_{\infty}\sum_{i=0}^{n-1}\norm{\rvx_q^i - \rvx_q}_{\infty} + L_{\infty}\sum_{i=0}^{n-1}\frac{1}{N}\sum_{l=0}^{N-1}\norm{\rvx_q^l - \rvx_q}_{\infty} + \bar\phi_q\\
		&\leq 2L_{\infty} N \Delta_q + \bar\phi_q\,.
	\end{align*}
	Since it holds for all $n \in [N]$, we have
	\begin{align*}
		\max_{n\in [N]} \norm{\sum_{i=0}^{n-1} \rvz_{\pi(i)}}_{\infty} \leq 2L_{\infty} N \Delta_q + \bar\phi_q\,.
	\end{align*}
	Now, applying Lemma~\ref{lem:pair-balancing-reordering} to the last term on the right hand side in Ineq.~\eqref{eq:pf-lem-PairGraB:relation^2}, we can get	
	\begin{align*}
		\bar\phi_{q+1} &\leq 4L_{\infty} N \Delta_q + \frac{1}{2} \left(2L_{\infty} N \Delta_q + \bar\phi_q \right) + C \left( 2L_{2,\infty} \Delta_q + \varsigma\right)\\
		&\leq \left( 5L_{\infty} N + 2L_{2,\infty}C\right)\Delta_q + \frac{1}{2} \bar\phi_q + C\varsigma\\
		&\leq \left( 5L_{\infty} N + 2L_{2,\infty}C\right) \left( \frac{32}{31}\gamma \bar \phi_q + \frac{32}{31}\gamma N \norm{\nabla f(\rvx_q)}_{\infty} \right)+ \frac{1}{2} \bar\phi_q + C\varsigma\,,
	\end{align*}
	where the last inequality is due to Lemma~\ref{lem:parameter drift} that $\Delta_q \leq \frac{32}{31}\gamma \bar \phi_q + \frac{32}{31}\gamma N \norm{\nabla f(\rvx_q)}_{\infty}$. If $\textcolor{blue}{\gamma L_{\infty} N \leq \frac{1}{64}}$ and $\textcolor{blue}{\gamma L_{2,\infty} C \leq \frac{1}{64}}$, then $\frac{1}{2} + \frac{7}{64}\cdot \frac{32}{31} = \frac{38}{62}$ and $\frac{7}{64}\cdot \frac{32}{31} = \frac{7}{62}$; we get
	\begin{align*}
		\bar\phi_{q+1} \leq \frac{38}{62} \bar\phi_q + \frac{7}{62}N \norm{\nabla f(\rvx_q)}_{\infty} + C \varsigma\,.
	\end{align*}
	Then, using $\norm{\rvx}_p \leq \norm{\rvx}$ for $p\geq 2$, we get
	\begin{align*}
		\left(\bar \phi_{q+1}\right)^2
		&\leq \left( \frac{38}{62}\bar \phi_q + \frac{7}{62}N \norm{\nabla f(\rvx_q)}_{} + C\varsigma \right)^2 \\
		&\leq 2\cdot \left( \frac{38}{62} \bar \phi_q \right)^2 + 4 \cdot \left( \frac{7}{62}N \norm{\nabla f(\rvx_q)}_{} \right)^2+ 4 \cdot \left( C\varsigma \right)^2\\
		&\leq \frac{4}{5}\left( \bar \phi_q \right)^2 + \frac{3}{50}N^2 \normsq{\nabla f(\rvx_q)}_{} + 4C^2 \varsigma^2\,.
	\end{align*}
	
	So the relation between $\bar \phi_q$ and $\bar \phi_{q-1}$ is 
	\begin{align*}
		\left(\bar \phi_{q}\right)^2 \leq \frac{4}{5}\left( \bar \phi_q \right)^2 + \frac{3}{50}N^2 \normsq{\nabla f(\rvx_q)}_{} + 4C^2 \varsigma^2\,.
	\end{align*}
	for $q\geq 1$. Besides, we have $\left(\bar\phi_{0}\right)^2 \leq N^2 \varsigma^2$.
	
	In this case, for Assumption~\ref{asm:order error}, $p=\infty$, $A_1=\frac{4}{5}$, $A_2=\cdots=A_q = 0$, $B_0=0$, $B_1 = \frac{3}{50}N^2$, $B_2=\cdots=B_q =0$ and $D = 4C^2 \varsigma^2$. Then, we verify that
	\begin{align*}
		\frac{255}{512} - \frac{1}{512N^2}\cdot \frac{\sum_{i=0}^{\nu} B_i }{1-\sum_{i=1}^{\nu} A_i} \geq \frac{254}{512} >0\,.
	\end{align*}
	Thus, we can set $c_1 = 3$ and $c_2 = 30$ for Theorem~\ref{thm:SGD}. In addition, for Theorem~\ref{thm:SGD}, $\nu=1$ and $\left(\bar\phi_{0}\right)^2 \leq N^2 \varsigma^2$. These lead to the upper bound,
	\begin{align*}
		\min_{q\in \{0,1\ldots, Q-1\}}\normsq{ \nabla f(\rvx_q) } 
		&= \gO\left(  \frac{ F_0 }{\gamma NQ} + \gamma^2 L_{2,\infty}^2 N^2\frac{1}{Q} \varsigma^2 + \gamma^2 L_{2,\infty}^2 C^2\varsigma^2 \right).
	\end{align*}
	where $F_0 = f(\rvx_0) - f_\ast$. Since Lemma~\ref{lem:pair-balancing-reordering} is used for each epoch (that is, for $Q$ times), so by the union bound, the preceding bound holds with probability at least $1-Q\delta$.
	
	Next, we summarize the constraints on the step size:
	\begin{align*}
		\gamma &\leq \min \left\{ \frac{1}{LN}, \frac{1}{32L_{2,p} N}, \frac{1}{32L_{p} N} \right\}, \\
		\gamma  &\leq \frac{1}{64 L_{\infty} N}\,,\\
		\gamma  &\leq \frac{1}{64 L_{2,\infty}C}\,.
	\end{align*}
	The first one is from Theorem~\ref{thm:SGD} and the other is from the derivation of the relation. For simplicity, we can use a tighter constraint
	\begin{align*}
		\gamma &\leq \min \left\{ \frac{1}{L N}, \frac{1}{64L_{2,\infty} (N+C)}, \frac{1}{64L_\infty N} \right\}.
	\end{align*}
	After we use the effective step size $\tilde \gamma \coloneqq \gamma N $, the constraint will be
	\begin{align*}
		\tilde \gamma \leq \min \left\{ \frac{1}{L}, \frac{1}{64 L_{2,\infty}\left (1+\frac{C}{N} \right)}, \frac{1}{64L_\infty} \right\},
	\end{align*}
	and the upper bound will be
	\begin{align*}
		\min_{q\in \{0,1\ldots, Q-1\}}\normsq{ \nabla f(\rvx_q) } 
		&= \gO\left(  \frac{ F_0 }{\tilde\gamma Q} + \tilde\gamma^2 L_{2,\infty}^2 \frac{1}{Q} \varsigma^2 + \tilde\gamma^2 L_{2,\infty}^2 \frac{1}{N^2}C^2\varsigma^2 \right).
	\end{align*}
	Applying Lemma~\ref{lem:step-size}, we get
	\begin{align*}
		\min_{q\in \{0,1\ldots, Q-1\}}\normsq{ \nabla f(\rvx_q) } &=
		\gO\left( \frac{\left( L +L_{2,\infty}\left (1+\frac{C}{N} \right)+ L_{\infty}\right)  F_0}{Q}  + \frac{ (L_{2,\infty}F_0 \varsigma)^{\frac{2}{3}} }{Q}  + \left(\frac{L_{2,\infty}F_0 C \varsigma }{ NQ}\right)^{\frac{2}{3}}  \right).
	\end{align*}
\end{proof}

\section{Theorem~\ref{thm:FL}}

\subsection{Order Error in FL}

\textit{Theoretical understanding of Definition~\ref{def:FL:order error}.} Following \citet{smith2021origin}, we can prove that, for small finite step sizes, the cumulative updates in one epoch are
\begin{align}
	\rvx_{q+1} - \rvx_{q} &= -\gamma \frac{1}{S}\sum_{n=0}^{N-1} \sum_{k=0}^{K-1}  \nabla f_{\pi_q(n)}\left(\rvx_{q,k}^{n}\right)\nonumber\\
	&= -\gamma \frac{1}{S}\sum_{n=0}^{N-1} \sum_{k=0}^{K-1} \nabla f_{\pi(n)}\left( \rvx_{q} \right) \nonumber\\
	&\quad+ \gamma^2 \frac{1}{S}\sum_{n=0}^{N-1} \sum_{k=0}^{K-1} \nabla \nabla f_{\pi(n)}(\rvx_{q}) \sum_{j=0}^{k-1} \nabla f_{\pi(n)}\left( \rvx_{q} \right)\nonumber\\
	&\quad+ \gamma^2 \frac{1}{S}\sum_{n=0}^{N-1} \sum_{k=0}^{K-1}\nabla \nabla f_{\pi(n)}(\rvx_{q}) \frac{1}{S} \sum_{i=0}^{v(n)-1} \sum_{j=0}^{K-1} \nabla f_{\pi(i)}\left( \rvx_{q} \right) +  \gO\left( \gamma^3K^3 N^3\frac{1}{S^3} \right).\label{eq:order error-FL-analysis}
\end{align}
Similar to the analysis in the main body, it can be seen that the error vectors are caused by the second and third terms on the right hand side in Eq.~\eqref{eq:order error-FL-analysis}. Note that when we consider $\nabla \nabla f_{\pi(n)}(\rvx_0^0) \approx L$, the second term can be also seen as a optimization vector (with the same direction as $\nabla f(\rvx_{q,0}^0)$). This is mainly because the local solver is the classic SGD in our setup, and it can be different when the local solver is the permutation-based SGD. As a result, we next focus on the third term. With a similar decomposition in the main body, our goal turns to suppress the error vector as follows
\begin{align*}
	\text{Error vector} = \gamma^2 \frac{1}{S}\sum_{n=0}^{N-1} \sum_{k=0}^{K-1}\nabla \nabla f_{\pi(n)}(\rvx_{q}) \frac{1}{S} \sum_{i=0}^{v(n)-1} \sum_{j=0}^{K-1} \left( \nabla f_{\pi(i)}\left( \rvx_{q,0}^0 \right) - \nabla f_{\pi(i)} \left( \rvx_q \right) \right).
\end{align*}
One straightforward way is to minimize the norm of error vector
\begin{align*}
	\norm{\text{Error vector}} &\leq \gamma^2 L \norm{\frac{1}{S}\sum_{n=0}^{N-1} \sum_{k=0}^{K-1} \frac{1}{S} \sum_{i=0}^{v(n)-1} \sum_{j=0}^{K-1} \left( \nabla f_{\pi(i)}\left( \rvx_{q} \right) - \nabla f_{\pi(i)} \left( \rvx_q \right) \right)}\\
	&\leq \gamma^2 L K^2 \frac{1}{S^2} \sum_{n=0}^{N-1}\norm{   \sum_{i=0}^{v(n)-1}  \left( \nabla f_{\pi(i)}\left( \rvx_{q} \right) - \nabla f_{\pi(i)} \left( \rvx_q \right) \right)}\\
	&\leq \gamma^2 L K^2 N\frac{1}{S^2} \bar\varphi_q\,.
\end{align*}

\begin{proof}[Proof of Eq.~\eqref{eq:order error-FL-analysis}]

The Taylor expansion of $h$ at $x=x_0$ is $\sum_{n=0}^{\infty} \frac{1}{n!} h^{(n)} (x_0) (x-x_0)^n$. Here we only need
\begin{align*}
	h(x) = h(x_0) + h'(x_0) (x-x_0) + \gO \left((x-x_0)^2\right).
\end{align*}
For FL with FL-AP, for any epoch $q\geq 0$,
\begin{align*}
	\rvx_{q+1} - \rvx_{q} = -\gamma \frac{1}{S}\sum_{n=0}^{N-1} \sum_{k=0}^{K-1}  \nabla f_{\pi_q(n)}\left(\rvx_{q,k}^{n}\right),
\end{align*}
where we omit the server learning rate $\eta$ here. Besides, we adopt the GD as the local solver at each client. Next, we drop the subscripts $q$ for convenience. For any $n\in\{0,1,\ldots,N-1\}$ and $k\in\{0,1,K-1\}$,
\begin{align*}
	\rvx_{k}^{n} - \rvx_{0}^{0} = -\gamma \sum_{j=0}^{k-1} \nabla f_{\pi(n)} \left(\rvx_j^n\right) - \gamma \frac{1}{S} \sum_{i=0}^{v(n)-1} \sum_{j=0}^{K-1} \nabla f_{\pi(i)} \left( \rvx_k^i \right).
\end{align*}
Then, by using Taylor expansion of $\nabla f_{\pi(n)}\left(\rvx_{k}^{n}\right)$, it follows that
\begin{align}
	\nabla f_{\pi(n)}\left( \rvx_k^n\right) &= \nabla f_{\pi(n)}\left( \rvx_0^0 \right) + \nabla \nabla f_{\pi(n)}(\rvx_0^0) \left( -\gamma \sum_{j=0}^{k-1} \nabla f_{\pi(n)} \left(\rvx_j^n\right) - \gamma \frac{1}{S} \sum_{i=0}^{v(n)-1} \sum_{j=0}^{K-1} \nabla f_{\pi(i)} \left( \rvx_k^i \right) \right) + \gO\left( \gamma^2 K^2N^2 \frac{1}{S^2} \right)\nonumber\\
	&= \nabla f_{\pi(n)}\left( \rvx_0^0 \right)  - \gamma \nabla \nabla f_{\pi(n)}(\rvx_0^0)  \sum_{j=0}^{k-1} \textcolor{blue}{\nabla f_{\pi(n)} \left(\rvx_j^n\right)} \nonumber\\
	&\quad - \gamma \nabla \nabla f_{\pi(n)}(\rvx_0^0) \frac{1}{S} \sum_{i=0}^{v(n)-1} \sum_{j=0}^{K-1} \textcolor{blue}{\nabla f_{\pi(i)} \left( \rvx_k^i \right)} + \gO\left( \gamma^2 K^2N^2 \frac{1}{S^2} \right).\label{eq:pf-def:order eror-FL}
\end{align}
where $v(n) \coloneqq \floor{\frac{n}{S}}S = n-n\mod S$. The remaining error is $\gO\left( \gamma^2 K^2 N^2 \frac{1}{S^2} \right)$ since there are $\gO\left( K^2 N^2 \frac{1}{S^2}\right)$ terms in the Taylor expansion proportional to $\gamma^2$ \citep{smith2021origin}. Then, noting that the expansion of Eq.~\eqref{eq:pf-def:order eror-FL} is also applied to $\nabla f_{\pi(n)} \left(\rvx_j^n\right)$ and $\nabla f_{\pi(i)} \left( \rvx_k^i \right)$ in Eq.~\eqref{eq:pf-def:order eror-FL}. Thus, by using Eq.~\eqref{eq:pf-def:order eror-FL} recursively, we get
\begin{align*}
	&\Term{2}{\eqref{eq:pf-def:order eror-FL}} \\
	&= - \gamma \nabla \nabla f_{\pi(n)}(\rvx_0^0)  \sum_{j=0}^{k-1} \textcolor{blue}{\nabla f_{\pi(n)} \left(\rvx_j^n\right)}\\
	&= - \gamma \nabla \nabla f_{\pi(n)}(\rvx_0^0) \sum_{j=0}^{k-1}  \left( {\color{blue}\nabla f_{\pi(n)}\left( \rvx_0^0 \right)  - \gamma \nabla \nabla f_{\pi(n)}(\rvx_0^0)  \sum_{b=0}^{j-1} \nabla f_{\pi(n)} \left(\rvx_b^n\right)} \right.\\
	&\hspace{12em}\left. {\color{blue}- \gamma \nabla \nabla f_{\pi(n)}(\rvx_0^0) \frac{1}{S} \sum_{a=0}^{v(n)-1} \sum_{b=0}^{K-1} \nabla f_{\pi(a)} \left( \rvx_b^a \right) + \gO\left( \gamma^2 K^2N^2 \frac{1}{S^2} \right)}  \right)\\
	&= - \gamma \nabla \nabla f_{\pi(n)}(\rvx_0^0) \sum_{j=0}^{k-1} \nabla f_{\pi(n)}\left( \rvx_0^0 \right) + \gO\left( \gamma^2 K^2  \right) + \gO\left( \gamma^2 K^2 N \frac{1}{S} \right) + \gO\left( \gamma^3 K^3 N^2 \frac{1}{S^2} \right)\\
	&= - \gamma \nabla \nabla f_{\pi(n)}(\rvx_0^0) \sum_{j=0}^{k-1} \nabla f_{\pi(n)}\left( \rvx_0^0 \right) + \gO\left( \gamma^2 K^2 N^2\frac{1}{S^2}\right),
\end{align*}	
\begin{align*}
	&\Term{3}{\eqref{eq:pf-def:order eror-FL}} \\
	&= - \gamma \nabla \nabla f_{\pi(n)}(\rvx_0^0) \frac{1}{S} \sum_{i=0}^{v(n)-1} \sum_{j=0}^{K-1} \textcolor{blue}{\nabla f_{\pi(i)} \left( \rvx_k^i \right)}\\ 
	&=- \gamma \nabla \nabla f_{\pi(n)}(\rvx_0^0) \frac{1}{S} \sum_{i=0}^{v(n)-1} \sum_{j=0}^{K-1} \left( {\color{blue}\nabla f_{\pi(i)}\left( \rvx_0^0 \right)  - \gamma \nabla \nabla f_{\pi(i)}(\rvx_0^0)  \sum_{b=0}^{j-1} \nabla f_{\pi(i)} \left(\rvx_b^i\right)}  \right.\\
	&\hspace{15em}\left.{\color{blue}- \gamma \nabla \nabla f_{\pi(i)}(\rvx_0^0) \frac{1}{S} \sum_{a=0}^{v(i)-1} \sum_{b=0}^{K-1} \nabla f_{\pi(a)} \left( \rvx_b^a \right) + \gO\left( \gamma^2 K^2N^2 \frac{1}{S^2} \right) } \right)\\
	&= - \gamma \nabla \nabla f_{\pi(n)}(\rvx_0^0) \frac{1}{S} \sum_{i=0}^{v(n)-1} \sum_{j=0}^{K-1} \nabla f_{\pi(i)}\left( \rvx_0^0 \right) + \gO\left( \gamma^2 K^2 N \frac{1}{S} \right) + \gO\left( \gamma^2K^2 N^2\frac{1}{S^2} \right) + \gO\left( \gamma^3K^3 N^3\frac{1}{S^3} \right)\\
	&= - \gamma \nabla \nabla f_{\pi(n)}(\rvx_0^0) \frac{1}{S} \sum_{i=0}^{v(n)-1} \sum_{j=0}^{K-1} \nabla f_{\pi(i)}\left( \rvx_0^0 \right) +  \gO\left( \gamma^2K^2 N^2\frac{1}{S^2} \right).
\end{align*}
Substituting the two terms on the right hand side in Eq.~\eqref{eq:pf-def:order eror-FL} gives
\begin{align*}
	\nabla f_{\pi(n)}\left( \rvx_k^n\right) &= \nabla f_{\pi(n)}\left( \rvx_0^0 \right) - \gamma \nabla \nabla f_{\pi(n)}(\rvx_0^0) \sum_{j=0}^{k-1} \nabla f_{\pi(n)}\left( \rvx_0^0 \right)\\
	&\quad- \gamma \nabla \nabla f_{\pi(n)}(\rvx_0^0) \frac{1}{S} \sum_{i=0}^{v(n)-1} \sum_{j=0}^{K-1} \nabla f_{\pi(i)}\left( \rvx_0^0 \right) +  \gO\left( \gamma^2K^2 N^2\frac{1}{S^2} \right).
\end{align*}
At last, we get
\begin{align*}
	\rvx_{q+1} - \rvx_{q} &= -\gamma \frac{1}{S}\sum_{n=0}^{N-1} \sum_{k=0}^{K-1}  \nabla f_{\pi_q(n)}\left(\rvx_{q,k}^{n}\right)\\
	&= -\gamma \frac{1}{S}\sum_{n=0}^{N-1} \sum_{k=0}^{K-1} \nabla f_{\pi(n)}\left( \rvx_0^0 \right) \\
	&\quad+ \gamma^2 \frac{1}{S}\sum_{n=0}^{N-1} \sum_{k=0}^{K-1} \nabla \nabla f_{\pi(n)}(\rvx_0^0) \sum_{j=0}^{k-1} \nabla f_{\pi(n)}\left( \rvx_0^0 \right)\\ 
	&\quad+ \gamma^2 \frac{1}{S}\sum_{n=0}^{N-1} \sum_{k=0}^{K-1}\nabla \nabla f_{\pi(n)}(\rvx_0^0) \frac{1}{S} \sum_{i=0}^{v(n)-1} \sum_{j=0}^{K-1} \nabla f_{\pi(i)}\left( \rvx_0^0 \right) +  \gO\left( \gamma^3K^3 N^3\frac{1}{S^3} \right).
\end{align*}
After recovering the subscripts $q$ and noting $\rvx_{q,0}^{0} = \rvx_q$, we get Eq.~\eqref{eq:order error-FL-analysis}.
\end{proof}

\subsection{Parameter Deviation in FL}

We define the maximum parameter deviation (drift) of FL in any epoch $q$, $\Delta_q$ as
\begin{align*}
	\Delta_q = \max\left\{\max_{\substack{n\in \{0, \ldots, N-1\}\\k \in \{0,\ldots, K-1\}}}\norm{ \rvx_{q,k}^{n} - \rvx_q}_{p},  \norm{\tilde\rvx_{q+1} - \rvx_q}_{p}\right\}.
\end{align*}
Here 
\begin{align*}
	\tilde\rvx_{q+1} - \rvx_q = - \gamma\hphantom{\eta} \frac{1}{S}\sum_{n=0}^{N-1}\sum_{k=0}^{K-1}\nabla f_{\pi_q(n)}\left(\rvx_{q,k}^{n}\right)	.
\end{align*}
Note that, due to the amplified updates \citep{wang2022unified},
\begin{align*}
	\rvx_{q+1} - \rvx_q = - \gamma\textcolor{blue}{\eta} \frac{1}{S}\sum_{n=0}^{N-1}\sum_{k=0}^{K-1}\nabla f_{\pi_q(n)}\left(\rvx_{q,k}^{n}\right).
\end{align*}
Then, we get the relation
\begin{align*}
	\norm{\rvx_{q+1}-\rvx_q}_{p} = \eta \norm{\tilde \rvx_{q+1}-\rvx_q}_{p} \leq \eta \Delta_q .
\end{align*}
Besides, to avoid ambiguity, we let $\tilde\rvx_{q+1} = \rvx_{q,K}^{N-1} = \rvx_{q,0}^{N}$.

\begin{lemma}
	\label{lem:FL:parameter drift}
	We first prove that if $\gamma L_p KN \frac{1}{S} \leq \frac{1}{32}$, the maximum parameter drift in FL is bounded:
	\begin{align*}
		\Delta_q &\leq \frac{32}{31} \gamma K \frac{1}{S}\bar \varphi_q +  \frac{32}{31}\gamma K N\frac{1}{S}\norm{\nabla f(\rvx_{q})}_{p} +  \frac{32}{31}\gamma K\varsigma\,,\\
		\left( \Delta_q \right)^2&\leq 4\gamma^2 K^2 \frac{1}{S^2}\left(\bar \varphi_q\right)^2 + 4\gamma^2 K^2 N^2\frac{1}{S^2}\normsq{\nabla f(\rvx_{q})}_{p} + 4\gamma^2 K^2\varsigma^2\,.
	\end{align*}
\end{lemma}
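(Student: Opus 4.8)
The plan is to mimic the SGD parameter-deviation argument (Lemma~\ref{lem:parameter drift}), but now accounting for the two structural differences in FL: the local GD steps on each client and the averaging of pseudo-gradients over the $S$ clients in a round. Fix one epoch and drop the subscript $q$ for brevity. I would first write, for any client index $n\in\{0,\dots,N-1\}$ and any local step $k\in\{0,\dots,K-1\}$, the exact displacement from $\rvx_q=\rvx_{q,0}^0$ to $\rvx_{q,k}^n$: it consists of the completed rounds before $v(n)$ (each contributing an average of $K$ local gradients over $S$ clients), plus the $k$ local gradient steps already taken at client $n$ inside the current round. Concretely, $\rvx_{q,k}^n - \rvx_q = -\gamma\frac1S\sum_{i=0}^{v(n)-1}\sum_{j=0}^{K-1}\nabla f_{\pi_q(i)}(\rvx_{q,j}^i) - \gamma\sum_{j=0}^{k-1}\nabla f_{\pi_q(n)}(\rvx_{q,j}^n)$, and similarly for $\tilde\rvx_{q+1}-\rvx_q$ (the case $v(n)=N$, $k=0$).

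Next I would bound the norm of this displacement. The second sum (at most $K$ terms) is handled by the standard $\pm\nabla f_{\pi_q(n)}(\rvx_q)$ trick plus $L_p$-smoothness and Assumption~\ref{asm:local deviation}, giving $\gamma L_p K\Delta_q + \gamma K\varsigma + \gamma K\norm{\nabla f(\rvx_q)}_p$. For the first sum, inside the double sum I insert $\pm\nabla f_{\pi_q(i)}(\rvx_q)$ and use: (i) $L_p$-smoothness of each $f_{\pi_q(i)}$ to control $\norm{\nabla f_{\pi_q(i)}(\rvx_{q,j}^i)-\nabla f_{\pi_q(i)}(\rvx_q)}_p\le L_p\Delta_q$, summed over at most $N$ clients and $K$ steps and divided by $S$, yielding $\gamma L_p K N\frac1S\Delta_q$; (ii) the crucial structural point — the sum $\sum_{i=0}^{v(n)-1}(\nabla f_{\pi_q(i)}(\rvx_q)-\nabla f(\rvx_q))$ has norm at most $\bar\varphi_q$ by Definition~\ref{def:FL:order error}, since $v(n)$ is a multiple of $S$; this contributes $\gamma K\frac1S\bar\varphi_q$ after pulling out the $\frac1S\sum_{j=0}^{K-1}$; (iii) the leftover $\frac1S\sum_{i=0}^{v(n)-1}\sum_{j=0}^{K-1}\nabla f(\rvx_q)$ contributes $\gamma K N\frac1S\norm{\nabla f(\rvx_q)}_p$ (using $v(n)\le N$). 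Collecting, $\Delta_q \le \gamma L_p K N\frac1S\Delta_q + \gamma K\frac1S\bar\varphi_q + \gamma K N\frac1S\norm{\nabla f(\rvx_q)}_p + \gamma K\varsigma$, where I have absorbed the $\gamma L_p K\Delta_q$ term into the larger $\gamma L_p K N\frac1S\Delta_q$ (valid since $N\ge S$) and likewise merged the two $\norm{\nabla f(\rvx_q)}_p$ contributions. Since this bound holds for every $(n,k)$ and for $\tilde\rvx_{q+1}$, it holds for the maximum $\Delta_q$.

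Finally, under the hypothesis $\gamma L_p K N\frac1S\le\frac1{32}$ I rearrange to $\Delta_q\le\frac{32}{31}(\gamma K\frac1S\bar\varphi_q + \gamma K N\frac1S\norm{\nabla f(\rvx_q)}_p + \gamma K\varsigma)$, and then square, using $(a+b+c)^2\le 3(a^2+b^2+c^2)$ together with $(\frac{32}{31})^2\cdot 3 < 4$ to get the stated $\left(\Delta_q\right)^2\le 4\gamma^2 K^2\frac1{S^2}(\bar\varphi_q)^2 + 4\gamma^2 K^2 N^2\frac1{S^2}\norm{\nabla f(\rvx_q)}_p^2 + 4\gamma^2 K^2\varsigma^2$. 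Recovering the subscript $q$ finishes it. The main obstacle — really the only non-routine point — is step (ii): one must verify that the partial sums appearing in the displacement are always taken over a prefix whose length $v(i)$ (for every client $i < v(n)$) is a multiple of $S$, so that Definition~\ref{def:FL:order error}'s $\bar\varphi_q$ (which is a max over $\{S,2S,\dots,N\}$-length prefixes) genuinely dominates them; this is exactly why the order-error for FL is defined with the flooring $v(n)=\lfloor n/S\rfloor\cdot S$ rather than with arbitrary prefixes, and it relies on $N\bmod S=0$.
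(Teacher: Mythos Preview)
Your approach is essentially the same as the paper's, and the key structural observations---the displacement formula, the role of $\bar\varphi_q$ via the $v(n)$-length prefix in Definition~\ref{def:FL:order error}, the self-bounding recursion in $\Delta_q$---are all correct. There is, however, a constant-tracking slip in the combination step that prevents the stated coefficients from coming out.

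When you add the local-step bound $\gamma L_p K\Delta_q + \gamma K\varsigma + \gamma K\norm{\nabla f(\rvx_q)}_p$ to the completed-rounds bound with $v(n)\le N$, the coefficient in front of $\Delta_q$ is $\gamma L_p K(1+N/S)$, and likewise $\gamma K(1+N/S)$ in front of $\norm{\nabla f(\rvx_q)}_p$. ``Absorbing'' the smaller term into the larger does not make the sum equal to the larger term; at best you get $2\gamma L_p KN/S\,\Delta_q$, which under $\gamma L_p KN/S\le\frac{1}{32}$ rearranges to a prefactor $\frac{16}{15}$ rather than $\frac{32}{31}$, and the $\norm{\nabla f(\rvx_q)}_p$ coefficient doubles as well. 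The squared bound with constant $4$ then also fails.

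The paper closes this gap with a small case split: if $k=0$ the local-step sum is empty and one may take $v(n)\le N$; if $k>0$ then necessarily $n\le N-1$, hence $v(n)\le N-S$ (using $N\bmod S=0$), so the completed-rounds contribution is only $\gamma L_p K(N/S-1)\Delta_q$ and $\gamma K(N/S-1)\norm{\nabla f(\rvx_q)}_p$. Adding the two cases now gives exactly $\gamma L_p KN/S\,\Delta_q$ and $\gamma KN/S\,\norm{\nabla f(\rvx_q)}_p$, and the rest of your argument (rearranging, squaring with $3(\tfrac{32}{31})^2<4$) goes through with the stated constants.
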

\begin{proof}
	Let $v(n) = \floor{\frac{n}{S}}\cdot S$. Then,
	\begin{align*}
		\rvx_{q,k}^{n} - \rvx_q &= \rvx_{q,k}^{n} - \rvx_{q,0}^{n} + \underbrace{\rvx_{q,0}^{n} - \rvx_{q,0}^{(v(n))}}_{=0} + \rvx_{q,0}^{(v(n))} - \rvx_q\\
		&=-\gamma\sum_{j=0}^{k-1}\nabla f_{\pi_q(n)}\left (\rvx_{q,j}^{n} \right) - \gamma \frac{1}{S}\sum_{i=0}^{v(n)-1}\sum_{j=0}^{K-1}\nabla f_{\pi_q(i)} \left( \rvx_{q,j}^{i} \right).
	\end{align*}
	For any $q>0$ and all $n \in \{0,1,\ldots, N-1\}$ and $k \in \{0,1,\ldots, K-1\}$, it follows that
	\begin{align}
		\norm{\rvx_{q,k}^{n} - \rvx_{q}}_{p} &= \norm{\gamma\sum_{j=0}^{k-1}\nabla f_{\pi_{q}(n)}(\rvx_{q,j}^{n}) + \gamma \frac{1}{S}\sum_{i=0}^{v(n)-1}\sum_{j=0}^{K-1}\nabla f_{\pi_{q}(i)}(\rvx_{q,j}^{i})}_{p}\nonumber\\
		&\leq \norm{\gamma\sum_{j=0}^{k-1}\nabla f_{\pi_{q}(n)}(\rvx_{q,j}^{n})}_{p} + \norm{\gamma \frac{1}{S}\sum_{i=0}^{v(n)-1}\sum_{j=0}^{K-1}\nabla f_{\pi_{q}(i)}(\rvx_{q,j}^{i})}_{p}.\label{eq:pf-thm-FL:param drift}
	\end{align}
	Then, we bound the two terms on the right hand side in Ineq.~\eqref{eq:pf-thm-FL:param drift} respectively.
	\begin{align*}
		\Term{1}{\eqref{eq:pf-thm-FL:param drift}} &= \norm{\gamma\sum_{j=0}^{k-1}\nabla f_{\pi_{q}(n)}(\rvx_{q,j}^{n})}_{p} \\
		&= \gamma\norm{\sum_{j=0}^{k-1}\left(\nabla f_{\pi_{q}(n)}(\rvx_{q,j}^{n}) - \nabla f_{\pi_q(n)}(\rvx_{q}) + \nabla f_{\pi_q(n)}(\rvx_{q}) - \nabla f(\rvx_{q}) + \nabla f(\rvx_{q})\right)}_{p} \\
		&\leq \gamma\norm{\sum_{j=0}^{k-1}\left(\nabla f_{\pi_{q}(n)}(\rvx_{q,j}^{n}) - \nabla f_{\pi_{q}(n)}(\rvx_{q})\right)}_{p} + \gamma\norm{\sum_{j=0}^{k-1}\left(\nabla f_{\pi_{q}(n)}(\rvx_{q})- \nabla f(\rvx_{q})\right)}_{\infty} +\gamma\norm{\sum_{j=0}^{k-1}\left(\nabla f(\rvx_{q})\right)}_{p} \\
		&\leq \gamma\sum_{j=0}^{k-1}\norm{\nabla f_{\pi_{q}(n)}(\rvx_{q,j}^{n}) - \nabla f_{\pi_{q}(n)}(\rvx_{q})}_{p} + \gamma\sum_{j=0}^{k-1}\norm{\nabla f_{\pi_{q}(n)}(\rvx_{q})- \nabla f(\rvx_{q})}_{p} +\gamma \sum_{j=0}^{k-1}\norm{\nabla f(\rvx_{q})}_{p} \\
		&\leq \gamma L_{p}\sum_{j=0}^{k-1}\norm{\rvx_{q,j}^{n} - \rvx_{q}}_{p} + \gamma\sum_{j=0}^{k-1}\varsigma +\gamma \sum_{j=0}^{k-1}\norm{\nabla f(\rvx_{q})}_{p} \\
		&\leq \gamma L_{p}K\Delta_q + \gamma K\varsigma +\gamma K \norm{\nabla f(\rvx_{q})}_{p},
	\end{align*}
	\begin{align*}
		\Term{2}{\eqref{eq:pf-thm-FL:param drift}} &= \norm{\gamma \frac{1}{S}\sum_{i=0}^{v(n)-1}\sum_{j=0}^{K-1}\nabla f_{\pi_{q}(i)}(\rvx_{q,j}^{i})}_{p}\\
		&= \gamma \frac{1}{S}\norm{\sum_{i=0}^{v(n)-1}\sum_{j=0}^{K-1}\left( \nabla f_{\pi_{q}(i)}(\rvx_{q,j}^{i}) - \nabla f_{\pi_{q}(i)}(\rvx_{q}) + \nabla f_{\pi_{q}(i)}(\rvx_{q}) - \nabla f(\rvx_{q}) + \nabla f(\rvx_{q}) \right)}_{p}\\
		&\leq \gamma \frac{1}{S}\norm{\sum_{i=0}^{v(n)-1}\sum_{j=0}^{K-1}\left( \nabla f_{\pi_{q}(i)}(\rvx_{q,j}^{i}) - \nabla f_{\pi_{q}(i)}(\rvx_{q})\right)}_{p} + \gamma \frac{1}{S}\norm{\sum_{i=0}^{v(n)-1}\sum_{j=0}^{K-1}\left( \nabla f_{\pi_{q}(i)}(\rvx_{q}) - \nabla f(\rvx_{q}) \right)}_{p} \\
		&\quad+ \gamma \frac{1}{S}\norm{\sum_{i=0}^{v(n)-1}\sum_{j=0}^{K-1}\left( \nabla f(\rvx_{q}) \right)}_{p}\\
		&\leq \gamma \frac{1}{S}\sum_{i=0}^{v(n)-1}\sum_{j=0}^{K-1}\norm{ \nabla f_{\pi_{q}(i)}(\rvx_{q,j}^{i}) - \nabla f_{\pi_{q}(i)}(\rvx_{q})}_{p} + \gamma K \frac{1}{S}\varphi_q^{v(n)}+ \gamma \frac{1}{S}\sum_{i=0}^{v(n)-1}\sum_{j=0}^{K-1}\norm{\nabla f(\rvx_{q})}_{p}\\
		&\leq \gamma L_p\frac{1}{S}\sum_{i=0}^{v(n)-1}\sum_{j=0}^{K-1}\norm{ \rvx_{q,j}^{i} - \rvx_{q}}_{p} + \gamma K \frac{1}{S}\varphi_q^{v(n)}+ \gamma  K\left( v(n) \right)\frac{1}{S}\norm{\nabla f(\rvx_{q})}_{p}\\
		&\leq \gamma L_p K \left( v(n) \right) \frac{1}{S} \Delta_q + \gamma K \frac{1}{S}\bar \varphi_q + \gamma  K\left( v(n) \right)\frac{1}{S}\norm{\nabla f(\rvx_{q})}_{p}.
	\end{align*}
	Next, we return to the upper bound of $\norm{\rvx_{q,k}^{n} - \rvx_{q}}_{p}$ for any $n, k$ such that $nK + k \leq NK$. If $k=0$, then $v(n)\leq N$ and the first term on the right hand in Ineq.~\eqref{eq:pf-thm-FL:param drift} equals zero, so we get
	\begin{align*}
		\norm{\rvx_{q,k}^{n} - \rvx_{q}}_{p} \leq \gamma L_p KN \frac{1}{S} \Delta_q + \gamma K \frac{1}{S}\bar \varphi_q + \gamma  KN\frac{1}{S}\norm{\nabla f(\rvx_{q})}_{p}.
	\end{align*}
	If $k>0$, then $v(n)\leq N-S$, so we get
	\begin{align*}
		\norm{\rvx_{q,k}^{n} - \rvx_{q}}_{p} &\leq \gamma L_{p}K\Delta_q + \gamma K\varsigma +\gamma K \norm{\nabla f(\rvx_{q})}_{p} + \gamma L_p K \left( v(n) \right) \frac{1}{S} \Delta_q + \gamma K \frac{1}{S}\bar \varphi_q + \gamma K\left( v(n) \right)\frac{1}{S}\norm{\nabla f(\rvx_{q})}_{p}\\
		&\leq \gamma L_{p}K\Delta_q + \gamma K\varsigma +\gamma K \norm{\nabla f(\rvx_{q})}_{p} + \gamma L_p K\left( N-S \right) \frac{1}{S} \Delta_q + \gamma K \frac{1}{S}\bar \varphi_q + \gamma  K\left( N-S \right)\frac{1}{S}\norm{\nabla f(\rvx_{q})}_{p}\\
		&\leq \gamma L_{p}K\Delta_q + \gamma K\varsigma +\gamma K \norm{\nabla f(\rvx_{q})}_{p} + \gamma L_p K\left( \frac{N}{S}-1 \right) \Delta_q + \gamma K \frac{1}{S}\bar \varphi_q + \gamma  K\left( \frac{N}{S}-1 \right)\norm{\nabla f(\rvx_{q})}_{p}\\
		&\leq \gamma L_{p}KN \frac{1}{S}\Delta_q +\gamma K \frac{1}{S}\bar \varphi_q + \gamma K N\frac{1}{S}\norm{\nabla f(\rvx_{q})}_{p} + \gamma K\varsigma \,.
	\end{align*}
	Therefore, for any $n, k$ such that $nK + k \leq NK$, we get
	\begin{align*}
		\Delta_q = \max_{n,k} \norm{\rvx_{q,k}^{n} - \rvx_{q}}_{p} \leq \gamma L_{p}KN \frac{1}{S}\Delta_q +\gamma K \frac{1}{S}\bar \varphi_q + \gamma K N\frac{1}{S}\norm{\nabla f(\rvx_{q})}_{p} + \gamma K\varsigma \,.
	\end{align*}
	Then, if $\gamma L_p KN \frac{1}{S} \leq \frac{1}{32}$, we get
	\begin{align*}
		&\Delta_q  \leq \gamma L_{p}KN \frac{1}{S}\Delta_q +\gamma K \frac{1}{S}\bar \varphi_q + \gamma K N\frac{1}{S}\norm{\nabla f(\rvx_{q})}_{p} + \gamma K\varsigma \\
		\implies &\left( 1- \gamma L_{p}KN \frac{1}{S}\right)\Delta_q \leq \gamma K \frac{1}{S}\bar \varphi_q + \gamma K N\frac{1}{S}\norm{\nabla f(\rvx_{q})}_{p} + \gamma K\varsigma \\
		\implies &\Delta_q \leq \frac{32}{31} \gamma K \frac{1}{S}\bar \varphi_q +  \frac{32}{31}\gamma K N\frac{1}{S}\norm{\nabla f(\rvx_{q})}_{p} +  \frac{32}{31}\gamma K\varsigma\,.
	\end{align*}
	It also implies that
	\begin{align*}
		\left( \Delta_q \right)^2\leq 4\gamma^2 K^2 \frac{1}{S^2}\left(\bar \varphi_q\right)^2 + 4\gamma^2 K^2 N^2\frac{1}{S^2}\normsq{\nabla f(\rvx_{q})}_{p} + 4\gamma^2 K^2\varsigma^2 .
	\end{align*}
\end{proof}

\subsection{Proof of Theorem~\ref{thm:FL}}

\begin{proof}[Proof of Theorem~\ref{thm:FL}]
	For FL with FL-AP (Algorithm~\ref{alg:FL}), the cumulative updates over any epoch $q$ are
	\begin{align*}
		\rvx_{q+1} - \rvx_{q} = - \gamma \eta \frac{1}{S}\sum_{n=0}^{N-1} \sum_{k=0}^{K-1} \nabla f_{\pi_q(n)}\left (\rvx_{q,k}^{n} \right).
	\end{align*}
	Since the global objective function $f$ is $L$-smooth, it follows that
	\begin{align*}
		f(\rvx_{q+1}) - f(\rvx_{q}) \leq \innerprod{\nabla f(\rvx_q), \rvx_{q+1} - \rvx_q} + \frac{1}{2}L \normsq{\rvx_{q+1} - \rvx_q}.
	\end{align*}
	After substituting $\rvx_{q+1} - \rvx_q$, we have	
	\begin{align*}
		&\innerprod{\nabla f(\rvx_q), \rvx_{q+1} - \rvx_q} \\
		&= -\gamma\eta \frac{1}{S} KN \left[\innerprod{\nabla f(\rvx_q), \frac{1}{N}\sum_{n=0}^{N-1}\frac{1}{K}\sum_{k=0}^{K-1} \nabla f_{\pi_q(n)}(\rvx_{q,k}^{n}) }\right]\\
		&= -\frac{1}{2}\gamma\eta \frac{1}{S} KN  \left[ \normsq{\nabla f(\rvx_q)} + \normsq{\frac{1}{N}\sum_{n=0}^{N-1}\frac{1}{K}\sum_{k=0}^{K-1} \nabla f_{\pi_q(n)}(\rvx_{q,k}^{n})} -  \normsq{\frac{1}{N}\sum_{n=0}^{N-1}\frac{1}{K}\sum_{k=0}^{K-1} \nabla f_{\pi_q(n)}(\rvx_{q,k}^{n}) - \nabla f(\rvx_q)} \right],
	\end{align*}
	where the second equality is due to $2 \langle \rvx,\rvy\rangle =  \norm{\rvx} + \norm{\rvy} - \norm{\rvx-\rvy}$.
	\begin{align*}
		\frac{1}{2}L \E\normsq{\rvx_{q+1} - \rvx_q}&= \frac{1}{2}L\normsq{ \gamma \eta \frac{1}{S}\sum_{n=0}^{N-1} \sum_{k=0}^{K-1} \nabla f_{\pi_q(n)}(\rvx_{q,k}^{n})}\\
		&= \frac{1}{2}\gamma^2\eta^2L\frac{1}{S^2} K^2 N^2 \normsq{\frac{1}{N}\sum_{n=0}^{N-1} \frac{1}{K}\sum_{k=0}^{K-1} \nabla f_{\pi_q(n)}(\rvx_{q,k}^{n})}.
	\end{align*}
	Next, plugging back, and using $\textcolor{blue}{\gamma \eta L  K N \frac{1}{S}\leq 1}$, we get
	\begin{align*}
		f(\rvx_{q+1}) - f(\rvx_{q})
		&\leq -\frac{1}{2}\gamma\eta \frac{1}{S} KN\normsq{\nabla f(\rvx_q)} + \frac{1}{2}\gamma\eta \frac{1}{S} KN \normsq{\frac{1}{N}\sum_{n=0}^{N-1}\frac{1}{K}\sum_{k=0}^{K-1} \nabla f_{\pi_q(n)}(\rvx_{q,k}^{n}) - \nabla f(\rvx_q)}\\
		&\leq -\frac{1}{2}\gamma\eta \frac{1}{S} KN \normsq{\nabla f(\rvx_q)} + \frac{1}{2}\gamma\eta L_{2,p}^2 \frac{1}{S}\sum_{n=0}^{N-1}\sum_{k=0}^{K-1}\normsq{ \rvx_{q,k}^{n} - \rvx_q}_{p}\\
		&\leq -\frac{1}{2}\gamma\eta \frac{1}{S} KN \normsq{\nabla f(\rvx_q)} + \frac{1}{2}\gamma\eta L_{2,p}^2 KN\frac{1}{S}\left( \Delta_q \right)^2,
	\end{align*}
	where the second inequality is because $f_{\pi_q(n)}$ is $L_{2,p}$ smooth for all $n$. Substituting $\Delta_q$ with Lemma~\ref{lem:FL:parameter drift} gives
	\begin{align*}
		f(\rvx_{q+1}) - f(\rvx_{q})
		&\leq -\frac{1}{2}\gamma\eta KN\frac{1}{S} \normsq{\nabla f(\rvx_q)} + \frac{1}{2}\gamma\eta L_{2,p}^2 KN\frac{1}{S}\left( \Delta_q \right)^2\\
		&\leq -\frac{1}{2}\gamma\eta KN\frac{1}{S} \normsq{\nabla f(\rvx_q)} + 2\gamma\eta L_{2,p}^2 KN\frac{1}{S}\left( \gamma^2 K^2 \frac{1}{S^2}\left(\bar \varphi_q\right)^2 + \gamma^2 K^2 N^2\frac{1}{S^2}\normsq{\nabla f(\rvx_{q})}_{p} + \gamma^2 K^2\varsigma^2 \right) \\
		&\leq -\gamma \eta KN\frac{1}{S}\left( \frac{1}{2} - 2\gamma^2L_{2,p}^2K^2N^2\frac{1}{S^2}\right)\normsq{\nabla f(\rvx_q)} + 2\gamma^3\eta L_{2,p}^2 K^3N\frac{1}{S^3}\left(\bar \varphi_q\right)^2  + 2\gamma^3\eta L_{2,p}^2 K^3N\frac{1}{S} \varsigma^2\\
		&\leq -\frac{255}{512}\gamma \eta KN\frac{1}{S}\normsq{\nabla f(\rvx_q)} + 2\gamma^3\eta L_{2,p}^2 K^3N\frac{1}{S^3}\left(\bar \varphi_q\right)^2  + 2\gamma^3\eta L_{2,p}^2 K^3N\frac{1}{S} \varsigma^2,
	\end{align*}
	where the last inequality is due to $\textcolor{blue}{\gamma L_{2,p} KN \frac{1}{S} \leq \frac{1}{32}}$, and $\norm{\rvx}_p \leq \norm{\rvx}$ for $p\geq 2$. Then,
	\begin{align*}
		&f(\rvx_{q+1}) - f(\rvx_q) \leq  -\frac{255}{512}\gamma \eta KN\frac{1}{S}\normsq{\nabla f(\rvx_q)} + 2\gamma^3\eta L_{2,p}^2 K^3N\frac{1}{S^3}\left(\bar \varphi_q\right)^2  + 2\gamma^3\eta L_{2,p}^2 K^3N\frac{1}{S} \varsigma^2\\
		\implies &\frac{1}{Q}\sum_{q=0}^{Q-1} \left( f(\rvx_{q+1}) - f(\rvx_q) \right) \leq -\frac{255}{512}\gamma \eta KN\frac{1}{S}\frac{1}{Q}\sum_{q=0}^{Q-1}\normsq{\nabla f(\rvx_q)} + 2\gamma^3\eta L_{2,p}^2 K^3N\frac{1}{S^3} \frac{1}{Q}\sum_{q=0}^{Q-1}\left(\bar \varphi_q\right)^2  + 2\gamma^3\eta L_{2,p}^2 K^3N\frac{1}{S} \varsigma^2\\
		\implies &\frac{1}{\gamma \eta KN \frac{1}{S}Q} \left( f(\rvx_{Q}) - f(\rvx_0) \right) \leq -\frac{255}{512} \frac{1}{Q}\sum_{q=0}^{Q-1}\normsq{\nabla f(\rvx_q)} + 2\gamma^2 L_{2,p}^2 K^2\frac{1}{S^2}\frac{1}{Q}\sum_{q=0}^{Q-1} \left(\bar \varphi_{q}\right)^2 + 2\gamma^2 L_{2,p}^2 K^2 \varsigma^2\,.
	\end{align*}

	Then, we use Assumption~\ref{asm:FL:order error}. The following steps are identical to those in Theorem~\ref{thm:FL}. Recall that $A_i= 0$ and $B_i=0$ for $i> \nu$ in this theorem. We can write it as
	\begin{align*}
		\left( \bar \varphi_{q}\right)^2
		&\leq A_{1} \left(\bar \varphi_{q-1}\right)^2 + A_{2} \left(\bar \varphi_{q-2}\right)^2 + \cdots + A_{\nu} \left(\bar \varphi_{q-\nu}\right)^2\\
		&\quad+ B_{0} \normsq{\nabla f(\rvx_q)} + B_{1} \normsq{\nabla f(\rvx_{q-1})}_{p} + \cdots + B_{\nu} \normsq{\nabla f(\rvx_{q-\nu})}_{p} + D\,.
	\end{align*}
	Then,
	\begin{align*}
		&\left( \bar \varphi_{q}\right)^2
		\leq A_{1} \left(\bar \varphi_{q-1}\right)^2 + A_{2} \left(\bar \varphi_{q-2}\right)^2 + \cdots + A_{\nu} \left(\bar \varphi_{q-\nu}\right)^2\\
		&\qquad\qquad+ B_{0} \normsq{\nabla f(\rvx_q)} + B_{1} \normsq{\nabla f(\rvx_{q-1})} + \cdots + B_{\nu} \normsq{\nabla f(\rvx_{q-\nu})} + D\\
		\implies
		&\sum_{q=\nu}^{Q-1}\left( \bar \varphi_{q}\right)^2
		\leq A_{1} \sum_{q=\nu}^{Q-1}\left(\bar \varphi_{q-1}\right)^2 + A_{2} \sum_{q=\nu}^{Q-1}\left(\bar \varphi_{q-2}\right)^2 + \cdots + A_{\nu} \sum_{q=\nu}^{Q-1}\left(\bar \varphi_{q-\nu}\right)^2\\
		&\qquad\qquad+ B_{0} \sum_{q=\nu}^{Q-1}\normsq{\nabla f(\rvx_q)} + B_{1} \sum_{q=\nu}^{Q-1}\normsq{\nabla f(\rvx_{q-1})} + \cdots + B_{\nu} \sum_{q=\nu}^{Q-1}\normsq{\nabla f(\rvx_{q-\nu})} + \sum_{q=\nu}^{Q-1} D\\
		\implies
		&\sum_{q=0}^{Q-1}\left( \bar \varphi_{q}\right)^2
		\leq \sum_{i=0}^{\nu-1}\left( \bar \varphi_{i}\right)^2 + A_{1} \sum_{q=0}^{Q-1}\left(\bar \varphi_q\right)^2 + A_{2} \sum_{q=0}^{Q-1}\left(\bar \varphi_{q}\right)^2 + \cdots + A_{\nu} \sum_{q=0}^{Q-1}\left(\bar \varphi_{q}\right)^2\\
		&\qquad\qquad+ B_{0} \sum_{q=0}^{Q-1}\normsq{\nabla f(\rvx_q)} + B_{1} \sum_{q=0}^{Q-1}\normsq{\nabla f(\rvx_{q})} + \cdots + B_{\nu} \sum_{q=0}^{Q-1}\normsq{\nabla f(\rvx_{q})} + \sum_{q=0}^{Q-1} D\\
		\implies
		&\left( 1- \sum_{i=1}^{\nu} A_i \right)\frac{1}{Q}\sum_{q=0}^{Q-1}\left( \bar \varphi_{q}\right)^2 \leq \frac{1}{Q}\sum_{i=0}^{\nu-1}\left( \bar \varphi_{i}\right)^2 + \left( \sum_{i=0}^{\nu} B_i \right)\frac{1}{Q}\sum_{q=0}^{Q-1}\normsq{\nabla f(\rvx_{q})} + D\,.
	\end{align*}
	Then, we get
	\begin{align*}
		\frac{ f(\rvx_{Q}) - f(\rvx_0) }{\gamma \eta KN\frac{1}{S}Q}  &\leq -\frac{255}{512} \frac{1}{Q}\sum_{q=0}^{Q-1}\normsq{\nabla f(\rvx_q)} + 2\gamma^2 L_{2,p}^2 K^2\frac{1}{S^2}\frac{1}{Q}\sum_{q=0}^{Q-1} \left(\bar \varphi_{q}\right)^2 + 2\gamma^2 L_{2,p}^2 K^2 \varsigma^2\\
		&\leq -\frac{255}{512} \frac{1}{Q}\sum_{q=0}^{Q-1}\normsq{\nabla f(\rvx_q)} + 2\gamma^2 L_{2,p}^2 K^2 \varsigma^2 \\
		&\quad+ \frac{2\gamma^2 L_{2,p}^2 K^2\frac{1}{S^2}}{\left( 1- \sum_{i=1}^{\nu} A_i \right)} \left( \frac{1}{Q}\sum_{i=0}^{\nu-1}\left( \bar \varphi_{i}\right)^2  + \left( \sum_{i=0}^{\nu} B_i \right)\frac{1}{Q}\sum_{q=0}^{Q-1}\normsq{\nabla f(\rvx_{q})} + D \right).
	\end{align*}
	To ensue that $\frac{255}{512} - \frac{2\gamma^2 L_{2,p}^2K^2 \frac{1}{S^2}\sum_{i=0}^{\nu} B_i }{1-\sum_{i=1}^{\nu} A_i} >0$, considering that $\textcolor{blue}{\gamma L_{2,p} KN \frac{1}{S} \leq \frac{1}{32}}$, we can use a stricter condition $ \frac{255}{512} - \frac{\sum_{i=0}^{\nu} B_i }{512N^2\left(1-\sum_{i=1}^{\nu} A_i\right)} >0$. Thus, if $\frac{255}{512} - \frac{\sum_{i=0}^{\nu} B_i }{512N^2\left(1-\sum_{i=1}^{\nu} A_i\right)} >0$,
	\begin{align*}
		\frac{1}{Q}\sum_{q=0}^{Q-1}\normsq{\nabla f(\rvx_q)} \leq c_1 \cdot \frac{ f(\rvx_{0}) - f(\rvx_Q) }{\gamma \eta KN\frac{1}{S}Q} + c_2 \cdot \gamma^2 L_{2,p}^2 K^2\frac{1}{S^2}  \frac{1}{Q}\sum_{i=0}^{\nu-1}\left( \bar \varphi_{i}\right)^2 + 2c_1\cdot \gamma^2 L_{2,p}^2 K^2 \varsigma^2 +  c_2\cdot\gamma^2 L_{2,p}^2 K^2\frac{1}{S^2} D.
	\end{align*}
	where $c_1$ and $c_2$ are numerical constants such that $c_1 \geq \nicefrac{1}{\left( \frac{255}{512} - \frac{\sum_{i=0}^{\nu} B_i }{512N^2\left(1-\sum_{i=1}^{\nu} A_i\right)}\right)}$ and $c_2 \geq \left(\frac{2}{1-\sum_{i=1}^{\nu} A_i}\right)\cdot c_1$. Let $F_0 = f(\rvx_0) -f_\ast$.
	\begin{align*}
		\min_{q\in \{0,1,\ldots,Q-1\}} \normsq{\nabla f(\rvx_q)} &\leq \frac{1}{Q}\sum_{q=0}^{Q-1}\normsq{\nabla f(\rvx_q)} \\
		&\leq c_1 \cdot \frac{ F_0 }{\gamma \eta KN\frac{1}{S}Q} + c_2 \cdot \gamma^2 L_{2,p}^2 K^2\frac{1}{S^2}  \frac{1}{Q}\sum_{i=0}^{\nu-1}\left( \bar \varphi_{i}\right)^2 + 2c_1\cdot \gamma^2 L_{2,p}^2 K^2 \varsigma^2 +  c_2\cdot\gamma^2 L_{2,p}^2 K^2\frac{1}{S^2} D,
	\end{align*}
	where the last inequality is due to $f(\rvx_{0}) - f(\rvx_Q) \leq f(\rvx_0) -f_\ast = F_0 $.
	
	At last, we summarize the constraints on the step sizes $\gamma$ and $\eta$ (they are marked in blue),
	\begin{align*}
		\gamma L_{2,p} KN \frac{1}{S} \leq \frac{1}{32},\\
		\gamma \eta L  K N \frac{1}{S}\leq 1,\\
		\gamma L_p KN \frac{1}{S} \leq \frac{1}{32}.
	\end{align*}
	Thus, a tighter constraint $\gamma \leq \min \left\{ \frac{1}{32L_{2,p} KN\frac{1}{S}}, \frac{1}{\eta L KN\frac{1}{S}}, \frac{1}{32L_p KN\frac{1}{S}} \right\}$ is used in Theorem~\ref{thm:FL}.
\end{proof}

\section{Special Cases in FL}
\label{apx-sec:FL:special cases}

In this section, we provide proofs of the examples of FL.

\subsection{FL-AP}

\begin{example}[FL-AP]
	\label{ex:FL-AP}
	For FL-AP, all the permutations $\{\pi_q\}$ in Algorithm~\ref{alg:FL} are generated arbitrarily. Under Assumption~\ref{asm:local deviation}, Assumption~\ref{asm:FL:order error} holds as
	\begin{align*}
		\left(\bar\varphi_{q}\right)^2 \leq N^2\varsigma^2\,.
	\end{align*}
	Applying Theorem~\ref{thm:FL}, we get
	\begin{align*}
		\min_{q\in \{0,1\ldots, Q-1\}}\normsq{ \nabla f(\rvx_q) } 
		&= \gO\left(  \frac{ F_0 }{\gamma \eta KN\frac{1}{S}Q} +\gamma^2 L^2 K^2 \varsigma^2+ \gamma^2 L^2K^2N^2\frac{1}{S^2}\varsigma^2 \right).
	\end{align*}
	If we set $\eta=1$ and tune the step size, the upper bound becomes $\gO\left( \frac{L F_0}{Q} + \left(\frac{LF_0 S\varsigma }{ NQ}\right)^{\frac{2}{3}} + \left(\frac{LF_0 N \varsigma }{ NQ}\right)^{\frac{2}{3}}  \right)$.
\end{example}

\begin{proof}
	For any epoch $q$,
	\begin{align*}
		\left(\bar\varphi_{q}\right)^2 = \max_{n \in [N]}\norm{\sum_{i=0}^{v(n)-1} \left( \nabla f_{\pi_{q}(i)}(\rvx_{q}) - \nabla f(\rvx_{q})  \right) }^2 \leq N^2\varsigma^2.
	\end{align*}
	
	In this case, for Assumption~\ref{asm:FL:order error}, $p=2$, $A_1 =\cdots=A_q=0$, $B_0 =B_1=\cdots=B_q=0$ and $D = N^2\varsigma^2$. Then, we verify that
	\begin{align*}
		\frac{255}{512} - \frac{1}{512N^2}\cdot \frac{\sum_{i=0}^{\nu} B_i }{1-\sum_{i=1}^{\nu} A_i} = \frac{255}{512} >0\,.
	\end{align*}
	Thus, we can set $c_1 = 3$ and $c_2 = 6$ for Theorem~\ref{thm:FL}. In addition, for Theorem~\ref{thm:FL}, $\nu=0$. These lead to the upper bound,
	\begin{align*}
		\min_{q\in \{0,1\ldots, Q-1\}}\normsq{ \nabla f(\rvx_q) } 
		&= \gO\left(  \frac{ F_0 }{\gamma \eta KN\frac{1}{S}Q} +\gamma^2 L^2 K^2 \varsigma^2+ \gamma^2 L^2K^2N^2\frac{1}{S^2}\varsigma^2 \right),
	\end{align*}
	where $F_0 = f(\rvx_0) - f_\ast$ and $L = L_{2,p} = L_{p}$ when $p=2$.
	
	Next, we summarize the constraints on the step size:
	\begin{align*}
		\gamma &\leq \min \left\{ \frac{1}{\eta L KN\frac{1}{S}}, \frac{1}{32L_{2,p} KN\frac{1}{S}}, \frac{1}{32L_p KN\frac{1}{S}} \right\}.
	\end{align*}
	It is from Theorem~\ref{thm:FL}. After we use the effective step size $\tilde \gamma \coloneqq \gamma \eta K N \frac{1}{S} $, the constraint becomes
	\begin{align*}
		\tilde \gamma &\leq \min \left\{ \frac{1}{ L }, \frac{\eta}{32L_{2,p}}, \frac{\eta}{32L_p} \right\},
	\end{align*}
	and the upper bound becomes
	\begin{align*}
		\min_{q\in \{0,1\ldots, Q-1\}}\normsq{ \nabla f(\rvx_q) } 
		&= \gO\left(  \frac{ F_0 }{\tilde\gamma Q} +\tilde\gamma^2 L^2 \frac{1}{\eta^2N^2\frac{1}{S^2}} \varsigma^2+ \tilde\gamma^2 L^2 \frac{1}{\eta^2N^2}N^2\varsigma^2 \right).
	\end{align*}
	Applying Lemma~\ref{lem:step-size}, we get
	\begin{align*}
		\min_{q\in \{0,1\ldots, Q-1\}}\normsq{ \nabla f(\rvx_q) } &=
		\gO\left( \frac{\left( 1+\eta\right)L F_0}{\eta Q} + \left(\frac{LF_0 S\varsigma }{\eta NQ}\right)^{\frac{2}{3}} + \left(\frac{LF_0 N \varsigma }{\eta NQ}\right)^{\frac{2}{3}}  \right).
	\end{align*} 
	For comparison with other algorithms, we set $\eta = 1$, and get
	\begin{align*}
		\min_{q\in \{0,1\ldots, Q-1\}}\normsq{ \nabla f(\rvx_q) } =
		\gO\left( \frac{L F_0}{Q} + \left(\frac{LF_0 S\varsigma }{ NQ}\right)^{\frac{2}{3}} + \left(\frac{LF_0 N \varsigma }{ NQ}\right)^{\frac{2}{3}}  \right).
	\end{align*}
\end{proof}

\subsection{FL-RR}

\begin{example}[FL-RR]
	\label{ex:FL-RR}
	For FL-RR, all the permutations $\{\pi_q\}$ in Algorithm~\ref{alg:FL} are generated independently and randomly. Under Assumption~\ref{asm:local deviation}, Assumption~\ref{asm:FL:order error} holds with probability at least $1-\delta$:
	\begin{align*}
		\left(\bar\varphi_{q}\right)^2 &= \max_{n \in [N]} \norm{\sum_{i=0}^{v(n)-1} \left( \nabla f_{\pi_{q}(i)}(\rvx_{q}) - \nabla f(\rvx_{q})  \right) }^2 \leq 4N \varsigma^2 \log^2 \left( \frac{8}{\delta}\right).
	\end{align*}
	Applying Theorem~\ref{thm:FL}, we get that, with probability at least $1-Q\delta$,
	\begin{align*}
		\min_{q\in \{0,1\ldots, Q-1\}}\normsq{ \nabla f(\rvx_q) } 
		&= \gO\left(  \frac{ F_0 }{\gamma \eta KN\frac{1}{S}Q} +\gamma^2 L^2 K^2 \varsigma^2+ \gamma^2 L^2K^2N\frac{1}{S^2}\varsigma^2\log^2 \left( \frac{8}{\delta}\right)  \right).
	\end{align*}
	If we set $\eta=1$ and tune the step size, the upper bound becomes $\tilde\gO\left( \frac{L F_0}{Q} + \left(\frac{LF_0 S\varsigma }{ NQ}\right)^{\frac{2}{3}} + \left(\frac{LF_0 \sqrt{N} \varsigma }{ NQ}\right)^{\frac{2}{3}}  \right)$.
\end{example}

\begin{proof}[Proof of Example~\ref{ex:FL-RR}]
	Since the permutations $\{\pi_q\}$ are independent across different epochs, for any $q$, we get that, with probability at least $1-\delta$,
	\begin{align*}
		\left(\bar\varphi_{q}\right)^2 &= \max_{n \in [N]} \norm{\sum_{i=0}^{v(n)-1} \left( \nabla f_{\pi_{q}(i)}(\rvx_{q}) - \nabla f(\rvx_{q})  \right) }^2 \leq 4N \varsigma^2 \log^2 \left( \frac{8}{\delta}\right),
	\end{align*}
	where the last inequality is due to \citet{yu2023high}'s Proposition~2.3.
	
	In this case, for Assumption~\ref{asm:FL:order error}, $p=2$, $A_1 =\cdots=A_q=0$, $B_0 =B_1=\cdots=B_q=0$ and $D = 4N \varsigma^2 \log^2 \left( \frac{8}{\delta}\right)$. Then, we verify that
	\begin{align*}
		\frac{255}{512} - \frac{1}{512N^2}\cdot \frac{\sum_{i=0}^{\nu} B_i }{1-\sum_{i=1}^{\nu} A_i} = \frac{255}{512} >0.
	\end{align*}
	Thus, we can set $c_1 = 3$ and $c_2 = 6$ for Theorem~\ref{thm:FL}. In addition, for Theorem~\ref{thm:FL}, $\nu=0$. These lead to the upper bound,
	\begin{align*}
		\min_{q\in \{0,1\ldots, Q-1\}}\normsq{ \nabla f(\rvx_q) } 
		&= \gO\left(  \frac{ F_0 }{\gamma \eta KN\frac{1}{S}Q} +\gamma^2 L^2 K^2 \varsigma^2+ \gamma^2 L^2K^2N\frac{1}{S^2}\varsigma^2\log^2 \left( \frac{8}{\delta}\right)  \right),
	\end{align*}
	where $F_0 = f(\rvx_0) - f_\ast$ and $L = L_{2,p} = L_{p}$ when $p=2$. Since \citet{yu2023high}'s Proposition~2.3 is used for each epoch (that is, for $Q$ times), so by the union bound, the preceding bound holds with probability at least $1-Q\delta$.
	
	Next, we summarize the constraints on the step size:
	\begin{align*}
		\gamma &\leq \min \left\{ \frac{1}{\eta L KN\frac{1}{S}}, \frac{1}{32L_{2,p} KN\frac{1}{S}}, \frac{1}{32L_p KN\frac{1}{S}} \right\}.
	\end{align*}
	It is from Theorem~\ref{thm:FL}. After we use the effective step size $\tilde \gamma \coloneqq \gamma \eta K N \frac{1}{S} $, the constraint becomes	\begin{align*}
		\tilde \gamma &\leq \min \left\{ \frac{1}{ L }, \frac{\eta}{32L_{2,p}}, \frac{\eta}{32L_p} \right\},
	\end{align*}
	and the upper bound becomes
	\begin{align*}
		\min_{q\in \{0,1\ldots, Q-1\}}\normsq{ \nabla f(\rvx_q) } 
		&= \tilde \gO\left(  \frac{ F_0 }{\tilde\gamma Q} +\tilde\gamma^2 L^2 \frac{1}{\eta^2N^2\frac{1}{S^2}} \varsigma^2+ \tilde\gamma^2 L^2 \frac{1}{\eta^2N^2}N\varsigma^2 \right).
	\end{align*}
	Applying Lemma~\ref{lem:step-size}, we get
	\begin{align*}
		\min_{q\in \{0,1\ldots, Q-1\}}\normsq{ \nabla f(\rvx_q) } &=
		\tilde \gO\left( \frac{\left( 1+\eta\right)L F_0}{\eta Q} + \left(\frac{LF_0 S\varsigma }{\eta NQ}\right)^{\frac{2}{3}} + \left(\frac{LF_0 \sqrt{N} \varsigma }{\eta NQ}\right)^{\frac{2}{3}}  \right).
	\end{align*} 
	For comparison with other algorithms, we set $\eta = 1$, and get
	\begin{align*}
		\min_{q\in \{0,1\ldots, Q-1\}}\normsq{ \nabla f(\rvx_q) } =
		\tilde\gO\left( \frac{L F_0}{Q} + \left(\frac{LF_0 S\varsigma }{ NQ}\right)^{\frac{2}{3}} + \left(\frac{LF_0 \sqrt{N} \varsigma }{ NQ}\right)^{\frac{2}{3}}  \right).
	\end{align*}
\end{proof}

\subsection{FL-OP}

\begin{example}[FL-OP]
	\label{ex:FL-OP}
	For FL-OP, in Algorithm~\ref{alg:FL}, the first-epoch permutation $\pi_0$ is generated arbitrarily/randomly/elaborately; the subsequent permutations are the same as the first-epoch permutation: $\pi_q = \pi_0$ for any $q\geq 1$.
	Let $\{f_n\}$ be $L$-smooth and Assumptions~\ref{asm:local deviation},~\ref{asm:parameter deviation} hold. Then, Assumption~\ref{asm:FL:order error} holds as
	\begin{align*}
		\left(\bar\varphi_{q}\right)^2 \leq 8 L^2 N^2 \theta^2 + 2 \left( \bar\varphi_0 \right)^2 \,.
	\end{align*}
	Applying Theorem~\ref{thm:FL}, we get
	\begin{align*}
		\min_{q\in \{0,1\ldots, Q-1\}}\normsq{ \nabla f(\rvx_q) } 
		&= \gO\left(  \frac{ F_0 }{\gamma \eta KN\frac{1}{S}Q} +\gamma^2 L^2 K^2 \varsigma^2+ \gamma^2 L^2K^2\frac{1}{S^2}\left( \bar\varphi_0 \right)^2+\gamma^2 L^4K^2N^2\frac{1}{S^2} \theta^2 \right).
	\end{align*}
	If we set $\eta =1$ and tune the step size, then the upper bound becomes $\gO\left( \frac{L F_0}{Q} + \left(\frac{LF_0 S\varsigma }{ NQ}\right)^{\frac{2}{3}} + \left(\frac{LF_0 \bar\varphi_0+L^2F_0 N\theta }{ NQ}\right)^{\frac{2}{3}}  \right)$. Furthermore, if $\theta \lesssim \frac{\bar \varphi_0}{LN}$, it becomes $\gO\left( \frac{L F_0}{Q} + \left(\frac{LF_0 S\varsigma }{ NQ}\right)^{\frac{2}{3}} + \left(\frac{LF_0 \bar\varphi_0 }{ NQ}\right)^{\frac{2}{3}}  \right)$.
	\begin{itemize}
		\item If the initial permutation is arbitrary (it implies $\bar\varphi_0 = \gO\left( N \varsigma \right)$), then the bound will be $\gO\left( \frac{L F_0}{Q} + \left(\frac{LF_0 S\varsigma }{ NQ}\right)^{\frac{2}{3}} + \left(\frac{LF_0 N\sigma }{ NQ}\right)^{\frac{2}{3}}  \right)$.
		\item If the initial permutation is random (it implies $\bar\varphi_0 = \tilde\gO( \sqrt{N} \varsigma )$), then the bound will be $\tilde\gO\left( \frac{L F_0}{Q} + \left(\frac{LF_0 S\varsigma }{ NQ}\right)^{\frac{2}{3}} + \left(\frac{LF_0 \sqrt{N}\sigma }{ NQ}\right)^{\frac{2}{3}}  \right)$. It holds with probability at least $1-\delta$.
		\item If the initial permutation is produced meticulously (it implies $\bar\varphi_0 = \tilde\gO\left( \varsigma \right)$), then the bound will be $\tilde \gO\left( \frac{L F_0}{Q} + \left(\frac{LF_0 S\varsigma }{ NQ}\right)^{\frac{2}{3}} + \left(\frac{LF_0 \varsigma }{ NQ}\right)^{\frac{2}{3}}  \right)$.
	\end{itemize}
\end{example}

\begin{proof}[Proof of Example~\ref{ex:FL-OP}]
	We replace the notation $v(n)$ for $n\in [N]$ with $m$ for $m\in \{S,2S, \ldots, N\}$ to avoid ambiguity. For any $q\geq 1$ and $m\in \{S,2S, \ldots, N\}$,
	\begin{align*}
		\varphi_{q}^m &= \norm{ \sum_{i=0}^{m-1} \left( \nabla f_{\pi_{q}(i)}(\rvx_{q}) - \nabla f(\rvx_{q})  \right) }  \\
		&= \norm{ \sum_{i=0}^{m-1} \left( \nabla f_{\pi_{q}(i)}(\rvx_{q}) - \nabla f(\rvx_{q})  \right) - \left( \nabla f_{\pi_{q}(i)}(\rvx_{0}) - \nabla f(\rvx_{0})  \right) + \left( \nabla f_{\pi_{q}(i)}(\rvx_{0}) - \nabla f(\rvx_{0})  \right) }\\
		&\leq \norm{ \sum_{i=0}^{m-1} \left( \nabla f_{\pi_{q}(i)}(\rvx_{q}) - \nabla f_{\pi_{q}(i)}(\rvx_{0})  \right)  } + \norm{ \sum_{i=0}^{m-1} \left( \nabla f(\rvx_{q}) - \nabla f(\rvx_{0}) \right) } + \norm{\sum_{i=0}^{m-1} \left( \nabla f_{\pi_{q}(i)}(\rvx_{0}) - \nabla f(\rvx_{0})  \right) }\\
		&\leq \sum_{i=0}^{m-1}\norm{ \nabla f_{\pi_{q}(i)}(\rvx_{q}) - \nabla f_{\pi_{q}(i)}(\rvx_{0}) } + \sum_{i=0}^{m-1}\norm{ \nabla f(\rvx_{q}) - \nabla f(\rvx_{0}) } + \norm{\sum_{i=0}^{m-1} \left( \nabla f_{\pi_{q}(i)}(\rvx_{0}) - \nabla f(\rvx_{0})  \right) }\\
		&\leq 2Lm\norm{\rvx_{q} - \rvx_{0}} + \norm{\sum_{i=0}^{m-1} \left( \nabla f_{\pi_{q}(i)}(\rvx_{0}) - \nabla f(\rvx_{0})  \right) }\\
		&\leq 2LN\theta + \bar \varphi_0\,,
	\end{align*}
	where we use the fact that the permutations are exactly the same, $\pi_q = \pi_0$ for $q\geq 1$ in this case. Since the preceding inequality holds for all $m\in \{S,2S, \ldots, N\}$, we have
	\begin{align*}
		\bar\varphi_{q} \leq 2LN \theta + \bar\varphi_0
		\implies & \left( \bar\varphi_{q}\right)^2 \leq 2 \cdot \left(2LN \theta\right)^2 + 2\cdot \left( \bar\varphi_0 \right)^2 = 8 L^2 N^2 \theta^2 + 2 \left( \bar\varphi_0 \right)^2
	\end{align*}
	
	In this case, for Assumption~\ref{asm:FL:order error}, $p=2$, $A_1 =\cdots=A_q=0$, $B_0 =B_1=\cdots=B_q=0$ and $D = 8 L^2 N^2 \theta^2 + 2 \left( \bar\varphi_0 \right)^2$. Then, we verify that
	\begin{align*}
		\frac{255}{512} - \frac{1}{512N^2}\cdot \frac{\sum_{i=0}^{\nu} B_i }{1-\sum_{i=1}^{\nu} A_i} = \frac{255}{512} >0\,.
	\end{align*}
	Thus, we can set $c_1 = 3$ and $c_2 = 6$ for Theorem~\ref{thm:FL}. In addition, for Theorem~\ref{thm:FL}, $\nu=0$. These lead to the upper bound
	\begin{align*}
		\min_{q\in \{0,1\ldots, Q-1\}}\normsq{ \nabla f(\rvx_q) } 
		&= \gO\left(  \frac{ F_0 }{\gamma \eta KN\frac{1}{S}Q} +\gamma^2 L^2 K^2 \varsigma^2+ \gamma^2 L^2K^2\frac{1}{S^2}\left( \bar\varphi_0 \right)^2+\gamma^2 L^4K^2N^2\frac{1}{S^2} \theta^2 \right),
	\end{align*}
	where $F_0 = f(\rvx_0) - f_\ast$ and $L = L_{2,p} = L_{p}$ when $p=2$.
	
	Next, we summarize the constraints on the step size:
	\begin{align*}
		\gamma &\leq \min \left\{ \frac{1}{\eta L KN\frac{1}{S}}, \frac{1}{32L_{2,p} KN\frac{1}{S}}, \frac{1}{32L_p KN\frac{1}{S}} \right\}.
	\end{align*}
	It is from Theorem~\ref{thm:FL}. After we use the effective step size $\tilde \gamma \coloneqq \gamma \eta K N \frac{1}{S} $, the constraint becomes
	\begin{align*}
		\tilde \gamma &\leq \min \left\{ \frac{1}{ L }, \frac{\eta}{32L_{2,p}}, \frac{\eta}{32L_p} \right\},
	\end{align*}
	and the upper bound becomes
	\begin{align*}
		\min_{q\in \{0,1\ldots, Q-1\}}\normsq{ \nabla f(\rvx_q) } 
		&= \gO\left(  \frac{ F_0 }{\tilde\gamma Q} +\tilde\gamma^2 L^2 \frac{1}{\eta^2N^2\frac{1}{S^2}} \varsigma^2+ \tilde\gamma^2 L^2 \frac{1}{\eta^2N^2}\left( \varphi_0\right)^2 + \tilde\gamma^2 L^4 \frac{1}{\eta^2N^2}N^2\theta^2 \right).
	\end{align*}
	Applying Lemma~\ref{lem:step-size}, we get
	\begin{align*}
		\min_{q\in \{0,1\ldots, Q-1\}}\normsq{ \nabla f(\rvx_q) } &=
		\gO\left( \frac{\left( 1+\eta\right)L F_0}{\eta Q} + \left(\frac{LF_0 S\varsigma }{\eta NQ}\right)^{\frac{2}{3}} + \left(\frac{LF_0 \bar\varphi_0 + L^2F_0 N\theta}{\eta NQ}\right)^{\frac{2}{3}} \right) .
	\end{align*} 
	For comparison with other algorithms, we set $\eta = 1$, and get
	\begin{align*}
		\min_{q\in \{0,1\ldots, Q-1\}}\normsq{ \nabla f(\rvx_q) } =
		\gO\left( \frac{L F_0}{Q} + \left(\frac{LF_0 S\varsigma }{ NQ}\right)^{\frac{2}{3}} + \left(\frac{LF_0 \bar\varphi_0+L^2F_0 N\theta }{ NQ}\right)^{\frac{2}{3}}  \right).
	\end{align*}
	Furthermore, if $\theta \lesssim \frac{\bar\varphi_0}{LN}$, then
	\begin{align*}
		\min_{q\in \{0,1\ldots, Q-1\}}\normsq{ \nabla f(\rvx_q) } =
		\gO\left( \frac{L F_0}{Q} + \left(\frac{LF_0 S\varsigma }{ NQ}\right)^{\frac{2}{3}} + \left(\frac{LF_0 \bar\varphi_0 }{ NQ}\right)^{\frac{2}{3}}  \right).
	\end{align*}
	
	Next, let us deal with $\bar\varphi_0$, depending on the initial permutation.
	\begin{itemize}
		\item If the initial permutation $\pi_0$ is generated arbitrarily, we get
		\begin{align*}
			\left(\bar\varphi_{0}\right)^2 = \max_{m \in \{S,2S,\ldots, N\}} \norm{\sum_{i=0}^{m-1} \left( \nabla f_{\pi_{0}(i)}(\rvx_{0}) - \nabla f(\rvx_{0})  \right) }^2 \leq \max_{m \in \{S,2S,\ldots, N\}} \left( m^2 \varsigma^2 \right) = N^2\varsigma^2\,.
		\end{align*}
		Then,
		\begin{align*}
			\min_{q\in \{0,1\ldots, Q-1\}}\normsq{ \nabla f(\rvx_q) } =
			\gO\left( \frac{L F_0}{Q} + \left(\frac{LF_0 S\varsigma }{ NQ}\right)^{\frac{2}{3}} + \left(\frac{LF_0 N\varsigma }{ NQ}\right)^{\frac{2}{3}}  \right).
		\end{align*}
		\item If the initial permutation $\pi_0$ is generated randomly, we get that with probability at least $1-\delta$,
		\begin{align*}
			\left(\bar\varphi_{0}\right)^2 = \max_{m \in \{S,2S,\ldots, N\}} \norm{\sum_{i=0}^{m-1} \left( \nabla f_{\pi_{0}(i)}(\rvx_{0}) - \nabla f(\rvx_{0})  \right) }^2 \leq 4N \varsigma^2 \log^2 \left( \frac{8}{\delta}\right),
		\end{align*}
		where the last inequality is due to \citet{yu2023high}'s Proposition~2.3.
		Then,
		\begin{align*}
			\min_{q\in \{0,1\ldots, Q-1\}}\normsq{ \nabla f(\rvx_q) } =
			\tilde \gO\left( \frac{L F_0}{Q} + \left(\frac{LF_0 S\varsigma }{ NQ}\right)^{\frac{2}{3}} + \left(\frac{LF_0 \sqrt{N}\varsigma }{ NQ}\right)^{\frac{2}{3}}  \right).
		\end{align*}
		It holds with probability at least $1-\delta$, because \citet{yu2023high}'s Proposition~2.3 is only used for the initial epoch.
		\item If the initial permutation $\pi_0$ is a nice permutation such that $\bar\varphi_0 = \tilde \gO\left( \varsigma^2 \right)$,
		\begin{align*}
			\min_{q\in \{0,1\ldots, Q-1\}}\normsq{ \nabla f(\rvx_q) } =
			\tilde\gO\left( \frac{L F_0}{Q} + \left(\frac{LF_0 S\varsigma }{ NQ}\right)^{\frac{2}{3}} + \left(\frac{LF_0 \varsigma }{ NQ}\right)^{\frac{2}{3}}  \right).
		\end{align*}
		In fact, we can generate such a nice permutation by GraBs \citep[Section 6. Ablation Study: are good permutations fixed?]{lu2022grab}.
	\end{itemize}
\end{proof}

\subsection{Prototype of FL-GraB}
\label{apx-subsec:balancing-prototype}

Prototype of FL-GraB: Use \texttt{PairBR} (Algorithm~\ref{alg:PairBR}) as the \texttt{Permute} function in Algorithm~\ref{alg:FL}, with the inputs of $\pi_q$, $\{ \nabla f_{\pi_q(n)}(\rvx_q) \}_{n=0}^{N-1}$ and $\nabla f(\rvx_q)$, for each epoch $q$.

Thus, the key idea of our proof is as follows:
\begin{align*}
	\bar \varphi_{q+1} &\rightarrow \max_{m \in \{S,2S,\ldots,N\}} \norm{\sum_{i=0}^{m-1} \left( \nabla f_{\pi_{q+1}(i)}(\rvx_{q}) - \nabla f(\rvx_{q})  \right) }_{\infty}  \overset{\text{Lemma~\ref{lem:FL:pair-balancing-reordering}}}{\rightarrow}
	\bar \varphi_{q}\,.
\end{align*}

\begin{example}[Prototype of FL-GraB]
	\label{ex:FL-GraB-proto}
	Let $\{f_n\}$ be $L_{\infty}$-smooth and Assumption~\ref{asm:local deviation} hold. Assume that $N \mod S =0$ and $S\mod 2=0$. Then, if $\gamma \leq \frac{1}{32 \eta L_{\infty} KN \frac{1}{S}}$, Assumption~\ref{asm:FL:order error} holds with probability at least $1-\delta$:
	\begin{align*}
		\left(\bar \varphi_{q}\right)^2 \leq \frac{3}{4}\left( \bar \varphi_{q-1} \right)^2 + \frac{1}{40} N^2 \normsq{\nabla f(\rvx_{q-1})}_{} +\frac{1}{40}S^2 \varsigma^2 + 6 C^2 \varsigma^2\,,
	\end{align*}
	where $C = \gO \left(\log\left(\frac{d N}{\delta}\right) \right) = \tilde\gO\left( 1\right)$. Applying Theorem~\ref{thm:FL} (with a tighter constraint $\gamma \leq \min \left\{ \frac{1}{\eta L KN\frac{1}{S}}, \frac{1}{32L_{2,\infty} KN\frac{1}{S}}, \frac{1}{32(1+\eta)L_\infty KN\frac{1}{S}} \right\}$), we get that, with probability at least $1-Q\delta$,
	\begin{align*}
		\min_{q\in \{0,1\ldots, Q-1\}}\normsq{ \nabla f(\rvx_q) } 
		&= \tilde\gO\left(  \frac{ F_0 }{\tilde\gamma Q} + \tilde\gamma^2 L_{2,\infty}^2 \frac{1}{\eta^2}\frac{1}{Q} \varsigma^2 +\tilde\gamma^2 L_{2,\infty}^2 \frac{1}{\eta^2N^2\frac{1}{S^2}} \varsigma^2+ \tilde\gamma^2 L_{2,\infty}^2 \frac{1}{\eta^2N^2}C^2\varsigma^2 \right).
	\end{align*}
	If we set $\eta =1$ and tune the step size, the upper bound becomes $\gO\left( \frac{\tilde L  F_0+\left (L_{2,\infty}F_0 \varsigma \right)^{\frac{2}{3}}}{Q} + \left(\frac{L_{2,\infty}F_0 S\varsigma }{ NQ}\right)^{\frac{2}{3}} + \left(\frac{L_{2,\infty}F_0 C \varsigma }{ NQ}\right)^{\frac{2}{3}} \right)$, where $\tilde L = L+L_{2,\infty}+L_{\infty}$.
\end{example}

\begin{proof}
	We need to find the relation between $\varphi_{q+1}$ and $\varphi_{q}$. We replace the notation $v(n)$ for $n\in [N]$ with $m$ for $m\in \{S,2S, \ldots, N\}$ to avoid ambiguity. Furthermore, unless otherwise stated, the notation $\max_m$ means $\max_{m\{S,2S,\ldots, N\}}$. For any $m\in \{S,2S, \ldots, N\}$,
	\begin{align*}
		\varphi_{q+1}^m &= \norm{ \sum_{i=0}^{m-1} \left( \nabla f_{\pi_{q+1}(i)}(\rvx_{q+1}) - \nabla f(\rvx_{q+1})  \right) }_{\infty}  \\
		&= \norm{ \sum_{i=0}^{m-1} \left( \nabla f_{\pi_{q+1}(i)}(\rvx_{q+1}) - \nabla f(\rvx_{q+1})  \right) - \left( \nabla f_{\pi_{q+1}(i)}(\rvx_{q}) - \nabla f(\rvx_{q})  \right) + \left( \nabla f_{\pi_{q+1}(i)}(\rvx_{q}) - \nabla f(\rvx_{q})  \right) }_{\infty}\\
		&\leq \norm{ \sum_{i=0}^{m-1} \left( \nabla f_{\pi_{q+1}(i)}(\rvx_{q+1}) - \nabla f_{\pi_{q+1}(i)}(\rvx_{q})  \right)  }_{\infty} + \norm{ \sum_{i=0}^{m-1} \left( \nabla f(\rvx_{q+1}) - \nabla f(\rvx_{q}) \right) }_{\infty} + \norm{\sum_{i=0}^{m-1} \left( \nabla f_{\pi_{q+1}(i)}(\rvx_{q}) - \nabla f(\rvx_{q})  \right) }_{\infty}\\
		&\leq \sum_{i=0}^{m-1}\norm{ \nabla f_{\pi_{q+1}(i)}(\rvx_{q+1}) - \nabla f_{\pi_{q+1}(i)}(\rvx_{q}) }_{\infty} + \sum_{i=0}^{m-1}\norm{ \nabla f(\rvx_{q+1}) - \nabla f(\rvx_{q}) }_{\infty} + \norm{\sum_{i=0}^{m-1} \left( \nabla f_{\pi_{q+1}(i)}(\rvx_{q}) - \nabla f(\rvx_{q})  \right) }_{\infty}\\
		&\leq 2L_{\infty}m\norm{\rvx_{q+1} - \rvx_{q}}_{\infty} + \norm{\sum_{i=0}^{m-1} \left( \nabla f_{\pi_{q+1}(i)}(\rvx_{q}) - \nabla f(\rvx_{q})  \right) }_{\infty}.
	\end{align*}
	Since the above inequality holds for all $m \in \{S,2S, \ldots, N\}$, we have
	\begin{align*}
		\bar \varphi_{q+1} \leq 2L_{\infty}N\norm{\rvx_{q+1} - \rvx_{q}}_{\infty} + \max_{m \in \{S,2S,\ldots,N\}} \norm{\sum_{i=0}^{m-1} \left( \nabla f_{\pi_{q+1}(i)}(\rvx_{q}) - \nabla f(\rvx_{q})  \right) }_{\infty}.
	\end{align*}
	In this example, since
	\begin{align*}
		&\norm{\nabla f_i (\rvx_q) - f(\rvx_q)} \leq \varsigma\,,\quad \forall i \in \{0,1,\ldots,N-1\},\\
		&\norm{ \sum_{i=0}^{N-1} \left( \nabla f_i (\rvx_q) - f(\rvx_q)\right) }_{\infty} = 0\,,
	\end{align*}
	we apply Lemma~\ref{lem:FL:pair-balancing-reordering} with $a = \varsigma$ and $b = 0$,
	\begin{align*}
		\max_{m \in \{S,2S,\ldots,N\}} \norm{\sum_{i=0}^{m-1} \left( \nabla f_{\pi_{q+1}(i)}(\rvx_{q}) - \nabla f(\rvx_{q})  \right) } &\leq \frac{1}{2} \max_{m \in \{S,2S,\ldots,N\}} \norm{\sum_{i=0}^{m-1} \left( \nabla f_{\pi_{q}(i)}(\rvx_{q}) - \nabla f(\rvx_{q})  \right) } +  C \varsigma\\
		&= \frac{1}{2} \bar \varphi_q + C\varsigma\,.
	\end{align*}
	Using Lemma~\ref{lem:FL:parameter drift} that $\Delta_q \leq \frac{32}{31} \gamma K \frac{1}{S}\bar \varphi_q + \frac{32}{31}\gamma K N\frac{1}{S}\norm{\nabla f(\rvx_{q})}_{\infty} + \frac{32}{31}\gamma K\varsigma $, we get	
	\begin{align*}
		\bar \varphi_{q+1} &\leq 2L_{\infty}N\norm{\rvx_{q+1} - \rvx_{q}}_{\infty} + \max_{m \in \{S,2S,\ldots,N\}}  \norm{\sum_{i=0}^{m-1} \left( \nabla f_{\pi_{q+1}(i)}(\rvx_{q}) - \nabla f(\rvx_{q})  \right) }_{\infty}\\
		&\leq 2\eta L_{\infty}N \Delta_q + \max_{m \in \{S,2S,\ldots,N\}}  \norm{\sum_{i=0}^{m-1} \left( \nabla f_{\pi_{q+1}(i)}(\rvx_{q}) - \nabla f(\rvx_{q})  \right) }_{\infty}\\
		&\leq \frac{64}{31}\eta L_{\infty}N  \left(\gamma K \frac{1}{S}\bar \varphi_q + \gamma K N\frac{1}{S}\norm{\nabla f(\rvx_{q})}_{\infty} + \gamma K\varsigma \right) + \left( \frac{1}{2} \bar \varphi_q + C\varsigma\right)\\
		&\leq \left(\frac{1}{2}+\frac{64}{31}\gamma \eta L_{\infty} KN\frac{1}{S}\right) \bar\varphi_q + \frac{64}{31} \gamma \eta L_{\infty}KN^2\frac{1}{S} \norm{\nabla f(\rvx_q)}_{\infty} + \frac{64}{31}\gamma \eta L_{\infty} K N \varsigma + C\varsigma\\
		&\leq \frac{35}{62} \bar\varphi_q + \frac{2}{31} N \norm{\nabla f(\rvx_q)}_{\infty} + \frac{2}{31}S \varsigma + C\varsigma\,,
	\end{align*}
	where the last inequality is due to $\textcolor{blue}{\gamma \eta L_{\infty} KN \frac{1}{S} \leq \frac{1}{32}}$. Then, using $\norm{\rvx}_p \leq \norm{\rvx}_2$ for $p>2$, we can get
	\begin{align*}
		\left(\bar \varphi_{q+1}\right)^2
		&\leq \left( \frac{35}{62}\bar \varphi_q + \frac{2}{31}N \norm{\nabla f(\rvx_q)}_{} + \frac{2}{31}S \varsigma + C\varsigma \right)^2 \\
		&\leq 2\cdot \left( \frac{35}{62} \bar \varphi_q \right)^2 + 6 \cdot \left( \frac{2}{31}N \norm{\nabla f(\rvx_q)}_{} \right)^2 + 6 \cdot \left( \frac{2}{31}S \varsigma \right)^2 + 6 \cdot \left( C\varsigma \right)^2\\
		&\leq \frac{3}{4}\left( \bar \varphi_q \right)^2 + \frac{1}{40} N^2 \normsq{\nabla f(\rvx_q)}_{} +\frac{1}{40}S^2 \varsigma^2 + 6 C^2 \varsigma^2\,.
	\end{align*}
	So the relation between $\bar \varphi_q$ and $\bar \varphi_{q-1}$ is 
	\begin{align*}
		\left(\bar \varphi_{q}\right)^2 \leq \frac{3}{4}\left( \bar \varphi_{q-1} \right)^2 + \frac{1}{40} N^2 \normsq{\nabla f(\rvx_{q-1})}_{} +\frac{1}{40}S^2 \varsigma^2 + 6 C^2 \varsigma^2.
	\end{align*}
	for $q\geq 1$. Besides, we need to get the bound of $\left(\bar\varphi_{0}\right)^2$:
	\begin{align*}
		\left(\bar\varphi_{0}\right)^2 = \max_{m \in \{S,2S,\ldots,N\} } \norm{\sum_{i=0}^{m-1} \left( \nabla f_{\pi_{0}(i)}(\rvx_{0}) - \nabla f(\rvx_{0})  \right) }^2 \leq N^2 \varsigma^2.
	\end{align*}

	In this case, for Assumption~\ref{asm:FL:order error}, $p=\infty$, $A = \left[\frac{3}{4},0, \ldots,0\right]$, $B = \left[0,\frac{1}{40}N^2,0,\ldots, 0\right]$ and $D = \frac{1}{40}S^2\varsigma^2 + 6C^2 \varsigma^2$. Then, we verify that
	\begin{align*}
		\frac{255}{512} - \frac{1}{512N^2}\cdot \frac{\sum_{i=0}^{\nu} B_i }{1-\sum_{i=1}^{\nu} A_i} \geq \frac{255}{512} - 2\cdot 4 \cdot \left( \frac{1}{32}\right)^2 \cdot \frac{1}{40} \geq \frac{254}{512}>0\,.
	\end{align*}
	Thus, we can set $c_1 = 3$ and $c_2 = 24$ for Theorem~\ref{thm:FL}. In addition, for Theorem~\ref{thm:FL}, $\nu=1$ and $\left(\bar\varphi_{0}\right)^2 \leq N^2 \varsigma^2$. These lead to the upper bound,
	\begin{align*}
		\min_{q\in \{0,1\ldots, Q-1\}}\normsq{ \nabla f(\rvx_q) } 
		&= \gO\left(  \frac{ F_0 }{\gamma \eta KN\frac{1}{S}Q} + \gamma^2 L_{2,\infty}^2 K^2N^2\frac{1}{S^2}\frac{1}{Q} \varsigma^2 +\gamma^2 L_{2,\infty}^2 K^2 \varsigma^2+ \gamma^2 L_{2,\infty}^2K^2C^2\frac{1}{S^2}\varsigma^2 \right).
	\end{align*}
	where $F_0 = f(\rvx_0) - f_\ast$. Since Lemma~\ref{lem:FL:pair-balancing-reordering} is used for each epoch (that is, for $Q$ times), so by the union bound, the preceding bound holds with probability at least $1-Q\delta$.
	
	Next, we summarize the constraints on the step size:
	\begin{align*}
		\gamma &\leq \min \left\{ \frac{1}{\eta L KN\frac{1}{S}}, \frac{1}{32L_{2,\infty} KN\frac{1}{S}}, \frac{1}{32L_\infty KN\frac{1}{S}} \right\},\\
		\gamma  &\leq \frac{1}{32 \eta L_{\infty} KN \frac{1}{S}}.
	\end{align*}
	The first one is from Theorem~\ref{thm:FL} and the others are from the derivation of the relation. Then, a tighter constraint will be
	\begin{align*}
		\gamma &\leq \min \left\{ \frac{1}{\eta L KN\frac{1}{S}}, \frac{1}{32L_{2,\infty} KN\frac{1}{S}}, \frac{1}{32(1+\eta)L_\infty KN\frac{1}{S}} \right\}.
	\end{align*}
	After we use the effective step size $\tilde \gamma \coloneqq \gamma \eta K N \frac{1}{S} $, the constraint becomes
	\begin{align*}
		\gamma &\leq \min \left\{ \frac{1}{ L }, \frac{\eta}{32L_{2,\infty} }, \frac{\eta}{32(1+\eta)L_\infty } \right\},
	\end{align*}
	and the upper bound becomes
	\begin{align*}
		\min_{q\in \{0,1\ldots, Q-1\}}\normsq{ \nabla f(\rvx_q) } 
		&= \gO\left(  \frac{ F_0 }{\tilde\gamma Q} + \tilde\gamma^2 L_{2,\infty}^2 \frac{1}{\eta^2}\frac{1}{Q} \varsigma^2 +\tilde\gamma^2 L_{2,\infty}^2 \frac{1}{\eta^2N^2\frac{1}{S^2}} \varsigma^2+ \tilde\gamma^2 L_{2,\infty}^2 \frac{1}{\eta^2N^2}C^2\varsigma^2 \right).
	\end{align*}
	Applying Lemma~\ref{lem:step-size}, we get
	\begin{align*}
		&\min_{q\in \{0,1\ldots, Q-1\}}\normsq{ \nabla f(\rvx_q) } \\&=
		\gO\left( \frac{\left( \eta L+L_{2,\infty} + (1+\eta)L_{\infty}\right)  F_0}{\eta Q}  + \frac{ (L_{2,\infty}F_0 \varsigma)^{\frac{2}{3}} }{\eta^{\frac{2}{3}}Q} + \left(\frac{L_{2,\infty}F_0 S\varsigma }{\eta NQ}\right)^{\frac{2}{3}} + \left(\frac{L_{2,\infty}F_0 C \varsigma }{\eta NQ}\right)^{\frac{2}{3}}  \right).
	\end{align*} 
	For comparison with other algorithms, we set $\eta = 1$, and get
	\begin{align*}
		&\min_{q\in \{0,1\ldots, Q-1\}}\normsq{ \nabla f(\rvx_q) } \\&=
		\gO\left( \frac{\left( L +L_{2,\infty}+ L_{\infty}\right)  F_0}{Q}  + \frac{ (L_{2,\infty}F_0 \varsigma)^{\frac{2}{3}} }{Q} + \left(\frac{L_{2,\infty}F_0 S\varsigma }{ NQ}\right)^{\frac{2}{3}} + \left(\frac{L_{2,\infty}F_0 C \varsigma }{ NQ}\right)^{\frac{2}{3}}  \right).
	\end{align*}
\end{proof}

\subsection{FL-GraB}
\label{apx-subsec:balancing}

\begin{example}[FL-GraB]
	\label{ex:FL-GraB}
	Let each $f_n$ be $L_{2,\infty}$-smooth and $L_{\infty}$-smooth, and Assumption~\ref{asm:local deviation} hold. Assume that $N \mod S =0$ and $S\mod 2=0$. Then, if $\gamma \leq \min\{ \frac{1}{128L_{2, \infty}K C\frac{1}{S}}, \frac{1}{128(1+\eta) L_{\infty}K N\frac{1}{S}} \}$, Assumption~\ref{asm:FL:order error} holds with probability at least $1-\delta$:
	\begin{align*}
			\left(\bar \varphi_{q}\right)^2 \leq \frac{3}{5}\left( \bar \varphi_{q-1} \right)^2 + \frac{1}{96}N^2 \normsq{\nabla f(\rvx_{q-1})}_{} + \frac{1}{96}S^2\varsigma^2 + 6C^2 \varsigma^2,
		\end{align*}
	where $C = \gO \left(\log\left(\frac{d N}{\delta}\right) \right) = \tilde\gO\left( 1\right)$. Applying Theorem~\ref{thm:FL} (with a tighter constraint $\gamma \leq \min \left\{ \frac{1}{\eta L KN\frac{1}{S}}, \frac{1}{128L_{2,\infty} K(N+C)\frac{1}{S}}, \frac{1}{128(1+\eta)L_\infty KN\frac{1}{S}} \right\}$), we get that, with probability at least $1-Q\delta$,
	\begin{align*}
			\min_{q\in \{0,1\ldots, Q-1\}} \normsq{ \nabla f(\rvx_q) } 
			&= \gO\left(  \frac{ F_0 }{\gamma \eta KN\frac{1}{S}Q} + \gamma^2 L_{2,\infty}^2 K^2N^2\frac{1}{S^2}\frac{1}{Q} \varsigma^2 +\gamma^2 L_{2,\infty}^2 K^2 \varsigma^2+ \gamma^2 L_{2,\infty}^2K^2C^2\frac{1}{S^2}\varsigma^2 \right).
		\end{align*}
	After we set $\eta =1$ and tune the step size, the upper bound becomes $\gO\left( \frac{\tilde L  F_0+\left (L_{2,\infty}F_0 \varsigma \right)^{\frac{2}{3}}}{Q} + \left(\frac{L_{2,\infty}F_0 S\varsigma }{ NQ}\right)^{\frac{2}{3}} + \left(\frac{L_{2,\infty}F_0 C \varsigma }{ NQ}\right)^{\frac{2}{3}} \right)$ where $\tilde L = L+L_{2,\infty}\left(1+\frac{C}{N}\right)+L_{\infty}$.
\end{example}

FL-GraB. Use \texttt{PairBR} (Algorithm~\ref{alg:PairBR}) as the \texttt{Permute} function in Algorithm~\ref{alg:FL}, with the inputs of $\pi_q$, $\{ \rvp_q^n \}_{n=0}^{N-1}$ and $\frac{1}{N}\sum_{n=0}^{N-1}\rvp_q^n$, for each epoch $q$.

Thus, the key idea of our proof is as follows:
\begin{align*}
	\bar \varphi_{q+1} &\rightarrow \max_{m \in \{S,2S,\ldots, N\}} \norm{ \sum_{i=0}^{m-1} \left( \sum_{j=0}^{K-1} \nabla f_{\pi_q(i)}\left(\rvx_{q,j}^i \right) - \frac{1}{N} \sum_{l=0}^{N-1} \sum_{j=0}^{K-1} \nabla f_{\pi_{q}(l)} \left( \rvx_{q,j}^l\right) \right) }_{\infty}   \\
	&\overset{\text{Lemma~\ref{lem:FL:pair-balancing-reordering}}}{\rightarrow} \max_{m \in \{S,2S,\ldots, N\}}\norm{ \sum_{i=0}^{m-1} \left( \sum_{j=0}^{K-1} \nabla f_{\pi_{q+1}(i)}\left(\rvx_{q,j}^{\pi_q^{-1} (\pi_{q+1}(i))} \right) - \frac{1}{N} \sum_{l=0}^{N-1} \sum_{j=0}^{K-1} \nabla f_{\pi_{q}(l)} \left( \rvx_{q,j}^l\right) \right) }_{\infty}  \rightarrow
	\bar \varphi_{q}\,.
\end{align*}

\begin{proof}
	We need to find the relation between $\bar \varphi_{q+1}$ and $\varphi_q$. For all $m\in \{S,2S, \ldots, N\}$,
	\begin{align}
		&\varphi_{q+1}^{m} \nonumber\\
		&= \norm{ \sum_{i=0}^{m-1} \left( \nabla f_{\pi_{q+1}(i)}\left(\rvx_{q+1} \right) - \nabla f\left(\rvx_{q+1}\right)\right) }_{\infty}\nonumber\\
		&= \frac{1}{K}\norm{ \sum_{i=0}^{m-1} \sum_{j=0}^{K-1}\left( \nabla f_{\pi_{q+1}(i)}\left(\rvx_{q+1} \right) - \nabla f\left(\rvx_{q+1}\right)\right) }_{\infty}\nonumber\\
		&= \frac{1}{K}\norm{ \sum_{i=0}^{m-1} \sum_{j=0}^{K-1}\left( \nabla f_{\pi_{q+1}(i)}\left(\rvx_{q+1} \right) - \nabla f\left(\rvx_{q+1}\right)\right)\pm \sum_{i=0}^{m-1} \left( \sum_{j=0}^{K-1} \nabla f_{\pi_{q+1}(i)}\left(\rvx_{q,j}^{\pi_q^{-1} (\pi_{q+1}(i))} \right) - \frac{1}{N} \sum_{l=0}^{N-1} \sum_{j=0}^{K-1} \nabla f_{\pi_{q}(l)} \left( \rvx_{q,j}^l\right) \right) }_{\infty}\nonumber\\
		&\leq \frac{1}{K}\norm{ \sum_{i=0}^{m-1} \sum_{j=0}^{K-1} \nabla f_{\pi_{q+1}(i)}\left(\rvx_{q+1} \right) - \sum_{i=0}^{m-1} \sum_{j=0}^{K-1} \nabla f_{\pi_{q+1}(i)}\left(\rvx_{q,j}^{\pi_q^{-1} (\pi_{q+1}(i))} \right) }_{\infty} \nonumber\\
		&\quad+ \frac{1}{K}\norm{ \sum_{i=0}^{m-1} \sum_{j=0}^{K-1}\frac{1}{N} \sum_{l=0}^{N-1} \nabla f_{\pi_{q}(l)}\left(\rvx_{q+1}\right) - \sum_{i=0}^{m-1}\frac{1}{N} \sum_{l=0}^{N-1} \sum_{j=0}^{K-1} \nabla f_{\pi_{q}(l)} \left( \rvx_{q,j}^l\right) }_{\infty}\nonumber\\
		&\quad+ \frac{1}{K}\norm{ \sum_{i=0}^{m-1} \left( \sum_{j=0}^{K-1} \nabla f_{\pi_{q+1}(i)}\left(\rvx_{q,j}^{\pi_q^{-1} (\pi_{q+1}(i))} \right) - \frac{1}{N} \sum_{l=0}^{N-1} \sum_{j=0}^{K-1} \nabla f_{\pi_{q}(l)} \left( \rvx_{q,j}^l\right) \right) }_{\infty}\label{eq:pf-ex-fed-grab:order error},
	\end{align}
	where the last inequality is due to $\nabla f(\rvx_{q+1}) = \frac{1}{N}\sum_{l=0}^{N-1} \nabla f_{\pi_q(l)}\left (\rvx_{q+1} \right)$. Then,
	\begin{align*}
		\Term{1}{\eqref{eq:pf-ex-fed-grab:order error}} &= \frac{1}{K}\norm{ \sum_{i=0}^{m-1} \sum_{j=0}^{K-1} \nabla f_{\pi_{q+1}(i)}\left(\rvx_{q+1} \right) - \sum_{i=0}^{m-1} \sum_{j=0}^{K-1} \nabla f_{\pi_{q+1}(i)}\left(\rvx_{q,j}^{\pi_q^{-1} (\pi_{q+1}(i))} \right) }_{\infty} \\
		&\leq \frac{1}{K}\sum_{i=0}^{m-1} \sum_{j=0}^{K-1}\norm{  \nabla f_{\pi_{q+1}(i)}\left(\rvx_{q+1} \right) -  \nabla f_{\pi_{q+1}(i)}\left(\rvx_{q,j}^{\pi_q^{-1} (\pi_{q+1}(i))} \right) }_{\infty} \\
		&\leq L_{\infty}\frac{1}{K}\sum_{i=0}^{m-1} \sum_{j=0}^{K-1}\norm{  \rvx_{q+1} -  \rvx_{q,j}^{\pi_q^{-1} (\pi_{q+1}(i))} }_{\infty} \\
		&\leq L_{\infty}\frac{1}{K}\sum_{i=0}^{m-1} \sum_{j=0}^{K-1}\left( \norm{  \rvx_{q+1} -  \rvx_{q} }_{\infty} + \norm{  \rvx_{q} -  \rvx_{q,j}^{\pi_q^{-1} (\pi_{q+1}(i))} }_{\infty}\right) \\
		&\leq L_{\infty}\frac{1}{K}\sum_{i=0}^{m-1} \sum_{j=0}^{K-1}\left( \eta \Delta_q + \Delta_q\right)\\
		&\leq L_{\infty}N\left( \eta \Delta_q + \Delta_q\right),
	\end{align*}
	\begin{align*}
		\Term{2}{\eqref{eq:pf-ex-fed-grab:order error}} &=\frac{1}{K}\norm{ \sum_{i=0}^{m-1} \sum_{j=0}^{K-1}\frac{1}{N} \sum_{l=0}^{N-1} \nabla f_{\pi_{q}(l)}\left(\rvx_{q+1}\right) - \sum_{i=0}^{m-1}\frac{1}{N} \sum_{l=0}^{N-1} \sum_{j=0}^{K-1} \nabla f_{\pi_{q}(l)} \left( \rvx_{q,j}^l\right) }_{\infty}\\
		&\leq \frac{1}{K}\sum_{i=0}^{m-1} \sum_{j=0}^{K-1}\frac{1}{N} \sum_{l=0}^{N-1} \norm{ \nabla f_{\pi_{q}(l)}\left(\rvx_{q+1}\right) -  \nabla f_{\pi_{q}(l)} \left( \rvx_{q,j}^l\right) }_{\infty}\\
		&\leq L_{\infty}\frac{1}{K}\sum_{i=0}^{m-1} \sum_{j=0}^{K-1}\frac{1}{N} \sum_{l=0}^{N-1} \norm{ \rvx_{q+1} -  \rvx_{q,j}^l }_{\infty}\\
		&\leq L_{\infty}\frac{1}{K}\sum_{i=0}^{m-1} \sum_{j=0}^{K-1}\frac{1}{N} \sum_{l=0}^{N-1} \left( \norm{ \rvx_{q+1} -  \rvx_{q} }_{\infty} + \norm{ \rvx_{q} -  \rvx_{q,j}^l }_{\infty} \right)\\
		&\leq L_{\infty}\frac{1}{K}\sum_{i=0}^{m-1} \sum_{j=0}^{K-1}\frac{1}{N} \sum_{l=0}^{N-1} \left( \eta\Delta_{q} + \Delta_{q} \right)\\
		&\leq L_{\infty}N \left( \eta\Delta_{q} + \Delta_{q} \right).
	\end{align*}
	Since it holds for any $m \in \{S, 2S, \ldots, N\}$, we have
	\begin{align}
		\bar \varphi_{q+1} &\leq 2L_{\infty}N\left( \eta \Delta_q + \Delta_q\right)\nonumber \\
		&\quad+  \frac{1}{K}\max_{m \in \{S, 2S, \ldots, N\}} \norm{ \sum_{i=0}^{m-1} \left( \sum_{j=0}^{K-1} \nabla f_{\pi_{q+1}(i)}\left(\rvx_{q,j}^{\pi_q^{-1} (\pi_{q+1}(i))} \right) - \frac{1}{N} \sum_{l=0}^{N-1} \sum_{j=0}^{K-1} \nabla f_{\pi_{q}(l)} \left( \rvx_{q,j}^l\right) \right) }_{\infty}.\label{eq:pf-ex:FL-balancing:order error^2}
	\end{align}
	
	Note that 
	\begin{align*}
		\sum_{j=0}^{K-1} \nabla f_{\pi_q(i)}\left(\rvx_{q,j}^i \right) - \frac{1}{N} \sum_{l=0}^{N-1} \sum_{j=0}^{K-1} \nabla f_{\pi_{q}(l)} \left( \rvx_{q,j}^l\right)
	\end{align*}
	and
	\begin{align*}
		\sum_{j=0}^{K-1} \nabla f_{\pi_{q+1}(i)}\left(\rvx_{q,j}^{\pi_q^{-1} (\pi_{q+1}(i))} \right) - \frac{1}{N} \sum_{l=0}^{N-1} \sum_{j=0}^{K-1} \nabla f_{\pi_{q}(l)} \left( \rvx_{q,j}^l\right)
	\end{align*}
	correspond to $\rvz_{\pi(i)}$ and $\rvz_{\pi' (i)}$ in Lemma~\ref{lem:FL:pair-balancing-reordering}, respectively. We next get the upper bounds of 
	\begin{align*}
		\norm{\rvz_{\pi(i)}}_2, \norm{\sum_{i=0}^{N-1} \rvz_{\pi(i)}}_{\infty} \text{ and } \max_{n\in [N]} \norm{\sum_{i=0}^{n-1} \rvz_{\pi(i)}}_{\infty},
	\end{align*}
	and then apply Lemma~\ref{lem:FL:pair-balancing-reordering} to the last term on the right hand side in Ineq.~\eqref{eq:pf-ex:FL-balancing:order error^2}.
	\begin{align*}
		&\norm{\rvz_{\pi(i)}}_2 \\
		&= \norm{  \sum_{j=0}^{K-1} \nabla f_{\pi_q(i)}\left(\rvx_{q,j}^i \right) - \frac{1}{N} \sum_{l=0}^{N-1} \sum_{j=0}^{K-1} \nabla f_{\pi_{q}(l)} \left( \rvx_{q,j}^l\right) }_{2}\\
		&= \norm{  \left(\sum_{j=0}^{K-1} \nabla f_{\pi_q(i)}\left(\rvx_{q,j}^i \right) - \frac{1}{N} \sum_{l=0}^{N-1} \sum_{j=0}^{K-1} \nabla f_{\pi_{q}(l)} \left( \rvx_{q,j}^l\right)\right) \pm \left(\sum_{j=0}^{K-1} \nabla f_{\pi_q(i)}\left(\rvx_{q} \right) - \frac{1}{N} \sum_{l=0}^{N-1} \sum_{j=0}^{K-1} \nabla f_{\pi_{q}(l)} \left( \rvx_{q}\right)\right) }_{2}\\
		&\leq \norm{ \sum_{j=0}^{K-1} \nabla f_{\pi_q(i)}\left(\rvx_{q,j}^i \right) - \sum_{j=0}^{K-1} \nabla f_{\pi_{q}(i)} \left( \rvx_{q}\right) }_{2} + \norm{ \frac{1}{N} \sum_{l=0}^{N-1} \sum_{j=0}^{K-1} \nabla f_{\pi_{q}(l)} \left( \rvx_{q,j}^l\right) - \frac{1}{N} \sum_{l=0}^{N-1} \sum_{j=0}^{K-1} \nabla f_{\pi_{q}(l)} \left( \rvx_{q}\right) }_{2} + K \varsigma\\
		&\leq \sum_{j=0}^{K-1}\norm{ \nabla f_{\pi_q(i)}\left(\rvx_{q,j}^i \right) - \nabla f_{\pi_{q}(i)} \left( \rvx_{q}\right) }_{2} + \frac{1}{N} \sum_{l=0}^{N-1} \sum_{j=0}^{K-1}\norm{  \nabla f_{\pi_{q}(l)} \left( \rvx_{q,j}^l\right) - \nabla f_{\pi_{q}(l)} \left( \rvx_{q}\right) }_{2} + K \varsigma\\
		&\leq L_{2,\infty}\sum_{j=0}^{K-1}\norm{ \rvx_{q,j}^i - \rvx_{q} }_{\infty} + L_{2,\infty}\frac{1}{N} \sum_{l=0}^{N-1} \sum_{j=0}^{K-1}\norm{  \rvx_{q,j}^l - \rvx_{q} }_{\infty} + K \varsigma\\
		&\leq L_{2,\infty}\sum_{j=0}^{K-1}\Delta_q + L_{2,\infty}\frac{1}{N} \sum_{l=0}^{N-1} \sum_{j=0}^{K-1} \Delta_{q} + K \varsigma\\
		&\leq 2L_{2,\infty}K\Delta_q + K \varsigma\,,
	\end{align*}
	\begin{align*}
		\norm{ \sum_{i=0}^{N-1} \rvz_{\pi_q(i)} }_{\infty} = \norm{ \sum_{i=0}^{N-1} \left( \sum_{j=0}^{K-1} \nabla f_{\pi_q(i)}\left(\rvx_{q,j}^i \right) - \frac{1}{N} \sum_{l=0}^{N-1} \sum_{j=0}^{K-1} \nabla f_{\pi_{q}(l)} \left( \rvx_{q,j}^l\right) \right) }_{\infty}= 0\,.
	\end{align*}	
	In addition, for any $m \in \{S,2S,\ldots, N\}$, we have
	\begin{align*}
		&\norm{\sum_{i=0}^{m-1} \rvz_{\pi(i)} }_{\infty}\\
		&= \norm{ \sum_{i=0}^{m-1} \left( \sum_{j=0}^{K-1} \nabla f_{\pi_q(i)}\left(\rvx_{q,j}^i \right) - \frac{1}{N} \sum_{l=0}^{N-1} \sum_{j=0}^{K-1} \nabla f_{\pi_{q}(l)} \left( \rvx_{q,j}^l\right) \right) }_{\infty} \\
		&= \norm{ \sum_{i=0}^{m-1} \left( \sum_{j=0}^{K-1} \nabla f_{\pi_q(i)}\left(\rvx_{q,j}^i \right) - \frac{1}{N} \sum_{l=0}^{N-1} \sum_{j=0}^{K-1} \nabla f_{\pi_{q}(l)} \left( \rvx_{q,j}^l\right) \right) \pm \sum_{i=0}^{m-1} \left(\sum_{j=0}^{K-1} \nabla f_{\pi_q(i)}\left(\rvx_{q} \right) - \frac{1}{N} \sum_{l=0}^{N-1} \sum_{j=0}^{K-1} \nabla f_{\pi_{q}(l)} \left( \rvx_{q}\right)\right)}_{\infty} \\
		&\leq \norm{ \sum_{i=0}^{m-1}  \sum_{j=0}^{K-1} \nabla f_{\pi_q(i)}\left(\rvx_{q,j}^i \right) - \sum_{i=0}^{m-1} \sum_{j=0}^{K-1} \nabla f_{\pi_{q}(i)} \left( \rvx_{q}\right)  }_{\infty} \\
		&\quad+ \norm{ \sum_{i=0}^{m-1}  \frac{1}{N} \sum_{l=0}^{N-1} \sum_{j=0}^{K-1} \nabla f_{\pi_{q}(l)} \left( \rvx_{q,j}^l\right) - \sum_{i=0}^{m-1}\frac{1}{N} \sum_{l=0}^{N-1} \sum_{j=0}^{K-1} \nabla f_{\pi_{q}(l)} \left( \rvx_{q}\right)}_{\infty}\\
		&\quad+ \norm{ \sum_{i=0}^{m-1} \left(\sum_{j=0}^{K-1} \nabla f_{\pi_q(i)}\left(\rvx_{q} \right) - \frac{1}{N} \sum_{l=0}^{N-1} \sum_{j=0}^{K-1} \nabla f_{\pi_{q}(l)} \left( \rvx_{q}\right)\right)}_{\infty}\\
		&\leq \sum_{i=0}^{m-1}  \sum_{j=0}^{K-1}\norm{ \nabla f_{\pi_q(i)}\left(\rvx_{q,j}^i \right) - \nabla f_{\pi_{q}(i)} \left( \rvx_{q}\right)  }_{\infty} + \sum_{i=0}^{m-1}  \frac{1}{N} \sum_{l=0}^{N-1} \sum_{j=0}^{K-1}\norm{  \nabla f_{\pi_{q}(l)} \left( \rvx_{q,j}^l\right) - \nabla f_{\pi_{q}(l)} \left( \rvx_{q}\right)}_{\infty} + K \bar \varphi_q\\
		&\leq L_{\infty}\sum_{i=0}^{m-1}  \sum_{j=0}^{K-1}\norm{ \rvx_{q,j}^i - \rvx_{q}  }_{\infty} + L_{\infty}\sum_{i=0}^{m-1}  \frac{1}{N} \sum_{l=0}^{N-1} \sum_{j=0}^{K-1}\norm{  \rvx_{q,j}^l - \rvx_{q}}_{\infty} + K \bar \varphi_q\\
		&\leq L_{\infty}\sum_{i=0}^{m-1}  \sum_{j=0}^{K-1}\Delta_q + L_{\infty}\sum_{i=0}^{m-1}  \frac{1}{N} \sum_{l=0}^{N-1} \sum_{j=0}^{K-1}\Delta_{q} + K \bar \varphi_q\\
		&\leq 2L_{\infty}KN\Delta_q + K \bar \varphi_q\,.
	\end{align*}
	Since it holds for all $m \in \{S,2S,\ldots, N\}$, we have
	\begin{align*}
		\max_{m \in \{S,2S,\ldots, N\}} \norm{\sum_{i=0}^{m-1} \rvz_{\pi(i)}}_{\infty} \leq 2L_{\infty}KN\Delta_q + K \bar \varphi_q\,.
	\end{align*}
	Now, applying Lemma~\ref{lem:FL:pair-balancing-reordering} to the last term on the right hand side in Ineq.~\eqref{eq:pf-ex:FL-balancing:order error^2}, we can get
	\begin{align*}
		\bar \varphi_{q+1} 
		&\leq 2L_{\infty}N\left( \eta \Delta_q + \Delta_q\right) + \frac{1}{2}\left( 2L_{\infty}N\Delta_q +  \bar \varphi_q \right) + C \left( 2L_{2,\infty}\Delta_q + \varsigma\right)\\
		&\leq \left( \left( 3+2\eta\right)L_{\infty} N + 2L_{2,\infty}C \right)\Delta_q + \frac{1}{2}\bar\varphi_q + C \varsigma\,.
	\end{align*}
	Applying Lemma~\ref{lem:FL:parameter drift} that $\Delta_q \leq \frac{32}{31} \left(\gamma K \frac{1}{S}\bar \varphi_q + \gamma K N\frac{1}{S}\norm{\nabla f(\rvx_{q})}_{\infty} + \gamma K\varsigma \right)$, we get
	\begin{align*}
		\bar \varphi_{q+1}
		&\leq \left( \left( 3+2\eta\right)L_{\infty} N + 2L_{2,\infty}C \right) \Delta_q + \frac{1}{2} \bar\varphi_q + C \varsigma\\
		&\leq \left( \left( 3+2\eta\right)L_{\infty} N + 2L_{2,\infty}C \right) \cdot \frac{32}{31} \gamma K\frac{1}{S}\left( \bar \varphi_q + N\norm{\nabla f(\rvx_{q})}_{\infty} + S\varsigma \right) + \frac{1}{2} \bar\varphi_q + C \varsigma\\
		&\leq \frac{13}{24} \bar\varphi_q + \frac{1}{24} N \norm{\nabla f(\rvx_q)}_{\infty} + \frac{1}{24} S\varsigma + C \varsigma\,.
	\end{align*}
	where the last inequality is due to $\textcolor{blue}{\gamma (1+\eta) L_{\infty}K N\frac{1}{S} \leq \frac{1}{128}}$ and $\textcolor{blue}{\gamma L_{2, \infty}K C\frac{1}{S} \leq \frac{1}{128}}$. Then, using $\norm{\rvx}_p \leq \norm{\rvx}_2$ for $p>2$, we can get
	\begin{align*}
		\left(\bar \varphi_{q+1}\right)^2
		&\leq \left( \frac{13}{24} \bar\varphi_q + \frac{1}{24} N \norm{\nabla f(\rvx_q)}_{} + \frac{1}{24} S\varsigma + C \varsigma \right)^2 \\
		&\leq 2\cdot \left( \frac{13}{24} \bar\varphi_q \right)^2 + 6 \cdot \left( \frac{1}{24}N \norm{\nabla f(\rvx_q)}_{} \right)^2+ 6 \cdot \left( \frac{1}{24} S\varsigma \right)^2+ 6 \cdot \left( C\varsigma \right)^2\\
		&\leq \frac{3}{5}\left( \bar \varphi_q \right)^2 + \frac{1}{96}N^2 \normsq{\nabla f(\rvx_q)}_{} + \frac{1}{96}S^2\varsigma^2 + 6C^2 \varsigma^2.
	\end{align*}
	So the relation between $\bar \varphi_q$ and $\bar \varphi_{q-1}$ is 
	\begin{align*}
		\left(\bar \varphi_{q}\right)^2 \leq \frac{3}{5}\left( \bar \varphi_{q-1} \right)^2 + \frac{1}{96}N^2 \normsq{\nabla f(\rvx_{q-1})}_{} + \frac{1}{96}S^2\varsigma^2 + 6C^2 \varsigma^2\,.
	\end{align*}
	for $q\geq 1$. Besides, we have $\left(\bar\varphi_{0}\right)^2 \leq N^2 \varsigma^2$.

	In this case, for Assumption~\ref{asm:FL:order error}, $p=\infty$, $A_1= \frac{3}{5}$, $A_2=\cdots=A_q=0$, $B_0=0$, $B_1= \frac{1}{96}N^2$, $B_2 = \cdots B_q = 0$ and $D = \frac{1}{96}S^2\varsigma^2 + 6C^2 \varsigma^2$. Then, we verify that
	\begin{align*}
		\frac{255}{512} - \frac{1}{512N^2}\cdot \frac{\sum_{i=0}^{\nu} B_i }{1-\sum_{i=1}^{\nu} A_i} \geq \frac{255}{512} - 2\cdot \frac{5}{2} \cdot \left( \frac{1}{128}\right)^2 \cdot \frac{1}{96} \geq \frac{254}{512}>0\,.
	\end{align*}
	Thus, we can set $c_1 = 3$ and $c_2 = 15$ for Theorem~\ref{thm:FL}. In addition, for Theorem~\ref{thm:FL}, $\nu=1$ and $\left(\bar\varphi_{0}\right)^2 \leq N^2 \varsigma^2$. These lead to the upper bound
	\begin{align*}
		\min_{q\in \{0,1\ldots, Q-1\}}\normsq{ \nabla f(\rvx_q) } 
		&= \gO\left(  \frac{ F_0 }{\gamma \eta KN\frac{1}{S}Q} + \gamma^2 L_{2,\infty}^2 K^2N^2\frac{1}{S^2}\frac{1}{Q} \varsigma^2 +\gamma^2 L_{2,\infty}^2 K^2 \varsigma^2+ \gamma^2 L_{2,\infty}^2K^2C^2\frac{1}{S^2}\varsigma^2 \right),
	\end{align*}
	where $F_0 = f(\rvx_0) - f_\ast$. Since Lemma~\ref{lem:FL:pair-balancing-reordering} is used for each epoch (that is, for $Q$ times), so by the union bound, the preceding bound holds with probability at least $1-Q\delta$.
	
	Next, we summarize the constraints on the step size:
	\begin{align*}
		\gamma &\leq \min \left\{ \frac{1}{\eta L KN\frac{1}{S}}, \frac{1}{32L_{2,\infty} KN\frac{1}{S}}, \frac{1}{32L_\infty KN\frac{1}{S}} \right\},\\
		\gamma  &\leq \frac{1}{128(1+\eta) L_{\infty}K N\frac{1}{S}},\\ 
		\gamma  &\leq \frac{1}{128L_{2, \infty}K C\frac{1}{S}}.
	\end{align*}
	The first one is from Theorem~\ref{thm:FL} and the others are from the derivation of the relation. For simplicity, we can use a tighter constraint
	\begin{align*}
		\gamma &\leq \min \left\{ \frac{1}{\eta L KN\frac{1}{S}}, \frac{1}{128L_{2,\infty} K(N+C)\frac{1}{S}}, \frac{1}{128(1+\eta)L_\infty KN\frac{1}{S}} \right\}.
	\end{align*}
	After we use the effective step size $\tilde \gamma \coloneqq \gamma \eta K N \frac{1}{S} $, the constraint becomes
	\begin{align*}
		\tilde \gamma &\leq \min \left\{ \frac{1}{ L }, \frac{\eta}{128L_{2,\infty} \left(1+\frac{C}{N}\right)}, \frac{\eta}{128(1+\eta)L_\infty } \right\},
	\end{align*}
	and the upper bound becomes
	\begin{align*}
		\min_{q\in \{0,1\ldots, Q-1\}}\normsq{ \nabla f(\rvx_q) } 
		&= \gO\left(  \frac{ F_0 }{\tilde\gamma Q} + \tilde\gamma^2 L_{2,\infty}^2 \frac{1}{\eta^2}\frac{1}{Q} \varsigma^2 +\tilde\gamma^2 L_{2,\infty}^2 \frac{1}{\eta^2N^2\frac{1}{S^2}} \varsigma^2+ \tilde\gamma^2 L_{2,\infty}^2 \frac{1}{\eta^2N^2}C^2\varsigma^2 \right).
	\end{align*}
	Applying Lemma~\ref{lem:step-size}, we get
	\begin{align*}
		&\min_{q\in \{0,1\ldots, Q-1\}}\normsq{ \nabla f(\rvx_q) } \\&=
		\gO\left( \frac{\left( \eta L+L_{2,\infty} \left (1+\frac{C}{N} \right) + (1+\eta)L_{\infty}\right)  F_0}{\eta Q}  + \frac{ (L_{2,\infty}F_0 \varsigma)^{\frac{2}{3}} }{\eta^{\frac{2}{3}}Q} + \left(\frac{L_{2,\infty}F_0 S\varsigma }{\eta NQ}\right)^{\frac{2}{3}} + \left(\frac{L_{2,\infty}F_0 C \varsigma }{\eta NQ}\right)^{\frac{2}{3}}  \right).
	\end{align*} 
	For comparison with other algorithms, we set $\eta = 1$, and get
	\begin{align*}
		&\min_{q\in \{0,1\ldots, Q-1\}}\normsq{ \nabla f(\rvx_q) } \\&=
		\gO\left( \frac{\left( L +L_{2,\infty} \left (1+\frac{C}{N} \right)+ L_{\infty}\right)  F_0}{Q}  + \frac{ (L_{2,\infty}F_0 \varsigma)^{\frac{2}{3}} }{Q} + \left(\frac{L_{2,\infty}F_0 S\varsigma }{ NQ}\right)^{\frac{2}{3}} + \left(\frac{L_{2,\infty}F_0 C \varsigma }{ NQ}\right)^{\frac{2}{3}}  \right).
	\end{align*}
\end{proof}

\section{Experiments}
\label{apx-sec:exps}

In this section, we provide the experimental results of FL on real data sets. Refer to \citet{lu2022grab,cooper2023coordinating} for the experimental results of SGD on real data sets.

\subsection{Setups}

\textit{Algorithms.} We consider the three algorithms in (regularized-participation) FL in the main body: FL-RR, FL-OP and FL-GraB. For FL-OP, its first-epoch permutation is generated randomly; in other words, it corresponds SO in SGD.

\textit{Datasets and models.} We consider the datasets CIFAR-10 \citep{krizhevsky2009learning}, CIFAR-100 \citep{krizhevsky2009learning} and CINIC-10 \citep{darlow2018cinic}. We use the convolutional neural network (CNN) from \citep{acar2021federated} and ResNet-10 \citep{he2016deep}.

\textit{Hyperparameters.} We partition the data examples by the way in \citet{mcmahan2017communication} among $N=1000$ clients, ensuring that each client contains data examples from about one label. We use SGD as the local solver with the learning rate being constant, the momentum being $0$ and
weight decay being $0$. We set the global step size to $\eta = 1$. We set the total number of training rounds to $20000$ (that is, $Q=200$ epochs). For other setups, following those in \citet{wang2022unified}, we set the number of participating clients in each training round to $S=10$, the number of local update steps to $K=5$, the mini-batch size to $16$.

\textit{Two-stage grid search.} We use a two-stage grid search for tuning the step size. Specifically, we first perform a \textit{coarse-grained} search over a broad range of step sizes to identify a best step size at a high level. After that, based on the best step size found, we perform a \textit{fine-grained} search around it by testing neighboring step sizes to find a more precise value. For instance, in the first stage, we can use a grid of $\{10^{-2}, 10^{-1}, 10^{0}\}$ to find the coarse-grained best step size; in the second stage, if the coarse-grained best step size is $10^{-1}$, we use the grid of $\{10^{-1.5}, 10^{-1}, 10^{0.5}\}$ to find the fine-grained best step size. Notably, we tune the step size by the two-stage grid search for FL-RR, and reuse the best step size for the other two algorithms. Tables~\ref{tab:grid search:cnn} shows the processes of the grid searches. We get that the best step size is $10^{-1}=0.1$ for CNN; in the same way, we get that the best step size is $10^{-0.5}\approx 0.316$ for ResNet-10 (the processes are omitted).

\begin{table}[ht]
	\caption{The results of the grid searches for training CNN on various datasets with FL-RR. The best step size is marked with $\ast$. The ``lr'' in the legend means the learning rate or the step size. We use $10^{-1.5} \approx 0.0316$ and $10^{-0.5} \approx 0.316$ as done in \citet{wang2024lightweight}.}
	\label{tab:grid search:cnn}
	\centering{\small
			\resizebox{\linewidth}{!}{
					\begin{tabular}{lllm{0.5\textwidth}}
						\toprule
						{\bf \footnotesize Dataset}  & {\bf \footnotesize Coarse-grained} & {\bf \footnotesize Fine-grained}  &{\bf \footnotesize Result} \\\midrule
						
						CIFAR-10 &$\{0.01, 0.1^\ast, 1.0\}$ &$\{0.0316, 0.1^*, 0.316\}$ &\includegraphics[width=\linewidth]{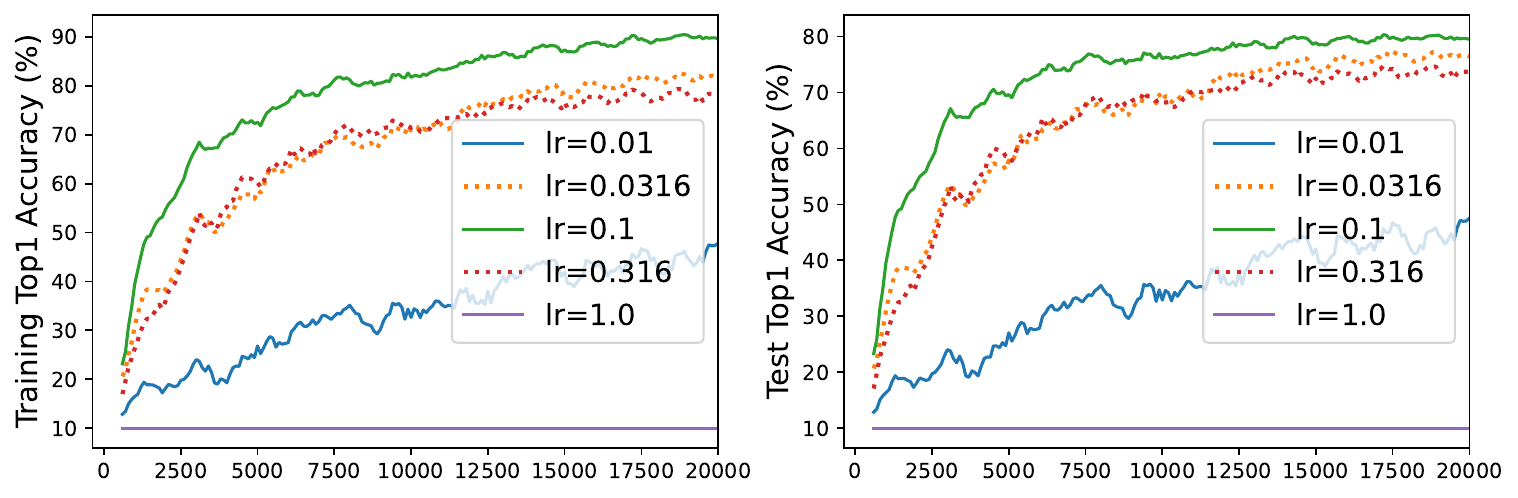} \\\midrule
						
						CIFAR-100 &$\{0.01, 0.1^\ast, 1.0\}$ &$\{0.0316, 0.1^*, 0.316\}$ &\includegraphics[width=\linewidth]{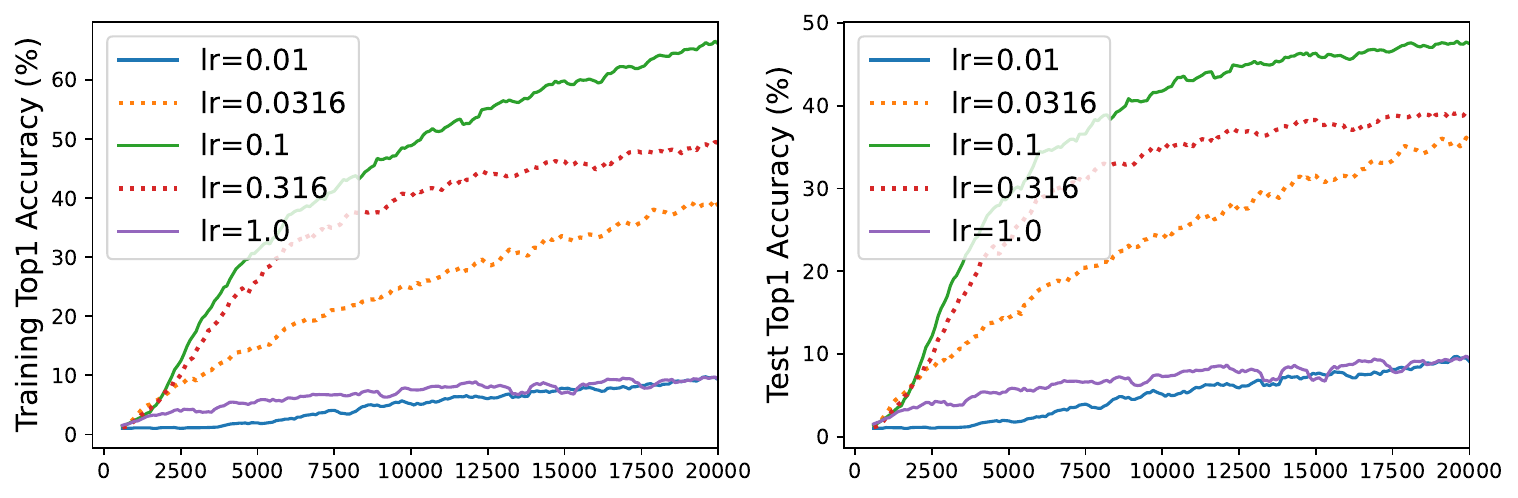} \\\midrule
						
						CINIC-10 &$\{0.01, 0.1^\ast, 1.0\}$ &$\{0.0316, 0.1^*, 0.316\}$ &\includegraphics[width=\linewidth]{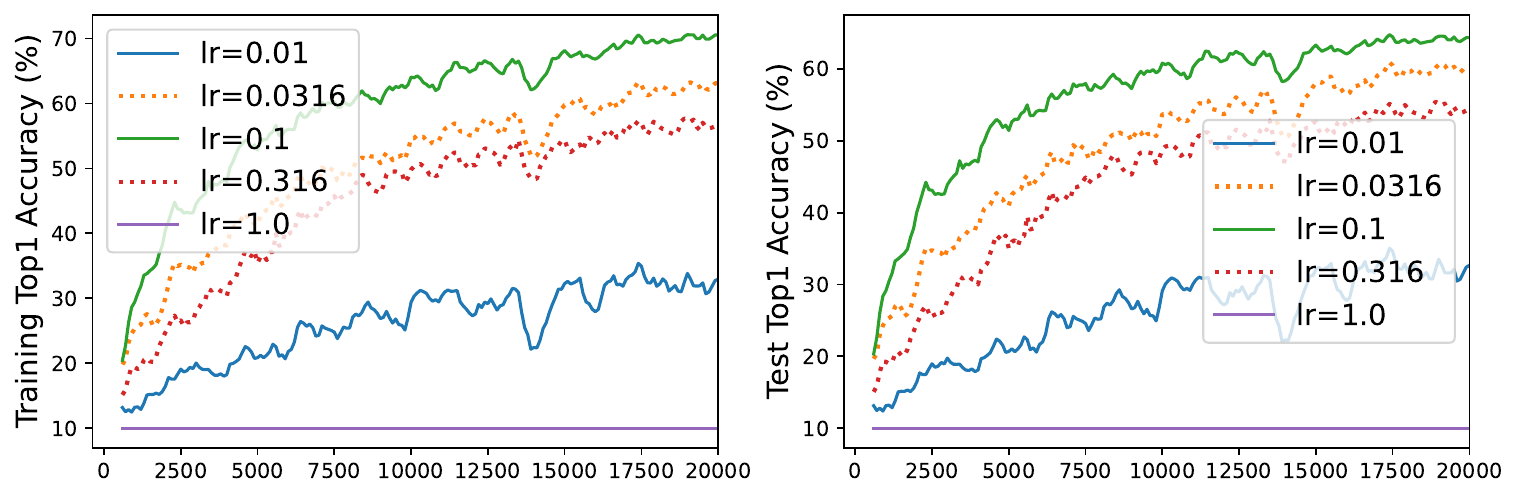} \\
						\bottomrule
					\end{tabular}
			}
		}
\end{table}

\subsection{Experimental Results}

The experimental results are in Figures~\ref{fig:exp-cnn} and \ref{fig:exp-resnetii10}. Some observations are as follows. First, FL-GraB shows the best performance across all tasks, especially in the early stages. This is aligned with our theory that the convergence rate of FL-GraB is the best. Second, FL-OP shows close performance to that of FL-RR on CIFAR-10 and CINIC-10, while it shows worse performance than that of FL-RR on CIFAR-100. This is aligned with our theory that the convergence rate of FL-OP can be the same as that of FL-RR when the change of the parameter is not too large and it will be worse when the change is too large.

\begin{figure}[h]
	\centering
	\includegraphics[width=0.9\linewidth]{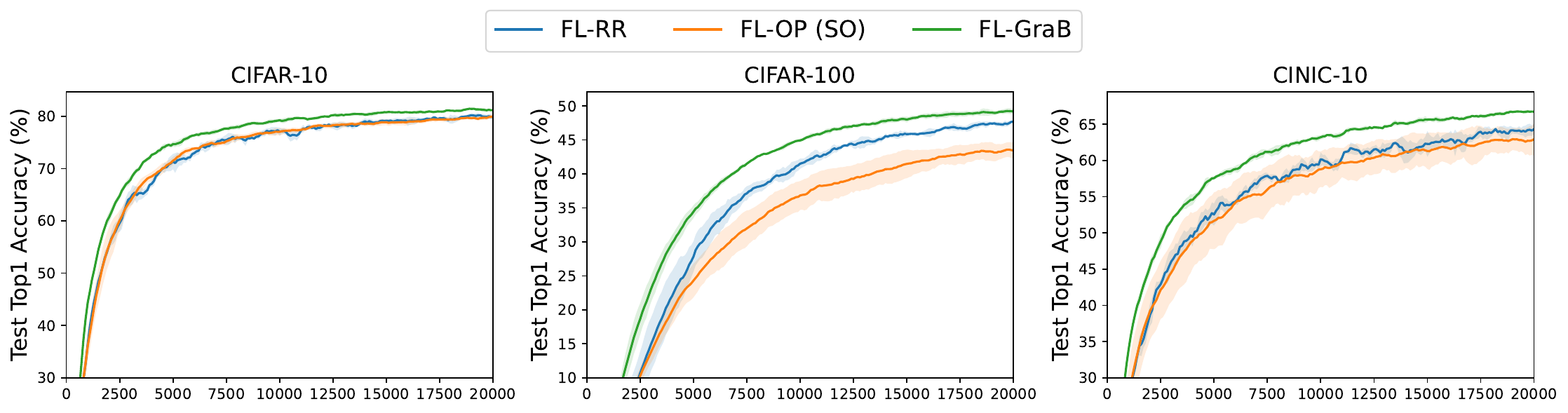}
	\caption{Test accuracy results for training CNN on CIFAR-10, CIFAR-100 and CINIC-10. As done in \citet{wang2022unified}, we applied moving average on the recorded data points with a window length of 6; note that we record the results every 100 rounds (that is, one epoch). The shaded areas show the standard deviation across 5 random seeds.}
	\label{fig:exp-cnn}
\end{figure}

\begin{figure}[h]
	\centering
	\includegraphics[width=0.9\linewidth]{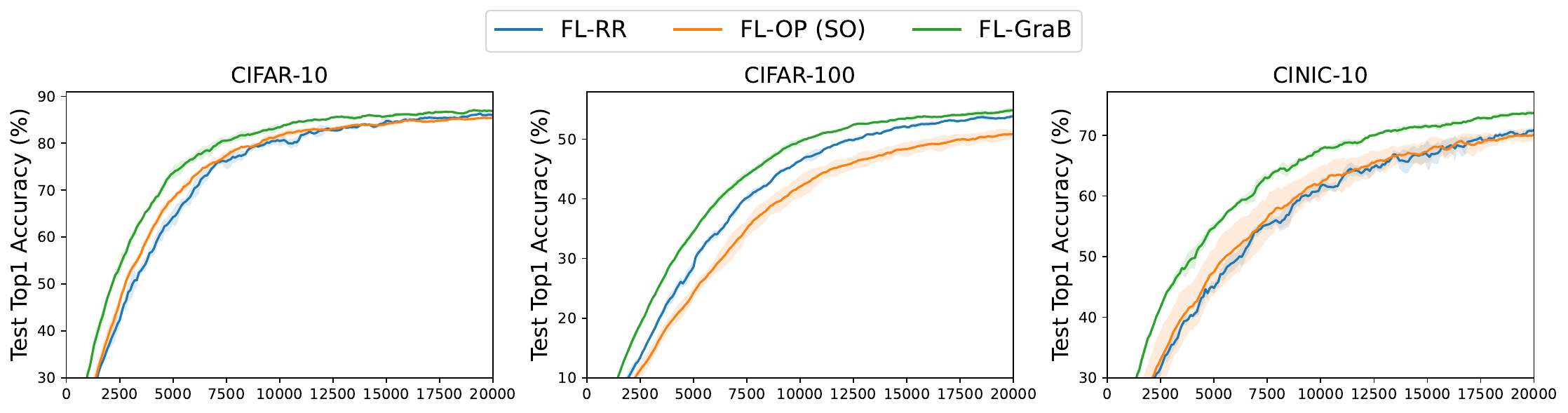}
	\caption{Test accuracy results for training ResNet-10 on CIFAR-10, CIFAR-100 and CINIC-10. As done in \citet{wang2022unified}, we applied moving average on the recorded data points with a window length of 6; note that we record the results every 100 rounds (that is, one epoch). The shaded areas show the standard deviation across 5 random seeds.}
	\label{fig:exp-resnetii10}
\end{figure}

\end{document}